%
%

\documentclass[examplefnt,biber]{nowfnt} 
\usepackage{adjustbox}

\usepackage[utf8]{inputenc}
\usepackage{hyperref}       
\usepackage{url}            
\usepackage{booktabs}       
\usepackage{amsfonts}       
\usepackage{amsmath}
\usepackage{amssymb}
\usepackage{mathtools}
\usepackage{amsthm}
\usepackage{bm}
\usepackage{nicefrac}       
\usepackage{microtype}      
\usepackage{xcolor}         
\usepackage{subcaption}
\usepackage{graphicx}
\usepackage{caption}
\usepackage{algorithm}
\usepackage{algorithmic}

\newtheorem{assumption}{Assumption}[chapter]







\usepackage[utf8]{inputenc} 
\usepackage[T1]{fontenc}    
\usepackage{hyperref}       
\usepackage{url}            
\usepackage{booktabs}       
\usepackage{amsfonts}       
\usepackage{nicefrac}       
\usepackage{microtype}      
\usepackage{mathtools}
\usepackage{xcolor}
\usepackage{amsmath}
\usepackage{mathtools}
\usepackage{amsthm}
\usepackage{amssymb}
\usepackage{enumitem}
\usepackage{bm}

\definecolor{Green}{rgb}{0.13, 0.65, 0.3}
\definecolor{Amber}{rgb}{0.3, 0.5, 1.0}
\usepackage[]{color-edits}
\addauthor{HL}{Green}
\addauthor{LC}{Amber}

\usepackage[algo2e, ruled, vlined]{algorithm2e}
\usepackage{algorithm}
\SetKwBlock{MyRepeat}{repeat}{}

\newcommand{\calA}{{\mathcal{A}}}

\newcommand{\calV}{{\mathcal{V}}}

\newcommand{\calS}{{\mathcal{S}}}
\newcommand{\calF}{{\mathcal{F}}}

\newcommand{\calP}{{\mathcal{P}}}

\DeclareMathOperator*{\argmax}{argmax}

\newcommand{\eat}[1]{}

\newcommand{\inner}[2]{\left\langle #1, #2 \right\rangle}

\newcommand{\rbr}[1]{\left(#1\right)}
\newcommand{\sbr}[1]{\left[#1\right]}
\newcommand{\cbr}[1]{\left\{#1\right\}}

\newcommand{\abr}[1]{\left|#1\right|}

\newcommand{\tilO}[1]{\otil\left( #1 \right)}

\newcommand{\tilo}[1]{\otil( #1 )}


\newcommand{\var}{\textsc{Var}}

\newcommand{\SA}{\calS\times\calA}



\renewcommand{\sp}{\text{\rm sp}}
\renewcommand{\P}{\bar{P}}

\newcommand{\Np}{\N^+}
\newcommand{\optJ}{J^{\star}}

\newcommand{\sumt}{\sum_{t=1}^T}


\newcommand{\sumh}{\sum_{h=1}^H}

\newcommand{\sumk}{\sum_{k=1}^K}

\newcommand{\optpi}{\pi^\star}

\newcommand{\tilP}{\widetilde{P}}

\newcommand{\tilpi}{{\widetilde{\pi}}}
\newcommand{\tiloptpi}{{\widetilde{\pi}^\star}}

\newcommand{\N}{N} 




\newcommand{\optpieps}{\pi^{\star,\epsilon}}
\newcommand{\optJeps}{J^{\star,\epsilon}}
\newcommand{\tmix}{t_{\text{mix}}}


\newcommand{\field}[1]{\mathbb{#1}}

\newcommand{\fR}{\field{R}}

\newcommand{\E}{\field{E}}
\newcommand{\fV}{\field{V}}

\newcommand{\Ind}{\field{I}}

\newcommand{\otil}{\ensuremath{\tilde{\mathcal{O}}}}

\usepackage{prettyref}
\newcommand{\pref}[1]{\prettyref{#1}}
\newcommand{\pfref}[1]{Proof of \prettyref{#1}}
\newcommand{\savehyperref}[2]{\texorpdfstring{\hyperref[#1]{#2}}{#2}}
\newrefformat{eq}{\savehyperref{#1}{Eq.~\textup{(\ref*{#1})}}}
\newrefformat{eqn}{\savehyperref{#1}{Equation~\ref*{#1}}}
\newrefformat{lem}{\savehyperref{#1}{Lemma~\ref*{#1}}}
\newrefformat{def}{\savehyperref{#1}{Definition~\ref*{#1}}}
\newrefformat{line}{\savehyperref{#1}{Line~\ref*{#1}}}
\newrefformat{thm}{\savehyperref{#1}{Theorem~\ref*{#1}}}
\newrefformat{corr}{\savehyperref{#1}{Corollary~\ref*{#1}}}
\newrefformat{cor}{\savehyperref{#1}{Corollary~\ref*{#1}}}
\newrefformat{sec}{\savehyperref{#1}{Section~\ref*{#1}}}
\newrefformat{subsec}{\savehyperref{#1}{Section~\ref*{#1}}}
\newrefformat{app}{\savehyperref{#1}{Appendix~\ref*{#1}}}
\newrefformat{assum}{\savehyperref{#1}{Assumption~\ref*{#1}}}
\newrefformat{ex}{\savehyperref{#1}{Example~\ref*{#1}}}
\newrefformat{fig}{\savehyperref{#1}{Figure~\ref*{#1}}}
\newrefformat{alg}{\savehyperref{#1}{Algorithm~\ref*{#1}}}
\newrefformat{rem}{\savehyperref{#1}{Remark~\ref*{#1}}}
\newrefformat{conj}{\savehyperref{#1}{Conjecture~\ref*{#1}}}
\newrefformat{prop}{\savehyperref{#1}{Proposition~\ref*{#1}}}
\newrefformat{proto}{\savehyperref{#1}{Protocol~\ref*{#1}}}
\newrefformat{prob}{\savehyperref{#1}{Problem~\ref*{#1}}}
\newrefformat{claim}{\savehyperref{#1}{Claim~\ref*{#1}}}
\newrefformat{que}{\savehyperref{#1}{Question~\ref*{#1}}}
\newrefformat{op}{\savehyperref{#1}{Open Problem~\ref*{#1}}}
\newrefformat{fn}{\savehyperref{#1}{Footnote~\ref*{#1}}}
\newrefformat{tab}{\savehyperref{#1}{Table~\ref*{#1}}}
\newrefformat{fig}{\savehyperref{#1}{Figure~\ref*{#1}}}
\newrefformat{prop}{\savehyperref{#1}{Property~\ref*{#1}}}


\newcommand{\norm}[1]{\left \lVert #1 \right\rVert }

\title{Constrained Reinforcement Learning with Average Reward Objective: Model-Based and Model-Free Algorithms}

\maintitleauthorlist{
Vaneet Aggarwal \\
Purdue University \\
vaneet@purdue.edu
\and
Washim Uddin Mondal\\
Indian Institute of Technology Kanpur \\
wmondal@iitk.ac.in
\and 
Qinbo Bai \\
Purdue University \\
bai113@purdue.edu
}

\addbibresource{Contents/References/Vaneetrefs.bib}
\addbibresource{Contents/References/mridtmlr.bib}
\addbibresource{Contents/References/RL-ref.bib}
\addbibresource{Contents/References/ref_param.bib}
\addbibresource{Contents/References/ref_mba.bib}

\usepackage{mwe}

\author[1]{Aggarwal, Vaneet}
\author[2]{Mondal, Washim Uddin}
\author[1]{Bai, Qinbo}

\affil[1]{Purdue University; \{vaneet, bai113\}@purdue.edu}
\affil[2]{Indian Institute of Technology Kanpur; wmondal@iitk.ac.in}

\articledatabox{\nowfntstandardcitation}

\renewcommand{\cite}[1]{\citep{#1}}

\begin{document}

\makeabstracttitle

\begin{abstract}
Reinforcement Learning (RL) serves as a versatile framework for sequential decision-making, finding applications across diverse domains such as robotics, autonomous driving, recommendation systems, supply chain optimization, biology, mechanics, and finance. The primary objective in these applications is to maximize the average reward. Real-world scenarios often necessitate adherence to specific constraints during the learning process.

This monograph focuses on the exploration of various model-based and model-free approaches for Constrained RL within the context of average reward Markov Decision Processes (MDPs). The investigation commences with an examination of model-based strategies, delving into two foundational methods – optimism in the face of uncertainty and posterior sampling. Subsequently, the discussion transitions to parametrized model-free approaches, where the primal dual policy gradient-based algorithm is explored as a solution for constrained MDPs. The monograph provides regret guarantees and analyzes constraint violation for each of the discussed setups.

For the above exploration, we assume the underlying MDP to be ergodic. Further, this monograph extends its discussion to encompass results tailored for weakly communicating MDPs, thereby broadening the scope of its findings and their relevance to a wider range of practical scenarios.	
\end{abstract}
\chapter{Introduction}

Reinforcement Learning (RL) describes a class of problems where an agent repeatedly interacts with an unknown environment. The environment possesses a state that changes as a result of the action executed by the agent according to some pre-determined but unknown probability law. The environment also generates feedback, which is often called the reward. The agent's goal is to choose a sequence of actions (based on the sequence of observed states and rewards) that maximizes the expected cumulative sum of rewards obtained via this procedure. This model has found its application in a wide array of areas, ranging from networking to transportation to robotics to epidemic control \citep{geng2020multi,chen2023two,al2019deeppool,manchella2021passgoodpool,gonzalez2023asap,ling2023cooperating}. RL problems are typically analyzed via three distinct setups$-$episodic, infinite horizon discounted reward, and infinite horizon average reward. In an episodic setup, the environment restores its initial state after a certain number of interactions. Examples include video game-based applications where the learner restarts the game after either winning or losing it. In a discounted setup, the learner aims to maximize the expected \textit{discounted} sum of rewards. The underlying philosophy is that the current reward, in certain applications, is deemed more valuable than the rewards obtained in the future. This idea naturally fits into financial applications where the reward (money) loses value over time due to inflation. The average reward setup, on the contrary, places both the current and future rewards on the same footing and aims to maximize the expected average reward computed over an infinitely long time horizon. The basic premise of the infinite horizon average reward setup aligns with most practical scenarios due to its ability to capture essential long-term behaviors. Some applications in real life require the learning procedure to respect the boundaries of certain constraints. In an epidemic control setup, for example, vaccination policies must take the supply shortage (budget constraint) into account. Such restrictive decision-making routines are described by a constrained Markov Decision Process (CMDP) \citep{bai2023achieving,agarwal2022concave,mondal2024sample}. This monograph aims to provide the key approaches to tackle CMDP with an average reward objective. 

To gain more insight into CMDPs, consider a wireless sensor network where a device aims to update a server with its sensed values. At time $t$, the sensor can either choose to send a packet which, upon successful transmission, fetches a reward of one unit or to queue the packet and obtain a zero reward. However, communicating a packet results in $p_t$ power consumption. The success probability of the intended packet is decided via a pre-determined but unknown function of $p_t$ and the current wireless channel condition, $s_t$. The goal is to send as many packets as possible while keeping the average power consumption, $\sum_{t=1}^Tp_t/T$, within some limit, say $C$. The \textit{state} of the environment can be described by the pair $(s_t, q_t)$ where $s_t$, as stated above, is the channel condition, and $q_t$ is the queue length at time $t$. To limit the power consumption, the agent may choose to transmit packets when the channel condition is good or when the queue length grows beyond a certain threshold. The agent aims to learn the policies in an \textit{online manner} which requires efficiently balancing exploration of state-space and exploitation of the estimated system dynamics \citep{singh2020learning}.

Similar to the example above, many applications require keeping some costs low while simultaneously maximizing the rewards \citep{altman1999constrained}. This monograph discusses model-based and model-free algorithms for the CMDP learning problem described above. A model-based algorithm aims to learn the optimal policy by creating a good estimate of the state-transition function of the underlying CMDP. The caveat of the model-based approach is the large memory requirement to store the estimated parameters which effectively curtails its applicability to large state space CMDPs. The alternative strategy, known as the model-free approach, either directly estimates the policy function or maintains an estimate of the $Q$ function, which is subsequently used for policy generation \citep{wei2022provably}. Model-free algorithms typically demand lower memory and computational resources than their model-based counterparts.  

The problem setup, where the system dynamics are known, is extensively studied \citep{altman1999constrained}. For a constrained setup, the optimal policy is possibly stochastic \citep{altman1999constrained,puterman2014markov}. Even though the problem has been widely studied in episodic and discounted reward setups \cite{gattami2021reinforcement,zheng2020constrained,ding2021provably,bai2022achieving,bai2023achieving}, the focus of this monograph is on the average reward setup, thus providing a comprehensive study for the state of the art in the area.

\section{Chapter Organization}

In Chapter \ref{chpt:mb}, we consider a model-based approach for learning CMDPs with average reward and costs. We discuss posterior sampling-based and optimism-based algorithms. We demonstrate $\widetilde{O}(\sqrt{T})$ objective regret and zero constraint violation for both of them. The presented results follow the recent works of \cite{agarwal2022regret,agarwal2022concave}. 

In Chapter \ref{chpt:param:mf}, we consider a model-free approach for learning CMDP via general parametrization. General parameterization indexes the policies by finite-dimensional parameters (e.g., weights of neural networks) to accommodate large state spaces. The learning is manifested by updating these parameters using policy gradient (PG)-type algorithms. This chapter primarily follows the works of \cite{bai2023regret,bai2023learning} and presents an algorithm that achieves $\widetilde{O}({T}^{4/5})$ objective regret and constraint violation. Note that general parameterization subsumes the tabular setup. Moreover, the best-known regret bound achieved by any tabular model-free algorithm for average-reward CMDPs is $\Tilde{\mathcal{O}}(T^{5/6})$ \citep{wei2022provably} which is worse than the above result in terms of orders. Due to this reason, we do not present any algorithm specific to the tabular model-free setup.

In the previous chapters, we assumed the underlying CMDP to be ergodic. In Chapter \ref{chpt:beyerg}, we go beyond this assumption to consider weakly communicating CMDPs. Note that the class of weakly communicating CMDPs contains the set of ergodic CMDPs, and it is the largest class for which one can hope to establish theoretical guarantees for all instances \cite{bartlett2009regal,jaksch2010near}. This chapter presents the model-based approach of \cite{chen2022learning} and proves $\tilO{T^{2/3} }$ objective regret and constraint violation. We note that no known model-free algorithm currently exists that guarantees a sublinear regret and constraint violation for weakly communicating CMDPs. This leaves multiple open questions. 

\section{Some Useful Inequalities }

In this section, we provide some important inequalities for random variables, some of which will be used in this monograph. 

\begin{lemma}[Jensen's Inequality]
    Let $f: \mathbb{R} \rightarrow \mathbb{R}$ be a convex function, and let $X$ be a random variable. If $E[X]$ is finite, then
    \[
	f(E[X]) \leq E[f(X)].
    \]
\end{lemma}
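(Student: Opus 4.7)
The plan is to reduce Jensen's inequality to the \emph{supporting line} characterization of convexity: at every point $x_0$ in the interior of its domain, a convex function $f: \mathbb{R} \to \mathbb{R}$ admits a slope $b \in \mathbb{R}$ (any element of the subdifferential, e.g., the right derivative $f'_+(x_0)$) such that
\[
f(x) \geq f(x_0) + b(x - x_0) \quad \text{for all } x \in \mathbb{R}.
\]
This existence is a standard one-variable convex-analysis fact: a convex function on $\mathbb{R}$ has non-decreasing one-sided derivatives, and the inequality above is simply the statement that the secant slopes from $x_0$ are monotone. I would either invoke this as a known fact or justify it in one or two lines.

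Once the supporting line is in hand, the remainder is a routine application of linearity and monotonicity of expectation. First, set $\mu = E[X]$, which is finite by hypothesis. Pick a slope $b$ such that
\[
f(x) \geq f(\mu) + b(x - \mu) \quad \text{for all } x \in \mathbb{R}.
\]
Substituting $X$ for $x$, we obtain the pointwise (almost sure) inequality $f(X) \geq f(\mu) + b(X - \mu)$. Taking expectations on both sides and using linearity gives
\[
E[f(X)] \geq f(\mu) + b\,E[X - \mu] = f(\mu) + b(E[X] - \mu) = f(\mu) = f(E[X]),
\]
which is exactly the desired inequality.

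The main obstacle, such as it is, lies in handling measurability and integrability to ensure that $E[f(X)]$ is well defined in the extended sense. Since $f$ is convex on $\mathbb{R}$, it is continuous, hence Borel measurable, so $f(X)$ is a bona fide random variable. The lower-bounding linear function $f(\mu) + b(X - \mu)$ has finite expectation because $E[X]$ is finite, so the negative part of $f(X)$ is integrable and $E[f(X)]$ is well defined in $(-\infty, +\infty]$. With this mild bookkeeping in place, the inequality holds whether $E[f(X)]$ is finite or equal to $+\infty$. A boundary case to mention is when $\mu$ lies at the boundary of the effective domain of $f$; in the stated setting $f$ is defined on all of $\mathbb{R}$, so this does not arise.
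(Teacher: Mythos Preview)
Your proposal is correct and is the standard supporting-line (subgradient) proof of Jensen's inequality; the bookkeeping on measurability and integrability is appropriately handled. The paper itself does not supply a proof of this lemma at all --- it is listed among several well-known inequalities in the preliminaries section and simply stated without argument --- so there is nothing to compare against, and your write-up would serve as a complete proof.
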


\begin{lemma}[Cauchy-Schwarz Inequality \cite{dragomir2003survey}]
    For any vectors $\mathbf{u}$ and $\mathbf{v}$ in a real or complex inner product space, the Cauchy-Schwarz Inequality holds:
    \[
        \left|\langle \mathbf{u}, \mathbf{v} \rangle\right|^2 \leq \langle \mathbf{u}, \mathbf{u} \rangle \cdot \langle \mathbf{v}, \mathbf{v} \rangle.
    \]
\end{lemma}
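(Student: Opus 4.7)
The plan is to establish the Cauchy--Schwarz inequality via the classical ``discriminant'' argument based on the positive semi-definiteness of the inner product. First, I would dispose of the degenerate case $\mathbf{v} = \mathbf{0}$, in which both sides of the claimed inequality vanish trivially. Hence I may assume $\langle \mathbf{v}, \mathbf{v} \rangle > 0$ going forward.

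For the main case, the idea is to exploit the nonnegativity $\langle \mathbf{u} - \lambda \mathbf{v},\, \mathbf{u} - \lambda \mathbf{v} \rangle \geq 0$, valid for every scalar $\lambda$, and expand it using the (sesqui)linearity of the inner product. Choosing the particular value $\lambda = \langle \mathbf{u}, \mathbf{v} \rangle / \langle \mathbf{v}, \mathbf{v} \rangle$, which is the orthogonal-projection coefficient of $\mathbf{u}$ onto $\mathbf{v}$, the cross terms collapse and the expansion reduces to $\langle \mathbf{u}, \mathbf{u} \rangle - |\langle \mathbf{u}, \mathbf{v} \rangle|^2 / \langle \mathbf{v}, \mathbf{v} \rangle \geq 0$. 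Multiplying through by the positive quantity $\langle \mathbf{v}, \mathbf{v} \rangle$ yields exactly the claimed inequality.

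The only subtle point is handling the complex case cleanly, since the inner product is then conjugate-symmetric rather than symmetric; the cross terms expand as $-\lambda\, \overline{\langle \mathbf{u}, \mathbf{v} \rangle} - \overline{\lambda}\, \langle \mathbf{u}, \mathbf{v} \rangle + |\lambda|^2 \langle \mathbf{v}, \mathbf{v} \rangle$, and with the projection choice above these combine to give the real quantity $-|\langle \mathbf{u}, \mathbf{v} \rangle|^2 / \langle \mathbf{v}, \mathbf{v} \rangle$, so the argument goes through verbatim. An equivalent and slightly slicker route is to first rotate $\mathbf{u}$ by a unimodular scalar so that $\langle \mathbf{u}, \mathbf{v} \rangle$ is real and nonnegative, after which the real-case discriminant argument, viewing $\langle \mathbf{u} - t \mathbf{v},\, \mathbf{u} - t \mathbf{v} \rangle \geq 0$ as a quadratic in $t \in \mathbb{R}$ with nonpositive discriminant, applies directly. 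I do not anticipate any genuine obstacle; the entire proof occupies only a few lines and requires no deeper machinery than bilinearity and positivity of the inner product.
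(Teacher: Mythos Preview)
Your proof is correct and complete; the discriminant/projection argument you outline is the standard textbook proof and handles both the real and complex cases cleanly. The paper itself does not supply a proof of this lemma---it merely states the inequality with a citation to \cite{dragomir2003survey}---so there is nothing to compare against.
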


\begin{lemma}{\citep[Lemma 30]{chen2021implicit}}
    \label{lem:var XY}
    For a random variable $X$ such that $|X|\leq C$ almost surely, we have: $\var[X^2]\leq 4C^2\var[X]$.
\end{lemma}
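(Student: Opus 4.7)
The plan is to exploit the variational characterization $\var[Y] = \min_{c \in \mathbb{R}} E[(Y - c)^2]$, which upper bounds $\var[X^2]$ by $E[(X^2 - c)^2]$ for any convenient choice of $c$. The natural candidate is $c = (E[X])^2$, since then the integrand factors via the difference of squares identity, $X^2 - (E[X])^2 = (X - E[X])(X + E[X])$, producing one factor that is exactly the mean-centered deviation and a second factor that is uniformly bounded thanks to the hypothesis $|X| \leq C$.

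Concretely, I would first write
\[
\var[X^2] \;\leq\; E\!\left[\bigl(X^2 - (E[X])^2\bigr)^2\right] \;=\; E\!\left[(X - E[X])^2 (X + E[X])^2\right].
\]
Next, I would use $|X| \leq C$ almost surely and $|E[X]| \leq C$ (by Jensen, applied to the convex function $|\cdot|$, or directly by monotonicity of expectation) to conclude that $|X + E[X]| \leq 2C$ almost surely. Pulling this pointwise bound out of the expectation gives
\[
E\!\left[(X - E[X])^2 (X + E[X])^2\right] \;\leq\; 4C^2 \, E\!\left[(X - E[X])^2\right] \;=\; 4C^2 \var[X],
\]
which is exactly the claimed inequality.

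I do not anticipate any obstacle here: the only subtle point is recognizing that replacing the exact centering $E[X^2]$ by the suboptimal centering $(E[X])^2$ costs nothing in the direction of an upper bound, and that this substitution is what unlocks the useful factorization. An equivalent approach, if one prefers a symmetrized argument, is to introduce an independent copy $X'$ of $X$ and use $\var[X^2] = \tfrac12 E[(X^2 - X'^2)^2]$ together with $(X^2 - X'^2)^2 = (X - X')^2 (X + X')^2 \leq 4C^2 (X - X')^2$; this leads to the same constant $4C^2$ and the same conclusion via $\var[X] = \tfrac12 E[(X - X')^2]$.
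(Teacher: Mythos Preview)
Your proof is correct. The paper does not actually supply its own proof of this lemma---it merely cites it from \citep[Lemma 30]{chen2021implicit}---so there is no paper-side argument to compare against; your variational-characterization-plus-factorization argument (and the symmetrization variant you mention) is the standard short proof and would be exactly what one expects the cited source to contain.
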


\begin{lemma}[Azuma-Hoeffding's Inequality \cite{serfling1974probability}]\label{lem:azuma_inequality}
    Let $X_1,\cdots,X_n$ be a Martingale difference sequence such that $|X_i|\leq c$ almost surely for all $i\in\{1,2,\cdots, n\}$, then,
    \begin{align}
        \mathbb{P}\left(\left|\sum_{i=1}^nX_i\right|\geq \epsilon\right)\leq 2\exp{\left(-\frac{\epsilon^2}{2nc^2}\right)}\label{eq:azuma_hoeffdings}
    \end{align}
\end{lemma}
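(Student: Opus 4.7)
The plan is to prove the Azuma--Hoeffding inequality via the standard Chernoff-method argument combined with Hoeffding's lemma applied conditionally along the martingale filtration. Let $S_n = \sum_{i=1}^n X_i$ and let $\mathcal{F}_i$ denote the natural filtration so that $X_i$ is $\mathcal{F}_i$-measurable and $\mathbb{E}[X_i \mid \mathcal{F}_{i-1}] = 0$ with $|X_i| \le c$ almost surely.

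First I would derive the one-sided tail bound $\mathbb{P}(S_n \ge \epsilon)$. By Markov's inequality applied to $e^{\lambda S_n}$ for any $\lambda > 0$,
\[
\mathbb{P}(S_n \ge \epsilon) \le e^{-\lambda \epsilon}\,\mathbb{E}\bigl[e^{\lambda S_n}\bigr].
\]
Next I would use the tower property together with the martingale-difference structure to peel off one term at a time:
\[
\mathbb{E}\bigl[e^{\lambda S_n}\bigr] = \mathbb{E}\!\left[e^{\lambda S_{n-1}}\,\mathbb{E}\bigl[e^{\lambda X_n}\mid \mathcal{F}_{n-1}\bigr]\right].
\]
The key ingredient is Hoeffding's lemma, which I would invoke (or briefly prove by convexity of $e^{\lambda x}$ on the interval $[-c,c]$) in the conditional form: since $\mathbb{E}[X_n \mid \mathcal{F}_{n-1}] = 0$ and $|X_n| \le c$, we have $\mathbb{E}[e^{\lambda X_n} \mid \mathcal{F}_{n-1}] \le e^{\lambda^2 c^2/2}$ almost surely. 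Iterating the peeling $n$ times yields $\mathbb{E}[e^{\lambda S_n}] \le e^{n\lambda^2 c^2/2}$.

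Plugging back in gives $\mathbb{P}(S_n \ge \epsilon) \le \exp(-\lambda\epsilon + n\lambda^2 c^2/2)$, and I would optimize over $\lambda$ by choosing $\lambda = \epsilon/(n c^2)$, which produces $\mathbb{P}(S_n \ge \epsilon) \le \exp(-\epsilon^2/(2nc^2))$. Applying the same argument to the martingale difference sequence $\{-X_i\}$ gives the matching lower-tail bound, and a union bound yields the factor of $2$ in the stated inequality.

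The only real obstacle is the conditional Hoeffding lemma, since the rest is algebraic bookkeeping; I would handle it by noting that for any random variable $Y$ with $\mathbb{E}[Y]=0$ and $Y \in [-c,c]$ a.s., convexity gives $e^{\lambda Y} \le \tfrac{c-Y}{2c}e^{-\lambda c} + \tfrac{c+Y}{2c}e^{\lambda c}$, and taking expectations reduces the bound to showing $\cosh(\lambda c) \le e^{\lambda^2 c^2/2}$, which follows by comparing Taylor series term by term. Applying this to $Y = X_i$ conditionally on $\mathcal{F}_{i-1}$ completes the argument.
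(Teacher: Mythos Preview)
Your argument is the standard Chernoff-method proof of Azuma--Hoeffding and is correct. Note, however, that the paper does not actually prove this lemma: it is stated with a citation to \cite{serfling1974probability} as a known inequality, so there is no ``paper's proof'' to compare against. Your write-up is a faithful and complete derivation of the cited result.
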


\begin{lemma}[Any interval Azuma's inequality, \cite{chen2022learning}] 
    \label{lem:azuma}
    Let $\{X_i\}_{i=1}^{\infty}$ be a martingale difference sequence and $|X_i|\leq B$ almost surely. Then with probability at least $1-\delta$, for any $l, n$: $\abr{\sum_{i=l}^{l+n-1}X_i}\leq B\sqrt{2n\ln\frac{4(l+n-1)^3}{\delta}}$.
\end{lemma}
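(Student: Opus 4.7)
The plan is to derive the ``any interval'' bound by applying the standard Azuma-Hoeffding inequality (Lemma~\ref{lem:azuma_inequality}) to each fixed interval and then taking a carefully weighted union bound over the countable family of intervals $\{(l,n) : l,n \in \mathbb{N}\}$. The key trick is to assign each interval a failure budget that depends on its right endpoint $m = l+n-1$ and is small enough to make the total over all intervals summable.

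First, I would fix $l$ and $n$ and set $\delta_{l,n} = \delta / (2(l+n-1)^3)$. Applying Lemma~\ref{lem:azuma_inequality} to the $n$ consecutive martingale differences $X_l, X_{l+1}, \ldots, X_{l+n-1}$ with $c = B$ and choice of deviation $\epsilon = B\sqrt{2n\ln(2/\delta_{l,n})}$ yields
\[
    \mathbb{P}\!\left(\abr{\textstyle\sum_{i=l}^{l+n-1}X_i} \geq B\sqrt{2n\ln(4(l+n-1)^3/\delta)}\right) \leq \delta_{l,n},
\]
where I used $\ln(2/\delta_{l,n}) = \ln(4(l+n-1)^3/\delta)$. (Strictly speaking, the $X_i$ form a martingale difference sequence only with respect to some filtration $\{\mathcal{F}_i\}$; one checks that the subsequence $X_l,\ldots,X_{l+n-1}$ is still a martingale difference sequence with respect to $\{\mathcal{F}_{l+j-1}\}_{j\ge 0}$, so Lemma~\ref{lem:azuma_inequality} applies verbatim.)

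Next I would union-bound over all pairs $(l, n) \in \mathbb{N}^2$. Grouping by $m = l+n-1$, there are exactly $m$ pairs $(l, n)$ with $l \geq 1$, $n \geq 1$ giving this right endpoint (namely $l = 1, 2, \ldots, m$, with $n = m-l+1$). Thus the total failure probability is
\[
    \sum_{l=1}^{\infty}\sum_{n=1}^{\infty} \delta_{l,n}
    = \sum_{m=1}^{\infty} m \cdot \frac{\delta}{2m^3}
    = \frac{\delta}{2}\sum_{m=1}^{\infty}\frac{1}{m^2}
    = \frac{\delta}{2}\cdot\frac{\pi^2}{6} \leq \delta,
\]
since $\pi^2/6 < 2$. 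On the complementary event, which has probability at least $1-\delta$, the stated inequality holds for every $(l,n)$ simultaneously.

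The proof has no serious obstacle; the only subtlety is the choice of the per-interval failure budget, which must decay fast enough in $m = l+n-1$ so that the double sum converges, while not being so small that the resulting $\ln(1/\delta_{l,n})$ factor becomes much larger than $\ln(m/\delta)$. The cubic-in-$m$ denominator is the minimal polynomial rate that makes the count of intervals with endpoint $m$ (namely $m$) times the budget summable, while still only inflating the log factor by a constant multiplicative amount, which is why the bound has exactly this $\ln(4(l+n-1)^3/\delta)$ form.
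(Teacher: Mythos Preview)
The paper does not prove this lemma; it is simply cited from \cite{chen2022learning} and stated without proof in the preliminary inequalities section. Your argument is correct and is in fact the standard derivation: per-interval Azuma--Hoeffding with failure budget $\delta/(2(l+n-1)^3)$, then a union bound whose total mass is $\tfrac{\delta}{2}\sum_{m\ge 1} m\cdot m^{-3}=\tfrac{\delta\pi^2}{12}<\delta$. There is nothing further to compare.
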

 
\begin{lemma}\citep[Lemma 38]{chen2021improved}
    \label{lem:freedman}
    Let $\{X_i\}_{i=1}^{\infty}$ be a martingale difference sequence adapted to the filtration $\{\calF_i\}_{i=0}^{\infty}$ and $|X_i|\leq B$ for some $B>0$. Then with probability at least $1-\delta$, for all $n\geq 1$ simultaneously,
    \begin{align*}
        \abr{\sum_{i=1}^nX_i}\leq 3\sqrt{\sum_{i=1}^n\E[X_i^2|\calF_{i-1}]\ln\frac{4B^2n^3}{\delta}} + 2B\ln\frac{4B^2n^3}{\delta}.
    \end{align*}
\end{lemma}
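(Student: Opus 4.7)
The plan is to derive this time-uniform Freedman inequality from the classical (fixed-$n$, deterministic-variance-ceiling) Freedman inequality via two successive upgrades: a dyadic peeling argument that replaces the deterministic variance ceiling by the random $V_n:=\sum_{i=1}^n\E[X_i^2\mid\calF_{i-1}]$, and a union bound over $n\ge 1$. Symmetrization (applying the one-sided version to both $\{X_i\}$ and $\{-X_i\}$) converts the resulting one-sided tail into the absolute-value form of the statement and costs only an extra factor of $2$ in the failure probability.

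Concretely, I would proceed as follows. First, invoke the classical Freedman inequality: for every fixed $n$, $\sigma^2>0$, $t>0$,
\[
\Pr\!\left(\sum_{i=1}^n X_i\ge t,\ V_n\le \sigma^2\right)\le \exp\!\left(-\frac{t^2}{2\sigma^2+\tfrac23 Bt}\right).
\]
Solving the quadratic inequality in $t$ and using $\sqrt{a+b}\le\sqrt a+\sqrt b$ yields: with probability at least $1-\delta'$, either $V_n>\sigma^2$ or $\sum_{i=1}^n X_i\le \sqrt{2\sigma^2 L}+\tfrac23 BL$, where $L=\ln(1/\delta')$. Second, peel $V_n\in[0,nB^2]$ along the dyadic grid $\sigma_k^2=B^2 2^k$ for $k=0,1,\dots,K_n$ with $K_n=\lceil\log_2 n\rceil$. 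On the slice $V_n\in(\sigma_{k-1}^2,\sigma_k^2]$ we have $\sigma_k^2\le 2V_n$, so the fixed-$\sigma^2$ bound becomes $\sqrt{4V_n L}+\tfrac23 BL$; on the bottom slice $V_n\le B^2$ the residual $\sqrt{2B^2 L}$ is absorbed into $2BL$ (valid whenever $L\ge 9/8$, a condition met for any reasonable $\delta$). Third, allocate a failure budget $\delta/(c K_n n^2)$ per $(n,k,\text{sign})$ triple and union-bound over all three; the tail $\sum_{n\ge 1}1/n^2<2$ keeps the total within $\delta$, and the crude inequality $K_n\le n$ collapses $\log K_n$ into $\log n$ and yields $L_n\le\ln(4B^2 n^3/\delta)$ after absorbing absolute constants. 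Combining then gives
\[
\abr{\sum_{i=1}^n X_i}\le 2\sqrt{V_n L_n}+\tfrac23 B L_n\ \le\ 3\sqrt{V_n L_n}+2B L_n,
\]
which is the stated bound.

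The main obstacle is the bookkeeping in the peeling stage. Two items require care: (i)~the bottom of the dyadic grid, where $V_n$ can be arbitrarily smaller than $\sigma_0^2=B^2$ so the $\sqrt{V_n L}$ term alone cannot dominate the slice bound $\sqrt{\sigma_0^2 L}$, and the additive $BL$-term must explicitly absorb the residue (this is precisely why the statement uses the slack constants $3$ and $2$ rather than the tighter $2$ and $2/3$ produced by the raw calculation); and (ii)~the precise allocation of $\delta$ across pairs $(n,k)$ so that the $\log K_n$ factor cleanly collapses into the $\ln(4B^2 n^3/\delta)$ form via $K_n\le n$. Once these are handled, the combination of Freedman plus peeling plus union bound is mechanical.
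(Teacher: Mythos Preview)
The paper does not prove this lemma. It appears in the preliminary inequalities section of Chapter~1 purely as a citation (Lemma~38 of the referenced work), with no accompanying argument. There is therefore no in-paper proof to compare your proposal against.

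Your approach---classical Freedman for a fixed variance ceiling, dyadic peeling over the predictable variance $V_n$, a union bound over $n\ge 1$, and symmetrization---is the standard route to time-uniform Freedman bounds of this shape, and the sketch is sound. The two issues you single out (absorbing the bottom slice $V_n\le B^2$ into the additive $BL$ term, and allocating $\delta$ across the $(n,k,\text{sign})$ triples so that the logarithm collapses cleanly) are indeed the only places where any care is required, and you handle them correctly. One small remark: the allocation you describe yields $L_n\le\ln(cn^3/\delta)$ for an absolute constant $c$, and the factor $B^2$ inside the logarithm in the stated bound does not fall out of your construction as written. When $B\ge 1$ this is harmless since your bound is then tighter and implies the stated one; for $B<1$ the cleanest fix is to rescale to $Y_i=X_i/B$ first, prove the $B=1$ version, and scale back---though this still produces $\ln(cn^3/\delta)$ rather than $\ln(4B^2n^3/\delta)$. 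Matching that exact argument of the logarithm likely requires inspecting the peeling grid used in the original source, but the discrepancy is cosmetic and does not affect the correctness or order of your bound.
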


\begin{lemma}\citep{weissman2003inequalities}
    \label{lem:weiss}
    Let $p$ be an $m$-dimensional distribution and $\bar{p}$ be its empirical estimate obtained by averaging over $n$ samples. Then, $\norm{p-\bar{p}}_1\leq\sqrt{m\ln\frac{2}{\delta}/n}$ with probability at least $1-\delta$.
\end{lemma}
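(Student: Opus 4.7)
The plan is to reduce $L^1$-concentration of the empirical distribution to uniform concentration of subset probabilities, then apply a Hoeffding-type tail bound together with a union bound. The starting point is the variational identity
\[
\norm{p-\bar{p}}_1 = 2\sup_{S\subseteq[m]} \bigl(\bar{p}(S) - p(S)\bigr),
\]
where $p(S)=\sum_{i\in S}p_i$ and $\bar{p}(S)$ is defined analogously on the empirical measure. This converts the claim into a uniform deviation bound over the $2^m$ subsets of the support.

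For each fixed $S$, the empirical mass $\bar{p}(S) = \tfrac{1}{n}\sum_{j=1}^n \mathbb{I}[X_j\in S]$ is an average of $n$ i.i.d.\ Bernoulli random variables of mean $p(S)$, where $X_1,\dots,X_n\sim p$ are the underlying samples. Applying the Azuma--Hoeffding inequality (Lemma~\ref{lem:azuma_inequality}) to the bounded, mean-zero, i.i.d.\ increments $\mathbb{I}[X_j\in S]-p(S)$ yields a subgaussian tail of the form $\mathbb{P}\bigl(\abs{\bar{p}(S)-p(S)}\geq t\bigr) \leq 2\exp(-cn t^2)$ for an absolute constant $c>0$.

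Taking a union bound across the $2^m$ subsets and calibrating $t$ so that $2\cdot 2^m\exp(-cnt^2)=\delta$ produces $\sup_{S}\abs{\bar{p}(S)-p(S)} \leq t$ with probability at least $1-\delta$, for $t$ of order $\sqrt{(m+\ln(1/\delta))/n}$. Multiplying by $2$ and absorbing constants recovers the claimed $\sqrt{m\ln(2/\delta)/n}$ bound.

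The main obstacle is matching the precise constant displayed in the lemma. A bare Hoeffding-plus-union-bound argument typically produces the slightly looser form $\sqrt{2(m\ln 2 + \ln(1/\delta))/n}$, which agrees with the stated bound only up to absolute constants. Weissman's sharper analysis tightens this in two ways: (i) exploiting the symmetry $S\leftrightarrow [m]\setminus S$ to halve the effective union-bound cardinality (and dropping the trivial choices $S=\emptyset,[m]$, which contribute no deviation), and (ii) using a Chernoff-style exponent that leverages the variance bound $p(S)(1-p(S))\leq 1/4$ in place of the coarser bounded-range estimate. Both refinements are essentially routine once the reduction to subset probabilities is in place.
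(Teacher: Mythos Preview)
The paper does not actually prove this lemma; it is stated without proof in the ``Some Useful Inequalities'' section and attributed directly to \cite{weissman2003inequalities}. Your outline---the variational identity $\norm{p-\bar p}_1 = 2\max_S(\bar p(S)-p(S))$, Hoeffding on each fixed subset, and a union bound over $2^m$ subsets---is exactly the argument of Weissman et al., so there is nothing to compare against beyond noting that your approach is the standard one.

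Your caveat about the constant is well taken and worth flagging: Weissman's actual bound is $\mathbb{P}(\norm{p-\bar p}_1\geq\epsilon)\leq (2^m-2)e^{-n\epsilon^2/2}$, which inverts to $\sqrt{2(\ln(2^m-2)+\ln(1/\delta))/n}$ rather than the cleaner $\sqrt{m\ln(2/\delta)/n}$ printed in the lemma. The two agree only up to absolute constants (and the paper only uses the result at that level of precision, e.g.\ in the confidence sets of \eqref{eq:opt_transition_probability} where a factor $14$ appears), so the discrepancy is harmless for the downstream analysis, but the lemma as stated is a slight paraphrase of the cited result rather than a verbatim quotation.
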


\begin{lemma}\citep[Theorem D.3]{cohen2020near}
    \label{lem:bernstein}
    Let $\{X_n\}_{n=1}^{\infty}$ be a sequence of i.i.d random variables with expectation $\mu$ and $X_n\in[0, B]$ almost surely. Then with probability at least $1-\delta$, for any $n\geq 1$:
    \begin{align*}
        &\abr{\sum_{i=1}^n(X_i-\mu)} \\ 
        &\leq \min\cbr{2\sqrt{B\mu n\ln\frac{2n}{\delta}} + B\ln\frac{2n}{\delta}, 2\sqrt{B\sum_{i=1}^nX_i\ln\frac{2n}{\delta}} + 7B\ln\frac{2n}{\delta}}.
    \end{align*}
\end{lemma}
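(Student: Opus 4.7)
The plan is to derive both terms of the minimum in the bound from a standard application of Bernstein's inequality, then obtain the empirical form by a self-bounding (inversion) argument. Throughout, the ``for all $n\geq 1$'' clause will be handled by a union bound or peeling argument.

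First, I would establish the first term. Since $X_i\in[0,B]$ almost surely and $\E[X_i]=\mu$, we have $\V(X_i)\leq \E[X_i^2]\leq B\mu$. For a fixed $n$, Bernstein's inequality yields, with probability at least $1-\delta'$,
\[
    \abr{\sum_{i=1}^n (X_i-\mu)} \leq \sqrt{2nB\mu \ln(2/\delta')} + \tfrac{1}{3}B\ln(2/\delta').
\]
To upgrade this to hold simultaneously for all $n\geq 1$, I would apply a union bound with $\delta'=\delta/(n(n+1))$ (or equivalently a geometric peeling over $n\in\{2^k\}$), absorbing the extra $\ln n$ factor into the $\ln(2n/\delta)$ that appears in the final statement. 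Mild inflation of the numerical constants ($\sqrt{2}\to 2$, $1/3\to 1$) gives exactly the first branch of the minimum.

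Second, for the empirical version, I would use the first branch as a tool rather than as the final bound. It implies in particular that
\[
    n\mu \;\leq\; \sum_{i=1}^n X_i + 2\sqrt{B\mu n\ln\tfrac{2n}{\delta}} + B\ln\tfrac{2n}{\delta}.
\]
Treating this as a quadratic inequality in $\sqrt{n\mu}$, completing the square (or using $\sqrt{a+b}\leq \sqrt{a}+\sqrt{b}$ twice) yields $n\mu \leq 2\sum_i X_i + C\cdot B\ln(2n/\delta)$ for an absolute constant $C$. Substituting this back into the first branch (inside the square root) replaces $B\mu n$ by $B\sum_i X_i$, at the cost of an additive $O(B\ln(2n/\delta))$ term. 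Collecting constants and bounding them by the stated $2$ and $7$ produces the second branch of the minimum.

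The main obstacle is bookkeeping rather than conceptual: one must (i) choose the per-$n$ confidence allocation so that the final logarithmic term reads $\ln(2n/\delta)$ rather than $\ln(4n^2/\delta)$ or similar, and (ii) carry out the inversion step with tight enough constants to land exactly on the coefficients $2$ and $7$. A clean way to control the constants in step (ii) is to use $2\sqrt{xy}\leq \alpha x+\alpha^{-1}y$ with a judiciously chosen $\alpha$, which converts the mixed term into pieces that can be absorbed on the correct side. Everything else is routine application of Bernstein's inequality combined with the variance bound $\V(X_i)\leq B\mu$ that is available because the $X_i$ are nonnegative and bounded.
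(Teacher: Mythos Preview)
The paper does not prove this lemma; it is stated in the preliminary ``Some Useful Inequalities'' section and attributed to \cite[Theorem~D.3]{cohen2020near} without any argument. So there is no in-paper proof to compare against.

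Your outline is the standard derivation and is correct in substance: the variance bound $\V(X_i)\le B\mu$ for nonnegative bounded variables feeds Bernstein's inequality to give the first branch, and the empirical branch follows by the self-bounding inversion you describe. The only place to be careful is the uniformity in $n$: a naive union bound with $\delta'=\delta/(n(n+1))$ yields $\ln(2n(n+1)/\delta)$, not $\ln(2n/\delta)$, so to land exactly on the stated logarithm one typically uses a doubling/peeling argument (bounding the partial sums on dyadic blocks $n\in[2^k,2^{k+1})$) rather than a raw per-$n$ union bound. You flagged this yourself; with that adjustment the constants $2$ and $7$ are attainable after the AM--GM step you propose.
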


\begin{lemma}{\citep[Lemma D.4]{cohen2020near} and \citep[Lemma E.2]{cohen2021minimax}}
    \label{lem:e2r}
    Let $\{X_i\}_{i=1}^{\infty}$ be a sequence of random variables w.r.t to the filtration $\{\calF_i\}_{i=0}^{\infty}$ and $X_i\in[0,B]$ almost surely. Then with probability at least $1-\delta$, for all $n\geq 1$ simultaneously:
    \begin{align*}
        \sum_{i=1}^n\E[X_i|\calF_{i-1}] &\leq 2\sum_{i=1}^n X_i + 4B\ln\frac{4n}{\delta},\\
        \sum_{i=1}^n X_i &\leq 2\sum_{i=1}^n\E[X_i|\calF_{i-1}] + 8B\ln\frac{4n}{\delta}.
    \end{align*}
\end{lemma}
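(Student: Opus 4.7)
The plan is to derive both inequalities from a single Freedman-style concentration bound on the martingale difference $Y_i := X_i - \E[X_i\mid\calF_{i-1}]$, combined with the standard variance-to-expectation estimate available for nonnegative bounded sequences and a weighted AM--GM step. Throughout, write $S_n := \sum_{i=1}^n X_i$ and $T_n := \sum_{i=1}^n \E[X_i\mid\calF_{i-1}]$; both stated inequalities are obtained by controlling $|S_n - T_n|$ uniformly in $n$.

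First I would verify that $\{Y_i\}$ is a martingale difference adapted to $\{\calF_i\}$ with $|Y_i|\leq B$ almost surely (since both $X_i$ and $\E[X_i\mid\calF_{i-1}]$ lie in $[0,B]$). Applying Lemma~\ref{lem:freedman} to $\{Y_i\}$ then yields, with probability at least $1-\delta$, for all $n\geq 1$ simultaneously,
\begin{align*}
\abr{S_n - T_n} \;\leq\; 3\sqrt{\sum_{i=1}^n \E[Y_i^2\mid\calF_{i-1}]\cdot L_n} \;+\; 2B L_n,
\end{align*}
where $L_n := \ln(4B^2 n^3/\delta)$.

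Next I would use $X_i\in[0,B]$ to bound the conditional variance by the conditional mean: $\E[Y_i^2\mid\calF_{i-1}] \leq \E[X_i^2\mid\calF_{i-1}] \leq B\,\E[X_i\mid\calF_{i-1}]$. Summing gives $\sum_{i=1}^n \E[Y_i^2\mid\calF_{i-1}] \leq B T_n$, so the display reduces to $|S_n - T_n| \leq 3\sqrt{B T_n L_n} + 2B L_n$. The weighted AM--GM inequality $\sqrt{ab}\leq a/(2\alpha) + \alpha b/2$ with $\alpha$ chosen so that the coefficient of $T_n$ is $1/2$ yields $3\sqrt{B T_n L_n} \leq T_n/2 + (9/2) B L_n$. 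Rearranging the lower-tail direction $T_n - S_n \leq T_n/2 + O(BL_n)$ gives $T_n \leq 2 S_n + O(B L_n)$, which is the first stated inequality; rearranging the upper-tail direction $S_n - T_n \leq T_n/2 + O(BL_n)$ gives $S_n \leq (3/2) T_n + O(BL_n) \leq 2 T_n + O(B L_n)$ (using $T_n\geq 0$), which is the second.

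The main obstacle will be matching the exact logarithmic factor $\ln(4n/\delta)$ and the precise multiplicative constants $4$ and $8$ in the stated bounds. Lemma~\ref{lem:freedman} as quoted carries the larger factor $\ln(4B^2 n^3/\delta)$ and a leading constant $3$, which through the AM--GM step above would give a constant around $13$ rather than $4$ or $8$. To recover the cleaner form, the standard route is to replace the appeal to Lemma~\ref{lem:freedman} by a direct exponential-supermartingale argument: one verifies that $M_n^\lambda := \exp(\lambda \sum_{i=1}^n Y_i - \psi(\lambda)\sum_{i=1}^n \E[Y_i^2\mid\calF_{i-1}])$ is a supermartingale for $\psi(\lambda) = (e^{\lambda B} - 1 - \lambda B)/B^2$, optimizes $\lambda$ separately in the two directions (the upper-tail direction on $Y_i$ gives the tighter constant $4$, the lower-tail direction gives $8$), and applies a stopping-time or geometric-scale union bound to make the statement uniform in $n$. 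Apart from this bookkeeping of constants, the structural pipeline --- Freedman-type tail bound, variance-to-mean reduction, AM--GM absorption --- is exactly the argument that yields both claimed inequalities.
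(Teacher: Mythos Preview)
The paper does not supply its own proof of this lemma; it is stated in the preliminary section and attributed to \citep[Lemma D.4]{cohen2020near} and \citep[Lemma E.2]{cohen2021minimax} without argument. Your proposal is the standard route used in those references: a Freedman-type bound on the martingale differences $Y_i = X_i - \E[X_i\mid\calF_{i-1}]$, the variance-to-mean estimate $\E[Y_i^2\mid\calF_{i-1}]\leq B\,\E[X_i\mid\calF_{i-1}]$ available because $X_i\in[0,B]$, and an AM--GM absorption of the resulting square root. You also correctly flag that invoking the paper's Lemma~\ref{lem:freedman} verbatim inflates both the constant and the logarithmic factor, and that the sharper $4$ and $8$ with $\ln(4n/\delta)$ come from running the exponential-supermartingale argument directly and optimizing $\lambda$ separately in each tail. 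There is no gap in the approach.
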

\chapter{Model-Based RL}
\label{chpt:mb}

We introduce the model and key assumptions studied in this chapter in Section \ref{sec:mb_model}. Further, we describe optimism-based and posterior sampling-based algorithms in Section \ref{sec:mb:algos}. Their regret guarantees are provided in Section \ref{sec:mb:regopt} and \ref{sec:mb:regpos}, respectively. Evaluation results are discussed in Section \ref{sec:mb:eval} and Section \ref{sec:mb:notes} suggests some possible future directions.


\section{Overall Model and Assumptions}
\label{sec:mb_model}

We consider an infinite-horizon constrained Markov Decision Process $\mathcal{M} = (\mathcal{S}, \mathcal{A}, r,  c,  P, \rho)$ where $\mathcal{S}$ is finite set of $S$ states, $\mathcal{A}$ denotes a finite set of $A$ actions, $P:\mathcal{S}\times\mathcal{A}\times\mathcal{S}\to [0, 1]$ defines the transition probability kernel such that after executing action $a\in\mathcal{A}$ in state $s\in\mathcal{S}$, the system moves to state $s'\in\mathcal{S}$ with probability $P(s'|s,a)$. $r:\mathcal{S}\times\mathcal{A}\to[0,1]$ and $c:\mathcal{S}\times\mathcal{A}\to[-1,1]$ denotes the average reward obtained and average costs incurred in the state action pair $(s, a)\in \mathcal{S}\times\mathcal{A}$, and finally, $\rho$ is the distribution over the initial state.

The agent interacts with $\mathcal{M}$ in time-steps $t\in{1, 2, \cdots}$ for a total of $T$ time-steps. We note that $T$ is possibly unknown {and $s_1\sim\rho$}. At each time $t$, the agent observes state $s_t$ and plays action $a_t$. The agent selects an action on observing the state $s$  using a policy $\pi:\mathcal{S}\to\Delta(\mathcal{A})$, where $\Delta(\mathcal{A})$ is the probability simplex over the action space. On following a policy $\pi$, the long-term average reward and cost of the agent is given as:
\begin{align}
    \label{eq:chap_2_def_J_function}
	J_{g, \rho}^{\pi, P} &= {\lim}_{\tau\to\infty}\mathbf{E}_{\pi,P}\left[\frac{1}{\tau}\sum\nolimits_{t=1}^\tau g(s_t, a_t)\bigg| s_1\sim \rho\right]
\end{align}
where $\mathbb{E}_{\pi, P}[\cdot]$ denotes the expectation over the state and action trajectory generated from following $\pi$ on transitions $P$, starting from an initial distribution, $\rho$, and $g = r, c$ for the average reward and cost functions respectively. The above function can also be expressed as:
\begin{align}
    J_{g, \rho}^{\pi, P} = \sum\nolimits_{s, a}\nu_{ \rho}^{\pi, P}(s,a)g(s,a)\nonumber
\end{align}
where  $\nu_{\rho}^{\pi, P}$ is the state-action occupancy measure generated by following policy $\pi$ on MDP with transitions $P$, starting from an initial distribution, $\rho$ \citep{puterman2014markov}. Mathematically, $\nu_\rho^{\pi, P}(s, a) = d_{\rho}^{\pi, P}(s)\pi(a|s)$ where
\begin{align}
    \label{eq:chap_2_def_d_pi_P}
	d_{\rho}^{\pi, P}(s) &= {\lim}_{\tau\to\infty}\frac{1}{\tau}\sum\nolimits_{t=1}^\tau \mathbf{E}_{\pi, P}\left[\mathbf{1}(s_t = s)\bigg| s_1\sim \rho\right]
\end{align}
where $\mathbf{1}(\cdot)$ is the indicator function. For any policy $\pi$, define the induced state transition matrix $P^{\pi}\in \mathbb{R}^{S\times S}$ as: $P^{\pi}(s, s') = \sum_{a}\pi(a|s)P(s'|s, a)$, $\forall s, s'$. If for each policy, $\pi$, the CMDP $\mathcal{M}$ is irreducible and aperiodic, then it is defined to be ergodic.

Let $(P^\pi)^t$ be the $t$-step state transition probability matrix induced by $\pi$. Let $T_{s\to s'}^\pi$ denote the time taken by the Markov chain induced by $\pi$ to hit state $s'$ starting from state $s$. Further, define $T_M := \max_\pi \mathbb{E}[T^\pi_{s\to s'}]$ as the max reach time of the MDP $\mathcal{M}$. We now introduce our assumption.

\begin{assumption}
    \label{ergodic_assumption}
    The MDP, $\mathcal{M}$, is assumed to be ergodic, which implies: (a) $T_M < \infty$, (b) $d_\rho^{\pi, P}$ is independent of $\rho$, and (c) there exists $C > 0$ and $\zeta < 1$ such that $\|(P^{\pi})^t(s, \cdot) - d^{\pi, P}\|_{\mathrm{TV}} \le C\zeta^t$, $\forall s$, $\forall t\in\{1, 2, \cdots\}$.   
\end{assumption}

Ergodicity comes with many perks. For example, for ergodic CMDPs, the distribution $\nu_{\rho}^{\pi, P}$ is always well-defined and independent of $\rho$. This allows us to drop the dependence of $\rho$ from the notation of the occupancy measures and average reward and cost functions. Moreover, $\nu^{\pi, P}$ obeys the following stationary relation.
\begin{align*}
    \sum_{s, a} P(s'|s, a)\nu^{\pi, P}(s, a) = \sum_{a} \nu^{\pi, P}(s', a), ~\forall s'
\end{align*}

The agent aims to maximize $J_{r}^{\pi, P}$ while ensuring that $J_{c}^{\pi, P}$ does not exceed a certain threshold. Without loss of generality, we can express the above problem as the following optimization.
\begin{align}
    \label{eq:chap_2_main_constrained_optimization}
    &\max_\pi J_{r}^{\pi, P}~~~\text{s.t.}~~~J_{c}^{\pi, P} \leq 0
\end{align}
 
We shall denote a solution to the above problem as $\pi^*$.

\begin{remark} 
    We would like to emphasize the generality of the formulation stated above. Despite considering a single constraint, the algorithm and the results developed in this chapter can be easily extended to the case of multiple constraints. 
\end{remark}

\begin{assumption} \label{known_rewards}
    The rewards $r(s,a)$, the costs $c(s,a)$ are known to the agent.
\end{assumption}

\begin{assumption}
    \label{slaters_conditon}
    There exists a policy $\pi$, and a constant $\delta > \\ST_M\sqrt{(A\log T)/T} + (CSA\log T)/(T(1-\zeta))$ such that 
    \begin{align}
        J_{c}^{\pi, P} \le -\delta \label{eq:slackmb}
    \end{align}
\end{assumption}

This assumption is standard in the constrained RL literature \citep{efroni2020exploration,ding2021provably,ding2020natural,wei2021provably}. Here $\delta$ is referred to as the Slater's constant. \citet{ding2021provably} assumes that the Slater's constant $\delta$ is known. \citet{wei2021provably} assumes that the number of iterations of the algorithm is at least $ \Tilde{\Omega}(SAH/\delta)^5$ for episode length $H$. On the contrary, we simply assume the existence of $\delta$ and a lower bound on the value of $\delta$, which gets relaxed as the agent acquires more time to interact with the environment.

Any model-based online algorithm starting with no prior knowledge will need to obtain estimates of transition probabilities $P$ and obtain reward and cost values for each state-action pair. Initially, when the algorithm does not have a good estimate of the model, it accumulates regret and violates constraints as it does not know the optimal policy. We define reward regret $R(T)$ as the difference between the cumulative reward obtained by the algorithm and the expected cumulative reward that could have been obtained by running the optimal policy $\pi^*$ for $T$ steps. Formally, we have the following.
\begin{align}
    R(T)& \triangleq T J_{r}^{\pi^*, P} - \sum\nolimits_{t=1}^Tr(s_t, a_t) \label{eq:mbregret_rewards}
\end{align}
Additionally, we define constraint violation $C(T)$ as follows.
\begin{align}
    C(T)& \triangleq \left(\sum\nolimits_{t=1}^Tc(s_t, a_t)\right)_+\text{, where } (x)_+ = \max(0, x)\label{mbeqnviol}
\end{align}


\section{Algorithms for Model-Based RL}
\label{sec:mb:algos} 

Note that if the agent is aware of the true transition $P$, it can solve the following optimization to obtain the optimal occupancy measure.
\begin{align}
    \max_{\nu} \sum\nolimits_{s,a}r(s, a)\nu(s,a) \label{eq:optimization_equation1}
\end{align}
with the following set of constraints,
\begin{align}
    &\sum\nolimits_{s,a} \nu(s, a) = 1,\ \  \nu(s, a) \geq 0, ~\forall (s, a) \label{eq:valid_prob_constrant1}\\
    &\sum\nolimits_{a\in\mathcal{A}}\nu(s',a) = \sum\nolimits_{s,a}P(s'|s, a)\nu(s, a), ~\forall s'\label{eq:transition_constraint1}\\
    &\sum\nolimits_{s,a}\nu(s,a)c(s,a)\le 0 \label{eq:actual_constraint1}
\end{align}
Equation \eqref{eq:transition_constraint1} denotes the constraint on the transition structure for the underlying Markov Process. Equation \eqref{eq:valid_prob_constrant1} ensures that the solution is a valid probability distribution. Equation \eqref{eq:actual_constraint1} is the cost constraint. Using the optimal solution $\nu^*$, we can obtain the optimal policy as:
\begin{align}
    \pi^*(a|s) = \frac{\nu^*(s,a)}{\sum_{a'\in\mathcal{A}}\nu^*(s,a')}, ~ \forall (s, a)
\end{align}

The above problem is linear optimization and can be easily solved with the knowledge of true transition $P$. However, such knowledge is not easily available in most practical scenarios. In such cases, the concepts of optimism-based estimation \citep{jaksch2010near} and posterior sampling-based estimation \citep{osband2013more,agrawal2017optimistic} come to the rescue. In the following, we will explain these approaches in detail. 


\subsection{Optimism Based Estimation}

We note that the key idea for the optimistic policy approach \citep{jaksch2010near} is to apply upper confidence bounds in the estimation process. We utilize these bounds to create a confidence set of transition kernels, and the optimization problem also optimizes over this confidence set. The entire process is described in Algorithm \ref{algmblin}. 

\begin{algorithm}[H]
    \caption{Optimism Based Reinforcement Learning (C-UCRL)}
    \label{algmblin}
    \textbf{Input}: $\mathcal{S}$, $\mathcal{A}$, $r$, $c$, $K$
    \begin{algorithmic}[1] 
        \STATE Let $t=1$, $e = 1, \epsilon_e = K\sqrt{\frac{\ln t}{t}}$
	\FOR{$(s,a) \in \mathcal{S}\times\mathcal{A}$} 
            \STATE $N^{\mathrm{curr}}_e(s,a) = N_e(s,a) = 0$, 
            \STATE $N(s, a, s')=0, \widehat{P}_e(s'|s, a) = \frac{1}{S}, \forall s'\in\mathcal{S}$
	\ENDFOR
        \STATE Solve for policy $\pi_e$ using  \eqref{eq:opt_cons_optimal_policy}
	\FOR{$t \in \{1, 2, \cdots\}$}
            \STATE Observe $s_t$, and execute $a_t\sim\pi_e(\cdot|s_t)$
		\STATE Observe $s_{t+1}\sim P(\cdot|s_t, a_t)$ and $r(s_t,a_t)$, $c(s_t, a_t)$ 
		\STATE $N^{\mathrm{curr}}_e(s_t, a_t) = N^{\mathrm{curr}}_e(s_t, a_t) + 1$
            \STATE $N(s_t, a_t, s_{t+1}) = N(s_t, a_t, s_{t+1}) + 1$
            \IF {$N^{\mathrm{curr}}_e(s,a) = \max\{1, N_e(s,a)\}$ for any $s,a$}
                \STATE $e = e+1$, $\epsilon_e = K\sqrt{\frac{\ln t}{t}}$
                \FOR{$(s,a) \in \mathcal{S}\times\mathcal{A}$}
                    \STATE $N_{e}(s,a) = N_{e-1}(s,a) + N^{\mathrm{curr}}_{e-1}(s,a)$
                    \STATE $N^{\mathrm{curr}}_{e}(s,a) = 0$
                    \IF{$N_{e}(s, a)>0$}
                    \STATE $\widehat{P}_e(s'|s, a)=\frac{N(s, a, s')}{N_e(s, a)}$, $\forall s'\in\mathcal{S}$
                    \ENDIF
                \ENDFOR
                \STATE Solve for policy $\pi_e$ using Eq. \eqref{eq:opt_cons_optimal_policy}
            \ENDIF
	\ENDFOR
    \end{algorithmic}
\end{algorithm}

The algorithm proceeds in multiple epochs whose lengths are not fixed apriori but depend on the observations made. In epoch $e$, the algorithm maintains three key variables: $N^{\mathrm{curr}}_e(s,a)$, $N_e(s,a)$, and $N(s,a,s')$ for all state-action pairs $(s, a)$. $N^{\mathrm{curr}}_e(s,a)$ stores the number of times  $(s,a)$ are visited in epoch $e$ and $N_e(s,a)$ indicates the number of times $(s,a)$ are visited till the start of epoch $e$. Finally, $N(s, a, s')$ indicates the number of times the system transitions to state $s'$ after taking action $a$ in state $s$. If $N_e(s, a)>0$ at the beginning of epoch $e$, we estimate the transition probability $P(s'|s, a)$, $\forall s'$ as: $\widehat{P}_e(s'|s, a)=N(s, a, s')/N_e(s, a)$. Using the above variables, the agent obtains the optimal policy for the optimistic MDP by solving the following constrained optimization.
\begin{align}
    \max_{\nu, \tilde{P}_e}~ &\sum_{s, a} r(s, a)\nu(s, a)\\
    \text{subject to:}~& \sum\nolimits_{s,a} \nu(s, a) = 1,\ \  \nu(s, a) \geq 0, \\
    &\sum\nolimits_{a}\nu(s',a) = \sum\nolimits_{s,a}\Tilde{P}_e(s'|s, a)\nu(s, a)\label{eq:opt_eps_transition_constraint}, \\
    & \Tilde{P}_e(s'|s, a) > 0,~ \sum\nolimits_{s'}\Tilde{P}_e(s'|s, a)=1\label{eq:boundary}\\
    &\left\Vert\Tilde{P}_e(\cdot|s,a) - \widehat{P}_e(\cdot|s, a)\right\Vert_1 \le \sqrt{\frac{14S\log(2At_e)}{1\vee N_e(s,a)}}\label{eq:opt_transition_probability} \\
    & \sum\nolimits_{s,a}c(s,a)\nu(s,a)  \leq -\epsilon_e \label{eq:eps_cmdp_constraints}
\end{align}
where $s', s\in\mathcal{S}$, $a\in\mathcal{A}$, $x \vee y \triangleq \max\{x, y\}$, and $t_e$ is the initial time of the epoch $e$. Compared to \eqref{eq:optimization_equation1}$-$\eqref{eq:actual_constraint1}, a couple of changes have been made to formulate the above optimization. Firstly, the above optimization solves for both $\nu$ and $\tilde{P}_e$. Secondly, due to the lack of knowledge of the true transition, $P$, the constraint \eqref{eq:transition_constraint1} is replaced by \eqref{eq:opt_eps_transition_constraint}$-$\eqref{eq:opt_transition_probability}. We would like to clarify that \eqref{eq:opt_eps_transition_constraint} is similar to the stationarity condition \eqref{eq:transition_constraint1} while \eqref{eq:boundary} dictates the criteria of a valid transition kernel. Finally, \eqref{eq:opt_transition_probability} specifies the boundary of the confidence set. Thirdly, instead of \eqref{eq:actual_constraint1}, we consider  \eqref{eq:eps_cmdp_constraints} where $\epsilon_e$ is an epoch-dependent constant given by $\epsilon_e = K\sqrt{(\log t_e)/t_e}$ and $K$ is a configurable parameter. The intuition behind this modification is to choose policies conservatively to allow room to violate constraints.

A few remarks are in order. Firstly, since \eqref{eq:opt_transition_probability} defines an $L_1$ ball with a nonzero radius around the estimated transition, one can always find a transition function, $\tilde{P}_e$, in it that has all strictly positive elements. This, in turn, ensures that, for any policy, $\pi$, the Markov chain generated by $\pi$ and $\tilde{P}_e$ is irreducible and aperiodic, leading to the existence of an occupancy measure obeying the stationarity condition \eqref{eq:opt_eps_transition_constraint}. Secondly, due to Assumption \ref{slaters_conditon}, one can judiciously choose a small enough $\epsilon_e$ to ensure that the constraint \eqref{eq:eps_cmdp_constraints} is also satisfied. In summary, the feasible set of the above optimization can be ensured to be non-empty.

This optimization problem stated above is convex. Let $\nu_{e}$ be an optimal solution of the optimistic MDP for the epoch $e$. We obtain its corresponding policy as follows.
\begin{align}
    \pi_e(a|s) = \frac{\nu_e(s,a)}{\sum_{a'\in\mathcal{A}}\nu_e(s,a')},~ \forall (s, a) \label{eq:opt_cons_optimal_policy}
\end{align}

The agent executes the optimistic policy $\pi_e$ for epoch $e$. The system model improves as the agent interacts with the environment, allowing us to bound the regret. Note that the epochs change when the number of visits to any $(s, a)$ in the current epoch equals the aggregated number of prior visits to $(s, a)$. Therefore, the total number of occurrences of any state-action pair is at most doubled during an epoch. We note that the concept of the approach presented here is similar to that in \citep{jaksch2010near}, while the extended value iteration has been replaced by the solution of the optimization problem. 


\subsection{Posterior Sampling Based Estimation}

In this estimation approach, we estimate the transition probabilities by interacting with the environment. In order to solve the problem \eqref{eq:optimization_equation1}-\eqref{eq:actual_constraint1}, we need an estimate of $P$ such that the steady-state distribution exists for any policy $\pi$. In order to ascertain that, we note that when the priors of the transition probabilities $P(\cdot|s, a)$ are a Dirichlet distribution for each state and action pair, such a steady state distribution exists. Proposition \ref{steady_state_proposition} formalizes the result of the existence of a steady state distribution when the transition probability is sampled from a Dirichlet distribution.
\begin{proposition}\label{steady_state_proposition}
    For any MDP $\mathcal{M}$ with state space $\mathcal{S}$ and action space $\mathcal{A}$, let the transition probabilities ${P}$ be a Dirichlet distribution. Then, for any policy $\pi$, $\mathcal{M}$ will have an associated state-action distribution $\nu^{\pi}$ that obeys the following expression.
    \begin{align}
        \sum_{a}\nu^{\pi}(s', a) = \sum_{s, a} \nu^{\pi}(s, a)P(s'|s, a),~ \forall s'\in\mathcal{S}.
    \end{align}
\end{proposition}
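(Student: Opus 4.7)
The plan is to prove existence of $\nu^{\pi}$ by invoking the standard existence theorem for stationary distributions of finite irreducible Markov chains, after observing that a Dirichlet draw places strictly positive mass on every coordinate almost surely. Concretely, I would proceed as follows.

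First, I would record the basic property of the Dirichlet distribution: if $P(\cdot|s,a)\sim \text{Dir}(\alpha_{s,a,1},\dots,\alpha_{s,a,S})$ with all concentration parameters positive, then the resulting probability vector is supported on the open simplex, so $P(s'|s,a)>0$ for every $s'\in\mathcal{S}$ almost surely. Since $|\mathcal{S}\times\mathcal{A}|$ is finite, by a union bound, with probability one the sampled kernel $P$ has strictly positive entries on every $(s,a,s')$.

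Second, fix any policy $\pi:\mathcal{S}\to\Delta(\mathcal{A})$ and form the induced chain $P^{\pi}(s,s')=\sum_{a}\pi(a|s)P(s'|s,a)$. Because every $P(s'|s,a)$ is strictly positive, every row $P^{\pi}(s,\cdot)$ is a convex combination (with nonnegative weights summing to one) of strictly positive distributions, so $P^{\pi}(s,s')>0$ for all $s,s'$. Hence $P^{\pi}$ corresponds to an irreducible and aperiodic (in fact, regular, since the one-step matrix is already strictly positive) finite Markov chain. By the Perron--Frobenius theorem (or equivalently the standard existence-and-uniqueness result for stationary distributions of irreducible finite chains, see e.g.\ \citet{puterman2014markov}), there exists a unique probability vector $d^{\pi}\in\Delta(\mathcal{S})$ with $d^{\pi}(s')=\sum_{s}d^{\pi}(s)P^{\pi}(s,s')$ for all $s'$.

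Third, I would define $\nu^{\pi}(s,a):=d^{\pi}(s)\pi(a|s)$ and verify the claimed identity by a short calculation:
\begin{align*}
\sum_{a}\nu^{\pi}(s',a)
&= d^{\pi}(s')
= \sum_{s}d^{\pi}(s)P^{\pi}(s,s')\\
&= \sum_{s}d^{\pi}(s)\sum_{a}\pi(a|s)P(s'|s,a)
= \sum_{s,a}\nu^{\pi}(s,a)P(s'|s,a).
\end{align*}
This is exactly the stationarity relation asserted in the proposition. The argument is essentially a one-line reduction to classical Markov chain theory; the only real content is step one (the positivity of Dirichlet samples), which is where I would be most careful, since implicitly the claim requires either an ``almost surely'' qualifier or that the Dirichlet parameters be strictly positive (which is the standard convention). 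No step appears to be a genuine obstacle — the whole proof is a short structural observation followed by a textbook invocation.
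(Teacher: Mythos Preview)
Your proposal is correct and follows essentially the same approach as the paper: both argue that Dirichlet samples are strictly positive, so the induced state chain $P^{\pi}$ has all positive entries and is therefore irreducible and aperiodic, whence a stationary distribution exists. You are slightly more explicit than the paper in constructing $\nu^{\pi}(s,a)=d^{\pi}(s)\pi(a|s)$ and verifying the flow equation directly, and in flagging the ``almost surely'' qualifier, but the argument is the same.
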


\begin{proof}
    Since the transition probabilities $P(s'|s, a)$ follow the Dirichlet distribution, they are strictly positive. Further, as $\pi(\cdot|\cdot)$ is a probability distribution on actions conditioned on state, $\pi(a|s) \geq 0,\ \sum_a \pi(a|s) = 1$. So, there is a nonzero transition probability to reach from state $s\in{\mathcal{S}}$ to state $s'\in{\mathcal{S}}$.
	
    Now, note that all the entries of the transition probability matrix are strictly positive. Hence, the Markov Chain induced over the MDP $\mathcal{M}$ by any policy $\pi$ is irreducible (since it is possible to reach any state from any other state) and aperiodic (since it is possible to reach any state in a single time step from any other state). Combined, these two facts prove the existence of the steady-state distribution \citep{lawler2018introduction}.
\end{proof}

\begin{algorithm}[H]
    \caption{Model-Based Posterior Sampling Algorithm}
    \label{alg:model_based_algo_post}
    \textbf{Parameters}: $K$\\
    \textbf{Input}: $S$, $A$, $r$, $c$
    \begin{algorithmic}[1] 
        \STATE Let $t=1$, $e = 1, \epsilon_e = K\sqrt{\frac{\ln t}{t}}$
	\STATE $N_e^{\mathrm{curr}}(s,a) = 0, N_e(s,a) = 0,~\forall (s,a)$
        \STATE Obtain the steady-state distribution $\nu_e(s, a)$ as the solution of the optimization problem given by \eqref{eq:optimization_equation1}-\eqref{eq:transition_constraint1}, \eqref{eq:eps_cmdp_constraints} with $P_e(s'|s, a) \sim \mathrm{Dir}(N_e(s, a,\cdot)),~ \forall (s, a)$ used as the transition function.
        \STATE Solve for policy $\pi_e$ using Eq. \eqref{eq:opt_cons_optimal_policy}
	\FOR{$t \in \{1, 2, \cdots\}$}
            \STATE Observe $s_t$, and execute $a_t\sim\pi_e(\cdot|s_t)$
            \STATE Observe $s_{t+1}$, $r(s_t,a_t)$ and $c_i(s_t, a_t)\ \forall\ i\in[d]$
            \STATE $N^{\mathrm{curr}}_e(s_t, a_t) = N^{\mathrm{curr}}_e(s_t, a_t) + 1$
            \IF {$N^{\mathrm{curr}}_e(s,a) = \max\{1, N_e(s,a)\}$ for any $s,a$}
                \FOR{$(s,a) \in \mathcal{S}\times\mathcal{A}$}
                    \STATE $N_{e+1}(s,a) = N_e(s,a) + N^{\mathrm{curr}}_e(s,a)$
                    \STATE $e = e+1$, $N^{\mathrm{curr}}_e(s,a) = 0$
                \ENDFOR
                \STATE $\epsilon_e = K\sqrt{\frac{\ln t}{t}}$
                \STATE Obtain the steady-state distribution $\nu_e(s, a)$ by solving the optimization problem given by \eqref{eq:optimization_equation1}-\eqref{eq:transition_constraint1}, \eqref{eq:eps_cmdp_constraints} with $P_e(s'|s, a) \sim \mathrm{Dir}(N_e(s, a,\cdot)),~ \forall (s, a)$ used as the transition function.
                \STATE Solve for policy $\pi_e$ using Eq. \eqref{eq:opt_cons_optimal_policy}
            \ENDIF
        \ENDFOR
    \end{algorithmic}
\end{algorithm}

To complete the setup for our algorithm, we make a few assumptions, which are described below.

\begin{assumption}\label{dirichlet_prior_assumption}
    The transition probabilities $P(\cdot|s, a)$ of the Markov Decision Process have a Dirichlet prior for all state-action pairs $(s, a)$.
\end{assumption}

Since we assume that transition probabilities of the MDP $\mathcal{M}$ follow Dirichlet distributions, all policies on $\mathcal{M}$ have a steady-state distribution. Therefore, the key approach for the algorithm is to solve the optimization problem \eqref{eq:optimization_equation1}-\eqref{eq:actual_constraint1} using a sample of the transition probability with Dirichlet priors. We further notice that the algorithm proceeds in epochs similar to the case of optimistic reinforcement learning, and the policy is modified only at the start of the epoch and kept the same within the epoch. Note that the conservative constraint given by \eqref{eq:eps_cmdp_constraints} is still used to achieve zero constraint violations. The formal algorithm is given in Algorithm \ref{alg:model_based_algo_post}. 

The approach given here is an adaptation of the posterior sampling approach \citep{osband2013more,agrawal2017optimistic}, where the optimal policy for $P_e$ is solved using an optimization problem. 


\subsection{Optimistic Decision Making vs. Posterior Sampling Decision Making}

There is a well-known connection between posterior sampling and optimistic algorithms \citep{russo2014learning}.  The authors of  \citep{osband2017posterior} discussed that posterior sampling can be considered a stochastically optimistic algorithm. Before each epoch, a typical optimistic algorithm constructs a confidence set to represent the range of MDPs that are statistically plausible given the prior knowledge and observations. Then, a policy is selected by maximizing value simultaneously over policies and MDPs within this set. The agent then follows this policy over the epoch. It is interesting to contrast this approach against posterior sampling-based reinforcement learning, where instead of maximizing over a confidence set, it samples a single statistically plausible MDP and selects a policy to maximize the value for that MDP.

We would like to mention that optimistic approaches lack statistical efficiency as compared to posterior sampling-based approaches due to the sub-optimal construction of the confidence sets. Even if the state transition dynamics are entirely known, the $L_1$ ball of the uncertainty in the confidence set leads to an extremely poor evaluation performance of the regret \citep{osband2017posterior,agarwal2022concave}. Further, the computational complexity of the optimistic algorithm is higher since the search of the policy also has to consider an appropriate MDP in the confidence bounds. 


\section{Regret Analysis and Constraint Violation for Optimism Based  Approach}\label{sec:mb:regopt}

In this section, we study the regret and constraint violations for the proposed optimism-based algorithm. We note that the standard analysis for infinite horizon tabular MDPs of UCRL2 \citep{jaksch2010near} cannot be directly applied as the policy $\pi_e$ is possibly stochastic for every epoch. Another peculiar aspect of analyzing infinite horizon CMDPs is that the regret grows linearly with the number of epochs (or policy switches). This is because a new policy induces a new Markov chain, which takes time to converge to the stationary distribution. The analysis still bounds the regret by $\Tilde{O}(T_MS\sqrt{AT})$ as the number of epochs is upper bounded by $O(SA\log T)$. 
 
Before diving into the details, we introduce a few terms that are key to our analysis. The first one is the $Q$-function. We define $Q_\gamma^{\pi, P}$ as the long-term $\gamma$-discounted expected reward for taking action $a$ in state $s$ and following policy $\pi$ for the MDP with transition function $P$. We emphasize that the notion of discounted $Q$ function is introduced purely for analytical convenience, although the original objective is maximizing the average reward. It can be shown that $Q^{\pi, P}_{\gamma}$ satisfies the Bellman equation, as shown below.
\begin{align}
    &Q_\gamma^{\pi, P}(s,a) = r(s,a) + \gamma\sum\nolimits_{s'}P(s'|s,a)V_\gamma^{\pi,P}(s'), \nonumber \\
    &\text{where}~V_\gamma^{\pi,P}(s) = \mathbb{E}_{a\sim\pi(\cdot|s)}\left[Q_\gamma^{\pi, P}(s,a)\right]\nonumber
\end{align}
We define the Bellman error $B_\gamma^{\pi, \Tilde{P}}(s, a)$ for the $\gamma$-discounted MDP as:
\begin{align}
    B_\gamma^{\pi, \Tilde{P}}(s,a)&= Q_\gamma^{\pi, \Tilde{P}}(s,a) - r(s,a) - \gamma\sum\nolimits_{s'}P(s'|s,a)V_\gamma^{\pi, \Tilde{P}}(s,a) \label{eq:Bellman_error_definition}
\end{align}
We also define $B^{\pi, \Tilde{P}}(\cdot, \cdot)\triangleq \lim_{\gamma\rightarrow 1}B_\gamma^{\pi, \Tilde{P}}(\cdot, \cdot)$ as the Bellman error for the average reward setup. Note that the error arises if a transition different from the true one is followed. The terms defined for the discounted MDP are connected with the average reward setup. For example, \cite{puterman2014markov}[Corollary 8.2.5] showed the following $\forall g\in\{r, c\}$, $\forall s$.
\begin{align}
    \label{eq:chap2_dis_2_avg_1}
    J_g^{\pi, P} &= \lim_{\gamma\rightarrow 1} (1-\gamma) V_\gamma^{\pi, P}(s), \\
    \label{eq:chap2_dis_2_avg_2}
    d^{\pi, P}(\cdot) &= \lim_{\gamma\rightarrow 1} (1-\gamma)(I-\gamma P^{\pi})^{-1}(s, \cdot)
\end{align}
\cite{puterman2014markov} also showed that, for every average-reward MDP with transition kernel, $P$ and policy $\pi$, there exists a bias function $h\in \mathbb{R}^{\mathcal{S}}$ that satisfies the Bellman equation stated below $\forall s$.
\begin{align}
    \label{eq:chap2_avg_reward_Bellman}
    h(s) = r^{\pi}(s) - J_r^{\pi, P} + {\sum}_{s'} P^{\pi}(s, s') h(s')
\end{align}
where $r^{\pi}(s)\triangleq \sum_{a}r(s, a)\pi(a|s)$, $\forall s$. The span of $h$ is defined as $\mathrm{sp}(h)\triangleq \max_s h(s)-\min_s h(s)$. It can be further proven that, $\forall s, s'$,
\begin{align}
    \label{eq:chap2_diff_bias}
    h(s) - h(s') = \lim_{\gamma\rightarrow 1} \left(V_\gamma^{\pi, P}(s) - V_\gamma^{\pi, P}(s')\right)
\end{align}

After defining the key variables, we can now jump into bounding the objective regret $R(T)$. Intuitively, the algorithm incurs regret on three accounts. The first component originates from following the conservative policy, which is needed to limit the constraint violations. The second component emerges from solving for the optimal policy for the optimistic MDP. The third component arises from the system's stochastic behavior. Note that the constraints are violated because of both the imperfect MDP knowledge and the stochastic behavior. However, the conservative policy enforces the constraint violation to stay within some limits, as shown in the next result.
\begin{theorem}
    \label{thm:regret_bound}
    For all $T$ and $K = \Theta(T_MS\sqrt{A} + CSA/(1-\zeta))$, the regret $R(T)$ of C-UCRL algorithm is bounded as 
    \begin{align}
        R(T) = O\left(\frac{1}{\delta}T_MS\sqrt{AT\log AT} + \frac{CT_MS^2A\log T}{(1-\zeta)}\right)
    \end{align}
    and $C(T) = 0$, with probability at least $1-\frac{1}{T^{5/4}}$.
\end{theorem}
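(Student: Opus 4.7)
\textbf{High-probability good event.} I would first define a good event $\calE$ on which (i) for every epoch $e$, the true transition $P(\cdot|s,a)$ lies in the $L_1$-ball (\ref{eq:opt_transition_probability}) around $\widehat{P}_e(\cdot|s,a)$ for all $(s,a)$, and (ii) all relevant Azuma / Weissman / empirical-vs-conditional-expectation deviations hold uniformly in $t$. A Weissman-type bound (\pref{lem:weiss}) on each $\widehat P_e(\cdot|s,a)$ combined with a union over the $\le SA \log_2 T$ epochs and $T$ time steps gives failure probability $\le T^{-5/4}$. All subsequent arguments are carried out on $\calE$.

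\textbf{Feasibility and epoch structure.} Because (\ref{eq:opt_transition_probability}) is a ball of positive radius, it contains a strictly positive transition, so every induced chain is ergodic and stationarity (\ref{eq:opt_eps_transition_constraint}) has a solution. Together with Slater (Assumption~\ref{slaters_conditon}) and the lower bound built into it, one can exhibit a convex combination $\lambda \pi_{\text{Slater}} + (1-\lambda)\pi^\star$ with $\lambda=\epsilon_e/\delta$ that is feasible for the tightened constraint (\ref{eq:eps_cmdp_constraints}), which in turn makes the optimistic program non-empty (since $P\in$ confidence set). By the standard doubling argument the number of epochs is $E \le O(SA\log T)$.

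\textbf{Regret decomposition.} Let $\pi^{\star,\epsilon_e}$ denote an optimal feasible policy for the tightened problem under the true $P$. I would split the regret over epochs $e$ with length $|E_e|$ as
\begin{align*}
R(T) = \underbrace{\sum_e |E_e|\bigl(J_r^{\pi^\star, P} - J_r^{\pi^{\star,\epsilon_e}, P}\bigr)}_{R_1}
     + \underbrace{\sum_e |E_e|\bigl(J_r^{\pi^{\star,\epsilon_e}, P} - J_r^{\pi_e, \tilde P_e}\bigr)}_{R_2}
     + \underbrace{\sum_e |E_e|\bigl(J_r^{\pi_e, \tilde P_e} - J_r^{\pi_e, P}\bigr)}_{R_3}
     + \underbrace{\sum_e\!\Bigl(|E_e|J_r^{\pi_e, P} - \!\!\sum_{t\in E_e}\!\! r(s_t,a_t)\Bigr)}_{R_4}.
\end{align*}
The Slater-based convex-combination trick bounds $R_1 \le (2/\delta)\sum_e |E_e|\epsilon_e = O(K\sqrt{T\log T}/\delta)$. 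The optimism of the confidence set together with feasibility of $\pi^{\star,\epsilon_e}$ in the optimistic program (on $\calE$, since $P$ lies in the ball) gives $R_2\le 0$. For $R_4$, ergodicity (Assumption~\ref{ergodic_assumption}(c)) controls the bias of partial averages by $O(C/(1-\zeta))$ per epoch, and an Azuma bound (\pref{lem:azuma_inequality}) on the martingale $r(s_t,a_t)-\E[r(s_t,a_t)\mid\calF_{t-1}]$ yields $R_4 = \tilde O(\sqrt{T} + CSA\log T /(1-\zeta))$.

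\textbf{Bounding the Bellman-error term $R_3$.} This is the main technical step. Using the discount-to-average limit (\ref{eq:chap2_dis_2_avg_1})--(\ref{eq:chap2_diff_bias}) and the Bellman equation (\ref{eq:chap2_avg_reward_Bellman}), I would write
\begin{align*}
J_r^{\pi_e,\tilde P_e} - J_r^{\pi_e,P} = \sum_{s,a}\nu^{\pi_e,P}(s,a)\sum_{s'}\bigl(\tilde P_e(s'|s,a) - P(s'|s,a)\bigr) h_{\pi_e,\tilde P_e}(s'),
\end{align*}
where $h_{\pi_e,\tilde P_e}$ is the bias function of $\pi_e$ under $\tilde P_e$, so $\mathrm{sp}(h_{\pi_e,\tilde P_e}) = O(T_M)$ (the mixing time is preserved up to constants inside the confidence set, which is the usual UCRL estimate). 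Combining with (\ref{eq:opt_transition_probability}) and converting the occupancy-weighted sum into an empirical visitation sum (paying $O(C/(1-\zeta))$ per epoch via Assumption~\ref{ergodic_assumption}(c)), I would then apply the textbook pigeonhole bound $\sum_t 1/\sqrt{1\vee N_{e(t)}(s_t,a_t)} = O(\sqrt{SAT})$ to obtain
$R_3 = O\bigl(T_M S\sqrt{AT\log(AT)} + CT_M S^2 A\log T/(1-\zeta)\bigr)$.
The hard part here is getting the $T_M$ factor (rather than $\mathrm{sp}(h^\star)$ under $P$) uniformly in $e$, which I expect to require the any-interval Azuma tool (\pref{lem:azuma}) to control bias-function deviations between $\tilde P_e$ and $P$ along the visited trajectory.

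\textbf{Constraint violation.} Replaying the same decomposition with $c$ in place of $r$: the optimistic program enforces $\sum_{s,a}c(s,a)\nu_e(s,a)\le -\epsilon_e$, and the $R_3$-style Bellman error plus the $R_4$-style ergodic/martingale deviation for costs are each of order $O\bigl(T_M S\sqrt{A|E_e|\log T} + CT_MSA\log T/(1-\zeta)\bigr)$ per epoch, i.e.\ bounded by $|E_e|\epsilon_e$ once $K$ is taken as in the statement. Summing over epochs gives $\sum_{t=1}^T c(s_t,a_t) \le 0$ on $\calE$, hence $C(T)=0$. Choosing $K=\Theta(T_M S\sqrt{A} + CSA/(1-\zeta))$ balances $R_1$ against $R_3$, and plugging $\epsilon_e = K\sqrt{\log t_e / t_e}$ back into the bound on $R_1$ completes the claimed $R(T)$ estimate.
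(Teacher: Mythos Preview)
Your overall architecture matches the paper's: the same regret decomposition (the paper absorbs your $R_2\le 0$ into an inequality), the Slater convex-combination trick for $R_1$, the Bellman-error route for your $R_3$, Azuma plus geometric mixing for your $R_4$, and the conservative-slack idea for $C(T)=0$. Two steps, however, are not correctly justified.

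The span bound $\mathrm{sp}(h_{\pi_e,\tilde P_e})=O(T_M)$ is \emph{not} a consequence of ``mixing time being preserved up to constants inside the confidence set'' (it is not, in general), nor is it obtained via any Azuma-type tool. The paper's argument uses the \emph{optimality} of $\tilde P_e$ within the confidence set: since $J_r^{\pi_e,\tilde P_e}\ge J_r^{\pi_e,P'}$ for every $P'$ in the set (in particular for the true $P$), one can dominate the Bellman operator for $(\pi_e,\tilde P_e)$ by a stopped operator that uses the \emph{true} transition $P$ and freezes at the target state $\bar s$; iterating this operator yields $\tilde h(s)-\tilde h(\bar s)\le \mathbf{E}[T^{\pi_e}_{s\to\bar s}]\le T_M$. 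This is the UCRL2 diameter/span argument adapted to the ergodic setting; without it your $R_3$ bound has no anchor.

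The per-epoch constraint argument also fails as written. The visitation sum $\sum_t 1/\sqrt{1\vee N_{e(t)}(s_t,a_t)}$ only yields $O(\sqrt{SAT})$ \emph{globally} (via the standard doubling/telescoping bound), not $O(\sqrt{SA|E_e|})$ per epoch; and even granting a per-epoch bound, $|E_e|\epsilon_e = K|E_e|\sqrt{\log t_e/t_e}$ need not dominate $T_M S\sqrt{A|E_e|}$ for short early epochs. The paper instead lower-bounds the \emph{total} slack $\sum_e|E_e|\epsilon_e \ge \tfrac{K}{4}\sqrt{T\log T}$ and upper-bounds the total cost-side error $C_2(T)+C_3(T)$ by $\tilde O(T_M S\sqrt{AT})$, then chooses $K$ so the global difference is non-positive.
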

 
The following subsections will provide the formal proof of regret and constraint violation bounds stated in the above Theorem. 

 
\subsection{Objective Regret Bound}
 
We first provide a Lemma that bounds the gap between the long-term average expected reward obtained by executing the optimal solution to the original non-conservative problem for the true MDP and that obtained via the solution to the conservative optimization of the same MDP with $\epsilon_e$ constraint bound.
 
\begin{lemma}
    \label{lem:optimality_gap_objective}
    Let $\pi^*$ be the optimal non-conservative policy for the true MDP with transition function, $P$ and $\pi^*_e$ denote the optimal policy with $\epsilon_e$ conservative bound corresponding to the same MDP. If $\epsilon_e \le \delta$, we have the following inequality.
    \begin{align}
        &J_r^{\pi^*, P} - J_r^{\pi_e^*, P} \le 2\epsilon_e/\delta
    \end{align}
\end{lemma}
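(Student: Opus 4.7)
The plan is to build a feasible witness for the $\epsilon_e$-conservative problem by interpolating between the unconstrained optimum and a strictly feasible Slater policy at the level of occupancy measures, then invoke the optimality of $\pi_e^*$. This mirrors the standard LP-perturbation argument for CMDPs and exploits the linearity of both the objective and the constraints in the occupancy variable $\nu$.

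By Assumption \ref{slaters_conditon}, there exists a reference policy $\pi'$ with $J_c^{\pi', P} \le -\delta$. Ergodicity (Assumption \ref{ergodic_assumption}) guarantees that the stationary occupancy measures $\nu^{\pi^*, P}$ and $\nu^{\pi', P}$ are well-defined and that $d^{\pi, P}(s) > 0$ for every state under any policy. For $\lambda \in [0,1]$ I would form the convex combination
$$\nu_\lambda \;=\; (1-\lambda)\,\nu^{\pi^*, P} \;+\; \lambda\,\nu^{\pi', P}.$$
Because the flow constraint \eqref{eq:transition_constraint1} and simplex constraint \eqref{eq:valid_prob_constrant1} are linear in $\nu$, the mixture $\nu_\lambda$ is itself a valid stationary occupancy measure for transitions $P$, and it is realized by the stationary policy $\pi_\lambda(a|s) = \nu_\lambda(s,a)/\sum_{a'}\nu_\lambda(s,a')$. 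Consequently, for any $g\in\{r,c\}$,
$$J_g^{\pi_\lambda, P} \;=\; \sum_{s,a} g(s,a)\,\nu_\lambda(s,a) \;=\; (1-\lambda)\,J_g^{\pi^*, P} + \lambda\,J_g^{\pi', P}.$$

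Next I would set $\lambda^\star = \epsilon_e/\delta$, which lies in $[0,1]$ precisely because $\epsilon_e \le \delta$. Combining feasibility of $\pi^*$ (so $J_c^{\pi^*, P} \le 0$) with Slater's condition,
$$J_c^{\pi_{\lambda^\star}, P} \;\le\; (1-\lambda^\star)\cdot 0 + \lambda^\star\cdot(-\delta) \;=\; -\epsilon_e,$$
so $\pi_{\lambda^\star}$ is feasible for the $\epsilon_e$-conservative optimization problem \eqref{eq:optimization_equation1}--\eqref{eq:transition_constraint1}, \eqref{eq:eps_cmdp_constraints} under the true $P$. By the optimality of $\pi_e^*$ for that problem,
$$J_r^{\pi_e^*, P} \;\ge\; J_r^{\pi_{\lambda^\star}, P} \;=\; (1-\lambda^\star)\,J_r^{\pi^*, P} + \lambda^\star\,J_r^{\pi', P} \;\ge\; (1-\lambda^\star)\,J_r^{\pi^*, P},$$
where the last step uses $r \ge 0$. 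Rearranging and bounding $J_r^{\pi^*, P}\le 1$ gives
$$J_r^{\pi^*, P} - J_r^{\pi_e^*, P} \;\le\; \lambda^\star\, J_r^{\pi^*, P} \;\le\; \epsilon_e/\delta \;\le\; 2\epsilon_e/\delta,$$
which is the claimed bound (with a spare factor of two).

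I do not expect a significant obstacle; the proof is essentially a one-line LP perturbation. The only technical point worth verifying is that the mixture $\nu_\lambda$ of occupancy measures genuinely corresponds to a stationary policy on the true MDP. This holds under ergodicity because $\sum_{a'}\nu_\lambda(s,a') = (1-\lambda)d^{\pi^*,P}(s) + \lambda\,d^{\pi',P}(s) > 0$ for every $s$, so the projection $\pi_\lambda(a|s) \propto \nu_\lambda(s,a)$ is well-defined and, by standard stationarity arguments \citep{puterman2014markov}, induces a Markov chain whose unique stationary state-action distribution is exactly $\nu_\lambda$.
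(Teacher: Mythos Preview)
Your proof is correct and follows essentially the same route as the paper: interpolate the occupancy measures of $\pi^*$ and the Slater policy with weight $\epsilon_e/\delta$, verify the mixture is feasible for the $\epsilon_e$-tight problem, and invoke optimality of $\pi_e^*$. The only (minor) difference is in the last step: you drop the nonnegative term $\lambda^\star J_r^{\pi',P}$ and obtain the sharper bound $\epsilon_e/\delta$, whereas the paper passes through an absolute value and a triangle inequality to land at $2\epsilon_e/\delta$.
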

\begin{proof}[Proof Sketch:] We first construct a policy for which the steady-state distribution is the weighted average of two steady-state distributions. The first distribution is for the optimal non-conservative policy corresponding to the true MDP $\mathcal{M}$. The second one is for another policy satisfying Assumption \ref{slaters_conditon}. We show that this constructed policy satisfies the $\epsilon_e$ conservative constraint. Finally, utilizing the optimality of $\pi^*_e$ for the conservative problem with $\epsilon_e$ constrained bound and the boundedness of the reward function, we arrive at the conclusion.  
\end{proof}
 	
\begin{proof}[Detailed Proof:] Note that $\nu^{\pi^*, P}$, the stationary distribution corresponding to the optimal non-conservative policy, $\pi^*$, satisfies
    \begin{align}
        \sum_{s,a}\nu^{\pi^*, P}(s,a)c(s,a)\leq 0\label{eq:true_constraint_bound_lem_1}
    \end{align}
    Further, from Assumption \ref{slaters_conditon}, we have a feasible policy $\pi$ for which
    \begin{align}
        \sum_{s,a}\nu^{\pi, P}(s,a)c(s,a)\leq - \delta\label{eq:tight_constraint_bound_lem_2}
    \end{align}
    We construct a stationary distribution $\nu^P$ and obtain its corresponding policy $\pi_e'$ using the following equations $\forall (s, a)$.
    \begin{align}
        \nu^P(s,a) &= \left(1-\frac{\epsilon_e}{\delta}\right)\nu^{\pi^*, P}(s,a) + \frac{\epsilon_e}{\delta}\nu^{\pi, P}(s,a)\label{eq:avg_stationary_dist}\\
        \pi_{e}'(a|s) &= \nu^P(s,a)/\left(\sum_{s,a'}\nu^P(s, a')\right)
    \end{align}
    For this new policy, we observe that
    \begin{align}
        \begin{split}
            &\sum_{s,a}\nu^P(s,a)c(s,a) = \sum_{s,a}\left(1-\frac{\epsilon_e}{\delta}\right)\nu^{\pi^*, P} + \frac{\epsilon_e}{\delta}\nu^{\pi, P}(s,a)c(s,a)\\
            &=\left(1-\frac{\epsilon_e}{\delta}\right)\sum_{s,a}\nu^{\pi^*, P}(s,a)c(s,a)+ \frac{\epsilon_e}{\delta}\sum_{s,a}\nu^{\pi, P}(s,a)c(s,a)\\
            &\overset{(a)}{\leq} \frac{\epsilon_e}{\delta}\left( - \delta\right) = - \epsilon_e
        \end{split}
        \label{eq:constraint_bounds_lem_1}
    \end{align}
    where $(a)$ follows from \eqref{eq:true_constraint_bound_lem_1}--\eqref{eq:tight_constraint_bound_lem_2}.
 	
    As a consequence of \eqref{eq:constraint_bounds_lem_1}, one can claim that $\pi_e'$  constructed in  \eqref{eq:avg_stationary_dist} satisfies the $\epsilon_e$ conservative constraint. Further, it is given that $\pi_e^*$ is the optimal solution corresponding to the $\epsilon_e$ conservative constrained optimization problem. Hence, we have
    \begin{align}
        \begin{split}
            &\sum_{s,a}\nu^{\pi^*, P}(s,a)r(s,a)- \sum_{s,a}\nu^{\pi_e^*, P}(s,a)r(s,a)\\
            &\leq \sum_{s,a}\nu^P_{\pi^*}(s,a)r(s,a) -  \sum_{s,a}\nu^P(s,a)r(s,a)\\
            &\leq \left|\sum_{s,a}\left(\nu^{\pi^*, P}(s,a) - \nu^P(s,a)\right)r(s,a)\right|\\
            &\leq \left|\sum_{s,a}\left(\nu^{\pi^*, P}(s,a) - \left(1-\frac{\epsilon_e}{\delta}\right)\nu^{\pi^*, P}(s,a) - \frac{\epsilon_e}{\delta}\nu^{\pi, P}(s,a)\right)r(s,a)\right|\\
            &\leq \frac{\epsilon_e}{\delta}\left|\sum_{s,a}\left(\nu^{\pi^*, P}(s,a) - \nu^{\pi, P}(s,a)\right)r(s,a)\right|\\
            &\leq \frac{\epsilon_e}{\delta}\left|\sum_{s,a}\nu^{\pi^*, P}(s, a) r(s,a)\right| + \frac{\epsilon_e}{\delta}\left|\sum_{s,a} \nu^{\pi, P}(s,a)r(s,a)\right|\overset{(a)}{\leq} \frac{2\epsilon_e}{\delta}
        \end{split}
        \label{eq:r_bounded_by_1}
    \end{align}
    where $(a)$ follows from the fact that $r(s,a) \leq 1$, $\forall (s,a)$.	
\end{proof}
 
Lemma \ref{lem:optimality_gap_objective} and our construction of $\epsilon_e$ sequence is crucial to restrict the growth of the regret due to conservative policies by $\Tilde{O}(T_MS\sqrt{AT})$. We will now decompose the regret into multiple components and analyze each part individually.
 
\subsubsection{Regret breakdown}
Let $E$ denote the number of epochs observed in duration $T$ and $T_e, t_e$ denote the length and starting time of the epoch $e$.  Moreover, following our earlier notations, let $\pi_e^*, \pi_e$ be the optimal $\epsilon_e$-conservative policies corresponding to MDPs with true transition, $P$ and optimistic transition, $\tilde{P}_e$, respectively. We have the following decomposition of the regret. 
\begin{align}
    \begin{split}
	&R(T) = TJ_r^{\pi^*, P} - \sum_{t=1}^Tr_t(s_t, a_t)\\
        & = TJ_r^{\pi^*, P} - \sum_{e=1}^E T_e J_r^{\pi_e^*, P} + \sum_{e=1}^E T_e J_r^{\pi_e^*, P} -  \sum_{t=1}^Tr_t(s_t, a_t)\\
        &= \sum_{e=1}^E T_e \left(J_r^{\pi^*, P} - J_r^{\pi_e^*, P}\right) + \sum_{e=1}^E T_e J_r^{\pi_e^*, P} -  \sum_{t=1}^Tr_t(s_t, a_t)\\
        &\overset{(a)}{\leq} \sum_{e=1}^E T_e \left(J_r^{\pi^*, P} - J_r^{\pi_e^*, P}\right) + \sum_{e=1}^E T_e J_r^{\pi_e, \Tilde{P}_e} - \sum_{t=1}^T r_t(s_t, a_t)\\
        &\le \sum_{e=1}^E T_e \left(J_r^{\pi^*, P} - J_r^{\pi_e^*, P}\right) +  \left|\sum_{e=1}^E T_e J_r^{\pi_e, \Tilde{P}_e} - \sum_{t=1}^T r_t(s_t, a_t)\right|\\
        &= \sum_{e=1}^E T_e \left(J_r^{\pi^*, P} - J_r^{\pi_e^*, P}\right) \\
        &\hspace{1cm}+  \left|\sum_{e=1}^E \sum_{t = t_e}^{t_{e+1}-1} \left(J_r^{\pi_e, \Tilde{P}_e} - J_r^{\pi_e, P} + J_r^{\pi_e, P} - r_t(s_t, a_t)\right)\right|\\
        &\le \sum_{e=1}^E T_e \left(J_r^{\pi^*, P} - J_r^{\pi_e^*, P}\right) + \left|\sum_{e=1}^E \sum_{t = t_e}^{t_{e+1}-1} \left(J_r^{\pi_e, \Tilde{P}_e} - J_r^{\pi_e, P}\right)\right|\\
        &\hspace{1cm}+ \left|\sum_{e=1}^E \sum_{t = t_e}^{t_{e+1}-1} \left(J_r^{\pi_e, P} - r_t(s_t, a_t)\right)\right|\\
        &= R_1(T) + R_2(T) + R_3(T)
    \end{split}
    \label{eq:define_broken_regret}
\end{align}
where $(a)$ is a consequence of the fact that the policy $\pi_e$ is optimal for the optimistic CMDP with transition $\tilde{P}_e$ corresponding to the constrained bound $\epsilon_e$. The first term in \eqref{eq:define_broken_regret} is as follows.
\begin{align}
    R_1(T) &= \sum_{e=1}^E T_e \left(J_r^{\pi^*, P} - J_r^{\pi_e^*, P}\right)
\end{align}

$R_1(T)$ denotes the regret incurred from not playing the optimal non-conservative policy $\pi^*$ for the true MDP with transition, $P$ but rather the optimal conservative policy $\pi_e^*$ for the same MDP with constraint $\epsilon_e$ at epoch $e$. The second component is given as follows.
\begin{align}
    R_2(T) &= \left|\sum_{e=1}^E \sum_{t = t_e}^{t_{e+1}-1} \left(J_r^{\pi_e, \Tilde{P}_e} - J_r^{\pi_e, P}\right)\right|
\end{align}

$R_2(T)$ is the difference between the expected rewards generated from playing $\pi_e$ on the optimistic MDP instead of the true MDP.  
\begin{align}
    R_3(T) &= \left|\sum_{e=1}^E \sum_{t = t_e}^{t_{e+1}-1} \left(J_r^{\pi_e, P} - r_t(s_t, a_t)\right)\right|
\end{align}
$R_3(T)$ denotes the gap between the observed and expected rewards obtained from playing the policy $\pi_e$ on the true MDP.

\subsubsection{Bounding $R_1(T)$}
Bounding $R_1(T)$ uses Lemma \ref{lem:optimality_gap_objective}. We have the following set of equations:
\begin{align}
    \begin{split}
        R_1(T) &= \sum_{e=1}^E\sum_{t = t_e}^{t_{e+1}-1}\left(J_r^{\pi^*, P} - J_r^{\pi_e^*, P}\right)\\
        &\overset{}{\le} \sum_{e=1}^E \sum_{t = t_e}^{t_{e+1}-1}\frac{2\epsilon_e}{\delta} \\
        &\overset{(a)}{=} \frac{2K}{\delta} \sum_{t = 1}^{T}\sqrt{\frac{\log T}{t}}\\
        &=\frac{2K\log T}{\delta}\left(1 + \sum_{t = 2}^{T}\sqrt{\frac{1}{t}}\right)\\
        &\le\frac{2K\log T}{\delta}\left(1 +  \int_{t = 1}^{T}\sqrt{\frac{1}{t}}\mathrm{d}t\right) \le \frac{2K\log T}{\delta}(2\sqrt{T})
    \end{split}
\end{align}
where $(a)$ follows from the definition of $\epsilon_e$ and the fact that $\log t \le \log T$ for all $t \le T$.

\subsubsection{Bounding $R_2(T)$} \label{app:bounding_imperferct_model_regret}

As stated earlier, the term $R_2(T)$ captures the difference between the expected rewards arising from running the policy $\pi_e$ on the optimistic and true MDPs. Here, the idea of the Bellman error introduced earlier becomes useful. Formally, we have the following lemma.

\begin{lemma}\label{lem:bound_average_by_bellman}
    The difference between $J_r^{\pi_e, \Tilde{P}_e}$, i.e., the long-term average reward generated via running the optimistic policy $\pi_e$ on the optimistic MDP, and $J_r^{\pi_e, P}$, i.e., the same generated via running the optimistic policy $\pi_e$ on the true MDP, can be expressed as the long-term average Bellman error. Mathematically,
    \begin{align}
        J_r^{\pi_e, \Tilde{P}_e} - J_r^{\pi_e, P} = \sum_{s,a}\nu^{\pi_e, P}(s, a) B^{\pi_e, \Tilde{P}_e}(s, a)
    \end{align}
\end{lemma}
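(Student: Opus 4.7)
The plan is to derive the identity from the Bellman error definition by taking expectation with respect to the stationary distribution $\nu^{\pi_e, P}$ (under the \emph{true} transition $P$), exploiting the stationarity property of $\nu^{\pi_e, P}$ under $P$, and then passing to the average-reward limit via the identities in \eqref{eq:chap2_dis_2_avg_1}--\eqref{eq:chap2_dis_2_avg_2}.

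Concretely, fix $\gamma\in(0,1)$ and write $\pi=\pi_e$, $\tilde P = \tilde P_e$ for brevity. Rearranging the definition of the discounted Bellman error gives
\begin{align*}
    Q_\gamma^{\pi,\tilde P}(s,a) = r(s,a) + \gamma\sum_{s'}P(s'|s,a)V_\gamma^{\pi,\tilde P}(s') + B_\gamma^{\pi,\tilde P}(s,a).
\end{align*}
I would then multiply by $\nu^{\pi,P}(s,a)$ and sum over $(s,a)$. On the left, since $\nu^{\pi,P}(s,a) = d^{\pi,P}(s)\pi(a|s)$ and $V_\gamma^{\pi,\tilde P}(s) = \sum_a \pi(a|s)Q_\gamma^{\pi,\tilde P}(s,a)$, the LHS collapses to $\sum_s d^{\pi,P}(s)V_\gamma^{\pi,\tilde P}(s)$. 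For the transition term on the right, I would invoke the stationarity identity for $\nu^{\pi,P}$ under $P$ (stated just below Assumption~\ref{ergodic_assumption}):
\begin{align*}
    \sum_{s,a}\nu^{\pi,P}(s,a)P(s'|s,a) = \sum_a \nu^{\pi,P}(s',a) = d^{\pi,P}(s').
\end{align*}
This turns the transition term into $\gamma\sum_{s'}d^{\pi,P}(s')V_\gamma^{\pi,\tilde P}(s')$. Using $\sum_{s,a}\nu^{\pi,P}(s,a)r(s,a) = J_r^{\pi,P}$, rearranging yields
\begin{align*}
    (1-\gamma)\sum_s d^{\pi,P}(s)V_\gamma^{\pi,\tilde P}(s) - J_r^{\pi,P} = \sum_{s,a}\nu^{\pi,P}(s,a)B_\gamma^{\pi,\tilde P}(s,a).
\end{align*}

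Finally, I would take $\gamma\to 1$ on both sides. By \eqref{eq:chap2_dis_2_avg_1}, $(1-\gamma)V_\gamma^{\pi,\tilde P}(s)\to J_r^{\pi,\tilde P}$ pointwise in $s$, and since $\sum_s d^{\pi,P}(s)=1$ the left side converges to $J_r^{\pi,\tilde P} - J_r^{\pi,P}$. The right side converges to $\sum_{s,a}\nu^{\pi,P}(s,a)B^{\pi,\tilde P}(s,a)$ by the definition $B^{\pi,\tilde P} = \lim_{\gamma\to 1}B_\gamma^{\pi,\tilde P}$, giving exactly the claim.

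The main technical subtlety is ensuring that the limit $\gamma\to 1$ can be interchanged with the finite sums and that all the limits exist. This is where Assumption~\ref{ergodic_assumption} plays a silent role: ergodicity of the Markov chain induced by $\pi_e$ on both $P$ and $\tilde P_e$ (the latter guaranteed by the strict positivity of any element of the $L_1$-ball \eqref{eq:opt_transition_probability}, noted in the algorithmic remarks) guarantees that $J_r^{\pi,\tilde P}$ is well-defined and independent of the starting state, so \eqref{eq:chap2_dis_2_avg_1} applies uniformly in $s$. With the state and action spaces finite, the sums are finite, so the interchange of limit and summation is immediate.
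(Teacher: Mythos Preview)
Your proof is correct but takes a different route from the paper's. The paper first subtracts the Bellman equation for $V_\gamma^{\pi_e,P}$ from the Bellman-error identity for $V_\gamma^{\pi_e,\tilde P_e}$ to obtain the vector equation $V_\gamma^{\pi_e,\tilde P_e}-V_\gamma^{\pi_e,P}=(I-\gamma P^{\pi_e})^{-1}\bar B_\gamma^{\pi_e,\tilde P_e}$, then multiplies by $(1-\gamma)$ and invokes the resolvent limit \eqref{eq:chap2_dis_2_avg_2} to recover $d^{\pi_e,P}$. You instead integrate the Bellman-error identity directly against $\nu^{\pi_e,P}$ and use the stationarity relation $\sum_{s,a}\nu^{\pi_e,P}(s,a)P(s'|s,a)=d^{\pi_e,P}(s')$ at finite $\gamma$, so the stationary distribution appears immediately without ever forming the resolvent; only \eqref{eq:chap2_dis_2_avg_1} is needed at the limit step. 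Your argument is more elementary and arguably cleaner; the paper's route has the mild advantage of producing the intermediate pointwise identity $V_\gamma^{\pi_e,\tilde P_e}-V_\gamma^{\pi_e,P}=(I-\gamma P^{\pi_e})^{-1}\bar B_\gamma^{\pi_e,\tilde P_e}$, which could be reused elsewhere.
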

\begin{proof} Note that for all $s\in\mathcal{S}$, we have:
    \begin{align}
        \begin{split}
            &V_\gamma^{\pi_e, \Tilde{P}_e}(s) = \mathbb{E}_{a\sim\pi_e}\left[Q_\gamma^{\pi_e, \Tilde{P}_e}(s,a)\right]\\
            &\overset{(a)}{=} \mathbb{E}_{a\sim\pi_e}\left[B_\gamma^{\pi_e, \Tilde{P}_e}(s,a) + r(s,a) + \gamma\sum_{s'\in\mathcal{S}}P(s'|s,a)V_\gamma^{\pi_e, \Tilde{P}_e}(s')\right]
        \end{split}  
        \label{eq:optimistic_MDP_lambda}
    \end{align}
    where (a) follows from the definition of the Bellman error. Similarly, for the true MDP, we have,
    \begin{align}
        \begin{split}
            V_\gamma^{\pi_e, P}(s) &= \mathbb{E}_{a\sim\pi_e}\left[Q_\gamma^{\pi_e, P}(s,a)\right]\\
            &= \mathbb{E}_{a\sim\pi_e}\left[r(s,a)+ \gamma\sum_{s'\in\mathcal{S}}P(s'|s,a)V_\gamma^{\pi_e, P}(s')\right] 
        \end{split}
        \label{eq:true_MDP_lambda}
    \end{align}
    The above equation essentially applies the Bellman equation for discounted reward MDPs. Subtracting \eqref{eq:true_MDP_lambda} from \eqref{eq:optimistic_MDP_lambda}, we get:
    \begin{align}
        \begin{split}
            &V_\gamma^{\pi_e,\Tilde{P}_e}(s) - V_\gamma^{\pi_e,P}(s)\\ &= \mathbb{E}_{a\sim\pi_e}\left[B_\gamma^{\pi_e, \Tilde{P}_e}(s,a) + \gamma\sum_{s'\in\mathcal{S}}P(s'|s,a)\left(V_\gamma^{\pi_e, \Tilde{P}_e} - V_\gamma^{\pi_e, \Tilde{P}_e}\right)(s')\right]\\
            &= \mathbb{E}_{a\sim\pi_e}\left[B_\gamma^{\pi_e, \Tilde{P}_e}(s,a)\right] + \gamma\sum_{s'\in\mathcal{S}}P^{\pi_e}(s, s')\left(V_\gamma^{\pi_e, \Tilde{P}_e} - V_\gamma^{\pi_e, \Tilde{P}_e}\right)(s')
        \end{split}
    \end{align}
    Let $\bar{B}_\gamma^{\pi_e, \tilde{P}_e}(\cdot)\triangleq \sum_{a}\pi_e(a|s)B_\gamma^{\pi_e, \tilde{P}_e}(\cdot, a)$. Using the vector format, we get,
    \begin{align}
        {V}_\gamma^{\pi_e, \Tilde{P}_e} - {V}_\gamma^{\pi_e, P} &= \left(I-\gamma P^{\pi_e}\right)^{-1}\bar{B}_\gamma^{\pi_e, \Tilde{P}_e}
    \end{align}
    Applying \eqref{eq:chap2_dis_2_avg_1}, \eqref{eq:chap2_dis_2_avg_2}, we deduce the following.
    \begin{align}
        \begin{split}
            &J_r^{\pi_e, \Tilde{P}_e} - J_r^{\pi_e, P} = \lim_{\gamma\to1}(1-\gamma)\left({V}_\gamma^{\pi_e,\Tilde{P}_e}(s) - {V}_\gamma^{\pi_e,P}(s)\right)\\
            &= \sum_{s'}\left[\lim_{\gamma\to1}(1-\gamma)\left(I-\gamma P^{\pi_e}\right)^{-1}\right](s, s')\lim_{\gamma\rightarrow 1}\bar{B}_\gamma^{\pi_e, \Tilde{P}_e}(s')\\
            &= \sum_{s',a}d^{\pi_e, P}(s')\pi_e(a|s') B^{\pi_e, \Tilde{P}_e}(s',a)
        \end{split}
    \end{align}
    Using $\nu^{\pi_e, P}(s, a)=d^{\pi_e, P}(s)\pi_e(a|s)$, $\forall (s, a)$, we complete the proof.
\end{proof}

\begin{remark}
    Note that the Bellman error is unrelated to the Advantage function and policy improvement lemma \cite{langford2002approximately}. The policy improvement lemma relates the performance of two policies on the same MDP, whereas we bound the performance of one policy on two different MDPs in Lemma \ref{lem:bound_average_by_bellman}.
\end{remark}

The lemma described below bounds the Bellman error, which in turn, bounds the gap between $J_r^{\pi_e, \Tilde{P}_e}$ and $J_r^{\pi_e, P}$. 

\begin{lemma}\label{lem:bound_bellman_s_a}
    The Bellman error $B^{\pi_e, \Tilde{P}_e}(s, a)$ for state-action pair $(s, a)$ in epoch $e$ is upper bounded as follows with probability at least $1- t_e^{-6}$,
    \begin{align}
        B^{\pi_e, \Tilde{P}_e}(s,a) \le \min\left\{2,\sqrt{\frac{14S\log(2AT)}{1\vee N_e(s,a)}}\right\}\mathrm{sp}(\Tilde{h})
    \end{align}
where $\Tilde{h}$ denotes a bias function associated with an MDP with transition kernel $\Tilde{P}_e$, and a policy $\pi_e$.
\end{lemma}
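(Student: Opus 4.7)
My plan is to unfold the definition of the Bellman error, exploit the fact that $\tilde{P}_e(\cdot|s,a)-P(\cdot|s,a)$ is a zero-sum vector (so we can subtract any constant from $\tilde V_\gamma^{\pi_e,\tilde P_e}$), and then convert a value-function difference into a bias-function difference via \eqref{eq:chap2_diff_bias}. Concretely, plugging the Bellman equation for $Q_\gamma^{\pi_e,\tilde P_e}$ into \eqref{eq:Bellman_error_definition} collapses the expression to
\[
B_\gamma^{\pi_e,\tilde P_e}(s,a) = \gamma\sum_{s'}\bigl[\tilde P_e(s'|s,a)-P(s'|s,a)\bigr]V_\gamma^{\pi_e,\tilde P_e}(s').
\]
Since $\sum_{s'}(\tilde P_e-P)(s'|s,a)=0$, for any reference state $\bar s$ I can replace $V_\gamma^{\pi_e,\tilde P_e}(s')$ by $V_\gamma^{\pi_e,\tilde P_e}(s')-V_\gamma^{\pi_e,\tilde P_e}(\bar s)$ without changing the sum. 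Sending $\gamma\to 1$ and invoking \eqref{eq:chap2_diff_bias} then gives
\[
B^{\pi_e,\tilde P_e}(s,a) = \sum_{s'}\bigl[\tilde P_e(s'|s,a)-P(s'|s,a)\bigr]\bigl(\tilde h(s')-\tilde h(\bar s)\bigr).
\]

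Choosing $\bar s=\arg\min_s \tilde h(s)$, each factor $\tilde h(s')-\tilde h(\bar s)$ lies in $[0,\mathrm{sp}(\tilde h)]$, so by Hölder
\[
|B^{\pi_e,\tilde P_e}(s,a)| \le \bigl\|\tilde P_e(\cdot|s,a)-P(\cdot|s,a)\bigr\|_1\,\mathrm{sp}(\tilde h).
\]
Next I would bound the $L_1$ deviation via the triangle inequality, $\|\tilde P_e(\cdot|s,a)-P(\cdot|s,a)\|_1\le \|\tilde P_e(\cdot|s,a)-\widehat P_e(\cdot|s,a)\|_1+\|\widehat P_e(\cdot|s,a)-P(\cdot|s,a)\|_1$. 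The first term is at most $\sqrt{14S\log(2At_e)/(1\vee N_e(s,a))}$ directly from the algorithm's confidence constraint \eqref{eq:opt_transition_probability}; the second is controlled by Lemma \ref{lem:weiss} (Weissman) applied to the empirical distribution $\widehat P_e(\cdot|s,a)$ built from $N_e(s,a)$ samples, with $\delta$ chosen as $\Theta(t_e^{-7}/(SA))$ and a union bound over all $(s,a)\in\SA$ and all possible sample counts $1\le N_e(s,a)\le t_e$, so that both terms together are within a constant factor of the stated bound (absorbed into the constant $14$). The trivial bound $\|\tilde P_e-P\|_1\le 2$ yields the $2\,\mathrm{sp}(\tilde h)$ part of the $\min$.

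The main subtlety I expect is the limiting argument: I must verify that $V_\gamma^{\pi_e,\tilde P_e}(s')-V_\gamma^{\pi_e,\tilde P_e}(\bar s)$ truly converges to $\tilde h(s')-\tilde h(\bar s)$ as $\gamma\to 1$, which requires the chain induced by $\pi_e$ on $\tilde P_e$ to admit a bias function; this is guaranteed because the $L_1$-ball \eqref{eq:opt_transition_probability} combined with \eqref{eq:boundary} forces $\tilde P_e$ to have strictly positive entries (as noted after the optimization), so the induced chain is irreducible and aperiodic and \eqref{eq:chap2_avg_reward_Bellman} applies. The other bookkeeping item is the union bound: I need the concentration to hold simultaneously for every $(s,a)$ and every epoch start $t_e\le T$, which is why $\log(2At_e)$ (rather than a fixed $\log(1/\delta)$) appears in the bound, and why the failure probability can be set to $t_e^{-6}$ after summing geometrically.
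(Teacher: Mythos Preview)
Your proposal is correct and follows essentially the same route as the paper: expand the Bellman error via the Bellman equation to get $\gamma\sum_{s'}(\tilde P_e-P)(s'|s,a)V_\gamma^{\pi_e,\tilde P_e}(s')$, exploit the zero-sum property of $\tilde P_e-P$ to subtract a constant, pass to the limit using \eqref{eq:chap2_diff_bias} to obtain bias differences, apply H\"older, and bound $\|\tilde P_e-P\|_1$ via Weissman. The only cosmetic difference is that the paper subtracts $V_\gamma^{\pi_e,\tilde P_e}(s)$ (the value at the anchor state $s$) rather than your $V_\gamma^{\pi_e,\tilde P_e}(\bar s)$ with $\bar s=\arg\min\tilde h$; either choice gives the same H\"older bound. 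Your explicit triangle-inequality split $\|\tilde P_e-P\|_1\le\|\tilde P_e-\widehat P_e\|_1+\|\widehat P_e-P\|_1$ and your remarks on the limiting argument and union-bound bookkeeping are in fact more careful than the paper, which simply cites \cite{weissman2003inequalities} for step $(c)$ without writing out the decomposition.
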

\begin{proof}[Proof Sketch:]
    Bellman error bounds the impact of the difference in the value obtained due to the difference in transition probability to the immediate next state. We bound the difference in transition probability between optimistic and true MDPs using the result from \cite{weissman2003inequalities}. This approach gives the desired result. 
\end{proof}
\begin{proof}[Detailed Proof:] The definition of the Bellman error gives
    \begin{align}
        \begin{split}
            &B^{\pi_e, \Tilde{P}_e}(s,a) = \lim_{\gamma\to1}\left(Q_\gamma^{\pi_e, \tilde{P}_e}(s,a) - \left(r(s,a) +\gamma \sum_{s'}P(s'|s,a)V_\gamma^{\pi_e, \Tilde{P}_e} \right)\right)\\
            &\overset{(a)}{=}\lim_{\gamma\to1}\left(\left(r(s,a) + \gamma\sum_{s'}\Tilde{P}_e(s'|s,a)V_\gamma^{\pi_e,\Tilde{P}_e}(s')\right)\right.\\
            &\hspace{1.5cm}- \left.\left(r(s,a) +\gamma \sum_{s'}P(s'|s,a)V_\gamma^{\pi_e, \Tilde{P}_e}(s') \right)\right)\\
            &\overset{}{=} \lim_{\gamma\to1}\gamma\sum_{s'}\left(\Tilde{P}_e(s'|s,a) - P(s'|s,a)\right)V_\gamma^{\pi_e,\Tilde{P}_e}(s')\\
            &= \lim_{\gamma\to1}\gamma\Bigg(\sum_{s'}\left(\Tilde{P}_e(s'|s,a) - P(s'|s,a)\right)V_\gamma^{\pi_e,\Tilde{P}_e}(s')\\
            &~~- \sum_{s'}\Tilde{P}_e(s'|s,a)V_\gamma^{\pi_e,\Tilde{P}_e}(s) + \sum_{s'} P(s'|s,a)V_\gamma^{\pi_e,\Tilde{P}_e}(s)\Bigg)\\
            &= \lim_{\gamma\to1}\gamma\left(\sum_{s'}\left(\Tilde{P}_e(s'|s,a) - P(s'|s,a)\right)\left(V_\gamma^{\pi_e,\Tilde{P}_e}(s') - V_\gamma^{\pi_e,\Tilde{P}_e}(s)\right)\right)\\
            &\overset{}{=} \sum_{s'}\left(\Tilde{P}_e(s'|s,a) - P(s'|s,a)\right)\lim_{\gamma\to1}\gamma\left(V_\gamma^{\pi_e,\Tilde{P}_e}(s') - V_\gamma^{\pi_e,\Tilde{P}_e}(s)\right)\\
            &\overset{}{=} \sum_{s'}\left(\Tilde{P}_e(s'|s,a) - P(s'|s,a)\right)\left(\Tilde{h}(s')-\tilde{h}(s)\right)\\
            &\overset{(b)}{\leq} \Big\|\left(\Tilde{P}_e(\cdot|s,a) - P(\cdot|s,a)\right)\Big\|_1\mathrm{sp}(\Tilde{h}) \overset{(c)}{\le} \sqrt{\frac{14S\log(2At_e)}{1\vee N_e(s,a)}}\mathrm{sp}(\Tilde{h})
        \end{split}
    \end{align}
    where $(a)$ is an application of the Bellman equation and $(b)$ follows from H\"{o}lder's inequality. In $(c)$, the $L_1$ difference of the transition functions can be trivially bounded by $2$. The bound demonstrated above holds with probability at least $1-t_e^{-6}$ following the result by \cite{weissman2003inequalities}. 
\end{proof}

The next lemma bounds the average Bellman error of an epoch using its realizations at state-action pairs visited in an epoch. 

\begin{lemma}\label{lem:bound_expected_bellman}
    With probability at least $1-1/T^6$, the cumulative expected Bellman error is bounded as:
    \begin{align}
        \begin{split}
            \sum_{e=1}^E(t_{e+1}-t_e)&\mathbb{E}_{\pi_e,P}\left[B^{\pi_e, \Tilde{P}_e}(s,a)\right] \\
            &\le \sum_{e=1}^E\sum_{t=t_e}^{t_{e+1}-1}B^{\pi_e, \Tilde{P}_e}(s_t,a_t) + 4T_M\sqrt{7T\log(T)}
        \end{split}
    \end{align}
\end{lemma}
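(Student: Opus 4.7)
The plan is to control the time-averaged deviation of the Bellman error from its stationary expectation via a Poisson-equation based martingale decomposition, then apply Azuma-Hoeffding (\pref{lem:azuma_inequality}).

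Fix an epoch $e$, set $J_B^e := \E_{\pi_e, P}[B^{\pi_e, \tilde{P}_e}(s, a)]$, and let $\bar{B}_e(s) := \sum_a \pi_e(a|s) B^{\pi_e, \tilde{P}_e}(s, a)$. By Assumption~\ref{ergodic_assumption}, the chain induced by $\pi_e$ on $P$ is irreducible and aperiodic, so the Poisson equation (the analog of \eqref{eq:chap2_avg_reward_Bellman} for the ``reward'' $\bar{B}_e$ on this chain) delivers a bias function $h_e: \calS \to \mathbb{R}$ with
\begin{align*}
    \bar{B}_e(s) - J_B^e = h_e(s) - \sum_{s'} P^{\pi_e}(s, s') h_e(s'), \quad \forall s \in \calS.
\end{align*}
The representation $h_e = \sum_{\tau \ge 0}(P^{\pi_e})^\tau (\bar{B}_e - J_B^e \boldsymbol{1})$ converges by Assumption~\ref{ergodic_assumption}(c), yielding $\sp(h_e) = O(T_M)$ once $\|\bar{B}_e\|_\infty$ is under control.

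For each $t \in [t_e, t_{e+1})$, decompose
\begin{align*}
    J_B^e - B^{\pi_e,\tilde{P}_e}(s_t, a_t)
    &= \underbrace{\bar{B}_e(s_t) - B^{\pi_e, \tilde{P}_e}(s_t, a_t)}_{X_t^{(1)}} + \underbrace{\E[h_e(s_{t+1})|\calF_t] - h_e(s_{t+1})}_{X_t^{(2)}} + \bigl(h_e(s_{t+1}) - h_e(s_t)\bigr),
\end{align*}
using the Poisson equation to rewrite $\bar{B}_e(s_t) - J_B^e = h_e(s_t) - \E[h_e(s_{t+1})|\calF_t]$. Summing the last bracket inside epoch $e$ telescopes to $h_e(s_{t_e}) - h_e(s_{t_{e+1}})$, bounded by $\sp(h_e)$; summing across the at most $E = O(SA\log T)$ epochs contributes $O(T_M SA \log T)$, which is absorbed into the final bound. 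Both $\{X_t^{(1)}\}$ and $\{X_t^{(2)}\}$ are martingale differences (the former w.r.t.\ the filtration after revealing $s_t$, the latter w.r.t.\ $\calF_t$), each bounded in magnitude by $O(\sp(h_e)) = O(T_M)$. Invoking \pref{lem:azuma_inequality} with $n = T$, $c = O(T_M)$, and $\delta = T^{-6}$ gives $\bigl|\sum_t X_t^{(i)}\bigr| = O(T_M\sqrt{T\log T})$ for $i \in \{1,2\}$ with probability at least $1 - 2T^{-6}$; tightening constants produces the stated $4T_M\sqrt{7T\log T}$ bound.

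The main obstacle is cleanly establishing $\sp(h_e) = O(T_M)$ and the companion sup-norm bound $\|\bar{B}_e\|_\infty = O(1)$. The span bound follows by a standard hitting-time argument using Assumption~\ref{ergodic_assumption}(a) and the exponential mixing rate in Assumption~\ref{ergodic_assumption}(c), while the sup-norm bound on the Bellman error invokes \pref{lem:bound_bellman_s_a} together with the fact that the optimism construction (the $L_1$ confidence ball) keeps $\sp(\tilde{h})$ controlled, which in turn couples back to $T_M$. A secondary subtlety is that $h_e$ changes between epochs, but since $h_e$ is $\calF_{t_e}$-measurable the concatenated sequences $\{X_t^{(1)}\}$ and $\{X_t^{(2)}\}$ remain martingale differences, and a single application of Azuma-Hoeffding over the full horizon $T$ suffices.
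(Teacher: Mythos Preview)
Your Poisson-equation decomposition is a legitimate alternative to the paper's argument, but there is a genuine gap in the quantitative bounds you claim. You assert that $\|\bar B_e\|_\infty = O(1)$, appealing to \pref{lem:bound_bellman_s_a}. That lemma, however, gives
\[
|B^{\pi_e,\tilde P_e}(s,a)| \;\le\; \|\tilde P_e(\cdot|s,a)-P(\cdot|s,a)\|_1\cdot \sp(\tilde h) \;\le\; 2\,\sp(\tilde h) \;\le\; 2T_M,
\]
so $\|\bar B_e\|_\infty = O(T_M)$, not $O(1)$. This cascades: whether you use the series representation $h_e=\sum_{\tau\ge 0}(P^{\pi_e})^\tau(\bar B_e-J_B^e\boldsymbol{1})$ together with Assumption~\ref{ergodic_assumption}(c), or the hitting-time argument from Assumption~\ref{ergodic_assumption}(a), the span of the Poisson solution scales like (mixing/hitting factor)$\times$(range of the ``reward''), so you only get $\sp(h_e)=O(T_M^2)$ or $O(CT_M/(1-\zeta))$. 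Your martingale increment $X_t^{(2)}$ is bounded by $\sp(h_e)$, so Azuma--Hoeffding then produces $O(T_M^2\sqrt{T\log T})$ rather than the $4T_M\sqrt{7T\log T}$ asserted in the lemma. The telescoping boundary term likewise inflates to $O(T_M^2\cdot SA\log T)$.

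The paper sidesteps this difficulty by never constructing a Poisson solution for $\bar B_e$. Instead it works directly with the Bellman error: it bounds the gap between the stationary expectation $\E_{\pi_e,P}[B^{\pi_e,\tilde P_e}]$ and the conditional expectation $\E[B^{\pi_e,\tilde P_e}(s_t,a_t)\mid\calF_{t_e-1}]$ via the exponential mixing of Assumption~\ref{ergodic_assumption}(c), yielding a geometric correction $2CS\,\sp(\tilde h)\,\zeta^{t-t_e}$ summing to $\tfrac{2CS\,\sp(\tilde h)}{1-\zeta}$ per epoch; the remaining piece $\E[B\mid\calF_{t-1}]-B(s_t,a_t)$ is a martingale difference bounded by $4\sp(\tilde h)\le 4T_M$, to which Azuma--Hoeffding is applied once. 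The crucial point is that the increment bound here involves $\sp(\tilde h)$ (the bias of the \emph{reward} under the optimistic model), not the span of a bias function for the Bellman error itself, and that is what keeps a single $T_M$ factor in the concentration term.
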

\begin{proof}
    Let $\mathcal{F}_t = \{s_1, a_1, \cdots, s_t, a_t\}$ be the filtration generated by the running the algorithm for $t$ time-steps. We now use Assumption \ref{ergodic_assumption} to obtain the following.
    \begin{align}
        \begin{split}
            &\mathbb{E}_{(s,a)\sim\pi_e, P}[B^{\pi_e, \Tilde{P}_e}(s,a)] \\
            &=\mathbb{E}_{(s_t,a_t)\sim\pi_e, P}[B^{\pi_e, \Tilde{P}_e}(s_t,a_t)|\mathcal{F}_{t_e-1}] \\
            &~~+ \left(\mathbb{E}_{(s,a)\sim\pi_e, P}[B^{\pi_e, \Tilde{P}_e}(s,a)]- \mathbb{E}_{(s_t,a_t)\sim\pi_e, P}[B^{\pi_e, \Tilde{P}_e}(s_t,a_t)|\mathcal{F}_{t_e-1}]\right)\\
            &\overset{(a)}{\leq}\mathbb{E}_{(s_t,a_t)\sim\pi_e, P}[B^{\pi_e, \Tilde{P}_e}(s_t,a_t)|\mathcal{F}_{t_e-1}] \\
            &~~+ 2~\mathrm{sp}(\Tilde{h})\sum_{s,a}\left|\pi_e(a|s)d^{\pi_e, P}(s) - \pi_e(a|s)(P^{\pi_e})^{t-t_e+1}(s_{t_e-1}, s)\right|\\
            &\le\mathbb{E}_{(s_t,a_t)\sim\pi_e, P}[B^{\pi_e, \Tilde{P}_e}(s_t,a_t)|\mathcal{F}_{t_e-1}] \\
            &~~+ 2~\mathrm{sp}(\Tilde{h})\sum_{s,a}\pi_e(a|s)\left|d^{\pi_e, P}(s) - (P^{\pi_e})^{t-t_e+1}(s_{t_e-1}, s)\right|\\
            &\le\mathbb{E}_{(s_t,a_t)\sim\pi_e, P}[B^{\pi_e, \Tilde{P}_e}(s_t,a_t)|\mathcal{F}_{t_e-1}] \\
            &~~+ 2~\mathrm{sp}(\Tilde{h})\sum_{s,a}\pi(a|s)\|d^{\pi_e, P} - (P^{\pi_e})^{t-t_e+1}(s_{t_e-1}, \cdot)\|_{\mathrm{TV}}\\
            &\le\mathbb{E}_{(s_t,a_t)\sim\pi_e, P}[B^{\pi_e, \Tilde{P}_e}(s_t,a_t)|\mathcal{F}_{t_e-1}] + 2~\mathrm{sp}(\Tilde{h})\sum_{s,a}\pi_e(a|s)C\zeta^{t-t_e}\\
            &\overset{(b)}{=}\mathbb{E}_{(s_t,a_t)\sim\pi_e, P}[B^{\pi_e, \Tilde{P}_e}(s_t,a_t)|\mathcal{F}_{t_e-1}] + 2~\mathrm{sp}(\Tilde{h})CS\zeta^{t-t_e}
        \end{split}
    \end{align}
    where Equation $(a)$ follows from Assumption \ref{ergodic_assumption} for running policy $\pi_e$ starting from state $s_{t_e-1}$ for $t-t_e+1$ steps and from Lemma \ref{lem:bound_bellman_s_a}. Equation $(b)$ follows from bounding the total-variation distance for all states and from the fact that $\sum_a\pi_e(a|s) = 1$. 
	
    Using this, and the fact that $\mathbb{E}_{\pi_e,P}\left[B^{\pi_e, \Tilde{P}_e}(s_t,a_t)|\mathcal{F}_{t_e-1}\right] - B^{\pi_e, \Tilde{P}_e}(s_t,a_t)$ forms a Martingale difference sequence conditioned on filtration $\mathcal{F}_{t-1}$  with $|\mathbb{E}_{\pi_e,P}\left[B^{\pi_e, \Tilde{P}_e}(s_t,a_t)|\mathcal{F}_{t-1}\right] - B^{\pi_e, \Tilde{P}_e}(s_t,a_t)| \le 4~\mathrm{sp}(\Tilde{h})$, we can use the Azuma-Hoeffding inequality to bound the summation as
    \begin{align}
        \begin{split}
            &\sum_{e=1}^E(t_{e+1}-t_e)\mathbb{E}_{\pi_e,P}\left[B^{\pi_e, \Tilde{P}_e}(s,a)\right] \\
            &= \sum_{e=1}^E\Big((t_{e+1}-t_e)\mathbb{E}_{\pi_e,P}\left[B^{\pi_e, \Tilde{P}_e}(s_t,a_t)|\mathcal{F}_{t_e-1}\right] + \sum_{t = t_e}^{t_{e+1}-1}2CS\mathrm{sp}(\Tilde{h})\zeta^{t-t_e} \Big)\\
            &\le \sum_{e=1}^E\left(\sum_{t=t_e}^{t_{e+1}-1}\mathbb{E}_{\pi_e,P}\left[B^{\pi_e, \Tilde{P}_e}(s_t,a_t)|\mathcal{F}_{t_e-1}\right] + \frac{2CS\mathrm{sp}(\Tilde{h})}{1-\zeta} \right)\\
            &\le \sum_{e=1}^E\sum_{t=t_e}^{t_{e+1}-1}B^{\pi_e, \Tilde{P}_e}(s_t,a_t) + 4\mathrm{sp}(\Tilde{h})\sqrt{7T\log 2T} + \frac{2CES\mathrm{sp}(\Tilde{h})}{1-\zeta}
        \end{split}
        \label{eq:azuma_hoeffding_bellman}
    \end{align}
    where the last inequality follows from the Azuma-Hoefdding inequality with probability at least $1-T^{-6}$.
\end{proof}

\subsubsection{Bounding the term $\mathrm{sp}(\Tilde{h})$}

\begin{lemma}
    For a MDP with rewards $r(s,a)$ and transition probabilities $\Tilde{P}_e$, using policy $\pi_e$, the difference of bias of any two states $s$, and $\bar{s}$ is bounded as $\Tilde{h}(s) - \Tilde{h}(\bar{s}) \leq T_M,~ \forall s, \bar{s}\in\mathcal{S}$.
\end{lemma}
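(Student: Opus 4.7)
My plan is to derive the classical hitting-time representation of the bias difference for the optimistic average-reward MDP, and then invoke the definition of $T_M$ to bound the resulting expected hitting time.

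First, I would start from the Bellman equation satisfied by $\tilde{h}$ for the MDP with transitions $\tilde{P}_e$ under the policy $\pi_e$:
\[
\tilde{h}(s) = r^{\pi_e}(s) - J_r^{\pi_e, \tilde{P}_e} + \sum_{s'}\tilde{P}_e^{\pi_e}(s,s')\,\tilde{h}(s'),
\]
where $r^{\pi_e}(s)=\sum_{a}\pi_e(a|s)\,r(s,a)$ and $\tilde{P}_e^{\pi_e}$ is the state-to-state kernel induced by $\pi_e$ on $\tilde{P}_e$. Let $\tau$ denote the first hitting time of state $\bar{s}$ when the Markov chain starts at $s$ and evolves under $\tilde{P}_e^{\pi_e}$. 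Iterating the Bellman equation step by step and applying the optional stopping theorem at $\tau$ (which is a.s.\ finite because $\tilde{P}_e$ has strictly positive entries, as noted after \eqref{eq:opt_transition_probability}), I obtain
\[
\tilde{h}(s) - \tilde{h}(\bar{s}) \;=\; \mathbb{E}\!\left[\sum_{t=0}^{\tau-1}\left(r^{\pi_e}(s_t) - J_r^{\pi_e, \tilde{P}_e}\right)\right].
\]

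Second, since $r(s,a)\in[0,1]$, both $r^{\pi_e}(s_t)$ and $J_r^{\pi_e, \tilde{P}_e}$ lie in $[0,1]$, so the summand is bounded by $1$ in absolute value. This yields the clean bound $\tilde{h}(s)-\tilde{h}(\bar{s}) \le \mathbb{E}[\tau]$.

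Third, I would bound $\mathbb{E}[\tau]$ by $T_M$. By definition, $T_M = \max_\pi \mathbb{E}[T^\pi_{s\to s'}]$ is a uniform upper bound on the expected hitting time between any pair of states under any policy, and the ergodicity assumption together with the strict positivity of the chosen $\tilde{P}_e$ ensures that the optimistic chain hits $\bar{s}$ in expected time at most $T_M$. Combining this with the previous step gives $\tilde{h}(s) - \tilde{h}(\bar{s}) \le T_M$, which is the claimed bound.

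The main obstacle is the last step, namely justifying that the expected hitting time under the optimistic kernel $\tilde{P}_e$ (rather than the true $P$) is bounded by $T_M$. The cleanest way to handle this is to exploit the fact that $\tilde{P}_e$ lies in a tight $L_1$ confidence ball around $\hat{P}_e$ (and thus effectively around $P$ on the relevant high-probability event), so a coupling argument between the optimistic and true chains transfers the bound $T_M$ from the true MDP to the optimistic one. Everything else is standard algebra with the Bellman equation.
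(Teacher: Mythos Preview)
Your first two steps are fine: the hitting-time identity for $\tilde h$ under the optimistic chain and the bound $\tilde h(s)-\tilde h(\bar s)\le \mathbb{E}[\tau]$ are correct. The gap is exactly where you suspect, and the fix you sketch does not work. The quantity $T_M=\max_\pi\mathbb{E}[T^\pi_{s\to s'}]$ is defined with respect to the \emph{true} kernel $P$; it says nothing about hitting times under a different kernel $\tilde P_e$. Moreover, hitting times are not stable under $L_1$ perturbations of the transition matrix: a perturbation of size $\epsilon$ in a single row can inflate a hitting time by a factor of order $1/\epsilon$, and early in the algorithm the confidence radius in \eqref{eq:opt_transition_probability} can be as large as $2$. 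A coupling argument based on $L_1$ closeness therefore cannot transfer $T_M$ from $P$ to $\tilde P_e$.

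The paper avoids this issue by using \emph{optimism} rather than closeness. It introduces a modified operator $\bar T$ that uses the \emph{true} transition $P^{\pi_e}$ (and the smallest reward) while keeping $J_r^{\pi_e,\tilde P_e}$, and argues that $\bar T\tilde h\le T\tilde h=\tilde h$ because $\tilde P_e$ is the maximizer over a confidence set that contains $P$. Iterating $\bar T$ and stopping at the first visit to $\bar s$ yields
\[
\tilde h(s)\ \ge\ \mathbb{E}\!\left[-\bigl(J_r^{\pi_e,\tilde P_e}-\min_{s,a}r(s,a)\bigr)\,\tau + \tilde h(\bar s)\right],
\]
where now $\tau$ is the hitting time of $\bar s$ under the true chain $P^{\pi_e}$, so $\mathbb{E}[\tau]\le T_M$ by definition and $\tilde h(\bar s)-\tilde h(s)\le T_M$ follows. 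The missing ingredient in your attempt is precisely this operator-level comparison that moves the stopping problem from $\tilde P_e$ back to $P$ before invoking $T_M$.
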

\begin{proof}
    Note that $J_r^{\pi_e, \Tilde{P}_e} \ge J_r^{\pi_e, P'}$ for all $P'$ in the confidence set. Consider the following Bellman equation.
    \begin{align*}
        \Tilde{h}(s) &= r^{\pi_e}(s) - J_r^{\pi_e, \Tilde{P}_e} + \sum_{s'} P^{\pi_e,e}(s, s') \Tilde{h}(s')~= T\Tilde{h}(s)
    \end{align*}
    where $r^{\pi_e}(s) = \sum_{a}\pi_e(a|s)r(s,a)$ and $P^{\pi_e,e}(s, s') = \sum_{a}\pi_e(a|s)\Tilde{P}_e(s'|s,a)$. Consider arbitrary two states $s, \bar{s}\in \mathcal{S}$. Let $\tau = \min\{t\geq 1: s_t = \bar{s}, s_1 = s\}$ be a random variable. With $P^{\pi_e}(s, s')$ $= \sum_{a}\pi_e(a|s)P(s'|s,a)$, define another operator as follows.
    \begin{align*}
        \bar{T}\tilde{h}(s)=&\left(\min_{s,a}r(s,a) - J_r^{\pi_e, \Tilde{P}_e} + \sum_{s'} P^{\pi_e}(s, s') \tilde{h}(s')\right)\mathbf{1}(s\neq \bar{s}) \\
        & \hspace{2.5cm}+ \Tilde{h}(\bar{s})\mathbf{1}(s=\bar{s}).
    \end{align*}
    Note that $\bar{T}\Tilde{h}(s) \le T\Tilde{h}(s) = \Tilde{h}(s)$ for all $s$ since $\Tilde{P}_e$ maximizes the reward $r$ over all the transition probabilities in the confidence set including the true transition probability $P$. Further, for any two vectors $u, v\in\mathbb{R}^S$ with $u(s) \ge v(s), \forall s$, we have $\bar{T}u \ge \bar{T}v$. Hence, we have $\bar{T}^n\Tilde{h}(s) \le \Tilde{h}(s)$ for all $s$. Hence, we have	
    \begin{align*}
        \Tilde{h}(s) &\ge \bar{T}^n(s) = \mathbb{E}\big[-(J_r^{\pi_e,\Tilde{P}_e} - \min_{s,a}r(s,a))(n\wedge\tau) + \Tilde{h}(s_{n\wedge\tau})\big]
    \end{align*}
    Taking limit as $n\to \infty$, we have $\Tilde{h}(s) \ge \Tilde{h}(s') - T_M$, thus completing the proof. 
\end{proof}

We are now ready to bound $R_2(T)$ using Lemma \ref{lem:bound_average_by_bellman}, Lemma \ref{lem:bound_bellman_s_a}, and Lemma \ref{lem:bound_expected_bellman}. We have the following set of equations:

\begin{align}
    \begin{split}
        &R_2(T) = \left|\sum_{e=1}^E\sum_{t=t_e}^{t_{e+1}-1}\left(J_r^{\pi_e,\Tilde{P}_e} - J_r^{\pi_e, P}\right)\right|\\
        &\overset{(a)}{=} \left|\sum_{e=1}^E\sum_{t=t_e}^{t_{e+1}-1}\sum_{s,a}\nu^{\pi_e, P}(s, a) B^{\pi_e, \Tilde{P}_e}(s,a)\right|\\
        &\overset{(b)}{\le} \left|\sum_{e=1}^E\sum_{t=t_e}^{t_{e+1}-1} B^{\pi_e, \Tilde{P}_e}(s_t,a_t) + 4T_M\sqrt{7T\log(2T)} + \frac{2CT_MSE}{1-\zeta}\right|\\
        &\overset{(c)}{\leq} \left|\underbrace{\sum_{e=1}^E\sum_{t=t_e}^{t_{e+1}-1} T_M\sqrt{\frac{14S\log(2AT)}{1\vee N_e(s,a)}}}_{R_0} + 4T_M\sqrt{7T\log(2T)} + \frac{2CT_MSE}{1-\zeta}\right|\\
    \end{split}
    \label{eq:R_2-partial-bound}
\end{align}
where $(a)$, $(b)$, $(c)$ are consequences of Lemma \ref{lem:bound_average_by_bellman}, \ref{lem:bound_expected_bellman}, and \ref{lem:bound_bellman_s_a} respectively. Note that the term $R_0$ in the above inequality can be bounded as follows.

\begin{align}
    \begin{split}	
        &R_0\le \sum_{e=1}^E\sum_{s,a}N^{\mathrm{curr}}_e(s,a)T_M\sqrt{\frac{14S\log(2AT)}{1\vee N_e(s,a)}} \\
        &\le \sum_{s,a}T_M\sqrt{14SA\log(2AT)}\sum_{e=1}^E\frac{N^{\mathrm{curr}}_e(s,a)}{\sqrt{1\vee N_e(s,a)}}\\
        &\overset{(d)}{\leq} \sum_{s,a}T_M(\sqrt{2}+1)\sqrt{14SA\log(2AT)}\sqrt{N(s,a)} + 4T_M\sqrt{7T\log(2T)}\\
        &\overset{(e)}{\leq} T_M(\sqrt{2}+1)\sqrt{14SA\log(2AT)}\sqrt{\left(\sum_{s,a} 1\right)\left(\sum_{s,a}N(s,a)\right)}\\
	&\le T_M(\sqrt{2}+1)\sqrt{14SA\log(2AT)}\sqrt{SAT} 
    \end{split}
    \label{eq:R_0-bound}
\end{align}
Equation $(d)$ follows from \citep{jaksch2010near}, and $(e)$ is a consequence of Cauchy-Schwarz inequality. Combining $\eqref{eq:R_2-partial-bound}$ and \eqref{eq:R_0-bound}, we obtain a bound on $R_2(T)$.

\subsubsection{Bounding $R_3(T)$}

Bounding $R_3(T)$ follows mostly similar to  Lemma \ref{lem:bound_expected_bellman}. At each epoch, the agent visits states according to the occupancy measure $\nu_{\pi_e, P}$ and obtains the rewards. We bound the deviation of the observed visitations to the expected visitations to each state-action pair in each epoch.

\begin{lemma}\label{lem:bound_average_observed_reward_gap}
    With probability at least $1-1/T^6$, the difference between the observed rewards and the expected rewards is bounded as:
    \begin{align}
        \Big|\sum_{e=1}^E\sum_{t=t_e}^{t_{e+1}-1}\mathbb{E}_{\pi_e,P}\left[r(s,a)\right]  - \sum_{e=1}^E\sum_{t=t_e}^{t_{e+1}-1}r(s_t,a_t)\Big| \le 2\sqrt{7T\log(2T)}
    \end{align}
\end{lemma}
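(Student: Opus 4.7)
The plan is to mirror the proof of \pref{lem:bound_expected_bellman}, decomposing the difference between the long-run expected reward and the observed reward into a mixing-bias term (controlled by \pref{ergodic_assumption}) and a martingale-deviation term (controlled by Azuma-Hoeffding). With the filtration $\mathcal{F}_t = \{s_1, a_1, \ldots, s_t, a_t\}$ and for any $t$ inside epoch $e$, I would write
\[
J_r^{\pi_e, P} - r(s_t, a_t) = \underbrace{\big(J_r^{\pi_e, P} - \mathbb{E}_{\pi_e, P}[r(s_t, a_t)\mid \mathcal{F}_{t_e-1}]\big)}_{\text{mixing bias}} + \underbrace{\big(\mathbb{E}_{\pi_e, P}[r(s_t, a_t)\mid \mathcal{F}_{t_e-1}] - r(s_t, a_t)\big)}_{\text{deviation}}.
\]

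For the bias term I would expand $J_r^{\pi_e, P} = \sum_{s,a} d^{\pi_e, P}(s)\pi_e(a|s)r(s,a)$ against $\mathbb{E}_{\pi_e, P}[r(s_t,a_t)\mid \mathcal{F}_{t_e-1}] = \sum_{s,a}(P^{\pi_e})^{t-t_e+1}(s_{t_e-1}, s)\pi_e(a|s)r(s,a)$, so that the difference is at most $\|d^{\pi_e, P} - (P^{\pi_e})^{t-t_e+1}(s_{t_e-1}, \cdot)\|_{\mathrm{TV}} \cdot \|r\|_\infty \leq C\zeta^{t-t_e+1}$ by \pref{ergodic_assumption}. This is the exact analogue of the argument in the proof of \pref{lem:bound_expected_bellman}, with the span $\mathrm{sp}(\tilde h)$ replaced by the reward bound $1$. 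Summing the geometric series across each epoch gives at most $C/(1-\zeta)$ per epoch and thus $CE/(1-\zeta)$ overall; as in \pref{lem:bound_expected_bellman}, this higher-order mixing contribution is implicit in the final regret expression of \pref{thm:regret_bound} and is not carried in the lemma statement.

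For the deviation term I would pass to the strict martingale difference sequence $X_t \defeq r(s_t, a_t) - \mathbb{E}[r(s_t, a_t)\mid \mathcal{F}_{t-1}]$, which is adapted to $\{\mathcal{F}_t\}$ and satisfies $|X_t|\leq 1$ since $r\in[0,1]$. \pref{lem:azuma_inequality} then yields $|\sum_{t=1}^T X_t| \leq \sqrt{2T\log(2/\delta)}$ with probability at least $1-\delta$, and setting $\delta = T^{-6}$ produces a bound of the form $2\sqrt{7T\log(2T)}$ after absorbing logarithmic constants. The only subtlety, and the main technical wrinkle, is that the decomposition is written with respect to $\mathcal{F}_{t_e-1}$ while the martingale uses $\mathcal{F}_{t-1}$; I would resolve this exactly as the proof of \pref{lem:bound_expected_bellman} implicitly does, by invoking the tower property so that the residual gap $\mathbb{E}[r(s_t,a_t)\mid\mathcal{F}_{t_e-1}] - \mathbb{E}[r(s_t,a_t)\mid\mathcal{F}_{t-1}]$ is folded back into the mixing bias already controlled above. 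Combining the two bounds and absorbing the $CE/(1-\zeta)$ term into the regret overhead yields the stated inequality. The main obstacle is therefore purely book-keeping — making sure that the $\mathcal{F}_{t_e-1}$ vs.\ $\mathcal{F}_{t-1}$ mismatch is routed through the mixing term rather than inflating the Azuma constant — after which every other step is a direct specialization of \pref{lem:bound_expected_bellman} with $B^{\pi_e, \tilde P_e}$ replaced by $r$ and $\mathrm{sp}(\tilde h)$ replaced by $1$.
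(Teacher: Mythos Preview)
Your proposal is correct and matches the paper's own proof essentially line-for-line: the paper simply observes that $\mathbb{E}_{\pi_e,P}[r(s,a)\mid\mathcal{F}_{t-1}] - r(s_t,a_t)$ is a martingale difference sequence bounded by $2$ and then says ``following the proof of \pref{lem:bound_expected_bellman} we get the required result.'' Your decomposition into mixing bias plus martingale deviation, with $\mathrm{sp}(\tilde h)$ replaced by the reward bound $1$, is exactly this specialization spelled out.
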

\begin{proof}
    We note that $\mathbb{E}_{\pi_e,P}\left[r(s,a)|\mathcal{F}_{t-1}\right] - r(s_t, a_t)$ is a Martingale difference sequence bounded by $2$ because the rewards are bounded by $1$. Hence, following the proof of Lemma \ref{lem:bound_expected_bellman} we get the required result.
\end{proof}

\subsubsection{Bounding the number of episodes $E$}

The number of episodes $E$ of the C-UCRL algorithm are bounded by $1 + 2SA + SA\log(T/SA)$ from Proposition 18 of \citep{jaksch2010near}. 


\subsection{Bounding Constraint Violations}\label{app:constraint_violation_bounds}

To bound the constraint violations $C(T)$, we break it into multiple components. We can then bound these components individually.

\subsubsection{Constraint breakdown}
We first break down our constraint violations into multiple parts.
\begin{align}
    \begin{split}
        &C(T) = \left(\sum_{t=1}^Tc(s_t, a_t)\right)_+\\
        &= \Bigg(\sum_{t=1}^Tc(s_t, a_t)+ \sum_{e=1}^E T_e J_c^{\pi_e,\Tilde{P}_e} - \sum_{e=1}^E T_e J_c^{\pi_e, \Tilde{P}_e}\Bigg)_+\\
        &\overset{(a)}{\leq} \left(\sum_{t=1}^Tc(s_t, a_t) - \sum_{e=1}^E T_e J_c^{\pi_e, \Tilde{P}_e}  - \sum_{e=1}^E T_e\epsilon_e\right)_+\\
        &\le \left(\left|\sum_{e=1}^E\sum_{t=t_e}^{t_{e+1}-1}\left(c(s_t, a_t)  - J_c^{\pi_e, \Tilde{P}_e} \right)\right|  - \sum_{e=1}^E T_e\epsilon_e\right)_+\\
        &\le \left(\left|\sum_{e=1}^E\sum_{t=t_e}^{t_{e+1}-1}\left(c(s_t, a_t) - J_c^{\pi_e, P} + J_c^{\pi_e, P} - J_c^{\pi_e, \Tilde{P}_e} \right)\right|  - \sum_{e=1}^E T_e\epsilon_e\right)_+\\
        &\le \left(\left|\sum_{e=1}^E\sum_{t=t_e}^{t_{e+1}-1}\left(c(s_t, a_t) - J_c^{\pi_e, P}\right)\right| \right.\\
        &\hspace{1cm}\left.+ \left|\sum_{e=1}^E\sum_{t=t_e}^{t_{e+1}-1}\left(J_c^{\pi_e, P} - J_c^{\pi_e, \Tilde{P}_e} \right)\right|  - \sum_{e=1}^E T_e\epsilon_e\right)_+\\
        &\le \left(C_3(T) + C_2(T)  - C_1(T)\right)_+
    \end{split}
    \label{eq:define_broken_regreta}
\end{align}
where $(a)$ is a result of the fact the policy $\pi_e$ is solution of a conservative optimization equation. The three components of the constraint violation are defined below. The first component, $C_1(T)$ can be written as:
\begin{align}
    C_1(T) &= \sum_{e=1}^E T_e \epsilon_e
\end{align}
$C_1(T)$ denotes the gap left by playing the policy for $\epsilon_e$-tight optimization problem on the optimistic MDP.
\begin{align}
    C_2(T) &= \left|\sum_{e=1}^E\sum_{t=t_e}^{t_{e+1}-1}\left(J_c^{\pi_e, P}-J_c^{\pi_e, \Tilde{P}_e} \right)\right|
\end{align}
$C_2(T)$ denotes the difference between long-term average costs incurred by playing the policy $\pi_e$ on the true MDP with transitions $P$ and the optimistic MDP with transitions $\Tilde{P}$. This term is bounded similarly to the bound of $R_2(T)$.
\begin{align}
    C_3(T) &= \left|\sum_{e=1}^E\sum_{t=t_e}^{t_{e+1}-1}\left(c(s_t, a_t) - J_c^{\pi_e, P}\right)\right|
\end{align}
$C_3(T)$ denotes the difference between long-term average costs incurred by playing the policy $\pi_e$ on the true MDP with transitions $P$ and the realized costs. This term is bounded similarly to the bound of $R_3(T)$.

\subsubsection{Bounding $C_1(T)$}\label{sec:conservative_effect}
Note that $C_1(T)$ is a component of the constraint violation that occurs due to the lack of knowledge of the true MDP and deviations of incurred costs from the expected costs. We bound $C_1(T)$ for sufficient slackness. With this idea, we have the following set of equations.
\begin{align}
    \begin{split}
        C_1(T) = \sum_{e=1}^E\sum_{t=t_e}^{t_{e+1}-1}\epsilon_e &= \sum_{e=1}^E\sum_{t=t_e}^{t_{e+1}-1}K\sqrt{\frac{\log t_e}{t_e}}\\
        &\ge K\sum_{e=E'}^E\sum_{t=t_e}^{t_{e+1}-1}\sqrt{\frac{\log (T/4)}{t_e}}\\
        &\ge K\sum_{e=E'}^E\sum_{t=t_e}^{t_{e+1}-1}\sqrt{\frac{\log (T/4)}{T}}\\
        &=K\left(T-t_{E'}\right)\sqrt{\frac{\log (T/4)}{T}}\\
        &\ge K\frac{T}{2}\sqrt{\frac{\log (T/4)}{T}}\ge \frac{K}{4}\sqrt{T\log T}
    \end{split}
\end{align}
where $E'$ is some epoch for which $T/4\le t_{E'} < {T/2}$.

\subsubsection{Bounding $C_2(T)$, and $C_3(T)$}
We observe that the terms $C_2(T)$, and $C_3(T)$ follows the same bound as $R_2(T)$ and $R_3(T)$ respectively. Thus, replacing $r$ with $c$, we obtain constraint violations because of imperfect system knowledge and system stochasticity as $\Tilde{O}(T_MS\sqrt{AT})$. Summing the three terms and choosing $K = \Theta(T_MS\sqrt{A})$ gives the required bound on constraint violation.


\section{Regret Analysis and Constraint Violation for Posterior Sampling Based  Approach}\label{sec:mb:regpos}

For the C-UCRL algorithm, the true MDP lies in the confidence interval with high probability, and hence the solution of the optimization problem was guaranteed. However, the same is not true for the MDP with sampled transition probabilities. We want the existence of a policy $\pi_e$ such that Equation \eqref{eq:eps_cmdp_constraints} holds. We obtain the condition for the existence of such a policy in the following lemma. To obtain the lemma, we first state a tighter Slater assumption as:

\begin{assumption}
    \label{conservative_slaters_conditon}
    There exists two constants $\delta > ST_M\sqrt{(A\log T)/T} + (CSA\log T)/(T(1-\zeta))$ and $\Gamma > 2ST_M\sqrt{14A\log AT/T^{1/3}} + CST_M/((1-\zeta)T^{1/3})$, and a policy $\pi$ such that 
    \begin{align}
	J_c^{\pi, P} \le -\delta - \Gamma 
    \end{align}
\end{assumption}

\begin{lemma}
    \label{lem:psrl_loose_slater}
    If there exists $\pi$, such that $J_c^{\pi, P} \le -\delta - \Gamma$, and there exist episodes $e$ and $e+1$ with start timesteps $t_{e}$ and $t_{e+1}$ respectively satisfying $t_{e+1} - t_e \ge T^{1/3}$, then for $\|\Tilde{P}_e(\cdot|s,a) - P(\cdot|s,a)\|_1\le \sqrt{\frac{14S\log(2At)}{N_e(s,a)}}$, the policy $\pi$ satisfies the following inequality.
    \begin{align}
	J_c^{\pi, \Tilde{P}_e} \le -\delta.
    \end{align}
\end{lemma}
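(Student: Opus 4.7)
The plan is to reduce the claim to bounding $J_c^{\pi, \Tilde{P}_e} - J_c^{\pi, P} \le \Gamma$, since the hypothesis $J_c^{\pi, P} \le -\delta - \Gamma$ then immediately yields the desired inequality. That is, I would control the extent to which running the Slater policy $\pi$ on the perturbed transition $\Tilde{P}_e$ can change its long-run average cost, and calibrate $\Gamma$ so that this change is absorbed.

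First, I would invoke \pref{lem:bound_average_by_bellman}, which was stated for rewards and the epoch policy $\pi_e$, but whose derivation only uses the Bellman equation for the discounted MDP and the limits in \eqref{eq:chap2_dis_2_avg_1}--\eqref{eq:chap2_dis_2_avg_2}. Applied with the cost function $c$ and the feasible policy $\pi$, it gives
\begin{align*}
J_c^{\pi, \Tilde{P}_e} - J_c^{\pi, P} = \sum_{s,a} \nu^{\pi, P}(s,a)\, B_c^{\pi, \Tilde{P}_e}(s,a).
\end{align*}
I would then mirror the proof of \pref{lem:bound_bellman_s_a} (again with $c$ in place of $r$ and $\pi$ in place of $\pi_e$) to obtain $|B_c^{\pi, \Tilde{P}_e}(s,a)| \le \|\Tilde{P}_e(\cdot|s,a) - P(\cdot|s,a)\|_1\cdot \mathrm{sp}(\tilde{h}_c)$, where $\tilde{h}_c$ is the cost-bias of $\pi$ under $\Tilde{P}_e$. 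The span bound $\mathrm{sp}(\tilde{h}_c) \le T_M$ follows from the same stopping-time argument used for the reward span bound earlier, because costs take values in $[-1,1]$ and the MDP is ergodic under Assumption \ref{ergodic_assumption}.

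Plugging in the hypothesized $L_1$ confidence bound yields
\begin{align*}
\bigl|J_c^{\pi, \Tilde{P}_e} - J_c^{\pi, P}\bigr| \le T_M \sum_{s,a} \nu^{\pi, P}(s,a)\sqrt{\frac{14S\log(2At)}{N_e(s,a)}}.
\end{align*}
To turn this into $\Gamma$, I would split the sum into pairs with $N_e(s,a)\ge T^{1/3}/(SA)$ (call this set $G$) and the complement $\bar G$. On $G$, the pointwise term is at most $\sqrt{14S^2A\log(2AT)/T^{1/3}}$, and after weighting by $\nu^{\pi,P}$ and summing, this contributes $2ST_M\sqrt{14A\log AT/T^{1/3}}$, matching the first term in the definition of $\Gamma$. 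For $\bar G$, the long epoch hypothesis $t_{e+1}-t_e \ge T^{1/3}$ is essential: since the C-UCRL doubling rule means any $(s,a)$ visited during epoch $e$ accumulates $N_e^{\mathrm{curr}}(s,a)\le N_e(s,a)$ visits, combining with the mixing-time bound in Assumption \ref{ergodic_assumption}(c) lets me relate $\sum_{\bar G}\nu^{\pi,P}(s,a)$ to a term of order $CS/((1-\zeta)T^{1/3})$, producing the second term in $\Gamma$.

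The hardest step will be the bookkeeping in the last paragraph. The subtlety is that $N_e(s,a)$ counts visits under the executed policies $\pi_1,\ldots,\pi_{e-1}$, not under the theoretical Slater policy $\pi$, so I cannot directly claim that $\nu^{\pi,P}$ is supported on well-explored pairs. Bridging this will require combining (i) the long-epoch hypothesis, which forces enough visits to have occurred across all of $\SA$ under ergodicity, with (ii) a mixing-based comparison between $\nu^{\pi,P}$ and the empirical visit distribution. Getting the constants exactly right so that the two resulting terms match $\Gamma = 2ST_M\sqrt{14A\log AT/T^{1/3}} + CST_M/((1-\zeta)T^{1/3})$ in Assumption \ref{conservative_slaters_conditon} is the main technical burden, but the structure of the argument parallels the bounds in \pref{lem:bound_expected_bellman}.
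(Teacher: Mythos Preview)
Your opening steps match the paper: reduce to $|J_c^{\pi,\Tilde{P}_e}-J_c^{\pi,P}|\le\Gamma$, invoke the Bellman-error identity, bound the span by $T_M$, and plug in the $L_1$ confidence radius. The divergence is in how to control the weighted sum $\sum_{s,a}\nu^{\pi,P}(s,a)\sqrt{14S\log(2At)/N_e(s,a)}$.

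Your case-split on $N_e(s,a)\gtrless T^{1/3}/(SA)$ does not close, and the difficulty you flag is not resolved by your proposed fix. The counts $N_e$ come from executing $\pi_1,\dots,\pi_{e-1}$, and ergodicity only says each $\pi_j$'s trajectory mixes to its own $\nu^{\pi_j,P}$, not to $\nu^{\pi,P}$. There is no mechanism that forces $\nu^{\pi,P}$-mass to sit on well-visited pairs, so the contribution of $\bar G$ cannot be bounded by $CST_M/((1-\zeta)T^{1/3})$ as written. The long-epoch hypothesis only gives $\sum_{s,a}N_e^{\mathrm{curr}}(s,a)\ge T^{1/3}$ in aggregate; it says nothing about where those visits land relative to $\nu^{\pi,P}$.

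The paper's route avoids this comparison altogether. It first passes from the stationary expectation to a trajectory average over epoch $e$ via the mixing argument of \pref{lem:bound_expected_bellman}, incurring the $CS\,\mathrm{sp}(\tilde h)/((1-\zeta)(t_{e+1}-t_e))$ term. The resulting average has the form $\frac{1}{t_{e+1}-t_e}\sum_{s,a}N_e^{\mathrm{curr}}(s,a)\sqrt{14S\log(2At)/N_e(s,a)}$, and now the doubling-epoch rule $N_e(s,a)\ge N_e^{\mathrm{curr}}(s,a)$---which you never invoke---lets one replace $N_e^{\mathrm{curr}}/\sqrt{N_e}$ by $\sqrt{N_e^{\mathrm{curr}}}$. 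Cauchy--Schwarz over $(s,a)$ then gives $\sum_{s,a}\sqrt{N_e^{\mathrm{curr}}(s,a)}\le\sqrt{SA(t_{e+1}-t_e)}$, and dividing by $t_{e+1}-t_e\ge T^{1/3}$ delivers the two terms of $\Gamma$ directly, with no case-split and no need to reconcile $\nu^{\pi,P}$ with $N_e$.
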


\begin{proof}
    We have,
    \begin{align}
        J_c^{\pi, \Tilde{P}_e}\le |J_c^{\pi, \Tilde{P}_e} - J_c^{\pi, P}| + J_c^{\pi, P}
        \label{eq:lipschtiz_choose_side}
    \end{align}
    We now bound the term $|J_c^{\pi,\Tilde{P}_e}-J_c^{\pi,P}|$ using Bellman error. We have,
    \begin{align}
        J_c^{\pi, \Tilde{P}_e} - J_c^{\pi, P} = \sum_{s,a}\nu^{\pi, P}(s, a) B^{\pi_e, \Tilde{P}_e}_c(s,a) = \mathbb{E}\left[B^{\pi_e, \Tilde{P}_e}_c(s,a)\right]
    \end{align}
    where $B^{\pi_e, \Tilde{P}_e}_c(s,a)$ is the Bellman error for cost function. We bound the expectation using Azuma-Hoeffding's inequality as follows:
	
    \begin{align}
        \begin{split}
            &\mathbb{E}\left[B^{\pi_e, \Tilde{P}_e}_c(s,a)\right] \\
            &=\mathbb{E}\left[B^{\pi_e, \Tilde{P}_e}_c(s_t,a_t)|\mathcal{F}_{t_e-1}\right] + C\zeta^{t-t_e}\\
            &\overset{(a)}{=} \frac{1}{t_{e+1}-t_e}\sum_{t=t_e}^{t_{e+1}-1}\left(\mathbb{E}\left[B^{\pi_e, \Tilde{P}_e}_c(s_t,a_t)|\mathcal{F}_{t_e-1}\right] + C\zeta^{t-t_e}\right)\\
            &\overset{(b)}{\le} \frac{1}{t_{e+1}-t_e}\sum_{t=t_e}^{t_{e+1}-1}\left(\mathbb{E}\left[B^{\pi_e, \Tilde{P}_e}_c(s_t,a_t)|\mathcal{F}_{t_e-1}\right]\right) + \frac{CS\mathrm{sp}(\Tilde{h})}{(1-\zeta)(t_{e+1}-t_e)}\\
            &\overset{(c)}{\le} \frac{1}{t_{e+1}-t_e}\left(\mathrm{sp}(\Tilde{h})\sqrt{14S\log AT}\sum_{s,a}\frac{N^{\mathrm{curr}}_e(s,a)}{\sqrt{N_e(s,a)}} \right.\\
            &+\left. 4\mathrm{sp}(\Tilde{h})\sqrt{7(t_{e+1}-t_e)\log(t_{e+1}-t_e)}\right) + \frac{CS\mathrm{sp}(\Tilde{h})}{(1-\zeta)(t_{e+1}-t_e)}\\        
            &\overset{(d)}{\le} \frac{1}{t_{e+1}-t_e}\left(\mathrm{sp}(\Tilde{h})\sqrt{14S\log AT}\sum_{s,a}\sqrt{N^{\mathrm{curr}}_e(s,a)} \right.\\
            &+\left. 4\mathrm{sp}(\Tilde{h})\sqrt{7(t_{e+1}-t_e)\log(t_{e+1}-t_e)}\right) + \frac{CS\mathrm{sp}(\Tilde{h})}{(1-\zeta)(t_{e+1}-t_e)}\\
            &\overset{(e)}{\le} \frac{1}{t_{e+1}-t_e}\left(\mathrm{sp}(\Tilde{h}) S\sqrt{14A\log AT}\sqrt{\sum_{s,a}N^{\mathrm{curr}}_e(s,a)} \right.\\
            &+\left. 4\mathrm{sp}(\Tilde{h})\sqrt{7(t_{e+1}-t_e)\log(t_{e+1}-t_e)}\right) + \frac{CS\mathrm{sp}(\Tilde{h})}{(1-\zeta)(t_{e+1}-t_e)}\\
            &\overset{(f)}{\le} \frac{1}{t_{e+1}-t_e}\left(\mathrm{sp}(\Tilde{h}) S\sqrt{14A\log AT}\sqrt{(t_{e+1}-t_e)} \right.\\
            &+\left. 4\mathrm{sp}(\Tilde{h})\sqrt{7(t_{e+1}-t_e)\log(t_{e+1}-t_e)}\right) + \frac{CS\mathrm{sp}(\Tilde{h})}{(1-\zeta)(t_{e+1}-t_e)}\\
            &\le \mathrm{sp}(\Tilde{h})\left( S\sqrt{\frac{14A\log AT}{(t_{e+1}-t_e)}} + 4\sqrt{\frac{7\log(t_{e+1}-t_e)}{(t_{e+1}-t_e)}}+ \frac{CS(1-\zeta)^{-1}}{(t_{e+1}-t_e)}\right)
        \end{split}
        \label{eq:reduce_error_terms}
    \end{align}
    where $(a)$ is obtained by summing both sides from $t = t_e$ to $t= t_{e+1}$. Equation $(b)$ is obtained by summing over the geometric series with ratio $\zeta$. Equation $(c)$ comes from Lemma \ref{lem:bound_expected_bellman}. Equation $(d)$ comes from the fact that $N_e(s, a) \ge N^{\mathrm{curr}}_e(s, a)$ for all $s,a$, and then replacing the lower bound of $N_e(s, a)$. Equation $(e)$ follows from the Cauchy-Schwarz inequality. Equation $(f)$ follows from the fact that the epoch length $t_{e+1}-t_e$ is the same as the number of visitations to all state-action pairs in an epoch. 
	
    Combining Equation \eqref{eq:reduce_error_terms} with Equation \eqref{eq:lipschtiz_choose_side}, and bounding the $\|\Tilde{h}(\cdot)\|_\infty$ term with $T_M$, we obtain the required result as follows:
    \begin{align}
        \begin{split}
            J_c^{\pi,\Tilde{P}_e} &\le |J_c^{\pi, \Tilde{P}_e} - J_c^{\pi, P}| + J_c^{\pi, P}\\
            &\le \Big(\left(T_M S\sqrt{\frac{14A\log AT}{(t_{e+1}-t_e)}} + 4T_M\sqrt{\frac{7\log(t_{e+1}-t_e)}{(t_{e+1}-t_e)}}\right)\\
            &~~+ \frac{CT_MS}{(1-\zeta)(t_{e+1}-t_e)}\Big) + J_c^{\pi, P}\\
            &\le \Big(\left(T_M S\sqrt{\frac{14A\log AT}{(t_{e+1}-t_e)}} + 4T_M\sqrt{\frac{7\log(t_{e+1}-t_e)}{(t_{e+1}-t_e)}}\right)\\
            &~~~+ \frac{CT_MS}{(1-\zeta)(t_{e+1}-t_e)}\Big) -\delta - \Gamma \le -\delta
        \end{split}
        \label{eq:use_gamma_def_and_epoch_length_bound}
    \end{align}
    where Equation \eqref{eq:use_gamma_def_and_epoch_length_bound} is follows from the definition of $\Gamma$ in Assumption \ref{conservative_slaters_conditon} and $t_{e+1}-t_e\ge T^{1/3}$.
\end{proof}

From Lemma \ref{lem:psrl_loose_slater}, we observe that for a tighter Slater condition on the true MDP, we can only obtain a weaker Slater guarantee. However, we make that assumption to obtain the feasibility of the optimization problem in Equation \eqref{eq:eps_cmdp_constraints}. 

The Bayesian regret of the C-PSRL algorithm is defined as follows:
\begin{align}
    \mathbb{E}[R(T)]& = \mathbb{E}\left[TJ_r^{\pi^*, P} - \sum\nolimits_{t=1}^Tr(s_t, a_t)\right]
\end{align}
Similarly, we define Bayesian constraint violations, $C(T)$, as the expected gap between the constraint function and incurred and constraint bounds.
\begin{align}
    \mathbb{E}[C(T)] &= \mathbb{E}\left[\left(\sum\nolimits_{t=1}^Tc(s_t, a_t)\right)_+\right] \nonumber
\end{align}
where $(x)_+ = \max(0, x)$.

Now, we can use Posterior Sampling Lemma \cite{agarwal2022concave} to obtain $\mathbb{E}[J_r^{\pi^*, P}|\mathcal{F}_{t_e}] = \mathbb{E}[J_r^{\pi_e, \Tilde{P}_e}|\mathcal{F}_{t_e}]$ and $\mathbb{E}[J_c^{\pi^*, P}|\mathcal{F}_{t_e}] = \mathbb{E}[J_c^{\pi_e, \Tilde{P}_e}|\mathcal{F}_{t_e}]~\forall i$,
and follow the analysis similar to the analysis of Theorem \ref{thm:regret_bound} to obtain the required regret bounds.


\subsection{Bound on constraints}

We now bound the constraint violations and prove that using a conservative policy. We can reduce the constraint violations to $0$. We have:
\begin{align}
    \begin{split}
        &C(T) = \left(\sum_{t=1}^Tc(s_t, a_t) \right)_+\\
        &=\Bigg(\sum_{t=1}^T c(s_t, a_t) - \sum_{e=1}^E T_e J_c^{\pi_e, \Tilde{P}_e}  + \sum_{e=1}^E T_e J_c^{\pi_e, \Tilde{P}_e}\Bigg)_+\\
        &\le \left(\sum_{t=1}^T c(s_t, a_t)- \sum_{e=1}^E T_e J_c^{\pi_e, \Tilde{P}_e} + C_1\right)_+\\
        &\le \left(\left\vert\sum_{e=1}^E\sum_{t=t_e}^{t_{e+1}-1}\left(c(s_t, a_t)  - J_c^{\pi_e, \Tilde{P}_e} \right)\right\vert  +C_1\right)_+ \\
        &\le \left(\left\vert\sum_{e=1}^E\sum_{t=t_e}^{t_{e+1}-1}\left(c(s_t, a_t) - J_c^{\pi_e, P} + J_c^{\pi_e, P} - J_c^{\pi_e, \Tilde{P}_e} \right)\right\vert  +C_1\right)_+\\
        &\le \left(\left\vert\sum_{e=1}^E\sum_{t=t_e}^{t_{e+1}-1}\left(c(s_t, a_t) - J_c^{\pi_e, P}\right)\right\vert + \left\vert\sum_{e=1}^E\sum_{t=t_e}^{t_{e+1}-1}\left(J_c^{\pi_e, P} - J_c^{\pi_e, \Tilde{P}_e} \right)\right\vert  +C_1\right)_+\nonumber\\
        &\le\left(C_3(T) + C_2(T)  + C_1(T)\right)_+
    \end{split}
\end{align}

We bound $C_2(T) + C_3(T)$ similar to the analysis of $R(T)$ by
\begin{align}
    \Tilde{\mathcal{O}}\left(T_MS\sqrt{AT} + \frac{CT_MS^2A}{(1-\zeta)} \right)
\end{align}

We focus our attention on bounding $C_1(T)$.  We  obtain the bound on $C_1(T)$ as:
\begin{align}
    \begin{split}
        &C_1(T) = \sum_{e=1}^ET_e\left(J_c^{\pi_e, \Tilde{P}_e} \right)\\
        &=\sum_{e=1}^ET_e\left(J_c^{\pi_e, \Tilde{P}_e} \right)\bm{1}\{T_e \ge T^{1/3}\} + \sum_{e=1}^ET_e\left(J_c^{\pi_e, \Tilde{P}_e} \right)\bm{1}\{T_e < T^{1/3}\}\\
        &\overset{(a)}{\le} \sum_{e=1}^ET_e\left(J_c^{\pi_e, \Tilde{P}_e} \right)\bm{1}\{T_e \ge T^{1/3}\} + \sum_{e=1}^ET^{1/3}\\
        &\overset{(b)}{\le} -\sum_{e=1}^E T_e\epsilon_e\bm{1}\{T_e \ge T^{1/3}\} + ET^{1/3}\\
        &=-\sum_{e=1}^E T_e\epsilon_e\left(1-\bm{1}\{T_e < T^{1/3}\}\right) + ET^{1/3}\\
        &= -\sum_{e=1}^E T_e\epsilon_e +\sum_{e=1}^E T_e\epsilon_e\bm{1}\{T_e < T^{1/3}\} + ET^{1/3}\\
        &\le -\frac{K}{4}\sqrt{T\log T} +\sum_{e=1}^E T^{1/3}\delta + ET^{1/3}\\
        &= -\frac{K}{4}\sqrt{T\log T} +E\delta T^{1/3} +  ET^{1/3}
    \end{split}
\end{align}
where $(a)$ is a consequence of the fact that the maximum cost is $1$ and $(b)$ follows from following the conservative policy. Thus, choosing an appropriate $K$, we can bound constraint violations by $0$.


\section{Evaluation Results}\label{sec:mb:eval}

To validate the performance of the C-UCRL algorithm and the C-PSRL algorithm,  we run the simulation on the flow and service control in a single-serve queue, which was introduced in \citep{altman1991constrained}. Along with validating the performance of the proposed algorithms, we also compare the algorithms against the algorithms proposed in \cite{singh2020learning} and in \cite{chen2022learning} for model-based constrained reinforcement learning for infinite horizon MDPs. Compared to these algorithms, we note that our algorithm is also designed to handle concave objectives of expected rewards with convex constraints on costs with $0$ constraint violations.

In the queue environment, a discrete-time single-server queue with a buffer of finite size $L$ is considered. The number of customers waiting in the queue is considered as the state in this problem and thus $\vert S\vert=L+1$. Two kinds of actions, services, and flow, are considered in the problem and control the number of customers together. The action space for service is a finite subset $A$ in $[a_{min},a_{max}]$, where $0<a_{min}\leq a_{max}<1$. Given a specific service action $a$, the service a customer is successfully finished with the probability $b$. If the service is successful, the length of the queue will be reduced by 1. Similarly, the space for flow is also a finite subsection $B$ in $[b_{min}, b_{max}]$. In contrast to the service action, flow action will increase the queue by $1$ with probability $b$ if the specific flow action $b$ is given. Also, we assume that there is no customer arriving when the queue is full. The overall action space is the Cartesian product of the $A$ and $B$. According to the service and flow probability, the transition probability can be computed and is given in Table \ref{table:transition}.

\begin{table*}[ht]   
    \caption{Transition probability of the queue system}  
    \label{table:transition}
    \begin{adjustbox}{width=.95\textwidth,center}
	\begin{tabular}{|c|c|c|c|}  
		\hline  
            Current State & $P(x_{t+1}=x_t-1)$ & $P(x_{t+1}=x_t)$ & $P(x_{t+1}=x_t+1)$ \\ \hline
            $1\leq x_t\leq L-1$ & $a(1-b)$ & $ab+(1-a)(1-b)$ & $(1-a)b$ \\ \hline
            $x_t=L$ & $a$ & $1-a$ & $0$ \\ \hline
            $x_t=0$ & $0$ & $1-b(1-a)$ & $b(1-a)$ \\ 
            \hline  
	\end{tabular}  
    \end{adjustbox}
\end{table*}
\begin{figure}[t]
    \includegraphics[trim=1.5in 5.5in 1.7in 1.6in, clip, width=\textwidth]{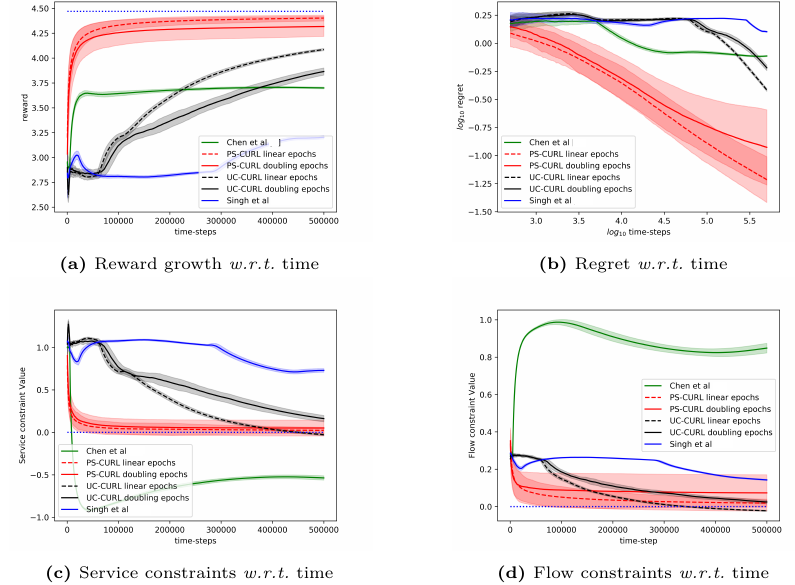}
    \caption{Performance of the proposed C-UCRL and C-PSRL algorithms on a flow and service control problem for a single queue with doubling epoch lengths and linearly increasing epoch lengths. The algorithms are compared against \citet{chen2022learning} and \citet{singh2020learning}. We note that the considered algorithms C-UCRL and C-PSRL  are labeled UC-CURL and PS-CURL, respectively, in the figure.}
    \label{fig:empirical_study}
\end{figure}

Define the reward function as $r(s, a, b)$ and the constraints for service and flow as $c^1(s, a, b)$ and $c^2(s, a, b)$, respectively. Define the stationary policy for service and flow as $\pi_a$ and $\pi_b$, respectively. Then, the problem can be defined as
\begin{equation}
    \begin{split}
        \max_{\pi_a,\pi_b} &\quad \lim\limits_{T\rightarrow\infty}\frac{1}{T}\sum_{t=1}^{T}r(s_t,\pi_a(s_t),\pi_b(s_t))\\
        s.t. &\quad \lim\limits_{T\rightarrow\infty}\frac{1}{T}\sum_{t=1}^{T}c^1(s_t,\pi_a(s_t),\pi_b(s_t))\geq 0\\
        &\quad \lim\limits_{T\rightarrow\infty}\frac{1}{T}\sum_{t=1}^{T}c^2(s_t,\pi_a(s_t),\pi_b(s_t))\geq 0
    \end{split}
\end{equation}

According to the discussion in \citep{altman1991constrained}, we define the reward function as $r(s, a,b)=5 - s$, which is a decreasing function only dependent on the state. It is reasonable to give a higher reward when the number of customers waiting in the queue is small. For the constraint function, we define $c^1(s, a,b)=-10a + 6$ and $c^2 = - 8(1-b)^2+2$, which are dependent only on service and flow action, respectively. A higher constraint value is given if the probability for the service and flow are low and high, respectively.

In the simulation, the length of the buffer is set as $L=5$. The service action space is set as $[0.2,0.4,0.6,0.8]$, and the flow action space is set as $[0.4,0.5,0.6,0.7]$. We use the length of horizon $T=5\times10^5$ and run $50$ independent simulations of all algorithms. The experiments were run on a $36$ core Intel-i9 CPU @$3.00$ GHz with $64$ GB of RAM. The result is shown in the Figure \ref{fig:empirical_study}. The average values of the cumulative reward and the constraint functions are shown in the solid lines. Further, we plot the standard deviation around the mean value in the shadow to show the random error. In order to compare this result to the optimal, we assume that the full information of the transition dynamics is known and then use  Linear Programming to solve the problem. The optimal cumulative reward for the constrained optimization is calculated to be 4.48 with both flow constraint and service constraint values to be $0$. The optimal cumulative reward for the unconstrained optimization is 4.8, with the service constraint being $-2$ and the flow constraint being $-0.88$.

We now discuss the performance of all the algorithms, starting with our algorithms C-UCRL and C-PSRL. In Figure \ref{fig:empirical_study}, we observe that the proposed C-UCRL algorithm does not perform well initially. We observe that this is because the confidence interval radius $\sqrt{S\log(At)/N(s, a)}$ for any $(s, a)$ is not tight enough in the initial period. After the algorithms collect sufficient samples to construct tight confidence intervals around the transition probabilities, the algorithm starts converging toward the optimal policy. We also note that the linear epoch modification of the algorithm (where a new episode is triggered whenever $N^{\mathrm{curr}}_e(s, a)$ becomes $\max\{1, N_{e-1}(s, a)\}$ for any state-action pair, which results in a linearly increasing episode length with total epochs bounded by $O(SA +\sqrt{SAT})$) works better than the doubling epoch algorithm. This is because the linear epoch variant updates the policy quickly, whereas the doubling epoch algorithm works with the same policy for too long and thus loses the advantages of collected samples. For our implementation, we choose the value of parameter $K$  as $K=1$, using which we observe that the constraint values start converging towards zero.

We now analyze the performance of the C-PSRL algorithm. For our implementation of the C-PSRL algorithm, we sample the transition probabilities using the Dirichlet distribution. Note that the true transition probabilities were not sampled from a Dirichlet distribution, and hence, this experiment also shows the robustness against misspecified priors. We observe that the algorithm quickly brings the reward close to the optimal rewards. The performance of the C-PSRL algorithm is significantly better than the C-UCRL algorithm. We suspect this is because the C-UCRL algorithm wastes a large number of steps to find an optimistic policy with a large confidence interval. This observation aligns with the TDSE algorithm \cite{ouyang2017learning}, where they show that the Thompson sampling algorithm with $O(\sqrt{SAT})$ epochs performs empirically better than the optimism-based UCRL2 algorithm \cite{jaksch2010near} with $O(\sqrt{SA\log T)}$ epochs. \cite{osband2013more} also made a similar observation where their PSRL algorithm worked better than the UCRL2 algorithm. Again, we set the value of parameter $K$ as $1$, and with $K=1$, the algorithm does not violate constraints. We also observe that the standard deviation of the rewards and constraints are higher for the C-PSRL algorithm as compared to the C-UCRL algorithm, as the C-PSRL algorithm has an additional stochastic component that arises from sampling the transition probabilities.

After analyzing the algorithms presented in this paper, we now analyze the performance of the algorithm by  \citet{chen2022learning}. They provide an optimistic online mirror descent algorithm that also works with conservative parameters to tightly bound constraint violations. Their algorithm also obtains a $O(\sqrt{T})$ regret bound. However, their algorithm is designed for a linear reward/constraint setup with a single constraint, and empirically, the algorithm is difficult to tune as it requires additional knowledge of $T_M$, $\zeta$, $\delta$, and $T$ to fine-tune parameters used in their algorithm. We set the value of the learning rate $\theta$ for online mirror descent as $5\times10^{-2}$ with an episode length of $5\times10^3$. Further, we scale the rewards and costs to ensure that they lie between $0$ and $1$. We analyze the behavior of the optimistic online mirror descent algorithm in Figure \ref{fig:empirical_study}(b). We observe that the algorithm has three phases. The first phase is the first episode where the algorithm uses a uniform policy, which is the initial flat area till the first $5000$ steps. In the second phase, the algorithm updates the policy for the first time and starts converging to the optimal policy with a convergence rate which matches to that of the C-PSRL algorithm. However, after a few policy updates, we observed that the algorithm has oscillatory behavior because the dual variable updates require online constraint violations.

Finally, we analyze the the algorithm by  \citet{singh2020learning}. They also provide an algorithm that proceeds in epochs and solves an optimization problem at every epoch. The algorithm considers a fixed epoch length $T^{1/3}$. Further, the algorithm considers a confidence interval on each estimate of $P(s'|s, a)$ for all $s,a,s'$ triplet. The algorithm does not perform well even though it updates the policy most frequently because of creating confidence intervals on individual transition probabilities $P(s'|s, a)$ instead of the probability vector $P(s'|s, a)$.

From the experimental observations, we note that the proposed C-UCRL algorithm is suitable in cases where parameter tuning is not possible, and the system requires tighter bounds on the deviation of the performance of the algorithm. The C-PSRL algorithm can be used in cases where the variance in the algorithm's performance can be tolerated or computational complexity is a constraint. Further, for both algorithms, it is beneficial to use the linear increasing epoch lengths. Additionally, the algorithm by \citet{chen2022learning} is suitable for cases where solving an optimization equation is not feasible, for example, an embedded system, as the algorithm updates policy using an exponential function, which can be easily computed.


\section{Notes and Open Problems}\label{sec:mb:notes}

In the presence of constraints, some works on model-based reinforcement learning include \cite{singh2020learning,chen2022learning,wei2022provably}. The results in this chapter are taken from \cite{agarwal2022regret,agarwal2022concave}, which improve upon the state-of-the-art guarantees by improving the order of either $T$ or other terms. However, we note that there is still a gap. Specifically, it remains an open question to see if the regret guarantees can be improved and matched with the lower bound in terms of $S$. Secondly, our bound has $T_M$ dependence, while the lower bound depends on the square root of the diameter. Resolving the gap is an open question. 

We further note that the presentation in this chapter considers linear utility functions with linear constraints. Recently, \cite{agarwal2022multi,agarwal2023reinforcement} considered concave utility functions in model-based reinforcement learning. Further, \cite{agarwal2022concave} studied the impact of convex constraints and concave utility function. The results in this chapter are a specialization of the results in \cite{agarwal2022concave}  for linear functions and constraints. This indicates that the results extend to the general setup of concave utility function and convex constraints. However, getting the matching lower bounds as mentioned for linear utilities and constraints will also be important with non-linear functions.  Table \ref{tab:mb:algo_comparisons} provides the key comparisons and related works for the literature. 

\begin{table*}[thbp]
    \begin{adjustbox}{width=0.99\textwidth,center}
	\begin{tabular}{|c|c|c|c|c|}
		\hline
            Algorithm(s) & Setup & Regret & Constraint Violation & Non-Linear\\
		\hline
            \textsc{con}RL \cite{brantley2020constrained} & FH & $\Tilde{O}(LH^{5/2}S\sqrt{A/K})$& $O(H^{5/2}S\sqrt{A/K})$ & {Yes}\\
		\hline
            MOMA \cite{yu2021morl} & FH & $\Tilde{O}(LH^{3/2}\sqrt{SA/K})$& $\Tilde{O}(H^{3/2}\sqrt{SA/K})$ & {Yes}\\
		\hline    
            TripleQ \cite{wei2021provably} & FH & $\Tilde{O}(\frac{1}{\delta}H^4\sqrt{SA}K^{-1/5})$ & $0$ & {No}\\
		\hline
            OptPess-LP \cite{liu2021learning}&FH &$\Tilde{O}(\frac{H^3}{\delta}\sqrt{S^3A/K})$ & $0$ & No\\
		\hline
            OptPess-Primal Dual \cite{liu2021learning}&FH &$\Tilde{O}(\frac{H^3}{\delta}\sqrt{S^3A/K})$ & $\Tilde{O}(H^4S^2A/\delta)$ & No\\        
		\hline        
            UCRL-CMDP \cite{singh2020learning} & IH & $\Tilde{O}(\sqrt{SA}T^{-1/3})$  & $\Tilde{O}(\sqrt{SA}/T^{1/3})$& {No}\\
		\hline
            \cite{chen2022learning}& IH &$\Tilde{O}(\frac{1}{\delta}T_MS\sqrt{SA/T})$ & $\Tilde{O}(\frac{1}{\delta^2}T_M^2S^3A)$& No\\
		\hline
            \cite{wei2022provably}&IH &$\Tilde{O}(\frac{1}{\delta}\sqrt{SA}T^{-1/6})$ & $0$ & No\\
		\hline            
            \cite{agarwal2022regret}&IH &$\Tilde{O}(T_MS\sqrt{A/T})$  & $\Tilde{O}(T_MS\sqrt{A/T})$  & No\\
		\hline            
            UC-CURL \cite{agarwal2022concave}& IH & $\Tilde{O}(\frac{1}{\delta}LT_MS\sqrt{A/T})$ & $0$ & \textbf{Yes}\\
		\hline         
            PS-CURL \cite{agarwal2022concave}& IH & $\Tilde{O}(\frac{1}{\delta}LT_MS\sqrt{A/T})$ & $0$ & \textbf{Yes}\\
		\hline         
	\end{tabular}
    \end{adjustbox}
    \caption{Overview of work for constrained reinforcement learning setups. For finite horizon (FH) setups,  $H$ is the episode length and $K$ is the number of episodes for which the algorithm runs. For infinite horizon (IH) setups, $T_M$ denotes the max reach time of the MDP, and $T$ is the time for which the algorithm runs. $L$ is Lipschitz constant. Note that all IH setups assume ergodic MDPs, whereas FH setups do not require the ergodic assumption as the system resets to the final state after every episode.}
    \label{tab:mb:algo_comparisons}
\end{table*}

Finally, a key assumption in this chapter is the ergodicity of the underlying MDP.  Note that it is not necessary for the MDP to be ergodic for the sub-linear regret. The authors of \cite{bartlett2009regal,jaksch2010near} have shown sub-linear regret to be possible only when the MDP is at least weakly communicating. From a technical perspective, designing provable algorithms for the infinite-horizon average-reward setting, especially for the general class of weakly communicating MDPs, has always been more challenging than other settings \cite{chen2022learning}. While the diameter quantifies the number of steps needed to ``recover” from a bad state in the worst case, the actual regret incurred while ``recovering” is related to the difference in potential reward between ``bad” and ``good” states, which is accurately measured by the span (i.e., the range) $sp(h^*)$ of the optimal bias function $h^*$. While the diameter is an upper bound on the bias span, it could be arbitrarily larger (e.g., weakly-communicating MDPs may have finite span and infinite diameter) thus suggesting that algorithms whose regret scales with the span may perform significantly better. The authors of \cite{fruit2018efficient} proposed SCOPT, which achieves a regret of $\tilde{O}(sp(h^*)S\sqrt{AT})$ in the absence of constraints. This is the best bound studied so far in the weakly communicating MDPs where the diameter may be infinite with lowest order on $T$. However, constrained versions with $\sqrt{T}$ regret bounds are open. The results for weakly mixing CMDPs will be discussed in Chapter \ref{chpt:beyerg}. 

\chapter{Parameterized Model-Free RL}\label{chpt:param:mf}

We introduce the model and key assumptions studied in this chapter in Section \ref{sec:mf_model}. Further, we provide a primal-dual policy gradient-based algorithm in Section \ref{sec:mf:algo}. The global convergence analysis of this algorithm is provided in Section \ref{sec_convergence}. The regret and constraint violations are analyzed in Section \ref{mf:reg}. Section \ref{sec:mf:notes} gives some notes and possible future directions. Many notations used in this chapter are similar to those in the previous chapter. However, we reintroduce them here for ease of reference and overall coherence of the discussion.


\section{Overall Model and Assumptions}
\label{sec:mf_model}

This paper analyzes an infinite-horizon average-reward constrained Markov Decision Process (CMDP) denoted as $\mathcal{M}=(\mathcal{S},\mathcal{A}, r, c, P,\rho)$ where  $\mathcal{S}$ is the state space, $\mathcal{A}$ is the action space of size $A$, $r:\mathcal{S}\times\mathcal{A}\rightarrow [0,1]$ is the reward function, $c:\mathcal{S}\times\mathcal{A}\rightarrow [-1,1]$ is the constraint cost function, $P:\mathcal{S}\times\mathcal{A}\rightarrow \Delta(\mathcal{S})$ defines the state transition function where $\Delta(\mathcal{S})$ denotes a probability simplex over $\mathcal{S}$, and $\rho\in\Delta(\mathcal{S})$ is the initial distribution of states. A policy $\pi:\mathcal{S}\rightarrow \Delta(\mathcal{A})$ maps the current state to an action distribution. The average reward and cost of a policy, $\pi$, is defined as follows.
\begin{equation}
    \label{Eq_reward}
    J_{g,\rho}^{\pi}\triangleq \lim\limits_{T\rightarrow \infty}\frac{1}{T}\mathbf{E}_{\pi}\bigg[\sum_{t=0}^{T-1}g(s_t,a_t)\bigg|s_0\sim \rho\bigg]
\end{equation}
where $g=r, c$ for average reward and cost respectively. The expectation is calculated over the distribution of all sampled trajectories $\{(s_t, a_t)\}_{t=0}^{\infty}$ where $a_t\sim \pi(s_t)$, $s_{t+1}\sim P(\cdot|s_t, a_t)$, $\forall t\in\{0, 1, \cdots\}$. We do not explicitly show the dependence of $J^{\pi}_{g, \rho}$ on $P$ for notational convenience. Moreover, we also drop the dependence on $\rho$ whenever there is no confusion.

We consider a class of policies, $\Pi$ whose each element is indexed by a $\mathrm{d}$-dimensional parameter, $\theta\in\mathbb{R}^{\mathrm{d}}$. We aim to maximize the average reward function while ensuring that the average cost is above a given threshold. Without loss of generality, we can mathematically represent this problem as follows.
\begin{align} 
    \label{eq:def_constrained_optimization}
    \begin{split}
        \max_{\theta\in\mathbb{R}^{\mathrm{d}}} ~ J_r^{\pi_{\theta}}~~
	\text{s.t.} ~ J_c^{\pi_{\theta}}\leq 0
    \end{split}
\end{align}

Let $J_g^{\pi_\theta} = J_g(\theta)$, $g\in\{r, c\}$, and $P^{\pi_{\theta}}:\mathcal{S}\rightarrow \Delta(\mathcal{S})$ be the transition function induced by $\pi_\theta$ and given by, $P^{\pi_{\theta}}(s, s') = \sum_{a\in\mathcal{A}}P(s'|s,a)\pi_{\theta}(a|s)$, $\forall s, s'$.
If $\mathcal{M}$ is such that for every policy $\pi$, the induced function, $P^{\pi}$
is irreducible, and aperiodic, then $\mathcal{M}$ is called ergodic.

\begin{assumption}
    \label{ass_1}
    The CMDP $\mathcal{M}$ is ergodic.
\end{assumption}

Ergodicity is a common assumption in the literature \citep{pesquerel2022imed, gong2020duality}. If $\mathcal{M}$ is ergodic, then $\forall \theta$, there exists a unique stationary distribution, $d^{\pi_{\theta}}\in \Delta^{|\mathcal{S}|}$ given as follows.
\begin{align}
    d^{\pi_{\theta}}(s) = \lim_{T\rightarrow \infty}\dfrac{1}{T}\left[\sum_{t=0}^{T-1} \mathrm{Pr}(s_t=s|s_0\sim \rho, \pi_{\theta})\right]
\end{align}

Ergodicity implies that $d^{\pi_{\theta}}$ is independent of the initial distribution, $\rho$, and obeys $P^{\pi_{\theta}}d^{\pi_{\theta}}=d^{\pi_{\theta}}$. Hence, the average reward and cost functions can be expressed as,
\begin{align}
    \label{eq_r_pi_theta}
    J_g(\theta) = \mathbf{E}_{s\sim d^{\pi_{\theta}}, a\sim \pi_{\theta}(s)}[g(s, a)] = (d^{\pi_{\theta}})^T g^{\pi_{\theta}}
\end{align}
where $g^{\pi_{\theta}}(s) \triangleq \sum_{a\in\mathcal{A}}g(s, a)\pi_{\theta}(a|s), ~g\in\{r, c\}$. Note that the functions $J_g(\theta)$, $g\in\{r, c\}$ are independent of the initial distribution, $\rho$. Furthermore, $\forall \theta$, there exist a function $Q_g^{\pi_{\theta}}: \mathcal{S}\times \mathcal{A}\rightarrow \mathbb{R}$ such that the following Bellman equation is satisfied $\forall (s, a)\in\mathcal{S}\times\mathcal{A}$.
\begin{equation}
    \label{eq_bellman}
    Q_g^{\pi_{\theta}}(s,a)=g(s,a)-J_g(\theta)+\mathbf{E}_{s'\sim P(\cdot|s, a)}\left[V_g^{\pi_{\theta}}(s')\right]
\end{equation}
where $g\in\{r, c\}$ and  $V_g^{\pi_{\theta}}:\mathcal{S}\rightarrow \mathbb{R}$ is given as,
\begin{align}
    \label{eq_V_Q}
    V_g^{\pi_{\theta}}(s) = \sum_{a\in\mathcal{A}}\pi_{\theta} (a|s)Q_g^{\pi_{\theta}}(s, a), ~\forall s\in\mathcal{S}
\end{align}
Observe that if $Q_g^{\pi_{\theta}}$ satisfies $(\ref{eq_bellman})$, then it is also satisfied by $Q_g^{\pi_{\theta}}+c$ for any arbitrary, $c$. To uniquely define the value functions, we assume that $\sum_{s\in\mathcal{S}}d^{\pi_{\theta}}(s)V_g^{\pi_{\theta}}(s)=0$. In this case, $V_g^{\pi_{\theta}}(s)$ turns out to be,
\begin{equation}\label{def_v_pi_theta_s}
    \begin{aligned}
        V_g^{\pi_{\theta}}(s) &= \sum_{t=0}^{\infty} \sum_{s'\in\mathcal{S}}\left[(P^{\pi_{\theta}})^t(s, s') - d^{\pi_{\theta}}(s')\right]g^{\pi_{\theta}}(s')\\
        &=\mathbf{E}\left[\sum_{t=0}^{\infty}\bigg\{g(s_t,a_t)-J_g(\theta)\bigg\}\bigg\vert s_0=s\right]
    \end{aligned}
\end{equation}
where the expectation is computed over all $\pi_{\theta}$-induced trajectories. Similarly, $\forall (s, a)$, one can uniquely define $Q_g^{\pi_{\theta}}(s, a)$, $g\in\{r, c\}$ as,
\begin{equation}\label{def_q_pi_theta_s}
    Q_g^{\pi_{\theta}}(s,a)=\mathbf{E}\left[\sum_{t=0}^{\infty}\bigg\{ g(s_t,a_t)-J_g(\theta)\bigg\}\bigg\vert s_0=s,a_0=a\right]
\end{equation}

The advantage function $A_g^{\pi_{\theta}}:\mathcal{S}\times \mathcal{A}\rightarrow \mathbb{R}$ is defined as, 
\begin{align}\label{def_A_pi_theta_s}
    A_g^{\pi_{\theta}}(s, a) \triangleq Q_g^{\pi_{\theta}}(s, a) - V_g^{\pi_{\theta}}(s), ~\forall (s, a),\forall g\in\{r, c\}
\end{align}
Assumption \ref{ass_1} also implies the existence of a finite mixing time.
\begin{definition}
    The mixing time of the CMDP $\mathcal{M}$ (under the ergodicity assumption) with respect to a parameterized policy, $\pi_\theta$, is given as $t_{\mathrm{mix}}^{\theta}\triangleq \min\left\lbrace t\geq 1\big| \norm{(P^{\pi_{\theta}})^t(s, \cdot) - d^{\pi_\theta}}\leq \frac{1}{4}, \forall s\right\rbrace$. The overall mixing time, $t_{\mathrm{mix}}$, is given as $t_{\mathrm{mix}}\triangleq \sup_{\theta\in\Theta} t^{\theta}_{\mathrm{mix}}$ which is finite due to ergodicity.
\end{definition}

Mixing time states how fast a CMDP converges to its stationary distribution for a given policy. We define the hitting time below.
\begin{definition}
    The hitting time of an ergodic CMDP $\mathcal{M}$ with respect to a policy $\pi_\theta$, is given as, $t_{\mathrm{hit}}^{\theta}\triangleq  \max_{s\in\mathcal{S}} \frac{1}{d^{\pi_{\theta}}(s)}$. The overall hitting time is defined as $t_{\mathrm{hit}}\triangleq \sup_{\theta\in\Theta} t^{\theta}_{\mathrm{hit}} $ which is finite due to ergodicity.
\end{definition}

Let $J_r^* \triangleq  J_r(\theta^*)$ where $\theta^*$ solves $\eqref{eq:def_constrained_optimization}$. For a given CMDP $\mathcal{M}$, and a time horizon $T$, the regret and constraint violation is defined as follows.
\begin{align}
    R(T) &\triangleq \sum_{t=0}^{T-1} \left(J_r^*-r(s_t, a_t)\right), ~C(T) \triangleq \sum_{t=0}^{T-1} c(s_t, a_t)
\end{align}
where the algorithm  executes the actions, $\{a_t\}$, $t\in\{0, 1, \cdots \}$ based on the history up to time, $t$, and the state, $s_{t+1}$ is decided according to the state transition function, $P$. Our goal is to design an algorithm that achieves low regret and constraint violation bounds. 


\section{Algorithm for Parameterized Model-Free RL}
\label{sec:mf:algo}

We solve the constrained problem $\eqref{eq:def_constrained_optimization}$ using a primal-dual algorithm, which is based on the following saddle point optimization.
\begin{align}
    \label{eq:def_saddle_point_opt}
    \begin{split}
        \max_{\theta\in\Theta}\min_{\lambda\geq 0}&~ J_{\mathrm{L}}(\theta, \lambda)~~\text{where}~J_{\mathrm{L}}(\theta, \lambda) \triangleq J_r(\theta) -\lambda J_c(\theta)
    \end{split}
\end{align}
The function, $J_{\mathrm{L}}(\cdot, \cdot)$, is called the Lagrange function and $\lambda$ the Lagrange multiplier. Our algorithm updates  $(\theta, \lambda)$ following the iteration shown below $\forall k\in\{1,  \cdots, K\}$ with an initial point $(\theta_1, \lambda_1=0)$. 
\begin{align}
    \label{eq:update}
    \begin{split}
        \theta_{k+1} &= \theta_k+\alpha\nabla_\theta J_{\mathrm{L}}(\theta_k, \lambda_k), 	
        ~~\lambda_{k+1} = \mathcal{P}_{[0,\frac{2}{\delta}]}[\lambda_k +\beta J_c(\theta_k)]
    \end{split}
\end{align}
where $\alpha$ and $\beta$ are learning parameters and $\delta$ is the Slater parameter introduced in the following assumption. Finally, for any $\Lambda$, the function $\mathcal{P}_{\Lambda}[\cdot]$ denotes projection onto $\Lambda$.
\begin{assumption}[Slater condition]
	\label{ass_slater}
	There exists $\delta>0$ and $\bar{\pi} \in \Pi$ such that $J_{c}^{\bar{\pi}}(\rho) \leq -\delta$.
\end{assumption}

Notice that in Eq. \eqref{eq:update}, the dual update is projected onto the set $[0,\frac{2}{\delta}]$ because the optimal dual variable for the parameterized problem is bounded in Lemma \ref{lem.boundness}. The gradient of $J_{\mathrm{L}}(\cdot, \lambda)$ can be computed by invoking a variant of the policy gradient theorem \citep{sutton1999policy}. 
\begin{lemma}
	\label{lemma_grad_compute}
	The gradient of  $J_{\mathrm{L}}(\cdot, \lambda)$ is computed as,
	\begin{align}
		\nabla_{\theta} J_{\mathrm{L}}(\theta, \lambda)
		=\mathbf{E}_{s\sim d^{\pi_{\theta}},a\sim\pi_{\theta}(s)}\bigg[A_{\mathrm{L},\lambda}^{\pi_{\theta}}(s,a)\nabla_{\theta}\log\pi_{\theta}(a|s)\bigg]
	\end{align}
 where $A_{\mathrm{L},\lambda}^{\pi_\theta}(s,a)\triangleq A_r^{\pi_\theta}(\theta)-\lambda A_c^{\pi_\theta}(\theta)$ and $\{A_g^{\pi_\theta}\}_{g\in\{r, c\}}$ are given in $\eqref{def_A_pi_theta_s}$. 
\end{lemma}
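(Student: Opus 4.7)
The plan is to exploit linearity of the gradient. Since $J_{\mathrm{L}}(\theta, \lambda) = J_r(\theta) - \lambda J_c(\theta)$ and $\lambda$ does not depend on $\theta$, we have $\nabla_\theta J_{\mathrm{L}}(\theta, \lambda) = \nabla_\theta J_r(\theta) - \lambda \nabla_\theta J_c(\theta)$. Together with the definition $A_{\mathrm{L}, \lambda}^{\pi_\theta} = A_r^{\pi_\theta} - \lambda A_c^{\pi_\theta}$, it therefore suffices to prove the average-reward policy gradient identity separately for each $g \in \{r, c\}$, namely
\begin{align*}
\nabla_\theta J_g(\theta) = \mathbf{E}_{s \sim d^{\pi_\theta},\, a \sim \pi_\theta(\cdot | s)} \left[ A_g^{\pi_\theta}(s, a) \, \nabla_\theta \log \pi_\theta(a | s) \right].
\end{align*}

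The core derivation follows the standard argument based on differentiating the Bellman equation~\eqref{eq_bellman}. First, I differentiate both sides of $Q_g^{\pi_\theta}(s,a) = g(s,a) - J_g(\theta) + \mathbf{E}_{s' \sim P(\cdot|s,a)}[V_g^{\pi_\theta}(s')]$ with respect to $\theta$, which, after rearranging, gives
\begin{align*}
\nabla_\theta J_g(\theta) = -\nabla_\theta Q_g^{\pi_\theta}(s, a) + \sum_{s'} P(s' \mid s, a)\, \nabla_\theta V_g^{\pi_\theta}(s').
\end{align*}
Next, I take the expectation of both sides with respect to $(s, a) \sim d^{\pi_\theta} \otimes \pi_\theta$. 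The left-hand side is unchanged. On the right, the stationarity relation $\sum_{s,a} d^{\pi_\theta}(s) \pi_\theta(a \mid s) P(s' \mid s, a) = d^{\pi_\theta}(s')$, which holds by ergodicity (Assumption~\ref{ass_1}), turns the second term into $\mathbf{E}_{s' \sim d^{\pi_\theta}}[\nabla_\theta V_g^{\pi_\theta}(s')]$. Expanding $V_g^{\pi_\theta}(s) = \sum_a \pi_\theta(a \mid s) Q_g^{\pi_\theta}(s, a)$ via the product rule yields $\mathbf{E}_{s \sim d^{\pi_\theta}}\bigl[\sum_a \nabla_\theta \pi_\theta(a \mid s)\, Q_g^{\pi_\theta}(s, a)\bigr] + \mathbf{E}_{(s,a)}[\nabla_\theta Q_g^{\pi_\theta}(s, a)]$, so the two $\nabla Q$ contributions cancel and I am left with
\begin{align*}
\nabla_\theta J_g(\theta) = \mathbf{E}_{s \sim d^{\pi_\theta}}\!\left[\sum_a \nabla_\theta \pi_\theta(a \mid s)\, Q_g^{\pi_\theta}(s, a)\right] = \mathbf{E}_{s, a}\!\left[Q_g^{\pi_\theta}(s, a)\, \nabla_\theta \log \pi_\theta(a \mid s)\right].
\end{align*}

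Finally, since $\sum_a \pi_\theta(a \mid s) = 1$ implies $\mathbf{E}_{a \sim \pi_\theta(\cdot \mid s)}[\nabla_\theta \log \pi_\theta(a \mid s)] = 0$, the state-dependent baseline $V_g^{\pi_\theta}(s)$ can be subtracted inside the expectation without changing its value, replacing $Q_g^{\pi_\theta}$ by $A_g^{\pi_\theta} = Q_g^{\pi_\theta} - V_g^{\pi_\theta}$. Combining the resulting identities for $g = r$ and $g = c$ with the linearity reduction from the first paragraph delivers the claim. The main technical delicacy I expect is justifying the interchange of the gradient with the infinite sums/expectations implicit in the definitions~\eqref{def_v_pi_theta_s} and~\eqref{def_q_pi_theta_s}; this is handled by ergodicity, which ensures geometric mixing of $(P^{\pi_\theta})^t$ to $d^{\pi_\theta}$ and hence absolute (indeed exponential) summability of the defining series uniformly in a neighbourhood of $\theta$, so dominated convergence lets us differentiate termwise. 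The rest is routine bookkeeping with the Bellman equation, stationarity, and the log-derivative trick.
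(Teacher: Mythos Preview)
Your proposal is correct and follows essentially the same approach as the paper: differentiate the Bellman equation, average against the stationary distribution $d^{\pi_\theta}$ so that stationarity cancels the $\nabla V$ (equivalently $\nabla Q$) terms, apply the log-derivative trick to obtain the $Q$-form of the policy gradient, then subtract the baseline $V_g^{\pi_\theta}$ and conclude by linearity of $J_{\mathrm{L}}$ and $A_{\mathrm{L},\lambda}$. The only cosmetic difference is that the paper starts from $\nabla_\theta V_g^{\pi_\theta}(s)$ and substitutes the Bellman equation for $\nabla_\theta Q_g^{\pi_\theta}$, whereas you start from the Bellman equation for $Q_g^{\pi_\theta}$ and substitute the product-rule expansion of $\nabla_\theta V_g^{\pi_\theta}$; the cancellations and final expression are identical.
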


\begin{proof}
    The proof works similarly for reward and cost functions. We use the notation $J_g, V_g, Q_g$ where $g=r,c$ and derive the following.
    \begin{equation}
	\label{eq_22}
	\begin{aligned}
            &\nabla_{\theta} V_g^{\pi_{\theta}}(s)=\nabla_{\theta}\bigg(\sum_{a}\pi_{\theta}(a|s)Q_g^{\pi_{\theta}}(s, a)\bigg)\\
            &=\sum_{a}\bigg(\nabla_{\theta}\pi_{\theta}(a|s)\bigg)Q_g^{\pi_{\theta}}(s, a)+\sum_{a}\pi_\theta(a|s)\nabla_{\theta} Q_g^{\pi_{\theta}}(s, a)\\
            &\overset{(a)}=\sum_{a}\pi_{\theta}(a|s)\bigg(\nabla_{\theta}\log\pi_{\theta}(a|s)\bigg)Q_g^{\pi_{\theta}}(s, a)\nonumber\\
            &+\sum_{a}\pi_\theta(a|s)\nabla_{\theta} \bigg(g(s, a)-J_g(\theta)+\sum_{s'}P(s'|s, a)V_g^{\pi_{\theta}}(s')\bigg)\\
			&=\sum_{a}\pi_\theta(a|s)\bigg(\nabla_{\theta}\log\pi_{\theta}(a|s)\bigg)Q_g^{\pi_{\theta}}(s, a)\\
            &+\sum_{a}\pi_\theta(a|s) \bigg(\sum_{s'}P(s'|s,a)\nabla_{\theta} V_g^{\pi_{\theta}}(s')\bigg) - \nabla_{\theta}J_g(\theta)
	\end{aligned}
    \end{equation}
    where the step (a) is a consequence of $\nabla_{\theta}\log\pi_{\theta}=\frac{\nabla\pi_{\theta}}{\pi_{\theta}}$ and the Bellman equation. Multiplying both sides by $d^{\pi_{\theta}}(s)$, taking a sum over $s\in\mathcal{S}$, and rearranging the terms, we obtain the following.
    \begin{align}
	\begin{split}
            &\nabla_{\theta}J_g(\theta)=\sum_{s}d^{\pi_{\theta}}(s)\nabla_{\theta}J(\theta)\\
            &=\sum_{s}d^{\pi_{\theta}}(s)\sum_{a}\pi_\theta(a|s)\bigg(\nabla_{\theta}\log\pi_{\theta}(a|s)\bigg)Q_g^{\pi_{\theta}}(s, a)+\sum_{s}d^{\pi_{\theta}}(s)\\
            &\sum_{a}\pi_\theta(a|s) \bigg(\sum_{s'}P(s'|s,a)\nabla_{\theta} V_g^{\pi_{\theta}}(s')\bigg)- \sum_{s}d^{\pi_{\theta}}(s)\nabla_{\theta}V_g^{\pi_\theta}(s)\\
            &\overset{}{=}\mathbf{E}_{s\sim d^{\pi_\theta}, a\sim \pi_\theta(\cdot|s)}\bigg[Q_g^{\pi_{\theta}}(s, a)\nabla_{\theta}\log\pi_{\theta}(a|s)\bigg]\\
            &+\sum_{s}d^{\pi_{\theta}}(s) \sum_{s'}P^{\pi_\theta}(s'|s)\nabla_{\theta} V_g^{\pi_{\theta}}(s') - \sum_{s}d^{\pi_{\theta}}(s)\nabla_{\theta}V_g^{\pi_\theta}(s)\\
            &\overset{(a)}{=}\mathbf{E}_{s\sim d^{\pi_\theta}, a\sim \pi_\theta(\cdot|s)}\bigg[Q_g^{\pi_{\theta}}(s, a)\nabla_{\theta}\log\pi_{\theta}(a|s)\bigg] + \sum_{s'}d^{\pi_{\theta}}(s')\nabla_{\theta} V_g^{\pi_{\theta}}(s')\\
            &- \sum_{s}d^{\pi_{\theta}}(s)\nabla_{\theta}V_g^{\pi_\theta}(s)
			=\mathbf{E}_{s\sim d^{\pi_\theta}, a\sim \pi_\theta(\cdot|s)}\bigg[Q_g^{\pi_{\theta}}(s, a)\nabla_{\theta}\log\pi_{\theta}(a|s)\bigg]
	\end{split}
    \end{align}
    where $(a)$ uses the fact that $d^{\pi_\theta}$ is a stationary distribution. Note that,
    \begin{equation}
	\begin{aligned}
            &\mathbf{E}_{s\sim d^{\pi_{\theta}},a\sim\pi_{\theta}(\cdot|s)}\bigg[ V_g^{\pi_{\theta}}(s)\nabla\log\pi_{\theta}(a|s)\bigg]\\
            &=\mathbf{E}_{s\sim d^{\pi_{\theta}}}\left[ \sum_{a}V_g^{\pi_{\theta}}(s)\nabla_{\theta}\pi_{\theta}(a|s)\right]\\
            &=\mathbf{E}_{s\sim d^{\pi_{\theta}}}\bigg[ V_g^{\pi_{\theta}}(s)\nabla_{\theta}\left(\sum_{a}\pi_{\theta}(a|s)\right)\bigg]\\
            &=\mathbf{E}_{s\sim d^{\pi_{\theta}}}\bigg[ V_g^{\pi_{\theta}}(s)\nabla_{\theta}\left(1\right)\bigg]=0
	\end{aligned}
    \end{equation}
    We can, therefore, replace $Q^{\pi_{\theta}}$ in the policy gradient with the advantage function $A_g^{\pi_{\theta}}(s, a)=Q_g^{\pi_{\theta}}(s, a)-V_g^{\pi_{\theta}}(s)$, $\forall (s, a)\in\mathcal{S}\times \mathcal{A}$. Thus,
    \begin{equation}
        \nabla_{\theta} J_g(\theta)=\mathbf{E}_{s\sim d^{\pi_{\theta}},a\sim\pi_{\theta}(\cdot|s)}\bigg[ A_g^{\pi_{\theta}}(s,a)\nabla_{\theta}\log\pi_{\theta}(a|s)\bigg]
    \end{equation}
    Notice that the above equation works for both $J_r$ and $J_c$, and thus the proof is completed by the definition of $J_{\mathrm{L},\lambda}$ and $A_{\mathrm{L},\lambda}$.
\end{proof}

In typical RL scenarios, the learners do not have access to $P$, the state transition function, and thereby to the functions $d^{\pi_\theta}$ and $A^{\pi_\theta}_{\mathrm{L}, \lambda}$. This makes computing the exact gradient a difficult task. In Algorithm \ref{alg:PG_MAG}, we demonstrate how one can still obtain good estimates of the gradient using sampled trajectories. It is worthwhile to point out that PG-type algorithms have been widely studied in discounted reward setups. For example, \citep{agarwal2021theory} characterizes the sample complexities of the PG and the Natural PG (NPG) algorithms with softmax and direct parameterization. Similar results for general parameterization are obtained by \citep{liu2020improved, mondal2023improved}. However, the main difference between a discounted and an average-reward setup is that while the former assumes access to a simulator that leads to unbiased estimates of the gradient, the latter framework primarily works on a single sample path.

Algorithm \ref{alg:PG_MAG} runs $K=T/H$ epochs, where $H=16t_{\mathrm{hit}}t_{\mathrm{mix}}T^{\xi}(\log T)^2$ is the duration of each epoch. The constant $\xi$ is specified later. Observe that the learner is assumed to know $T$. This can be relaxed utilizing the well-known doubling trick \citep{lattimore2020bandit}. Additionally, it is assumed that the algorithm is aware of $t_{\mathrm{mix}}$, and $t_{\mathrm{hit}}$. This is commonly assumed in the literature \citep{bai2023regret, wei2020model}. The first step in obtaining a gradient estimate is estimating the advantage value for a given pair $(s, a)$. This is obtained via Algorithm \ref{alg:estQ}. At the $k$th epoch, a $\pi_{\theta_k}$ -induced trajectory, $\mathcal{T}_k=\{(s_t, a_t)\}_{t=(k-1)H}^{kH-1}$ is obtained and passed to Algorithm \ref{alg:estQ} that searches for subtrajectories within it that start with a state $s$, are of length $N=4t_{\mathrm{mix}}(\log T)$, and are at least $N$ distance apart from each other. Assume that there are $M$ such subtrajectories. Let the total reward and cost obtained in the $i$th subtrajectory be $\{r_i, c_i\}$ respectively and $\tau_i$ be its starting time. The value functions for the $k$th epoch are defined as,
\begin{align}
    \begin{split}
        \hat{Q}_g^{\pi_{\theta_k}}(s, a) &= \dfrac{1}{\pi_{\theta_k}(a|s)} \left[\dfrac{1}{M}\sum_{i=1}^M g_i \mathrm{1}(a_{\tau_i}=a)\right],\\
        \hat{V}_g^{\pi_{\theta_k}}(s) &=  \left[\dfrac{1}{M}\sum_{i=1}^M g_i \right], ~~\forall g\in\{r, c\}
    \end{split}
\end{align}
This leads to the following advantage estimator.
\begin{align}
    \begin{split}
        \hat{A}^{\pi_{\theta_k}}_{\mathrm{L}, \lambda_k} (s, a) &= \hat{A}^{\pi_{\theta_k}}_{r} (s, a) - \lambda_k \hat{A}^{\pi_{\theta_k}}_{c} (s, a)\\
        ~~\text{where}~ \hat{A}^{\pi_{\theta_k}}_{g}(s, a) &= \hat{Q}^{\pi_{\theta_k}}_{g}(s, a) - \hat{V}^{\pi_{\theta_k}}_{g}(s)
    \end{split}
\end{align}
where $g\in\{r,c\}$. Finally, the gradient estimator is obtained as follows.
\begin{equation}
    \begin{aligned}\label{eq_grad_estimate}
        \omega_k&\triangleq\hat{\nabla}_{\theta} J_{\mathrm{L}}(\theta_k,\lambda_k)=\dfrac{1}{H}\sum_{t=t_k}^{t_{k+1}-1}\hat{A}_{\mathrm{L},\lambda_k}^{\pi_{\theta_k}}(s_{t}, a_{t})\nabla_{\theta}\log \pi_{\theta_k}(a_{t}|s_{t})
    \end{aligned}
\end{equation}
where $t_k=(k-1)H$ is the starting time of the $k$th epoch. The parameters are updated following \eqref{udpates_algorotihm}. To update the Lagrange multiplier, we need an estimation of $J_g(\theta_k)$, which is obtained as the average cost of the $k$th epoch. It should be noticed that we remove the first $N$ samples from the $k$th epoch because we require the state distribution to be close enough to the stationary distribution $d^{\pi_{\theta_k}}$, which is the key to make $\hat{J}_g(\theta_k)$ close to $J_g(\theta_k)$. The following lemma shows that $\hat{A}_{\mathrm{L},\lambda_k}^{\pi_{\theta_k}}(s, a)$ is a good estimator of $A_{\mathrm{L},\lambda_k}^{\pi_{\theta_k}}(s, a)$.

\begin{lemma}
    \label{lemma_good_estimator}
    The following holds $\forall k$, $\forall (s, a)$ and sufficiently large $T$.
    \begin{align}
        \begin{split}
            &\mathbf{E}\bigg[\bigg(\hat{A}_{\mathrm{L}, \lambda_k}^{\pi_{\theta_k}}(s, a)-A^{\pi_{\theta_k}}_{\mathrm{L}, \lambda_k}(s, a)\bigg)^2\bigg] \nonumber\\
            & \leq \mathcal{O}\left(\dfrac{ t_{\mathrm{hit}}N^3\log T}{\delta^2 H\pi_{\theta_k}(a|s)}\right) = \mathcal{O}\left(\dfrac{t_{\mathrm{mix}}^2(\log T)^2}{\delta^2T^{\xi}\pi_{\theta_k}(a|s)}\right) 
        \end{split}
        \label{eq_good_estimator}
    \end{align}
\end{lemma}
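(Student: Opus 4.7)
The plan is to bound the mean squared error of the advantage estimator by decomposing it into four controllable pieces: the contributions from reward and cost, and within each, the contributions from $\hat{V}$ and $\hat{Q}$. Since the Lagrange multiplier is projected onto $[0,2/\delta]$, we have $\lambda_k \le 2/\delta$, so
$$
(\hat{A}_{\mathrm{L},\lambda_k} - A_{\mathrm{L},\lambda_k})^2 \;\le\; 2(\hat{A}_r - A_r)^2 + \tfrac{8}{\delta^2}(\hat{A}_c - A_c)^2,
$$
and the $1/\delta^2$ factor in the conclusion is immediately accounted for. It then suffices to prove, for each $g\in\{r,c\}$, that $\mathbf{E}[(\hat{A}_g^{\pi_{\theta_k}}(s,a) - A_g^{\pi_{\theta_k}}(s,a))^2] = \mathcal{O}(t_{\mathrm{hit}}N^3\log T / (H \pi_{\theta_k}(a|s)))$. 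Since $\hat{A}_g = \hat{Q}_g - \hat{V}_g$ and $A_g = Q_g - V_g$, I would split the analysis into controlling $\mathbf{E}[(\hat{V}_g(s) - V_g(s))^2]$ and $\mathbf{E}[(\hat{Q}_g(s,a) - Q_g(s,a))^2]$ separately.

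First I would establish a high-probability lower bound on $M$, the number of disjoint length-$N$ subtrajectories starting at $s$ within the $k$th epoch. Using the hitting-time definition and an any-interval Azuma-type argument (as in \pref{lem:azuma}) applied to the indicator process $\mathbf{1}(s_t = s)$ minus $d^{\pi_{\theta_k}}(s)\ge 1/t_{\mathrm{hit}}$, I would show that with probability at least $1-1/\mathrm{poly}(T)$, one has $M \ge c\,H/(N\, t_{\mathrm{hit}})$ for some absolute constant $c$. This will be the main workhorse: all variance bounds scale as $1/M$, so this step controls the $t_{\mathrm{hit}}N/H$ factor.

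Next I would perform a bias-variance decomposition of $\hat{V}_g(s)$. For the bias, conditioning on $s_{\tau_i} = s$, the conditional expectation of $g_i = \sum_{t=\tau_i}^{\tau_i+N-1} g(s_t,a_t)$ equals $\mathbf{E}[\sum_{t=0}^{N-1}g(s_t,a_t)\mid s_0=s]$. Subtracting $N J_g(\theta_k)$ and comparing with the series representation (\ref{def_v_pi_theta_s}) of $V_g^{\pi_{\theta_k}}(s)$, the residual is $\sum_{t \ge N}\sum_{s'}((P^{\pi_{\theta_k}})^t(s,s') - d^{\pi_{\theta_k}}(s')) g^{\pi_{\theta_k}}(s')$, which by the definition of $t_{\mathrm{mix}}$ and the choice $N = 4 t_{\mathrm{mix}}\log T$ is $\mathcal{O}(1/T^{2})$, hence negligible. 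For the variance, each $g_i$ is bounded by $N$ and the subtrajectories are separated by at least $N$ steps, which by the mixing property makes them approximately independent up to a total variation error of order $2^{-N/t_{\mathrm{mix}}} \le 1/T$. Therefore $\mathrm{Var}(\hat{V}_g(s) \mid M) \le \mathcal{O}(N^2/M) + \mathcal{O}(1/T)$, and taking expectations with the lower bound on $M$ yields $\mathcal{O}(t_{\mathrm{hit}}N^3/H)$ up to logarithmic factors.

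The case of $\hat{Q}_g(s,a)$ is analogous but introduces the $1/\pi_{\theta_k}(a|s)$ factor. Here $\hat{Q}_g(s,a) = (1/(M\pi_{\theta_k}(a|s))) \sum_i g_i \mathbf{1}(a_{\tau_i}=a)$; conditioning on $s_{\tau_i}=s$, each term $g_i \mathbf{1}(a_{\tau_i}=a)$ has variance at most $N^2 \pi_{\theta_k}(a|s)$, so after dividing by $\pi_{\theta_k}(a|s)^2$ the variance contribution becomes $\mathcal{O}(N^2/(M\pi_{\theta_k}(a|s)))$, which is where the $\pi_{\theta_k}(a|s)$ in the denominator of the lemma comes from. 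The bias calculation parallels the $\hat{V}_g$ case after multiplying Bellman's equation through by $\pi_{\theta_k}(a|s)$, and is again negligible due to the choice of $N$. Combining the $\hat{V}_g$ and $\hat{Q}_g$ bounds via $(u-v)^2 \le 2u^2 + 2v^2$, summing the reward and cost contributions, and absorbing the high-probability failure event (whose contribution is $\mathcal{O}(1/\mathrm{poly}(T))$ using that $\hat{A}_g$ is uniformly bounded by $\mathcal{O}(N)$) delivers the claimed bound. The main obstacle, as flagged above, is the concentration argument showing $M \gtrsim H/(N\,t_{\mathrm{hit}})$ with high probability, because the indicator sequence $\{\mathbf{1}(s_t=s)\}$ is strongly correlated and one must appeal to mixing (or equivalently, to a blocking/coupling argument that chops $H$ into independent-enough mixing blocks) to obtain a Bernstein-type concentration.
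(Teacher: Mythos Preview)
Your decomposition into separately controlling $\mathbf{E}[(\hat V_g(s)-V_g^{\pi_{\theta_k}}(s))^2]$ and $\mathbf{E}[(\hat Q_g(s,a)-Q_g^{\pi_{\theta_k}}(s,a))^2]$ does not work as stated, and this is the main gap. Conditioning on $s_{\tau_i}=s$, one has $\mathbf{E}[g_i\mid s_{\tau_i}=s]=V_g^{\pi_{\theta_k}}(s)+N J_g(\theta_k)+\mathcal{O}(1/T^2)$ (you compute exactly this), so $\hat V_g(s)=\tfrac{1}{M}\sum_i g_i$ is centered at $V_g^{\pi_{\theta_k}}(s)+N J_g(\theta_k)$, not at $V_g^{\pi_{\theta_k}}(s)$. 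The additive $N J_g(\theta_k)$ term is $\Theta(N)$ in general, so $\mathbf{E}[(\hat V_g(s)-V_g^{\pi_{\theta_k}}(s))^2]=\Theta(N^2)$, which is a factor $T^{\xi}$ larger than the claimed bound. The same $N J_g(\theta_k)$ offset appears in $\hat Q_g(s,a)$; it cancels only in the \emph{difference} $\hat A_g=\hat Q_g-\hat V_g$. The paper therefore works directly with the centered increments
\[
\Delta_i=\frac{g_{k,i}\mathbf{1}(a_{\tau_i}=a)}{\pi_{\theta_k}(a|s)}-g_{k,i}-A_g^{\pi_{\theta_k}}(s,a)+\Delta_T^{\pi}(s,a),
\]
which have mean zero and second moment $\mathcal{O}(N^2/\pi_{\theta_k}(a|s))$. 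Your argument can be repaired by recentering both pieces by the \emph{same} $N J_g(\theta_k)$ before splitting, but as written the two squared errors you propose to bound are individually large.

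On the decorrelation of $M$ from the $g_i$'s, your plan (high-probability lower bound on $M$ plus ``approximate independence'' of the $g_i$'s) is in the right spirit but glosses over the fact that $M$ itself is a functional of the same trajectory that generates the $g_i$'s: even after fixing $M\geq M_0$, the average $\tfrac{1}{M}\sum_{i=1}^M\Delta_i$ is a random-length sum and you would still need a maximal-type inequality over $m\in[M_0,H/(2N)]$. The paper bypasses this with a coupling: it introduces an \emph{imaginary} chain that resets to $d^{\pi_{\theta_k}}$ exactly $N$ steps after each sub-trajectory ends, in which the waiting times $\{w_i\}$ (hence $M$) become independent of $\{g_i\}$, so that $\mathbf{E}'[(\tfrac{1}{M}\sum_i\Delta_i)^2\mid\{w_i\}]\leq \mathcal{O}(N^2/(M\pi_{\theta_k}(a|s)))$ holds exactly; the lower bound on $M$ then comes from \pref{lemma_aux_4} (a hitting-probability bound on length-$N$ blocks) rather than from Azuma on the indicator process. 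Finally, the imaginary and true expectations are related by a likelihood-ratio argument showing $\mathrm{Pr}(X)/\mathrm{Pr}'(X)\leq 1+\mathcal{O}(1/T)$ uniformly, which transfers the bound back. Your blocking idea can be made rigorous, but it is precisely this decorrelation step---not the concentration of $M$---that carries the technical weight.
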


Lemma \ref{lemma_good_estimator} establishes that the $L_2$ error of our proposed advantage estimator can be bounded above as $\Tilde{\mathcal{O}}(T^{-\xi})$. We later utilize the above result to prove the goodness of the gradient estimator. It is to be clarified that our Algorithm \ref{alg:estQ} is inspired by Algorithm 2 of \citep{wei2020model}. However, while \citep{wei2020model} take $H=\tilde{\mathcal{O}}(1)$, we have $H=\tilde{\mathcal{O}}(T^{\xi})$. This subtle change is important in proving the desired sublinear regret for general parametrization.

\begin{algorithm}[t]
    \caption{Primal-Dual Parameterized Policy Gradient}
    \label{alg:PG_MAG}
    \begin{algorithmic}[1]
        \STATE \textbf{Input:} Episode length $H$, learning rates $\alpha,\beta$, initial parameters $\theta_1,\lambda_1$, initial state $s_0 \sim \rho(\cdot)$, 
	\vspace{0.1cm}
	\FOR{$k\in\{1, \cdots, K=T/H\}$}
		\STATE $\mathcal{T}_k\gets \phi$
		\FOR{$t\in\{(k-1)H, \cdots, kH-1\}$}
                \STATE Execute $a_t\sim \pi_{\theta_k}(\cdot|s_t)$, observe $r(s_t,a_t)$, $g(s_t,a_t)$ and  $s_{t+1}$
                \STATE $\mathcal{T}_k\gets \mathcal{T}_k\cup \{(s_t, a_t)\}$
		\ENDFOR	
		\FOR{$t\in\{(k-1)H, \cdots, kH-1\}$}
                \STATE Obtain $\hat{A}_{\mathrm{L},\lambda_k}^{\pi_{\theta_k}}(s_t, a_t)$ using Algorithm \ref{alg:estQ} and $\mathcal{T}_k$ 
		\ENDFOR
		\vspace{0.1cm}
            \STATE  Using \eqref{eq_grad_estimate}, compute  $\omega_k$ 
		\STATE Update parameters as
		\begin{equation}\label{udpates_algorotihm}
			\begin{aligned}
				&\theta_{k+1}=\theta_k+\alpha\omega_k,\\
                    &\lambda_{k+1}= \max\{0, \lambda_k + \beta \hat{J}_c(\theta_k)\}
			\end{aligned}
		\end{equation}
            \STATE where $\hat{J}_c(\theta_k)=\frac{1}{H-N}\sum_{t=(k-1)H+N}^{kH-1}c(s_t, a_t)$
		\ENDFOR
    \end{algorithmic}
\end{algorithm}

\begin{algorithm}[t]
    \caption{Advantage Estimation}
    \label{alg:estQ}
    \begin{algorithmic}[1]
        \STATE \textbf{Input:} Trajectory $(s_{t_1}, a_{t_1},\ldots, s_{t_2}, a_{t_2})$, state $s$, action $a$, Lagrange multiplier $\lambda$ and policy parameter $\theta$
        \STATE \textbf{Initialize:} $M \leftarrow 0$, $\tau\leftarrow t_1$
	\STATE \textbf{Define:} $N=4t_{\mathrm{mix}}\log_2T$.		
	\vspace{0.1cm}
	\WHILE{$\tau\leq t_2-N$}
            \IF{$s_{\tau}=s$}
                \STATE $M\leftarrow M+1$, $\tau_M\gets \tau$
                \STATE $g_{M}\gets \sum_{t=\tau}^{\tau+N-1}g(s_t, a_t)$, ~$\forall g\in\{r, c\}$
                \STATE $\tau\leftarrow\tau+2N$.
            \ELSE
                \STATE {$\tau\leftarrow\tau+1$.}
            \ENDIF
	\ENDWHILE
	\vspace{0.1cm}
	\IF{$M>0$}
            \STATE $\hat{Q}_g(s,a) = \dfrac{1}{\pi_{\theta}(a|s)}\left[\dfrac{1}{M}\sum_{i=1}^M g_{i}\mathrm{1}(a_{\tau_i}=a)\right]$,
            \STATE $\hat{V}_g(s)=\dfrac{1}{M}\sum_{i=1}^M g_{i}$,~~$\forall g\in\{r, c\}$
	\ELSE
            \STATE $\hat{V}_g(s)=0$, $\hat{Q}_g(s, a)=0$, ~~$\forall g\in\{r, c\}$
	\ENDIF
        \STATE \textbf{return}  $(\hat{Q}_r(s,a)-\hat{V}_r(s))- \lambda(\hat{Q}_c(s,a)-\hat{V}_c(s))$ 
    \end{algorithmic}
\end{algorithm}


\begin{proof} [Proof of Lemma \ref{lemma_good_estimator}] Fix an epoch $k$ and assume that $\pi_{\theta_k}$ is denoted as $\pi$ for notational convenience. Let, $M$ be the number of disjoint sub-trajectories of length $N$ that start with the state $s$ and are at least $N$ distance apart (found by Algorithm \ref{alg:estQ}). Let, $g_{k, i}$ be the sum of rewards or constraint ($g=r, c$ accordingly) observed in the $i$th sub-trajectory and $\tau_i$ denote its starting time. The advantage function estimate is,
    \begin{align}
        \label{def_A_hat_appndx}
        \hat{A}_g^{\pi}(s, a) = \begin{cases}
            \dfrac{1}{\pi(a|s)}\left[\dfrac{1}{M}\sum\limits_{i=1}^M g_{k,i}\mathrm{1}(a_{\tau_i}=a)\right] - \dfrac{1}{M}\sum\limits_{i=1}^M g_{k,i} &\text{if}~M>0\\
            0 &\text{if}~M=0
        \end{cases}
    \end{align}

    Note the following,
    \begin{align}
        \begin{split}
           &\mathbf{E}\left[g_{k,i}\bigg|s_{\tau_i}=s, a_{\tau_i}=a\right]\\
           &=g(s, a) + \mathbf{E}\left[\sum_{t=\tau_i+1}^{\tau_i+N}g(s_t, a_t)\bigg| s_{\tau_i}=s, a_{\tau_i}=a\right]\\
           &=g(s, a) + \sum_{s'}P(s'|s, a)\mathbf{E}\left[\sum_{t=\tau_i+1}^{\tau_i+N}g(s_t, a_t)\bigg| s_{\tau_i+1}=s'\right]\\
           &=g(s, a) + \sum_{s'}P(s'|s, a)\left[\sum_{j=0}^{N-1}(P^{\pi})^j(s', \cdot)\right]^Tg^{\pi}\\
           &=g(s, a) + \sum_{s'}P(s'|s, a)\left[\sum_{j=0}^{N-1}(P^{\pi})^j(s', \cdot)-d^{\pi}\right]^Tg^{\pi} + N(d^{\pi})^Tg^{\pi}\\
           &\overset{(a)}{=}g(s, a) + \sum_{s'}P(s'|s, a)\left[\sum_{j=0}^{\infty}(P^{\pi})^j(s', \cdot)-d^{\pi}\right]^Tg^{\pi} + NJ_g^{\pi}\\
           &\hspace{1.7cm}-\underbrace{\sum_{s'}P(s'|s, a)\left[\sum_{j=N}^{\infty}(P^{\pi})^j(s', \cdot)-d^{\pi}\right]^Tg^{\pi}}_{\triangleq \mathrm{E}^{\pi}_T(s, a)}\\
           &\overset{(b)}{=} g(s, a) + \sum_{s'}P(s'|s, a)V_g^{\pi}(s') + NJ_g^{\pi}-\mathrm{E}^{\pi}_T(s, a)\\
           &\overset{(c)}{=} Q_g^{\pi}(s, a) + (N+1)J_g^{\pi} - \mathrm{E}^{\pi}_T(s, a)
        \end{split}
    \end{align}
    where $(a)$ follows from the definition of $J_g^{\pi}$ as given in $(\ref{eq_r_pi_theta})$, $(b)$ is an application of the definition of $V_g^{\pi}$ given in $(\ref{def_v_pi_theta_s})$, and $(c)$ follows from the Bellman equation. Define the following quantity.
    \begin{align}
        \label{def_error_1}
        \delta^{\pi}(s, T) \triangleq \sum_{t=N}^{\infty}\norm{(P^{\pi})^t({s,\cdot}) - d^{\pi}}_1 ~~\text{where} ~N=4t_{\mathrm{mix}}(\log_2 T)
    \end{align}
    
    Using Lemma \ref{lemma_aux_3}, we get $\delta^{\pi}(s, T)\leq \frac{1}{T^3}$ which implies, $|\mathrm{E}^{\pi}_T(s, a)|\leq \frac{1}{T^3}$. Observe the following relations.
    \begin{align}
    \label{eq_appndx_47}
        \begin{split}
            &\mathbf{E}\left[\left(\dfrac{1}{\pi(a|s)}g_{k,i}\mathrm{1}(a_{\tau_i}=a) - g_{k,i}\right)\bigg| s_{\tau_i}=s\right] \\
            &= \mathbf{E}\left[g_{k,i}\bigg| s_{\tau_i}=s, a_{\tau_i}=a\right] - \sum_{a'}\pi(a'|s)\mathbf{E}\left[g_{k,i}\bigg| s_{\tau_i}=s, a_{\tau_i}=a'\right]\\
            &=Q_g^{\pi}(s, a) + (N+1)J_g^{\pi} - \mathrm{E}^{\pi}_T(s, a) \\
            &\hspace{1.9cm}- \sum_{a'}\pi(a'|s)[Q^{\pi}(s, a) + (N+1)J_g^{\pi} - \mathrm{E}^{\pi}_T(s, a)]\\
            &=Q_g^{\pi}(s, a) - V_g^{\pi}(s)-\left[\mathrm{E}_T(s, a) - \sum_{a'}\pi(a'|s)\mathrm{E}_T^{\pi}(s, a')\right]\\
            &= A_g^{\pi}(s, a) -\Delta^{\pi}_T(s, a)
        \end{split}
    \end{align}
    where $\Delta^{\pi}_T(s, a)\triangleq\mathrm{E}_T(s, a) - \sum_{a'}\pi(a'|s)\mathrm{E}_T^{\pi}(s, a')$. Using the bound on $\mathrm{E}^{\pi}_T(s, a)$, we derive, $|\Delta_T^{\pi}(s, a)|\leq \frac{2}{T^3}$, which implies,
    \begin{align}
        \label{eq_appndx_48}
        \begin{split}
            &\left|\mathbf{E}\left[\left(\dfrac{1}{\pi(a|s)}g_{k,i}\mathrm{1}(a_{\tau_i}=a) - g_{k,i}\right)\bigg| s_{\tau_i}=s\right] - A_g^{\pi}(s, a)\right|\\
            &\hspace{1.5cm}\leq |\Delta_T^{\pi}(s, a)|\leq\dfrac{2}{T^3}
        \end{split}
    \end{align}

    Note that \eqref{eq_appndx_48} cannot directly bound the bias of $\hat{A}_g^{\pi}(s, a)$. This is because the random variable $M$ is correlated with the  variables $\{g_{k,i}\}_{i=1}^M$. To decorrelate them, imagine a CMDP where the state distribution resets to the stationary distribution, $d^{\pi}$ after exactly $N$ time steps since the completion of a sub-trajectory. In other words, if a sub-trajectory starts at $\tau_{i}$, and ends at $\tau_i+N$, then the system `rests' for additional $N$ steps before rejuvenating with the state distribution, $d^{\pi}$ at $\tau_i+2N$. Clearly, the wait time between the reset after the $(i-1)$th sub-trajectory and the start of the $i$th sub-trajectory is, $w_{i}=\tau_{i}-(\tau_{i-1}+2N)$, $i>1$. Let $w_1$ be the difference between the start time of the $k$th epoch and the start time of the first sub-trajectory. Observe that,

    $(a)$ $w_1$ only depends on the initial state, $s_{(k-1)H}$ and the induced transition function, $P^{\pi}$,

    $(b)$ $w_i$, where $i>1$, depends on the stationary distribution, $d^{\pi}$, and the induced transition function, $P^{\pi}$,

    $(c)$ $M$ only depends on $\{w_1, w_2, \cdots\}$ as other segments of the epoch have fixed length, $2N$.

    Clearly, in this imaginary CMDP, the sequence, $\{w_1, w_2, \cdots\}$, and hence, $M$ is independent of $\{g_{k,1}, g_{k, 2}, \cdots\}$. Let, $\mathbf{E}'$ denote the expectation operation and $\mathrm{Pr}'$ denote the probability of events in this imaginary system. Define the following.
    \begin{align}
        \label{def_delta_i}
        \Delta_i \triangleq \dfrac{g_{k,i}\mathrm{1}(a_{\tau_i}=a)}{\pi(a|s)} - g_{k,i} - A_g^{\pi}(s, a) + \Delta^{\pi}_T(s, a)
    \end{align}
    where $\Delta^{\pi}_T(s, a)$ is defined via $(\ref{eq_appndx_47})$. Note that we have suppressed the dependence on $T$, $s, a$, and $\pi$ while defining $\Delta_i$ to remove clutter. Using $(\ref{eq_appndx_47})$, one can write $ \mathbf{E}'\left[\Delta_i(s, a)|\{w_i\}\right]=0$. Moreover, 
    \begin{align}
        \label{eq_appndx_50}
        \begin{split}
            &\mathbf{E}'\left[\left(\hat{A}_g^{\pi}(s, a) - A_g^{\pi}(s, a)\right)^2\right]\\ 
            &= \mathbf{E}'\left[\left(\hat{A}_g^{\pi}(s, a) - A_g^{\pi}(s, a)\right)^2\bigg| M>0\right]\times \mathrm{Pr}'(M>0) \\
            & \hspace{1cm}+ \left(A_g^{\pi}(s, a)\right)^2\times \mathrm{Pr}'(M=0)\\
            &= \mathbf{E}'\left[\left(\dfrac{1}{M}\sum_{i=1}^M\Delta_i - \Delta_T^{\pi}(s, a)\right)^2\bigg| M>0\right]\times \mathrm{Pr}'(M>0) \\
            &\hspace{1cm}+ \left(A_g^{\pi}(s, a)\right)^2\times \mathrm{Pr}'(M=0)\\
            & \overset{}{\leq} 2\mathbf{E}_{\{w_i\}}'\left[\mathbf{E}'\left[\left(\dfrac{1}{M}\sum_{i=1}^M\Delta_i \right)^2\bigg| \{w_i\}\right]\bigg| w_1\leq H-N\right]\times \mathrm{Pr}'(w_1\leq H-N) \\
            &\hspace{1cm} + 2\left(\Delta_T^{\pi}(s, a)\right)^2+\left(A_g^{\pi}(s, a)\right)^2\times \mathrm{Pr}'(M=0)\\
            &\overset{(a)}{\leq} 2\mathbf{E}_{\{w_i\}}'\left[\dfrac{1}{M^2}\sum_{i=1}^M \mathbf{E}'\left[\Delta_i^2\big|\{w_i\}\right]\bigg| w_1\leq H-N\right]\times \mathrm{Pr}'(w_1\leq H-N) \\
            &\hspace{1cm}+ \dfrac{8}{T^6} +\left(A_g^{\pi}(s, a)\right)^2\times \mathrm{Pr}'(M=0)\\
        \end{split}
    \end{align}
    where $(a)$ uses the bound $|\Delta_T^{\pi}(s, a)|\leq \frac{2}{T^3}$ derived in $(\ref{eq_appndx_48})$, and the fact that $\{\Delta_i\}$ are zero mean independent random variables conditioned on $\{w_i\}$. Note that $|g_{k,i}|\leq N$ almost surely, $|A_g^{\pi}(s, a)|\leq \mathcal{O}(t_{\mathrm{mix}})$ via Lemma \ref{lemma_aux_2}, and $|\Delta^{\pi}_T(s, a)|\leq \frac{2}{T^3}$ as shown in $(\ref{eq_appndx_48})$. Combining, we get, $\mathbf{E}'[|\Delta_i|^2\big|\{w_i\}]\leq \mathcal{O}(N^2/\pi(a|s))$ (see the definition of $\Delta_i$ in (\ref{def_delta_i})). Invoking this bound into $(\ref{eq_appndx_50})$, we get the following result.
    \begin{align}
        \label{eq_appndx_51_}
        \begin{split}
            \mathbf{E}'&\left[\left(\hat{A}_g^{\pi}(s, a) - A_g^{\pi}(s, a)\right)^2\right]\leq 2\mathbf{E}'\left[\dfrac{1}{M}\bigg|w_1\leq H-N\right]\mathcal{O}\left(\dfrac{N^2}{\pi(a|s)}\right)\\
            &\hspace{3cm}+\dfrac{8}{T^6}+\mathcal{O}(t_{\mathrm{mix}}^2)\times \mathrm{Pr}'(w_1>H-N)
        \end{split}
    \end{align}

    We use Lemma \ref{lemma_aux_4} to bound the following violation probability.
    \begin{align}
        \label{eq_appndx_52_}
        \begin{split}
            \mathrm{Pr}'(w_1>H-N)&\leq \left(1-\dfrac{3d^{\pi}(s)}{4}\right)^{4t_{\mathrm{hit}}T^{\xi}(\log T)-1}\\
            &\overset{(a)}{\leq} \left(1-\dfrac{3d^{\pi}(s)}{4}\right)^{\dfrac{4}{d^{\pi}(s)}(\log T)}\leq \dfrac{1}{T^3}
        \end{split}
    \end{align}
    where $(a)$ follows from the fact that $4t_{\mathrm{hit}}T^{\xi}(\log_2 T) - 1 \geq \frac{4}{d^{\pi}(s)}\log_2 T$ for sufficiently large $T$. Finally, if $M<M_0$, where $M_0$ is defined as,
    \begin{align}
        M_0\triangleq \dfrac{H-N}{2N+ \dfrac{4N\log T}{d^{\pi}(s)}}
    \end{align}
    then there exists at least one $w_i$ that exceeds $4N\log_2 T/d^{\pi}(s)$, which can happen with the following maximum probability (Lemma \ref{lemma_aux_4}).
    \begin{align}
        \mathrm{Pr}'\left(M<M_0\right) \leq \left(1-\dfrac{3d^{\pi}(s)}{4}\right)^{\frac{4\log T}{d^{\pi(s)}}}\leq \dfrac{1}{T^3}
    \end{align}

    The above probability bound can be used to obtain the following,
    \begin{align}
        \label{eq_appndx_55_}
        \begin{split}
            \mathbf{E}'\left[\dfrac{1}{M}\bigg| M>0\right]&=\dfrac{\sum_{m=1}^{\infty}\dfrac{1}{m}\mathrm{Pr}'(M=m)}{\mathrm{Pr}'(M>0)}\\
            &\leq \dfrac{1\times \mathrm{Pr}'(M\leq M_0)+\dfrac{1}{M_0}\mathrm{Pr}'(M>M_0)}{\mathrm{Pr}'(M>0)}\\
            &\leq  \dfrac{\dfrac{1}{T^3}+\dfrac{2N+\dfrac{4N \log T}{d^{\pi}(s)}}{H-N}}{1-\dfrac{1}{T^3}}\leq \mathcal{O}\left(\dfrac{N\log T}{H d^{\pi}(s)}\right)
        \end{split}
    \end{align}

    Injecting $(\ref{eq_appndx_52_})$ and $(\ref{eq_appndx_55_})$ into $(\ref{eq_appndx_51_})$, we finally obtain,
    \begin{align}
        \label{eq_appndx_56_}
        \begin{split}
            \mathbf{E}'&\left[\left(\hat{A}_g^{\pi}(s, a) - A_g^{\pi}(s, a)\right)^2\right]\leq \mathcal{O}\left(\dfrac{N^3\log T}{H d^{\pi}(s)\pi(a|s)}\right)\\
            &\hspace{2cm}=\mathcal{O}\left(\dfrac{N^3t_{\mathrm{hit}}\log T}{H \pi(a|s)}\right)=\mathcal{O}\left(\dfrac{t^2_{\mathrm{mix}}(\log T)^2}{T^{\xi}\pi(a|s)}\right)
        \end{split}
    \end{align}

    $(\ref{eq_appndx_56_})$ shows that our desired inequality is satisfied in the imaginary system. We now need a mechanism to translate this result to our CMDP. Observe that, one can express $(\hat{A}_g^{\pi}(s, a)-A_g^{\pi}(s, a))^2=f(X)$ where $X=(M, \tau_1, \mathcal{T}_1, \cdots, \tau_M, \mathcal{T}_M)$, and $\mathcal{T}_i = (a_{\tau_i}, s_{\tau_i+1}, a_{\tau_i+1}, \cdots, s_{\tau_i+N}, a_{\tau_i+N})$. This leads to the following inequality.
    \begin{align}
        \label{eq_appndx_57_}
        \dfrac{\mathbf{E}[f(X)]}{\mathbf{E}'[f(X)]} = \dfrac{\sum_{X} f(X)\mathrm{Pr}(X)}{\sum_{X} f(X)\mathrm{Pr}'(X)}\leq \max_{X}\dfrac{\mathrm{Pr}(X)}{\mathrm{Pr'}(X)}
    \end{align}

    The above inequality uses the non-negativity of $f(\cdot)$. Observe that, 
    \begin{align}
        \begin{split}
            \mathrm{Pr}(X) = &\mathrm{Pr}(\tau_1)\times \mathrm{Pr}(\mathcal{T}_1|\tau_1)\times \mathrm{Pr}(\tau_2|\tau_1, \mathcal{T}_1)\times \mathrm{Pr}(\mathcal{T}_2|\tau_2) \times \cdots \\
            &\times \mathrm{Pr}(\tau_M|\tau_{M-1}, \mathcal{T}_{M-1})\times \mathrm{Pr}(\mathcal{T}_M|\tau_M)\\
            &\times \mathrm{Pr}(s_t\neq s, \forall t\in[\tau_M+2N, kH-N]|\tau_M, \mathcal{T}_M),
        \end{split}\\
        \begin{split}
            \mathrm{Pr}'(X) = &\mathrm{Pr}(\tau_1)\times \mathrm{Pr}(\mathcal{T}_1|\tau_1)\times \mathrm{Pr}'(\tau_2|\tau_1, \mathcal{T}_1)\times \mathrm{Pr}(\mathcal{T}_2|\tau_2) \times \cdots \\
            &\times \mathrm{Pr}'(\tau_M|\tau_{M-1}, \mathcal{T}_{M-1})\times \mathrm{Pr}(\mathcal{T}_M|\tau_M)\\
            &\times \mathrm{Pr}(s_t\neq s, \forall t\in[\tau_M+2N, kH-N]|\tau_M, \mathcal{T}_M),
        \end{split}
    \end{align}

    The difference between $\mathrm{Pr}(X)$ and $\mathrm{Pr}'(X)$ arises because $\mathrm{Pr}(\tau_{i+1}|\tau_i, \mathcal{T}_i)\neq \mathrm{Pr}'(\tau_{i+1}|\tau_i, \mathcal{T}_i)$, $\forall i\in\{1, \cdots, M-1\}$. Note that the ratio of these two terms can be bounded as follows,
    \begin{align}
        \begin{split}
            \dfrac{\mathrm{Pr}(\tau_{i+1}|\tau_i, \mathcal{T}_i)}{\mathrm{Pr}'(\tau_{i+1}|\tau_i, \mathcal{T}_i)}
            &\leq \max_{s'}\dfrac{\mathrm{Pr}(s_{\tau_i+2N}=s'|\tau_i, \mathcal{T}_i)}{\mathrm{Pr}'(s_{\tau_i+2N}=s'|\tau_i, \mathcal{T}_i)}\\
            &=\max_{s'}1+\dfrac{\mathrm{Pr}(s_{\tau_i+2N}=s'|\tau_i, \mathcal{T}_i)-d^{\pi}(s')}{d^{\pi}(s')}\\
            &\overset{(a)}{\leq} \max_{s'}1+\dfrac{1}{T^3d^{\pi}(s')}\leq 1+\dfrac{t_{\mathrm{hit}}}{T^3}\leq 1+\dfrac{1}{T^2}
        \end{split}
    \end{align}
    where $(a)$ is a consequence of Lemma \ref{lemma_aux_3}. We have,
    \begin{align}
        \label{eq_appndx_61_}
        \dfrac{\mathrm{Pr}(X)}{\mathrm{Pr}'(X)}\leq \left(1+\dfrac{1}{T^2}\right)^M\leq e^{\frac{M}{T^2}}\overset{(a)}{\leq} e^{\frac{1}{T}}\leq \mathcal{O}\left(1+\dfrac{1}{T}\right) 
    \end{align}
    where $(a)$ uses $M\leq T$. Combining $(\ref{eq_appndx_57_})$ and $(\ref{eq_appndx_61_})$, we get,
    \begin{align}
        \begin{split}
            &\mathbf{E}\left[\left(\hat{A}_g^{\pi}(s, a) - A_g^{\pi}(s, a)\right)^2\right]\\
            &\leq \mathcal{O}\left(1+\dfrac{1}{T}\right)\mathbf{E}'\left[\left(\hat{A}_g^{\pi}(s, a) - A_g^{\pi}(s, a)\right)^2\right]\overset{(a)}{\leq} \mathcal{O}\left(\dfrac{t^2_{\mathrm{mix}}(\log T)^2}{T^{\xi}\pi(a|s)}\right)
        \end{split}
    \end{align}
    where $(a)$ follows from $(\ref{eq_appndx_56_})$. Using the definition of $A_{\mathrm{L},\lambda}$, we get,
    \begin{equation}
        \begin{aligned}
            &\mathbf{E}\left[\left(\hat{A}_{\mathrm{L},\lambda}^{\pi}(s, a) - A_{\mathrm{L},\lambda}^{\pi}(s, a)\right)^2\right]\\
            &=\mathbf{E}\left[\left((\hat{A}_r^{\pi}(s, a) - A_r^{\pi}(s, a))-\lambda(\hat{A}_c^{\pi}(s, a) - A_c^{\pi}(s, a))\right)^2\right]\\
            &\leq 2 \mathbf{E}\left[\left(\hat{A}_r^{\pi}(s, a) - A_r^{\pi}(s, a)\right)^2\right]+ 2\lambda^2\mathbf{E}\left[\left(\hat{A}_c^{\pi}(s, a) - A_c^{\pi}(s, a)\right)^2\right]\\
            &\leq\mathcal{O}\left(\dfrac{t^2_{\mathrm{mix}}(\log T)^2}{\delta^2T^{\xi}\pi(a|s)}\right)
        \end{aligned}
    \end{equation}
    This completes the proof.
\end{proof}


\section{Global Convergence Analysis}
\label{sec_convergence}
This section first shows that the sequence $\{\theta_k, \lambda_k\}_{k=1}^K$ produced by Algorithm \ref{alg:PG_MAG} is such that their associated Lagrange sequence $\{J_{\mathrm{L}}(\theta_k, \lambda_k)\}_{k=1}^{\infty}$ converges globally. By expanding the Lagrange function, we then exhibit convergence of each of its components $\{J_g(\theta_k, \lambda_k)\}_{k=1}^K$, $g\in\{r,c\}$. This is later used for regret and constraint violation analysis. Before delving into the details, we would like to state a few necessary assumptions.
\begin{assumption}
    \label{ass_score}
    The score function (expressed below) is $G$-Lipschitz and $B$-smooth. Specifically, $\forall \theta, \theta_1,\theta_2 \in\mathbb{R}^{\mathrm{d}}$, $\forall (s, a)$,
    \begin{equation*}
	\begin{aligned}
            \Vert \nabla_\theta\log\pi_\theta(a\vert s)\Vert&\leq G,\\
            \Vert \nabla_\theta\log\pi_{\theta_1}(a\vert s)-\nabla_\theta\log\pi_{\theta_2}(a\vert s)\Vert&\leq B\Vert \theta_1-\theta_2\Vert\quad
	\end{aligned}
    \end{equation*}
\end{assumption}

\begin{remark}
    The Lipschitz and smoothness properties of the score function are commonly assumed for policy gradient analyses \citep{Alekh2020, Mengdi2021, liu2020improved}. These assumptions can be verified for simple parameterization class such as Gaussian policies.  
\end{remark}

Combining Assumption \ref{ass_score} with Lemma \ref{lemma_good_estimator} and using the gradient estimator as given in $(\ref{eq_grad_estimate})$, one can deduce the following result.

\begin{lemma}
    \label{lemma_grad_est_bias}
    The following inequality holds $\forall k$ provided that assumptions \ref{ass_1} and \ref{ass_score} are true.
    \begin{align}
        \mathbf{E}\left[\norm{\omega_k-\nabla_{\theta}J_{\mathrm{L}}(\theta_k,\lambda_k)}^2\right]\leq \tilde{\mathcal{O}}\left(\dfrac{AG^2t_{\mathrm{mix}}^2}{\delta^2 T^{\xi}}\right)
    \end{align}
\end{lemma}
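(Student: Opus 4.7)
The plan is to decompose the mean-squared gradient error into an \emph{advantage estimation} term and a \emph{trajectory concentration} term, bound each separately, and combine via $\|a+b\|^2 \leq 2\|a\|^2 + 2\|b\|^2$. Concretely, I would insert the intermediate quantity
\[
    \tilde{\omega}_k \;\triangleq\; \frac{1}{H}\sum_{t=t_k}^{t_{k+1}-1} A_{\mathrm{L},\lambda_k}^{\pi_{\theta_k}}(s_t,a_t)\,\nabla_\theta \log \pi_{\theta_k}(a_t|s_t),
\]
so that $\omega_k - \nabla_\theta J_{\mathrm{L}}(\theta_k,\lambda_k) = (\omega_k - \tilde{\omega}_k) + (\tilde{\omega}_k - \nabla_\theta J_{\mathrm{L}}(\theta_k,\lambda_k))$, and each piece can be attacked with a different tool.

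For the first piece $\omega_k - \tilde{\omega}_k$, I would apply Cauchy--Schwarz together with $\|\nabla_\theta \log \pi_{\theta_k}(a|s)\| \leq G$ (Assumption~\ref{ass_score}) to obtain
\[
    \mathbf{E}\|\omega_k - \tilde{\omega}_k\|^2 \;\leq\; \frac{G^2}{H}\sum_{t=t_k}^{t_{k+1}-1} \mathbf{E}\!\left[\bigl(\hat{A}_{\mathrm{L},\lambda_k}^{\pi_{\theta_k}}(s_t,a_t) - A_{\mathrm{L},\lambda_k}^{\pi_{\theta_k}}(s_t,a_t)\bigr)^2\right].
\]
The key step is to take conditional expectation over $(s_t,a_t)$ roughly distributed as $d^{\pi_{\theta_k}}(s)\pi_{\theta_k}(a|s)$ (modulo a small mixing correction) and then invoke Lemma~\ref{lemma_good_estimator}. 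The crucial algebraic cancellation is
\[
    \sum_{s,a} d^{\pi_{\theta_k}}(s)\pi_{\theta_k}(a|s)\cdot\frac{1}{\pi_{\theta_k}(a|s)} \;=\; \sum_{s,a} d^{\pi_{\theta_k}}(s) \;=\; A,
\]
which yields a bound of order $\tilde{\mathcal{O}}\bigl(A G^2 t_{\mathrm{mix}}^2 /(\delta^2 T^{\xi})\bigr)$, matching the claim. The main obstacle here is that the samples $(s_t,a_t)$ used in the outer sum are the same samples that feed Algorithm~\ref{alg:estQ}; I would handle this by either (i) noting that Lemma~\ref{lemma_good_estimator}'s bound is uniform over $(s,a)$ so the dependence is benign after conditioning, or (ii) replicating the ``imaginary-reset'' change-of-measure argument used in the proof of Lemma~\ref{lemma_good_estimator} to decouple the estimator from the index $t$, picking up only a $1+\mathcal{O}(1/T)$ factor.

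For the second piece $\tilde{\omega}_k - \nabla_\theta J_{\mathrm{L}}(\theta_k,\lambda_k)$, I would argue by mixing: each summand is bounded almost surely by $\mathcal{O}(G t_{\mathrm{mix}}/\delta)$ since $|A_{\mathrm{L},\lambda_k}^{\pi_{\theta_k}}|\leq \mathcal{O}(t_{\mathrm{mix}}/\delta)$ via the auxiliary lemmas on bias and $\lambda_k \leq 2/\delta$, and its expectation under the Markov chain's stationary distribution equals $\nabla_\theta J_{\mathrm{L}}(\theta_k,\lambda_k)$. The deviation is then controlled via the geometric mixing bound (Assumption~\ref{ass_1}) in the style of Lemma~\ref{lem:bound_expected_bellman}, yielding a bound of order $\mathcal{O}(G^2 t_{\mathrm{mix}}^2/(\delta^2 H))$, which is dominated by the first piece since $H = 16 t_{\mathrm{hit}} t_{\mathrm{mix}} T^{\xi}(\log T)^2 \gg T^{\xi}$.

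Putting both bounds together through the $(a+b)^2$ inequality gives the advertised rate $\tilde{\mathcal{O}}(AG^2 t_{\mathrm{mix}}^2/(\delta^2 T^{\xi}))$. I expect the bookkeeping obstacle to be the sample-reuse issue in the first piece, since $\hat{A}_{\mathrm{L},\lambda_k}^{\pi_{\theta_k}}(s_t,a_t)$ depends on the whole trajectory including $(s_t,a_t)$; the cleanest fix is to quote Lemma~\ref{lemma_good_estimator} as a pointwise-in-$(s,a)$ bound and then integrate against the (approximately stationary) law of $(s_t,a_t)$, absorbing the $1/\pi_{\theta_k}(a|s)$ singularity exactly through the cancellation sketched above.
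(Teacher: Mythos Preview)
Your proposal is correct and matches the paper's proof essentially line-by-line: the same intermediate quantity (the paper calls it $\bar{\omega}_k$), the same split, the same Jensen/Assumption~\ref{ass_score} bound on the first piece followed by Lemma~\ref{lemma_good_estimator} with the $\pi_{\theta_k}(a|s)$ cancellation producing the factor $A$, and a mixing-based concentration for the second piece. The only cosmetic difference is that the paper dispatches the second piece by invoking the Dorfman--Koren trajectory-average lemma (\pref{lem:lemma_aux_7} in the auxiliary section) rather than a direct geometric-mixing computation; both routes give a term of order $\tilde{\mathcal{O}}(G^2t_{\mathrm{mix}}^2/(\delta^2 t_{\mathrm{hit}}T^{\xi}))$, which is dominated by the first piece as you note.
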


\begin{proof} 
    Note the expression of the gradient estimator from \eqref{eq_grad_estimate}. Define:
    \begin{align}
        &\bar{\omega}_k= \dfrac{1}{H}\sum_{t=t_k}^{t_{k+1}-1}A_{\mathrm{L},\lambda_k}^{\pi_{\theta_k}}(s_{t}, a_{t})\nabla_{\theta}\log \pi_{\theta_k}(a_{t}|s_{t})
    \end{align}
 
    Recall the expression of the true gradient in Lemma \ref{lemma_grad_compute}. Assumption \ref{ass_score} and Lemma \ref{lemma_aux_3}, establish that $|A_{\mathrm{L},\lambda_k}^{\pi_{\theta_k}}(s, a)\nabla_{\theta}\log\pi_{\theta_k}(a|s)|\leq \mathcal{O}(\frac{1}{\delta} t_{\mathrm{mix}}G)$, $\forall (s, a)\in \mathcal{S}\times \mathcal{A}$ which implies    $|\nabla_{\theta}J_{\mathrm{L}}(\theta_k, \lambda_k)|\leq \mathcal{O}(\frac{1}{\delta} t_{\mathrm{mix}}G)$. Applying Lemma \ref{lemma_aux_6}, we, thus, arrive at,
    \begin{align}
        \label{eq_appndx_67_}
        \begin{split}
            \mathbf{E}\left[\norm{\bar{\omega}_k-\nabla_{\theta}J_{\mathrm{L}}(\theta_k, \lambda_k)}^2\right]&\leq \mathcal{O}\left(\frac{1}{\delta^2}G^{2}t^2_{\mathrm{mix}}\log T \times \dfrac{t_{\mathrm{mix}}\log T}{H}\right)\\
            &=\mathcal{\Tilde{O}}\left(\dfrac{G^2t_{\mathrm{mix}}^2}{\delta^2t_{\mathrm{hit}}T^\xi}\right)
        \end{split}
    \end{align}
	
    Finally, the term, $\mathbf{E}\norm{\omega_k-\bar{\omega}_k}^2$ can be bounded as follows.
    \begin{align}
        \label{eq_appndx_68_}
        \begin{split}
            &\mathbf{E}\norm{\omega_k-\bar{\omega}_k}^2\\
            &=\mathbf{E}\left[\bigg\Vert\dfrac{1}{H}\sum_{t=t_k}^{t_{k+1}-1}\hat{A}_{\mathrm{L},\lambda_k}^{\pi_{\theta_k}}(s_{t}, a_{t})\nabla_{\theta}\log \pi_{\theta_k}(a_{t}|s_{t})\right.\\
            &\hspace{2cm} \left. -\dfrac{1}{H}\sum_{t=t_k}^{t_{k+1}-1}A_{\mathrm{L},\lambda_k}^{\pi_{\theta_k}}(s_{t}, a_{t})\nabla_{\theta}\log \pi_{\theta_k}(a_{t}|s_{t})\bigg\Vert^2\right]\\
            &\overset{(a)}{\leq} \dfrac{G^2}{H}\sum_{t=t_k}^{t_{k+1}-1}\mathbf{E}\left[\left(\hat{A}_{\mathrm{L},\lambda_k}^{\pi_{\theta_k}}(s_t, a_t)-A_{\mathrm{L},\lambda_k}^{\pi_{\theta_k}}(s_t, a_t)\right)^2\right]\\
            &\leq \dfrac{G^2}{H}\sum_{t=t_k}^{t_{k+1}-1}\mathbf{E}\left[\sum_{a}\pi_{\theta_k}(a|s_t)\mathbf{E}\left[\left(\hat{A}_{\mathrm{L},\lambda_k}^{\pi_{\theta_k}}(s_t, a)-A_{\mathrm{L},\lambda_k}^{\pi_{\theta_k}}(s_t, a)\right)^2\bigg| s_t\right]\right]\\
            &\overset{(b)}{\leq}\mathcal{O}\left(\dfrac{AG^2t^2_{\mathrm{mix}}(\log T)^2}{\delta^2 T^\xi}\right)
	\end{split}
    \end{align}
    where $(a)$ follows from Assumption \ref{ass_score} and Jensen's inequality while $(b)$ results from Lemma \ref{lemma_good_estimator}. Using $(\ref{eq_appndx_67_})$, $(\ref{eq_appndx_68_})$, we conclude the result.
\end{proof}

Lemma \ref{lemma_grad_est_bias} claims that the gradient estimation error can be bounded as $\tilde{\mathcal{O}}(T^{-\xi})$. This result will be later used in proving the global convergence of our algorithm.

\begin{assumption}
    \label{ass_transfer_error}
    Let the transferred compatible function approximation error be defined as follows.
    \begin{align}
        \label{eq:transfer_error}
        L_{d^{\pi^*},\pi^*}(\omega^*_{\theta, \lambda},\theta, \lambda)=\mathbf{E}_{(s, a)\sim \nu^{\pi^*}}\bigg[\bigg(\nabla_\theta\log\pi_{\theta}(a\vert s)\cdot\omega^*_{\theta, \lambda}-A_{\mathrm{L}, \lambda}^{\pi_\theta}(s,a)\bigg)^2\bigg]
    \end{align}
    where $\pi^*=\pi_{\theta^*}$, $\theta^*$ solves $\eqref{eq:def_constrained_optimization}$, $\nu^{\pi}(s, a)\triangleq d^{\pi}(s)\pi(a|s)$, $\forall s, a, \pi$ and
    \begin{align}
        \label{eq:NPG_direction}
        \omega^*_{\theta, \lambda}=\arg\min_{\omega\in\mathbb{R}^{\mathrm{d}}}~\mathbf{E}_{(s, a)\sim \nu^{\pi_{\theta}}}\bigg[\bigg(\nabla_\theta\log\pi_{\theta}(a\vert s)\cdot\omega-A_{\mathrm{L}, \lambda}^{\pi_{\theta}}(s,a)\bigg)^2\bigg]
    \end{align}
    We presume that $L_{d^{\pi^*},\pi^*}(\omega^*_{\theta, \lambda},\theta, \lambda)\leq \epsilon_{\mathrm{bias}}$ for arbitrary $\lambda\in (0, \frac{2}{\delta}]$ and $\theta\in\mathbb{R}^{\mathrm{d}}$ where $\epsilon_{\mathrm{bias}}$ is a positive constant.
\end{assumption}

\begin{remark}
    The transferred compatible function approximation error quantifies the expressivity of the parameterized policy class. One can prove  that $\epsilon_{\mathrm{bias}}=0$ for softmax parameterization \citep{agarwal2021theory} and linear MDPs \citep{Chi2019}. If the policy class is restricted i.e., it does not contain all stochastic policies, $\epsilon_{\mathrm{bias}}$ is strictly positive. However, if the parameterization is done by a rich neural network, then $\epsilon_{bias}$ can be assumed to be negligible \citep{Lingxiao2019}. Such assumptions are common in the literature \citep{liu2020improved,agarwal2021theory}. 
\end{remark}

\begin{remark}
    Note that $\omega^*_{\theta, \lambda}$ defined in \eqref{eq:NPG_direction} can be re-written as,
    \begin{align*}
        \omega^*_{\theta, \lambda} = F_{\rho}(\theta)^{\dagger} \mathbf{E}_{s\sim d_\rho^{\pi_{\theta}}}\mathbf{E}_{a\sim\pi_{\theta}( s)}\left[\nabla_{\theta}\log\pi_{\theta}(a|s)A_{\mathrm{L},\lambda}^{\pi_{\theta}}(s, a)\right]
    \end{align*}
    where $\dagger$ is the Moore-Penrose pseudoinverse operation and $F_{\rho}(\theta)$ is the Fisher information matrix  defined as,
    \begin{align*}
        F_{\rho}(\theta) = \mathbf{E}_{s\sim d_\rho^{\pi_{\theta}}}\mathbf{E}_{a\sim\pi_{\theta}(\cdot\vert s)}[\nabla_{\theta}\log\pi_{\theta}(a|s)(\nabla_{\theta}\log\pi_{\theta}(a|s))^T]
    \end{align*}
\end{remark}

\begin{assumption}
    \label{ass_4}
    There exists a constant $\mu_F>0$ such that $F_{\rho}(\theta)-\mu_F I_{\mathrm{d}}$ is positive semidefinite where $I_{\mathrm{d}}$ is an identity matrix of dimension, $\mathrm{d}$.
\end{assumption}

Assumption \ref{ass_4} is also common in the policy gradient analysis \citep{liu2020improved}. This holds for Gaussian policies with a linearly parameterized mean. \cite{mondal2023mean} provides a concrete example of a class of policies obeying assumptions \ref{ass_score}-\ref{ass_4}. The Lagrange difference lemma stated below is an important result in proving the global convergence.
\begin{lemma}
    \label{lem_performance_diff}
    For any two policies $\pi_\theta$, $\pi_{\theta'}$, the following holds $\forall \lambda>0$.
    \begin{equation*}
        J_{\mathrm{L}}(\theta,\lambda)-J_{\mathrm{L}}(\theta',\lambda)=\mathbf{E}_{s\sim d^{\pi_\theta}}\mathbf{E}_{a\sim\pi_\theta( s)}\big[A_{\mathrm{L},\lambda}^{\pi_{\theta'}}(s,a)\big]
    \end{equation*}
\end{lemma}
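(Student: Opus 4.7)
The plan is to first reduce to the unweighted (per-objective) case via linearity, then invoke the average-reward Bellman equation together with the stationarity property $P^{\pi_\theta} d^{\pi_\theta} = d^{\pi_\theta}$ to telescope away the value-function term. Since $J_{\mathrm{L}}(\theta,\lambda) = J_r(\theta) - \lambda J_c(\theta)$ and $A_{\mathrm{L},\lambda}^{\pi_{\theta'}} = A_r^{\pi_{\theta'}} - \lambda A_c^{\pi_{\theta'}}$, it suffices to prove the identity
\begin{equation*}
J_g(\theta) - J_g(\theta') = \mathbf{E}_{s\sim d^{\pi_\theta}, a\sim \pi_\theta(s)}\left[A_g^{\pi_{\theta'}}(s,a)\right]
\end{equation*}
separately for $g \in \{r, c\}$ and then combine with coefficients $1$ and $-\lambda$.

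For a fixed $g$, I would start from the right-hand side and expand the advantage using its definition $A_g^{\pi_{\theta'}}(s,a) = Q_g^{\pi_{\theta'}}(s,a) - V_g^{\pi_{\theta'}}(s)$. Note that $\mathbf{E}_{a\sim \pi_\theta(s)}[V_g^{\pi_{\theta'}}(s)] = V_g^{\pi_{\theta'}}(s)$ since the $V$-term does not depend on $a$. Next, I would substitute the average-reward Bellman equation \eqref{eq_bellman} for $Q_g^{\pi_{\theta'}}$, giving
\begin{equation*}
Q_g^{\pi_{\theta'}}(s,a) = g(s,a) - J_g(\theta') + \sum_{s'} P(s'\mid s,a)\, V_g^{\pi_{\theta'}}(s').
\end{equation*}
Taking expectation over $s\sim d^{\pi_\theta}$ and $a\sim \pi_\theta(s)$ and using the identity \eqref{eq_r_pi_theta} yields $\mathbf{E}[g(s,a)] = J_g(\theta)$, while the transition term becomes $\sum_{s'}\left(\sum_s d^{\pi_\theta}(s) P^{\pi_\theta}(s,s')\right)V_g^{\pi_{\theta'}}(s')$.

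The crucial step is then to invoke the stationarity relation from Assumption \ref{ass_1}, namely $P^{\pi_\theta} d^{\pi_\theta} = d^{\pi_\theta}$, which collapses the inner sum to $d^{\pi_\theta}(s')$. Consequently the transition term equals $\mathbf{E}_{s'\sim d^{\pi_\theta}}[V_g^{\pi_{\theta'}}(s')]$, which exactly cancels the $V_g^{\pi_{\theta'}}(s)$ contribution originating from the advantage decomposition. What remains is precisely $J_g(\theta) - J_g(\theta')$. Multiplying the cost version by $-\lambda$, summing with the reward version, and using linearity of expectation yields the claim.

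I do not expect a serious obstacle in this proof; the only subtle point is ensuring the stationarity cancellation is done correctly (and that ergodicity guarantees $d^{\pi_\theta}$ is well-defined and independent of $\rho$, which is already in Assumption \ref{ass_1}). Another point worth noting is that the bias functions $V_g^{\pi_{\theta'}}$ are uniquely fixed by the normalization $\sum_s d^{\pi_{\theta'}}(s) V_g^{\pi_{\theta'}}(s) = 0$ adopted in the preliminaries, but this normalization is actually immaterial here: adding any constant to $V_g^{\pi_{\theta'}}$ leaves $A_g^{\pi_{\theta'}}$ unchanged, so the identity is invariant under the choice of gauge.
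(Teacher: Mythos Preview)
Your proof is correct and rests on the same mathematics as the paper's: the Bellman equation for $Q_g^{\pi_{\theta'}}$ together with the stationarity relation $P^{\pi_\theta}d^{\pi_\theta}=d^{\pi_\theta}$ to cancel the value-function terms, followed by linearity in $g$ to pass to the Lagrangian. The only presentational difference is that the paper outsources the key identity $J_g^\pi-J_g^{\pi'}=\sum_{s,a}d^\pi(s)(\pi(a|s)-\pi'(a|s))Q_g^{\pi'}(s,a)$ to an auxiliary lemma (\pref{lemma_aux_5}, taken from prior work) and then rewrites it as an advantage expectation, whereas you give a fully self-contained derivation starting from the right-hand side; your route is slightly more transparent since it avoids citing an external result, while the paper's is a line shorter once that lemma is in hand.
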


\begin{proof}
    Using the Lemma \ref{lemma_aux_5}, it is obvious to see that
    \begin{equation}
        \begin{aligned}
            J_g^{\pi}-J_g^{\pi'}&=\sum_{s}\sum_{a}d^{\pi}(s)(\pi(a|s)-\pi'(a|s))Q_g^{\pi'}(s,a)\\
            &=\sum_{s}\sum_{a}d^{\pi}(s)\pi(a|s)Q_g^{\pi'}(s,a)-\sum_{s}d^{\pi}(s)V_g^{\pi'}(s)\\
            &=\sum_{s}\sum_{a}d^{\pi}(s)\pi(a|s)Q_g^{\pi'}(s,a)-\sum_{s}\sum_{a}d^{\pi}(s)\pi(a|s)V_g^{\pi'}(s)\\
            &=\sum_{s}\sum_{a}d^{\pi}(s)\pi(a|s)[Q_g^{\pi'}(s,a)-V_g^{\pi'}(s)]\\
            &=\mathbf{E}_{s\sim d^{\pi}}\mathbf{E}_{a\sim\pi(\cdot\vert s)}\big[A_g^{\pi'}(s,a)\big]
	\end{aligned}
    \end{equation}
    Since the above equation works for both reward and constraint, we can conclude the lemma by the definition of $J_{\mathrm{L}}(\cdot, \lambda)$ and $A_{\mathrm{L},\lambda}$.
\end{proof}

We now present a general framework for the convergence analysis. 

\begin{lemma}
    \label{lem_framework} 
    If the parameters $\{\theta_k, \lambda_k\}_{k=1}^K$ are updated via $\eqref{udpates_algorotihm}$ and assumptions \ref{ass_score}, \ref{ass_transfer_error}, and \ref{ass_4} hold, then the following holds for any $K$,
    \begin{equation}
        \label{eq:general_bound}
        \begin{split}
            &\frac{1}{K}\mathbf{E}\sum_{k=1}^{K}\bigg(J_{\mathrm{L}}(\theta^*, \lambda_k)-J_{\mathrm{L}}(\theta_k,\lambda_k)\bigg)\leq \sqrt{\epsilon_{\mathrm{bias}}} + \frac{G}{K}\sum_{k}^{K}\Vert(\omega_k-\omega^*_k)\Vert\\
            &+\frac{B\alpha}{2K}\sum_{k=1}^{K}\Vert\omega_k\Vert^2
            +\frac{1}{\alpha K}\mathbf{E}_{s\sim d_\rho^{\pi^*}}[KL(\pi^*(\cdot\vert s)\Vert\pi_{\theta_1}(\cdot\vert s))]
        \end{split}
    \end{equation}
    where $\omega^*_k\triangleq \omega^*_{\theta_k, \lambda_k}$, $\omega^*_{\theta_k,\lambda_k}$ is defined via \eqref{eq:NPG_direction}, and $\pi^*=\pi_{\theta^*}$ where $\theta^*$ is the optimal parameter solving $\eqref{eq:def_constrained_optimization}$.
\end{lemma}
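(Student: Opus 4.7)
The plan is to follow the standard natural policy gradient (NPG) style telescoping argument, adapted to the constrained Lagrangian setting. The central idea is to track the KL divergence $\Phi_k \triangleq \mathbf{E}_{s\sim d_\rho^{\pi^*}}\sbr{\mathrm{KL}(\pi^*(\cdot|s)\Vert\pi_{\theta_k}(\cdot|s))}$ between the optimal policy and the current iterate, show that its one-step decrease dominates the per-round suboptimality $J_{\mathrm L}(\theta^*,\lambda_k)-J_{\mathrm L}(\theta_k,\lambda_k)$ up to error terms, and then telescope over $k=1,\dots,K$.

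First, I would expand $\Phi_k-\Phi_{k+1}$ using the update $\theta_{k+1}=\theta_k+\alpha\omega_k$. By the $B$-smoothness of $\log\pi_\theta$ from Assumption \ref{ass_score}, a second-order Taylor expansion gives, pointwise in $(s,a)$,
\begin{equation*}
\log\pi_{\theta_{k+1}}(a|s)-\log\pi_{\theta_k}(a|s)\ \geq\ \alpha\,\nabla_\theta\log\pi_{\theta_k}(a|s)^\top\omega_k-\tfrac{B\alpha^2}{2}\Vert\omega_k\Vert^2,
\end{equation*}
so taking expectation under $(s,a)\sim\nu^{\pi^*}$ (with $\nu^{\pi^*}(s,a)=d_\rho^{\pi^*}(s)\pi^*(a|s)$) yields
\begin{equation*}
\Phi_k-\Phi_{k+1}\ \geq\ \alpha\,\mathbf{E}_{(s,a)\sim\nu^{\pi^*}}\sbr{\nabla_\theta\log\pi_{\theta_k}(a|s)^\top\omega_k}-\tfrac{B\alpha^2}{2}\Vert\omega_k\Vert^2.
\end{equation*}

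Next, I would introduce $\omega_k^*=\omega^*_{\theta_k,\lambda_k}$ from \eqref{eq:NPG_direction} and split the inner product into $\nabla\log\pi_{\theta_k}^\top\omega_k=\nabla\log\pi_{\theta_k}^\top(\omega_k-\omega_k^*)+\nabla\log\pi_{\theta_k}^\top\omega_k^*$. For the first piece, Cauchy--Schwarz together with $\Vert\nabla\log\pi_{\theta_k}(a|s)\Vert\leq G$ gives a $G\Vert\omega_k-\omega_k^*\Vert$ contribution. For the second piece, I add and subtract $A_{\mathrm L,\lambda_k}^{\pi_{\theta_k}}(s,a)$; the residual $\mathbf{E}_{\nu^{\pi^*}}[\nabla\log\pi_{\theta_k}^\top\omega_k^*-A_{\mathrm L,\lambda_k}^{\pi_{\theta_k}}]$ is controlled in absolute value by $\sqrt{L_{d^{\pi^*},\pi^*}(\omega_k^*,\theta_k,\lambda_k)}\leq\sqrt{\epsilon_{\mathrm{bias}}}$ via Jensen's inequality and Assumption \ref{ass_transfer_error}. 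Finally, by the Lagrange difference lemma \ref{lem_performance_diff}, $\mathbf{E}_{\nu^{\pi^*}}[A_{\mathrm L,\lambda_k}^{\pi_{\theta_k}}(s,a)]=J_{\mathrm L}(\theta^*,\lambda_k)-J_{\mathrm L}(\theta_k,\lambda_k)$. Rearranging,
\begin{equation*}
J_{\mathrm L}(\theta^*,\lambda_k)-J_{\mathrm L}(\theta_k,\lambda_k)\ \leq\ \sqrt{\epsilon_{\mathrm{bias}}}+G\Vert\omega_k-\omega_k^*\Vert+\tfrac{1}{\alpha}(\Phi_k-\Phi_{k+1})+\tfrac{B\alpha}{2}\Vert\omega_k\Vert^2.
\end{equation*}

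Summing this inequality over $k=1,\dots,K$, the KL terms telescope to $\Phi_1-\Phi_{K+1}\leq\Phi_1=\mathbf{E}_{s\sim d_\rho^{\pi^*}}[\mathrm{KL}(\pi^*(\cdot|s)\Vert\pi_{\theta_1}(\cdot|s))]$ (using $\Phi_{K+1}\geq 0$), dividing by $K$ and taking expectation delivers exactly \eqref{eq:general_bound}. The only genuinely non-routine step is the second one: correctly passing from the pointwise smoothness inequality under $\nu^{\pi^*}$ to the Lagrange-difference form, ensuring the compatible-function-approximation error is measured under the \emph{optimal} occupancy (so that Assumption \ref{ass_transfer_error} applies), and simultaneously handling the estimation gap $\omega_k-\omega_k^*$; the remaining bounds are direct applications of Cauchy--Schwarz, Jensen, and telescoping. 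Note that $\Phi_k$ is well-defined because $\pi^*$ is a fixed policy independent of the stochasticity of the algorithm, so the expectation on the left can be brought outside after the deterministic per-$k$ inequality is established.
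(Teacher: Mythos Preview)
Your proposal is correct and follows essentially the same argument as the paper: the paper likewise lower-bounds the one-step KL decrease via the $B$-smoothness of $\log\pi_\theta$, splits the resulting inner product by inserting $\omega_k^*$, controls the compatible-function residual with Jensen and Assumption~\ref{ass_transfer_error}, invokes Lemma~\ref{lem_performance_diff} for the Lagrange difference, bounds the $\omega_k-\omega_k^*$ term by $G\Vert\omega_k-\omega_k^*\Vert$, and telescopes. The only cosmetic difference is that you name the potential $\Phi_k$ explicitly, whereas the paper writes out the KL differences inline.
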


\begin{proof}
    We start with the definition of KL divergence.
    \begin{equation}
        \begin{aligned}
            &\mathbf{E}_{s\sim d^{\pi^*}}[KL(\pi^*(\cdot\vert s)\Vert\pi_{\theta_k}(\cdot\vert s))-KL(\pi^*(\cdot\vert s)\Vert\pi_{\theta_{k+1}}(\cdot\vert s))]\\
            &=\mathbf{E}_{s\sim d^{\pi^*}}\mathbf{E}_{a\sim\pi^*(\cdot\vert s)}\bigg[\log\frac{\pi_{\theta_{k+1}(a\vert s)}}{\pi_{\theta_k}(a\vert s)}\bigg]\\
            &\overset{(a)}\geq\mathbf{E}_{(s, a)\sim \nu^{\pi^*}}[\nabla_\theta\log\pi_{\theta_k}(a\vert s)\cdot(\theta_{k+1}-\theta_k)]-\frac{B}{2}\Vert\theta_{k+1}-\theta_k\Vert^2\\
            &=\alpha\mathbf{E}_{(s, a)\sim \nu^{\pi^*}}[\nabla_{\theta}\log\pi_{\theta_k}(a\vert s)\cdot\omega_k]-\frac{B\alpha^2}{2}\Vert\omega_k\Vert^2\\
            &=\alpha\mathbf{E}_{(s, a)\sim \nu^{\pi^*}}[\nabla_\theta\log\pi_{\theta_k}(a\vert s)\cdot\omega^*_k]\\
            &\hspace{1cm}+\alpha\mathbf{E}_{(s, a)\sim \nu^{\pi^*}}[\nabla_\theta\log\pi_{\theta_k}(a\vert s)\cdot(\omega_k-\omega^*_k)]-\frac{B\alpha^2}{2}\Vert\omega_k\Vert^2\\
            &=\alpha[J_{\mathrm{L}}(\theta^*,\lambda_k)-J_{\mathrm{L}}(\theta_k,\lambda_k)]\\
            &\hspace{1cm}+\alpha\mathbf{E}_{(s, a)\sim \nu^{\pi^*}}[\nabla_\theta\log\pi_{\theta_k}(a\vert s)\cdot\omega^*_k]-\alpha[J_{\mathrm{L}}(\theta^*,\lambda_k)-J_{\mathrm{L}}(\theta_k,\lambda_k)]\\
            &\hspace{1cm}+\alpha\mathbf{E}_{(s, a)\sim \nu^{\pi^*}}[\nabla_\theta\log\pi_{\theta_k}(a\vert s)\cdot(\omega_k-\omega^*_k)]-\frac{B\alpha^2}{2}\Vert\omega_k\Vert^2\\		
            &\overset{(b)}=\alpha[J_{\mathrm{L}}(\theta^*,\lambda_k)-J_{\mathrm{L}}(\theta_k,\lambda_k)]\\
            &\hspace{1cm}+\alpha\mathbf{E}_{(s, a)\sim \nu^{\pi^*}}\bigg[\nabla_\theta\log\pi_{\theta_k}(a\vert s)\cdot\omega^*_k-A_{\mathrm{L},\lambda_k}^{\pi_{\theta_k}}(s,a)\bigg]\\
            &\hspace{1cm}+\alpha\mathbf{E}_{(s, a)\sim \nu^{\pi^*}}[\nabla_\theta\log\pi_{\theta_k}(a\vert s)\cdot(\omega_k-\omega^*_k)]-\frac{B\alpha^2}{2}\Vert\omega_k\Vert^2\\
            &\overset{(c)}\geq\alpha[J_{\mathrm{L}}(\theta^*,\lambda_k)-J_{\mathrm{L}}(\theta_k,\lambda_k)]\\
            &\hspace{1cm}-\alpha\sqrt{\mathbf{E}_{(s, a)\sim \nu^{\pi^*}}\bigg[\bigg(\nabla_\theta\log\pi_{\theta_k}(a\vert s)\cdot\omega^*_k-A_{L,\lambda_k}^{\pi_{\theta_k}}(s,a)\bigg)^2\bigg]}\\
            &\hspace{1cm}-\alpha\mathbf{E}_{(s, a)\sim \nu^{\pi^*}}\Vert\nabla_\theta\log\pi_{\theta_k}(a\vert s)\Vert_2\Vert(\omega_k-\omega^*_k)\Vert-\frac{B\alpha^2}{2}\Vert\omega_k\Vert^2\\
            &\overset{(d)}\geq\alpha[J_{\mathrm{L}}(\theta^*,\lambda_k)-J_{\mathrm{L}}(\theta_k,\lambda_k)]\\
            &\hspace{1cm}-\alpha\sqrt{\epsilon_{\mathrm{bias}}}-\alpha G\Vert(\omega_k-\omega^*_k)\Vert-\frac{B\alpha^2}{2}\Vert\omega_k\Vert^2\\
        \end{aligned}	
    \end{equation}
    where $(a)$ and $(b)$ follows from Assumption \ref{ass_score} and  Lemma \ref{lem_performance_diff} respectively. Step $(c)$ uses the convexity of the function $f(x)=x^2$. Finally, step $(d)$ comes from Assumption \ref{ass_transfer_error}. Rearranging items, we have
    \begin{equation}
        \begin{split}
            &J_{\mathrm{L}}(\theta^*,\lambda_k)-J_{\mathrm{L}}(\theta_k,\lambda_k)\leq \sqrt{\epsilon_{\mathrm{bias}}}+ G\Vert(\omega_k-\omega^*_k)\Vert+\frac{B\alpha}{2}\Vert\omega_k\Vert^2\\
            &+\frac{1}{\alpha}\mathbf{E}_{s\sim d_\rho^{\pi^*}}[KL(\pi^*(\cdot\vert s)\Vert\pi_{\theta_k}(\cdot\vert s))-KL(\pi^*(\cdot\vert s)\Vert\pi_{\theta_{k+1}}(\cdot\vert s))]
        \end{split}
    \end{equation}
    Summing from $k=1$ to $K$, using the non-negativity of KL divergence and dividing the resulting expression by $K$, we get the desired result.
\end{proof}

Lemma \ref{lem_framework} proves that the optimality error of the Lagrange sequence can be bounded by the average first-order and second-order norms of the intermediate gradients. Note the presence of the $\epsilon_{\mathrm{bias}}$ term in the result. If the policy class is severely restricted, the optimality bound loses its importance. Consider the expectation of the second term in \eqref{eq:general_bound}. Note the following set of inequalities.
\begin{equation}
    \label{eq_second_term_bound}
    \begin{split}
        &\bigg(\frac{1}{K}\sum_{k=1}^{K}\mathbf{E}\Vert\omega_k-\omega^*_k\Vert\bigg)^2\leq \frac{1}{K}\sum_{k=1}^{K}\mathbf{E}\bigg[\Vert\omega_k-\omega^*_k\Vert^2\bigg]\\
        &=\frac{1}{K}\sum_{k=1}^{K}\mathbf{E}\bigg[\Vert\omega_k-F_\rho(\theta_k)^\dagger\nabla_\theta J_{\mathrm{L}}(\theta_k,\lambda_k)\Vert^2\bigg]\\
        &\leq \frac{2}{K}\sum_{k=1}^{K}\Bigg\lbrace \mathbf{E}\bigg[\Vert\omega_k-\nabla_\theta J_{\mathrm{L}}(\theta_k,\lambda_k)\Vert^2\bigg]\\
        &\hspace{2cm}+ \mathbf{E}\bigg[\Vert \nabla_\theta J_{\mathrm{L}}(\theta_k,\lambda_k)- F_\rho(\theta_k)^\dagger\nabla_\theta J_{\mathrm{L}}(\theta_k,\lambda_k)\Vert^2\bigg]\Bigg\rbrace \\
        &\overset{(a)}{\leq} \frac{2}{K}\sum_{k=1}^{K}\mathbf{E}\bigg[\Vert \omega_k-\nabla_\theta J_{\mathrm{L}}(\theta_k,\lambda_k)\Vert^2\bigg]\\
        &\hspace{2cm}+\frac{2}{K}\sum_{k=1}^{K}\left(1+\dfrac{1}{\mu_F^2}\right)\mathbf{E}\bigg[\Vert \nabla_\theta J_{\mathrm{L}}(\theta_k,\lambda_k)\Vert^2\bigg]
    \end{split}
\end{equation}
where $(a)$ follows from Assumption \ref{ass_4}. The expectation of the third term in \eqref{eq:general_bound} can be bounded as follows.
\begin{equation}
    \label{eq_third_term_bound}
    \begin{split}
        \dfrac{1}{K}\sum_{k=1}^{K}\mathbf{E}\bigg[\Vert\omega_k\Vert^2\bigg] &\leq \dfrac{1}{K}\sum_{k=1}^{K}\mathbf{E}\left[\norm{\nabla_\theta J_{\mathrm{L}}(\theta_k,\lambda_k)}^2\right] \\
        &+ \dfrac{1}{K}\sum_{k=1}^{K}\mathbf{E}\bigg[\Vert \omega_k-\nabla_\theta J_{\mathrm{L}}(\theta_k,\lambda_k)\Vert^2\bigg] 
    \end{split}
\end{equation}

In both $\eqref{eq_second_term_bound}$, $\eqref{eq_third_term_bound}$,  $\mathbf{E}\norm{\omega_k-\nabla_{\theta}J_{\mathrm{L}}(\theta_k,\lambda_k)}^2$ is bounded by Lemma \ref{lemma_grad_est_bias}. To bound  $\mathbf{E}\norm{\nabla_\theta J_L(\theta_k,\lambda_k)}^2$, the following lemma is applied.

\begin{lemma} 
    \label{lemma:41ss}
    Let $J_{g}(\cdot)$ be $L$-smooth, $\forall g\in\{r, c\}$ and $\alpha = \frac{1}{4L(1+\frac{2}{\delta})}$. Then the following inequality holds.
    \begin{align}
       \label{eq_24_new}
        \dfrac{1}{K}\sum_{k=1}^{K} \norm{\nabla_\theta J_{\mathrm{L}}(\theta_k,\lambda_k)}^2 \leq \frac{288L}{\delta^2 K} + \dfrac{3}{K}\sum_{k=1}^{K}\Vert\nabla_\theta J_{\mathrm{L}}(\theta_k,\lambda_k) - \omega_k\Vert^2+\beta
    \end{align}
\end{lemma}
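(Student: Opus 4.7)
The plan is to invoke the smoothness-based ascent lemma for the Lagrangian along the iterates, then sum and telescope, being careful that the Lagrange multiplier changes at each step. The key preliminary observation is that $J_{\mathrm{L}}(\cdot,\lambda_k) = J_r - \lambda_k J_c$ is $L(1+\lambda_k)$-smooth, and since $\lambda_k\in[0,2/\delta]$ by the projection step in \eqref{udpates_algorotihm}, $J_{\mathrm{L}}(\cdot,\lambda_k)$ is $L_{\mathrm{eff}}$-smooth with $L_{\mathrm{eff}}\triangleq L(1+2/\delta)$. The stepsize is chosen precisely as $\alpha = 1/(4L_{\mathrm{eff}})$.

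First I would apply the standard ascent inequality at iteration $k$: using $\theta_{k+1}-\theta_k = \alpha\omega_k$,
\begin{equation*}
J_{\mathrm{L}}(\theta_{k+1},\lambda_k)\ge J_{\mathrm{L}}(\theta_k,\lambda_k)+\alpha\langle g_k,\omega_k\rangle-\tfrac{\alpha^2 L_{\mathrm{eff}}}{2}\|\omega_k\|^2,
\end{equation*}
where $g_k\triangleq \nabla_\theta J_{\mathrm{L}}(\theta_k,\lambda_k)$. Writing $\omega_k=g_k+e_k$ with $e_k\triangleq \omega_k-g_k$, I would use Young's inequality to obtain $\langle g_k,\omega_k\rangle\ge \tfrac12\|g_k\|^2-\tfrac12\|e_k\|^2$ and the bound $\|\omega_k\|^2\le 2\|g_k\|^2+2\|e_k\|^2$. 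Plugging in $\alpha^2 L_{\mathrm{eff}}=\alpha/4$ yields a per-step inequality of the form
\begin{equation*}
\tfrac{\alpha}{4}\|g_k\|^2 \le J_{\mathrm{L}}(\theta_{k+1},\lambda_k)-J_{\mathrm{L}}(\theta_k,\lambda_k)+\tfrac{3\alpha}{4}\|e_k\|^2,
\end{equation*}
which isolates the desired $\|g_k\|^2$ on the left.

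The main obstacle is that summing $J_{\mathrm{L}}(\theta_{k+1},\lambda_k)-J_{\mathrm{L}}(\theta_k,\lambda_k)$ over $k$ does not telescope directly because of the $\lambda_k$-shift between consecutive bounds. I would resolve this by the splitting
\begin{equation*}
J_{\mathrm{L}}(\theta_{k+1},\lambda_k)-J_{\mathrm{L}}(\theta_k,\lambda_k) = \bigl[J_{\mathrm{L}}(\theta_{k+1},\lambda_{k+1})-J_{\mathrm{L}}(\theta_k,\lambda_k)\bigr]+(\lambda_{k+1}-\lambda_k)J_c(\theta_{k+1}).
\end{equation*}
The first bracket telescopes to $J_{\mathrm{L}}(\theta_{K+1},\lambda_{K+1})-J_{\mathrm{L}}(\theta_1,\lambda_1)$, whose absolute value is $O(1/\delta)$ because $|J_r|,|J_c|\le 1$ and $\lambda_k\le 2/\delta$. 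For the second term, the dual update together with $|\hat J_c|\le 1$ gives $|\lambda_{k+1}-\lambda_k|\le \beta$, so $|(\lambda_{k+1}-\lambda_k)J_c(\theta_{k+1})|\le \beta$, contributing a total of at most $K\beta$ across the $K$ iterations.

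Combining the pieces and dividing through by $\alpha K/4$, with $1/\alpha = 4L(1+2/\delta) = O(L/\delta)$ absorbed into constants in front of $1/(\delta K)$ and $1/(\delta^2 K)$, gives the stated $288L/(\delta^2 K)$ leading term together with the variance term $\tfrac{3}{K}\sum_k\|g_k-\omega_k\|^2$ and an $O(\beta)$ residue from the multiplier drift. The asserted form of the bound then follows after using $\delta\le 1$ to merge the $1/\delta$ and $1/\delta^2$ contributions.
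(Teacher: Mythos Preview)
Your proposal is correct and follows essentially the same route as the paper: apply the $L(1+\lambda_k)$-smoothness ascent inequality, split $\omega_k=g_k+e_k$ and use Young's/Cauchy--Schwarz to isolate $\tfrac{\alpha}{4}\|g_k\|^2$, handle the $\lambda$-shift via $J_{\mathrm{L}}(\theta_{k+1},\lambda_{k+1})-J_{\mathrm{L}}(\theta_{k+1},\lambda_k)=(\lambda_k-\lambda_{k+1})J_c(\theta_{k+1})\ge -\beta$, telescope, and divide by $\alpha K/4$. The constants you sketch (telescoped term $\le 6/\delta$, then $\tfrac{4}{\alpha}\cdot\tfrac{6}{\delta}=\tfrac{96L(1+2/\delta)}{\delta}\le \tfrac{288L}{\delta^2}$) match the paper's exactly.
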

\begin{proof}
    By the $L$-smooth property of the objective function and constraint function, we know $J_{\mathrm{L}}(\cdot, \lambda)$ is a $L(1+\lambda)$-smooth function. Thus,
    \begin{align}
        \label{eq:3_chap_55}
        \begin{split}
            &J_{\mathrm{L}}(\theta_{k+1},\lambda_k)-J_{\mathrm{L}}(\theta_k,\lambda_k)\\
            &\geq \left<\nabla J_{\mathrm{L}}(\theta_k,\lambda_k),\theta_{k+1}-\theta_k\right>-\frac{L(1+\lambda_k)}{2}\Vert\theta_{k+1}-\theta_k\Vert^2\\
            &\overset{(a)} = \alpha \nabla J_{\mathrm{L}}(\theta_k,\lambda_k)^T \omega_k - \frac{L(1+\lambda_k) \alpha^2}{2} \norm{ \omega_k }^2 \\
            &= \alpha \norm{\nabla J_{\mathrm{L}}(\theta_k,\lambda_k)}^2 - \alpha \langle \nabla J_{\mathrm{L}}(\theta_k,\lambda_k) - \omega_k, \nabla J(\theta_k) \rangle \\
            &\hspace{1cm}- \frac{L(1+\lambda_k) \alpha^2}{2}\Vert \nabla J_{\mathrm{L}}(\theta_k,\lambda_k)-\omega_k-\nabla J_{\mathrm{L}}(\theta_k,\lambda_k)\Vert^2\\
            &\overset{(b)}\geq \alpha \norm{\nabla J_{\mathrm{L}}(\theta_k,\lambda_k)}^2 - \frac{\alpha}{2} \Vert\nabla J_{\mathrm{L}}(\theta_k,\lambda_k) - \omega_k\Vert^2\\
            &\hspace{1cm} -\frac{\alpha}{2}\Vert\nabla J_{\mathrm{L}}(\theta_k,\lambda_k)\Vert^2- L(1+\lambda_k)\alpha^2\Vert \nabla J_{\mathrm{L}}(\theta_k,\lambda_k)-\omega_k\Vert^2\\
            &\hspace{1cm}-L(1+\lambda_k)\alpha^2\Vert\nabla J_{\mathrm{L}}(\theta_k,\lambda_k)\Vert^2\\
            &= \left(\frac{\alpha}{2}-L(1+\lambda_k)\alpha^2\right) \norm{\nabla J_{\mathrm{L}}(\theta_k,\lambda_k)}^2\\
            &\hspace{1cm} - \left(\frac{\alpha}{2}+L(1+\lambda_k)\alpha^2\right) \Vert\nabla J_{\mathrm{L}}(\theta_k,\lambda_k) - \omega_k\Vert^2
        \end{split}
    \end{align}
    where step (a) holds from the fact that $\theta_{t+1} = \theta_k + \alpha \omega_k$ and step (b) holds due to the Cauchy-Schwarz inequality. Note that,
    \begin{align}
        \label{eq:3_chap_56}
        J_{\mathrm{L}}(\theta_{k+1}, \lambda_{k+1}) - J_{\mathrm{L}}(\theta_{k+1}, \lambda_k) \overset{(a)}{=} (\lambda_{k}-\lambda_{k+1})J_c(\theta_{k+1}) \overset{(b)}{\geq} -\beta
    \end{align}
    where (a) holds by the definition of $J_{\mathrm{L}}(\theta,\lambda)$ and step (b) is true because $|J_c(\theta)|\leq 1,\forall \theta$ and $|\lambda_{k+1}-\lambda_k|\leq \beta|\hat{J}_c(\theta_k)|\leq \beta$ where the last inequality uses the fact that $|\hat{J}_c(\theta_k)|\leq 1$. Adding \eqref{eq:3_chap_55} and \eqref{eq:3_chap_56}, we get,
    \begin{equation}
	\begin{aligned}
            &J_{\mathrm{L}}(\theta_{k+1},\lambda_{k+1})-J_{\mathrm{L}}(\theta_k,\lambda_k)\geq  \left(\frac{\alpha}{2}-L(1+\lambda_k)\alpha^2\right) \norm{\nabla J_{\mathrm{L}}(\theta_k,\lambda_k)}^2\\
            &\quad - \left(\frac{\alpha}{2}+L(1+\lambda_k)\alpha^2\right) \Vert\nabla J_L(\theta_k,\lambda_k) - \omega_k\Vert^2 -\beta 
        \end{aligned}
    \end{equation}
    Summing over $k\in\{ 1, \cdots, K\}$, we have,
    \begin{equation}
	\begin{aligned}
            &J_{\mathrm{L}}(\theta_{K+1},\lambda_{K+1})-J_{\mathrm{L}}(\theta_1,\lambda_1)\\
            &\geq -\beta K + \sum_{k=1}^{K}\left(\frac{\alpha}{2}-L(1+\lambda_k)\alpha^2\right) \norm{\nabla J_{\mathrm{L}}(\theta_k,\lambda_k)}^2\\
            &\quad - \sum_{k=1}^{K}\left(\frac{\alpha}{2}+L(1+\lambda_k)\alpha^2\right) \Vert\nabla J_{\mathrm{L}}(\theta_k,\lambda_k) - \omega_k\Vert^2\\
        \end{aligned}
    \end{equation}
    Rearranging the terms and using $\lambda_k\leq \frac{2}{\delta}$ yields, 
    \begin{eqnarray}
        &&\sum_{k=1}^{K}\norm{\nabla J_{\mathrm{L}}(\theta_k,\lambda_k)}^2 \leq \frac{1}{{\frac{\alpha}{2}-L(1+\frac{2}{\delta})\alpha^2} }\bigg[J_{\mathrm{L}}(\theta_{K+1},\lambda_{K+1})-J_{\mathrm{L}}(\theta_1,\lambda_1)\nonumber\\
        && + \beta K + \left(\frac{\alpha}{2}+L\left(1+\frac{2}{\delta}\right)\alpha^2\right) \sum_{k=1}^{K}  \Vert\nabla J_{\mathrm{L}}(\theta_k,\lambda_k) - \omega_k\Vert^2\bigg]
    \end{eqnarray}
    Choosing $\alpha = \frac{1}{4L(1+\frac{2}{\delta})}$ and dividing both sides by $K$, we conclude,
    \begin{align}
        &\frac{1}{K}\sum_{k=1}^{K}\norm{\nabla J_{\mathrm{L}}(\theta_k,\lambda_k)}^2 \leq \frac{16L(1+\frac{2}{\delta})}{K}\left[J_{\mathrm{L}}(\theta_{K+1},\lambda_{K+1})-J_{\mathrm{L}}(\theta_1,\lambda_1)\right]\nonumber\\
        &\hspace{2cm}+\dfrac{3}{K}\sum_{k=1}^{K}\Vert\nabla J_{\mathrm{L}}(\theta_k,\lambda_k) - \omega_k\Vert^2 +\beta
    \end{align}
    Recall that $|J_{\mathrm{L}}(\theta,\lambda)|\leq 1+\lambda\leq 1+\frac{2}{\delta}\le \frac{3}{\delta},\forall \theta,\lambda$, thus
    \begin{align}
        \frac{1}{K}\sum_{k=1}^{K}\norm{\nabla J_{\mathrm{L}}(\theta_k,\lambda_k)}^2 \leq \frac{288 L}{\delta^2 K}+  \dfrac{3}{K}\sum_{k=1}^{K}\Vert\nabla J_{\mathrm{L}}(\theta_k,\lambda_k) - \omega_k\Vert^2 + \beta
    \end{align}
    This concludes the proof of Lemma \ref{lemma:41ss}.
\end{proof}
Note the presence of $\beta$ in \eqref{eq_24_new}. Evidently, to ensure convergence, $\beta$ must be a function of $T$. 
Invoking Lemma \ref{lemma_grad_est_bias}, we get the following relation under the same set of assumptions and the choice of parameters as in Lemma \ref{lemma:41ss}. 
\begin{align}
    \label{eq_33}
    \dfrac{1}{K}\sum_{k=1}^{K} \mathbf{E}\norm{\nabla_\theta J_{\mathrm{L}}(\theta_k,\lambda_k)}^2 \leq \tilde{\mathcal{O}}\left(\dfrac{AG^2t_{\mathrm{mix}}^2}{\delta^2 T^{\xi}}\right)+\tilde{\mathcal{O}}\left(\dfrac{Lt_{\mathrm{mix}}t_{\mathrm{hit}}}{\delta^2 T^{1-\xi}}\right)+\beta
\end{align}
Applying Lemma \ref{lemma_grad_est_bias} and \eqref{eq_33} in \eqref{eq_third_term_bound}, we arrive at,
\begin{align}
    \label{eq_34}
    \dfrac{1}{K}\sum_{k=1}^{K}\mathbf{E}\bigg[\Vert\omega_k\Vert^2\bigg]&\leq \tilde{\mathcal{O}}\left(\dfrac{AG^2t_{\mathrm{mix}}^2}{\delta^2 T^{\xi}}+\dfrac{Lt_{\mathrm{mix}}t_{\mathrm{hit}}}{\delta^2 T^{1-\xi}}\right)+\beta
\end{align}

Similarly, using $(\ref{eq_second_term_bound})$, we deduce the following.
\begin{align}
    \label{eq_35}
    \begin{split}
        \frac{1}{K}&\sum_{k=1}^{K}\mathbf{E}\Vert\omega_k-\omega^*_k\Vert \leq \left(1+\dfrac{1}{\mu_F}\right)\sqrt{\beta} \\
        &+ \left(1+\dfrac{1}{\mu_F}\right)\tilde{\mathcal{O}}\left(\dfrac{\sqrt{A}Gt_{\mathrm{mix}}}{\delta T^{\xi/2}}+\dfrac{\sqrt{Lt_{\mathrm{mix}}t_{\mathrm{hit}}}}{\delta T^{(1-\xi)/2}}\right)
    \end{split}
\end{align}

Inequalities \eqref{eq_34} and \eqref{eq_35} lead to the following global convergence of the Lagrange function.

\begin{lemma}
    \label{Lemma_global_Lagrange}
    Let the parameters $\{\theta_k\}_{k=1}^{K}$ be as stated in Lemma \ref{lem_framework}. If assumptions \ref{ass_1}$-$\ref{ass_4} are true,  $\{J_{g}(\cdot)\}_{g\in\{r,c\}}$ are $L$-smooth functions, $\alpha=\frac{1}{4L(1+\frac{2}{\delta})}$, $K=\frac{T}{H}$, and $H=16t_{\mathrm{mix}}t_{\mathrm{hit}}T^{\xi}(\log_2 T)^2$, then the following inequality holds for sufficiently large $T$.
    \begin{equation*}
        \begin{split}
            &\frac{1}{K}\sum_{k=1}^{K} \mathbf{E}\bigg(J_{\mathrm{L}}(\pi^*,\lambda_k)-J_{\mathrm{L}}(\theta_k,\lambda_k)\bigg)\leq \sqrt{\epsilon_{\mathrm{bias}}}\\
            & + G\left(1+\dfrac{1}{\mu_F}\right)\tilde{\mathcal{O}}\left(\sqrt{\beta}+\dfrac{\sqrt{A}Gt_{\mathrm{mix}}}{\delta T^{\xi/2}}+\dfrac{\sqrt{Lt_{\mathrm{mix}}t_{\mathrm{hit}}}}{\delta T^{(1-\xi)/2}}\right)\\
            &+\dfrac{B}{L}\tilde{\mathcal{O}}\left(\dfrac{AG^2t_{\mathrm{mix}}^2}{\delta^2 T^{\xi}}+\dfrac{Lt_{\mathrm{mix}}t_{\mathrm{hit}}}{\delta^2 T^{1-\xi}}+\beta\right)\\
            &+\mathcal{\tilde{O}}\bigg(\frac{Lt_{\mathrm{mix}}t_{\mathrm{hit}}\mathbf{E}_{s\sim d^{\pi^*}}[KL(\pi^*(\cdot\vert s)\Vert\pi_{\theta_1}(\cdot\vert s))]}{T^{1-\xi}\delta }\bigg)
        \end{split}
    \end{equation*}
\end{lemma}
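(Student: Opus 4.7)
The strategy is to assemble four previously established pieces: the general decomposition in Lemma \ref{lem_framework}, the natural gradient representation $\omega_k^{*} = F_{\rho}(\theta_k)^{\dagger}\nabla_{\theta} J_{\mathrm{L}}(\theta_k,\lambda_k)$ together with Assumption \ref{ass_4}, the self-bound on the squared true-gradient norm from Lemma \ref{lemma:41ss}, and the gradient-estimation variance bound from Lemma \ref{lemma_grad_est_bias}. The plan is to take expectations in the inequality of Lemma \ref{lem_framework}, bound each of its three non-trivial terms separately, and then substitute the prescribed values of $\alpha$, $K$, and $H$.

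First I would take expectations in \eqref{eq:general_bound}, leaving four terms: the irreducible floor $\sqrt{\epsilon_{\mathrm{bias}}}$; the first-order gap $(G/K)\sum_k \mathbf{E}\lVert\omega_k-\omega_k^{*}\rVert$; the second-order norm $(B\alpha/(2K))\sum_k \mathbf{E}\lVert\omega_k\rVert^{2}$; and the KL term $(\alpha K)^{-1}\mathbf{E}_{s\sim d^{\pi^{*}}}[\mathrm{KL}(\pi^{*}\|\pi_{\theta_1})]$. The first-order gap is then handled as in \eqref{eq_second_term_bound}: Jensen's inequality gives $(\frac{1}{K}\sum \mathbf{E}\lVert\omega_k-\omega_k^{*}\rVert)^{2}\le \frac{1}{K}\sum \mathbf{E}\lVert\omega_k-\omega_k^{*}\rVert^{2}$; expanding $\omega_k^{*}$ via the pseudoinverse identity and using Assumption \ref{ass_4} to bound $\lVert F_{\rho}(\theta_k)^{\dagger}\rVert \le 1/\mu_F$ splits the right-hand side into the gradient-estimation error and a $(1+1/\mu_F^{2})$-multiple of the true squared gradient norm. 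The second-order norm is treated analogously via \eqref{eq_third_term_bound}, i.e.\ the triangle inequality splits $\lVert\omega_k\rVert^{2}$ into $\lVert\nabla_{\theta} J_{\mathrm{L}}(\theta_k,\lambda_k)\rVert^{2}$ plus the estimation error.

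The squared true-gradient norm is then controlled by Lemma \ref{lemma:41ss}, which—using only $L$-smoothness of $J_r,J_c$ and $\alpha=1/(4L(1+2/\delta))$—yields $\frac{1}{K}\sum \lVert\nabla_{\theta} J_{\mathrm{L}}\rVert^{2} \le 288L/(\delta^{2}K) + \frac{3}{K}\sum \lVert\nabla_{\theta} J_{\mathrm{L}} - \omega_k\rVert^{2} + \beta$. Substituting the $\tilde{O}(AG^{2}t_{\mathrm{mix}}^{2}/(\delta^{2}T^{\xi}))$ variance bound from Lemma \ref{lemma_grad_est_bias} and plugging in $K = T/H$ with $H = 16\,t_{\mathrm{mix}}t_{\mathrm{hit}}T^{\xi}(\log T)^{2}$ converts the $288L/(\delta^{2}K)$ term into $\tilde{O}(L t_{\mathrm{mix}}t_{\mathrm{hit}}/(\delta^{2}T^{1-\xi}))$, producing \eqref{eq_33}. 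Feeding \eqref{eq_33} into \eqref{eq_second_term_bound} and \eqref{eq_third_term_bound} yields \eqref{eq_34} and \eqref{eq_35}, which are precisely the rates appearing in the target inequality.

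The last step is arithmetic. With $\alpha = 1/(4L(1+2/\delta))$ one has $B\alpha/2 = \tilde{O}(B/L)$ (the $\delta$-dependence is absorbed into a constant since $1+2/\delta \ge 1$) and $1/(\alpha K) = 4L(1+2/\delta)H/T = \tilde{O}(L t_{\mathrm{mix}}t_{\mathrm{hit}}/(\delta T^{1-\xi}))$. Multiplying \eqref{eq_35} by $G$ and \eqref{eq_34} by $B\alpha/2$, and combining with the KL coefficient, gives exactly the bound claimed. The one subtlety worth flagging is the asymmetry between $\sqrt{\beta}$ in the first-order term and $\beta$ in the second-order term: this arises because applying Cauchy–Schwarz/Jensen to $\mathbf{E}\lVert\omega_k-\omega_k^{*}\rVert$ promotes the $\beta$ inside the squared bound to $\sqrt{\beta}$ on the outside, whereas $\mathbf{E}\lVert\omega_k\rVert^{2}$ keeps $\beta$ unchanged. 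The main ``obstacle'' is purely bookkeeping: keeping track of which terms pick up a square root and verifying that no hidden $\delta^{-1}$ blow-up occurs when simplifying $B\alpha$—both follow because $\alpha\cdot(1+2/\delta)^{-1}$ cancels the Slater factor introduced by the projection radius $2/\delta$.
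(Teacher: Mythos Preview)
Your proposal is correct and follows essentially the same route as the paper: start from the decomposition in Lemma~\ref{lem_framework}, bound the first-order and second-order terms via \eqref{eq_second_term_bound}--\eqref{eq_third_term_bound}, control the true squared-gradient norm with Lemma~\ref{lemma:41ss}, plug in the estimation-error bound of Lemma~\ref{lemma_grad_est_bias}, and finish by substituting the choices of $\alpha$, $H$, and $K$. Your explanation of the $\sqrt{\beta}$ versus $\beta$ asymmetry and the arithmetic for the $1/(\alpha K)$ and $B\alpha/2$ coefficients matches the paper's derivation leading to \eqref{eq_34}--\eqref{eq_35}.
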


Lemma \ref{Lemma_global_Lagrange} establishes that the average difference between $J_{\mathrm{L}}(\pi^*, \lambda_k)$ and $J_{\mathrm{L}}(\theta_k, \lambda_k)$ is $\tilde{\mathcal{O}}(\sqrt{\beta}+T^{-\xi/2}+T^{-(1-\xi)/2})$. Expanding the function, $J_{\mathrm{L}}$, and utilizing the update rule of the Lagrange multiplier, we achieve the global convergence for the objective and the constraint in Theorem \ref{thm_global_convergence} (stated below). In its proof, Lemma \ref{lem.constraint} (stated in the appendix) serves as an important tool in disentangling the regret and constraint violation rates. Interestingly, it is built upon the strong duality property of the unparameterized optimization and has no apparent direct connection with the parameterized setup.

\begin{theorem}
    \label{thm_global_convergence}
    Consider the same setup and parameters as in Lemma \ref{Lemma_global_Lagrange} and set $\beta=T^{-2/5}$ and $\xi = 2/5$. We have,
    \begin{align*}
        \begin{split}
            \frac{1}{K}\sum_{k=1}^{K}\mathbf{E}\bigg(J_r^{\pi^*}-J_r(\theta_k)\bigg)&\leq \sqrt{\epsilon_{\mathrm{bias}}}+\dfrac{\sqrt{A}G^2t_{\mathrm{mix}}}{\delta}\left(1+\dfrac{1}{\mu_F}\right)\tilde{\mathcal{O}}\left(T^{-1/5}\right)\\
            \frac{1}{K}\sum_{k=1}^{K}\mathbf{E}\bigg(J_c(\theta_k)\bigg)&\leq \delta\sqrt{\epsilon_{\mathrm{bias}}} + \sqrt{A}G^2t_{\mathrm{mix}}\left(1+\dfrac{1}{\mu_F}\right)\tilde{\mathcal{O}}\left( T^{-1/5}\right)\\
            &+ \tilde{\mathcal{O}}\left(\dfrac{t_{\mathrm{mix}}t_{\mathrm{hit}}}{\delta T^{1/5}}\right)
        \end{split}
    \end{align*}
    where $\pi^*$ is a solution to the unparameterized optimization. In the above bounds, we write only the dominating terms of $T$.
\end{theorem}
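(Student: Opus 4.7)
The plan is to derive both bounds directly from Lemma \ref{Lemma_global_Lagrange} by (i) substituting the stated parameters to expose the dominant rate, (ii) using $J_c(\pi^*)\le 0$ to convert the Lagrange gap into objective regret plus a dual-drift remainder, and (iii) invoking Lemma \ref{lem.constraint} (which encapsulates the Slater-based strong-duality translation) for the constraint violation. With $\beta=T^{-2/5}$ and $\xi=2/5$, the leading rates in Lemma \ref{Lemma_global_Lagrange} are $\sqrt{\beta}=T^{-1/5}$, $T^{-\xi/2}=T^{-1/5}$, and $T^{-(1-\xi)/2}=T^{-3/10}$, while the quadratic rates $\beta$, $T^{-\xi}$, $T^{-(1-\xi)}$ are at most $T^{-2/5}$ and hence subdominant. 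Consequently Lemma \ref{Lemma_global_Lagrange} yields
\begin{equation*}
    \frac{1}{K}\sum_{k=1}^K\mathbf{E}\big[J_{\mathrm{L}}(\pi^*,\lambda_k)-J_{\mathrm{L}}(\theta_k,\lambda_k)\big]\le \sqrt{\epsilon_{\mathrm{bias}}}+\frac{\sqrt{A}G^2 t_{\mathrm{mix}}}{\delta}\left(1+\frac{1}{\mu_F}\right)\tilde{\mathcal{O}}(T^{-1/5}).
\end{equation*}

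For the objective regret I would expand $J_{\mathrm{L}}(\pi^*,\lambda_k)-J_{\mathrm{L}}(\theta_k,\lambda_k)=J_r(\pi^*)-J_r(\theta_k)-\lambda_k[J_c(\pi^*)-J_c(\theta_k)]$ and use $J_c(\pi^*)\le 0$ and $\lambda_k\ge 0$ to lower-bound this by $J_r(\pi^*)-J_r(\theta_k)+\lambda_k J_c(\theta_k)$. Averaging over $k$ reduces the task to controlling $-\frac{1}{K}\sum_k\mathbf{E}[\lambda_k J_c(\theta_k)]$ from above. For this I would run the standard dual-drift computation: projection non-expansiveness gives $\lambda_{k+1}^2\le(\lambda_k+\beta\hat J_c(\theta_k))^2$; telescoping from $\lambda_1=0$ and using $|\hat J_c|\le 1$ yields $-\sum_k\lambda_k\hat J_c(\theta_k)\le K\beta/2$. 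Passing from $\hat J_c$ to $J_c$ costs a per-epoch mixing error of order $\tilde{\mathcal{O}}(t_{\mathrm{mix}}/H)=\tilde{\mathcal{O}}(T^{-\xi})$ by the same concentration argument used inside Lemma \ref{lemma_good_estimator}. Since $\lambda_k\in[0,2/\delta]$, the overall correction is $\tilde{\mathcal{O}}(T^{-\xi}/\delta)$, which together with $\beta/2$ is $\tilde{\mathcal{O}}(T^{-2/5}/\delta)$ and thus strictly smaller than the $T^{-1/5}$ contribution from the first step. The regret claim follows by combining these estimates.

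For the constraint violation I would invoke Lemma \ref{lem.constraint}, which exploits strong duality of the unparameterized CMDP together with Slater's condition (Assumption \ref{ass_slater}). The underlying idea is a primal perturbation: the averaged occupancy measure can be mixed with the Slater point using weight $\eta=O(J_c/\delta)$ to restore exact feasibility, and comparing the resulting reward to $J_r(\pi^*)$ converts a reward-side error into a constraint-side error with a Slater factor of $\delta$. Feeding the averaged Lagrange bound from the first step together with the dual-drift inequality $\sum_k(\lambda-\lambda_k)\hat J_c(\theta_k)\le \lambda^2/(2\beta)+K\beta/2$ for the witness $\lambda=2/\delta$ into Lemma \ref{lem.constraint} produces the $\delta\sqrt{\epsilon_{\mathrm{bias}}}+\tilde{\mathcal{O}}(T^{-1/5})+\tilde{\mathcal{O}}(t_{\mathrm{mix}}t_{\mathrm{hit}}/(\delta T^{1/5}))$ bound, where the prefactor $\delta$ of $\sqrt{\epsilon_{\mathrm{bias}}}$ is the Slater coefficient picked up in the translation and the residual $1/\delta$ originates from the dual range.

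The hardest part will be the $\hat J_c\to J_c$ bias correction inside the dual drift, since the per-epoch mixing error gets multiplied by $\lambda_k\le 2/\delta$ and one must verify that this does not destroy the $T^{-1/5}$ rate. A related subtlety is the Slater-based step itself: the naive inversion $-J_c\le(\text{reward gap})/\lambda^*$ is vacuous when the optimal dual variable $\lambda^*$ degenerates to zero, so Lemma \ref{lem.constraint} must proceed through a primal mixing-with-Slater-point perturbation rather than a direct dual rescaling. The delicate balance between $\beta$, the epoch length $H=\tilde\Theta(T^{\xi})$, and $\sqrt{\epsilon_{\mathrm{bias}}}$ is precisely what forces the choices $\beta=T^{-2/5}$, $\xi=2/5$, and the final rate $T^{-1/5}$.
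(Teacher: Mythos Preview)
Your proposal is correct and follows essentially the same route as the paper: expand the Lagrange gap from Lemma~\ref{Lemma_global_Lagrange}, run the telescoping dual-drift argument (anchored at $0$ for the objective and at the witness $2/\delta$ for the constraint), pass from $\hat J_c$ to $J_c$ via the $\tilde{\mathcal O}(1/T^2)$ mixing bias, and then invoke Lemma~\ref{lem.constraint}. Two small remarks: the paper makes explicit the intermediate step of defining a single policy $\bar\pi$ with $J_g^{\bar\pi}=\frac{1}{K}\sum_k J_g(\theta_k)$ (via linearity in occupancy measures) before applying Lemma~\ref{lem.constraint}, and your aside about that lemma's internal mechanism is off---it is proved via strong duality and the perturbation function $v(\tau)$ (Lemma~\ref{lem:bridge}), not by mixing with the Slater point---though this does not affect your argument since you only invoke the lemma as stated.
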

Theorem \ref{thm_global_convergence} establishes $\tilde{\mathcal{O}}(T^{-1/5})$ convergence rates for both the objective and the constraint violation.

\begin{proof}[Proof of Theorem \ref{thm_global_convergence}]$\\$
    {\bf Analysis of Objective:}
    Recall the definition that $J_{\mathrm{L}}(\theta,\lambda)=J_r(\theta)- \lambda J_{c}(\theta)$. The following holds due to Lemma \ref{Lemma_global_Lagrange}.
    \begin{eqnarray}
        \label{eq:bound_Jr1}
        &&\frac{1}{K}\sum_{k=1}^{K}\mathbf{E}\bigg(J_r^{\pi^*}-J_r(\theta_k)\bigg)\leq  \sqrt{\epsilon_{\mathrm{bias}}}+\frac{1}{K}\sum_{k=1}^{K}\mathbf{E}\bigg[\lambda_k\bigg(J_{c}^{\pi^*}-J_c(\theta_k)\bigg)\bigg] \nonumber \\
        &&+ G\left(1+\dfrac{1}{\mu_F}\right)\tilde{\mathcal{O}}\bigg(\sqrt{\beta}+\dfrac{\sqrt{A}Gt_{\mathrm{mix}}}{\delta T^{\xi/2}}+\dfrac{\sqrt{Lt_{\mathrm{mix}}t_{\mathrm{hit}}}}{\delta T^{(1-\xi)/2}}\bigg)\nonumber \\
        &&+\mathcal{\tilde{O}}\bigg(\frac{Lt_{\mathrm{mix}}t_{\mathrm{hit}}\mathbf{E}_{s\sim d^{\pi^*}}[KL(\pi^*(\cdot\vert s)\Vert\pi_{\theta_1}(\cdot\vert s))]}{T^{1-\xi}\delta }\bigg)\nonumber\\
        &&+\dfrac{B}{L}\tilde{\mathcal{O}}\left(\dfrac{AG^2t_{\mathrm{mix}}^2}{\delta^2 T^{\xi}}+\dfrac{Lt_{\mathrm{mix}}t_{\mathrm{hit}}}{\delta^2 T^{1-\xi}}+\beta\right)
    \end{eqnarray}
    We need to bound terms related to $J_c$. Note the following inequalities.
    \begin{equation}
   	\label{eq:bound_lambdak}
   	\begin{aligned}
            &0\leq (\lambda_{K+1})^2 \overset{(a)}{=} \sum_{k=1}^{K}\bigg((\lambda_{k+1})^2-(\lambda_{k})^2\bigg)\\
            &=\sum_{k=1}^{K}\bigg(\mathcal{P}_{[0,\frac{2}{\delta}]}\big[\lambda_{k}+\beta\hat{J}_{c}(\theta_k)\big]^2-(\lambda_{k})^2\bigg)\\
            &\overset{(b)}{\leq}\sum_{k=1}^{K}\bigg(\big[\lambda_{k}+\beta\hat{J}_{c}(\theta_k)\big]^2-(\lambda_{k})^2\bigg)\\
            &\overset{(c)}\leq -2\beta\sum_{k=1}^{K}\lambda_{k}(J_{c}^{\pi^*}- \hat{J}_{c}(\theta_k))+\beta^2\sum_{k=1}^{K}\hat{J}^2_{c}(\theta_k)\\
            &\leq -2\beta\sum_{k=1}^{K}\lambda_{k}(J_{c}^{\pi^*}- \hat{J}_{c}(\theta_k))+2\beta^2\sum_{k=1}^{K}\hat{J}^2_{c}(\theta_k)\\
            &=-2\beta\sum_{k=1}^{K}\left\lbrace\lambda_{k}(J_{c}^{\pi^*}- J_{c}(\theta_k))+\lambda_{k}(J_{c}(\theta_k)- \hat{J}_{c}(\theta_k))\right\rbrace + 2\beta^2\sum_{k=1}^{K}\hat{J}^2_{c}(\theta_k)
   	\end{aligned}
   \end{equation}
    Step $(a)$ uses $\lambda_1=0$, $(b)$ is due to the non-expansiveness of the projection operator, and $(c)$ holds since $\pi^*_{\theta}$ is feasible for the conservative problem. Rearranging items and taking the expectation, we have
    \begin{align}
        \label{eq_appndx_71_new}
        \begin{split}
            &\frac{1}{K}\sum_{k=1}^{K}\mathbf{E}\bigg[\lambda_{k}(J_{c}^{\pi^*}- J_{c}(\theta_k))\bigg] \\
            &\leq -\frac{1}{K}\sum_{k=1}^{K}\mathbf{E}\bigg[\lambda_{k}(J_{c}(\theta_k)- \hat{J}_{c}(\theta_k))\bigg]+\frac{\beta}{K}\sum_{k=1}^{K}\mathbf{E}[\hat{J}^2_{c}(\theta_k)]\\
            &\overset{(a)}{\leq} -\frac{1}{K}\sum_{k=1}^{K}\mathbf{E}\left[\lambda_{k}\left(J_{c}(\theta_k)- \hat{J}_{c}(\theta_k)\right)\right]+\beta\\
            &\overset{(b)}{=} -\frac{1}{K}\sum_{k=1}^{K}\mathbf{E}\bigg[\lambda_{k}\left(J_{c}(\theta_k)- \mathbf{E}\left[\hat{J}_{c}(\theta_k)\big|\theta_k\right]\right)\bigg]+\beta\\
            &\leq \frac{1}{K}\sum_{k=1}^{K}\mathbf{E}\bigg[\lambda_{k}\left|J_{c}(\theta_k)- \mathbf{E}\left[\hat{J}_{c}(\theta_k)\big|\theta_k\right]\right|\bigg]+\beta\overset{(c)}{\leq} \frac{2}{\delta T^2}+\beta
        \end{split}
    \end{align}
    where (a) results from $|\hat{J}_{c, \rho}(\theta)|^2\leq 1$, $\forall \theta\in\Theta$ and (b) uses the fact that $\hat{J}_{c, \rho}(\theta_k)$ and $\lambda_k$ are conditionally independent given $\theta_k$. Finally, (c) is a consequence of Lemma \ref{lemma_aux_6}. Combining \eqref{eq_appndx_71_new} with  \eqref{eq:bound_Jr1}, we deduce,
   \begin{equation}
        \label{eq_bound_Jr_final}
        \begin{aligned}
            &\frac{1}{K}\sum_{k=1}^{K}\mathbf{E}\bigg(J_r^{\pi^*}-J_r(\theta_k)\bigg)\leq  \sqrt{\epsilon_{\mathrm{bias}}}+ \mathcal{O}\left(\dfrac{1}{\delta T^2}+\beta\right)\\
            &+G\left(1+\dfrac{1}{\mu_F}\right)\tilde{\mathcal{O}}\left(\sqrt{\beta}+\dfrac{\sqrt{A}Gt_{\mathrm{mix}}}{\delta T^{\xi/2}}+\dfrac{\sqrt{Lt_{\mathrm{mix}}t_{\mathrm{hit}}}}{\delta T^{(1-\xi)/2}}\right)\\
            &+\dfrac{B}{L}\tilde{\mathcal{O}}\left(\dfrac{AG^2t_{\mathrm{mix}}^2}{\delta^2 T^{\xi}}+\dfrac{Lt_{\mathrm{mix}}t_{\mathrm{hit}}}{\delta^2 T^{1-\xi}}+\beta\right)\\
            &+ \mathcal{\tilde{O}}\bigg(\frac{Lt_{\mathrm{mix}}t_{\mathrm{hit}}\mathbf{E}_{s\sim d^{\pi^*}}[KL(\pi^*(\cdot\vert s)\Vert\pi_{\theta_1}(\cdot\vert s))]}{T^{1-\xi}\delta }\bigg)\\
            &\leq\sqrt{\epsilon_{\mathrm{bias}}} + G\left(1+\dfrac{1}{\mu_F}\right)\tilde{\mathcal{O}}\left(\sqrt{\beta}+\dfrac{\sqrt{A}Gt_{\mathrm{mix}}}{\delta T^{\xi/2}}+\dfrac{\sqrt{Lt_{\mathrm{mix}}t_{\mathrm{hit}}}}{\delta T^{(1-\xi)/2}}\right)
        \end{aligned}
    \end{equation}
    The last inequality presents only the dominant terms of $\beta$ and $T$.
	
    {\bf Analysis of Constraint: }
     Since $\{\lambda_k\}_{k=1}^{K}$ are derived by applying the dual update in Algorithm \ref{alg:PG_MAG}, we have,
    \begin{equation}
        \label{eq_appndx_73_new}
        \begin{aligned}
            \mathbf{E}&\left\vert\lambda_{k+1} - \dfrac{2}{\delta}\right\vert^2\overset{(a)}{\leq} \mathbf{E}\left|\lambda_{k} + \beta \hat{J}_c(\theta_{k}) - \dfrac{2}{\delta}\right|^2\\
            &=\mathbf{E}\left|\lambda_{k} -\dfrac{2}{\delta}\right|^2 +2\beta \mathbf{E}\left[\hat{J}_c(\theta_k)\left(\lambda_{k}  -\dfrac{2}{\delta}\right)\right] +\beta^2 \mathbf{E}\left[\hat{J}^2_c(\theta_{k})\right]\\
            &\overset{(b)}\leq\mathbf{E}\left|\lambda_{k} -\dfrac{2}{\delta}\right|^2 + 2\beta \mathbf{E}\left[J_c(\theta_{k})\left(\lambda_{k} -\dfrac{2}{\delta}\right)\right]\\
            &+2\beta \mathbf{E}\left[\left(\hat{J}_c(\theta_{k})-J_c(\theta_{k})\right)\left(\lambda_{k}-\dfrac{2}{\delta}\right)\right] + \beta^2\\
            &\overset{(c)}{=}\mathbf{E}\left|\lambda_{k} -\dfrac{2}{\delta}\right|^2 +2\beta \mathbf{E}\left[J_c(\theta_{k})\left(\lambda_{k} -\dfrac{2}{\delta}\right)\right]\\
            &+2\beta \mathbf{E}\left[\left(\mathbf{E}\left[\hat{J}_c(\theta_{k})\big|\theta_k\right]-J_c(\theta_{k})\right)\left(\lambda_{k}-\dfrac{2}{\delta}\right)\right] + \beta^2\\
            &\leq \mathbf{E}\left|\lambda_{k} -\dfrac{2}{\delta}\right|^2 +2\beta \mathbf{E}\left[J_c(\theta_{k})\left(\lambda_{k} -\dfrac{2}{\delta}\right)\right]\\
            &+2\beta \mathbf{E}\left[\left|\mathbf{E}\left[\hat{J}_c(\theta_{k})\big|\theta_k\right]-J_c(\theta_{k})\right|\left|\lambda_{k}-\dfrac{2}{\delta}\right|\right] + \beta^2\\
            &\overset{(d)}{\leq} \mathbf{E}\left|\lambda_{k} -\dfrac{2}{\delta}\right|^2 +2\beta \mathbf{E}\left[J_c(\theta_{k})\left(\lambda_{k} -\dfrac{2}{\delta}\right)\right] + \dfrac{4\beta}{\delta T^2} + \beta^2
	\end{aligned}
    \end{equation}
    where $(a)$ is due to the non-expansiveness of the projection $\mathcal{P}_{[0, \frac{2}{\delta}]}$ and $(b)$ holds because $\hat{J}_c(\theta)\in[0,1]$, $\forall \theta\in\Theta$ according to its definition in Algorithm \ref{alg:PG_MAG}. Finally, $(c)$ is a consequence of the fact that $\hat{J}_c(\theta_k)$ and $\lambda_k$ are conditionally independent given $\theta_k$ whereas $(d)$ applies $|\lambda_k-\frac{2}{\delta}|\leq \frac{2}{\delta}$ and Lemma \ref{lemma_aux_6}. Averaging \eqref{eq_appndx_73_new} over $k\in\{1,\ldots,K\}$, we get,
    \begin{equation}
        \begin{aligned}
            -\frac{1}{K}&\sum_{k=1}^{K} \mathbf{E}\left[J_c(\theta_k)\left(\lambda_{k} -\frac{2}{\delta}\right)\right] \\
            &\leq \frac{\left\vert\lambda_{1} - \frac{2}{\delta}\right\vert^2 -\left\vert\lambda_{K+1} - \frac{2}{\delta}\right\vert^2 }{2\beta K} + \dfrac{2}{\delta T^2} + \dfrac{\beta}{2}\\
            &\overset{(a)}{\leq} \dfrac{2}{\delta^2 \beta K} + \dfrac{2}{\delta T^2} + \dfrac{\beta}{2}
        \end{aligned}
    \end{equation}
    where (a) uses the presumption that $\lambda_1=0$. Note that $\lambda_k J_c^{\pi^*}\leq 0$, $\forall k$. Adding the above inequality to \eqref{eq:bound_Jr1} at both sides, we, therefore, have,
    \begin{equation}
        \label{eq:bound_Jc2}
        \begin{aligned}
            &\mathbf{E}\bigg[J_r^{\pi^*}-\frac{1}{K}\sum_{k=1}^{K}J_r(\theta_k)\bigg]+\frac{2}{\delta}\mathbf{E}\bigg[\frac{1}{K}\sum_{k=1}^{K}J_c(\theta_k)\bigg]\\
            &\leq \sqrt{\epsilon_{\mathrm{bias}}} +\frac{2}{\delta^2\beta K} +\frac{2}{T^2\delta}+ \frac{\beta}{2}\\
            &+G\left(1+\dfrac{1}{\mu_F}\right)\tilde{\mathcal{O}}\left(\sqrt{\beta}+\dfrac{\sqrt{A}Gt_{\mathrm{mix}}}{\delta T^{\xi/2}}+\dfrac{\sqrt{Lt_{\mathrm{mix}}t_{\mathrm{hit}}}}{\delta T^{(1-\xi)/2}}\right) \\
            &+ \dfrac{B}{L}\tilde{\mathcal{O}}\left(\dfrac{AG^2t_{\mathrm{mix}}^2}{\delta^2 T^{\xi}}+\dfrac{Lt_{\mathrm{mix}}t_{\mathrm{hit}}}{\delta^2 T^{1-\xi}}+\beta\right)\\
            &+ \mathcal{\tilde{O}}\bigg(\frac{Lt_{\mathrm{mix}}t_{\mathrm{hit}}\mathbf{E}_{s\sim d^{\pi^*}}[KL(\pi^*(\cdot\vert s)\Vert\pi_{\theta_1}(\cdot\vert s))]}{T^{1-\xi}\delta }\bigg)
        \end{aligned}
    \end{equation}
    Since the functions $\{J_g(\theta_k)\}, k\in\{0, \cdots, K-1\}, g\in\{r, c\}$ are linear in occupancy measure, there exists a policy $\bar{\pi}$ such that the following equation hols $\forall g\in\{r, c\}$.
    \begin{align*}
        \frac{1}{K}\sum_{k=1}^{K}J_g(\theta_k)=J_g^{\bar\pi}
    \end{align*}
    Injecting the above relation to \eqref{eq:bound_Jc2}, we have
    \begin{equation}
        \begin{aligned}
            &\mathbf{E}\bigg[J_r^{\pi^*}-J_r^{\bar\pi}\bigg]+\frac{2}{\delta}\mathbf{E}\bigg[J_c^{\bar\pi}\bigg]\leq \sqrt{\epsilon_{\mathrm{bias}}}+\frac{2}{\delta^2\beta K} +\frac{2}{T^2\delta}+ \frac{\beta}{2} \\
            &+G\left(1+\dfrac{1}{\mu_F}\right)\tilde{\mathcal{O}}\left(\sqrt{\beta}+\dfrac{\sqrt{A}Gt_{\mathrm{mix}}}{\delta T^{\xi/2}}+\dfrac{\sqrt{Lt_{\mathrm{mix}}t_{\mathrm{hit}}}}{\delta T^{(1-\xi)/2}}\right)\\
            &+ \mathcal{\tilde{O}}\bigg(\frac{Lt_{\mathrm{mix}}t_{\mathrm{hit}}\mathbf{E}_{s\sim d^{\pi^*}}[KL(\pi^*(\cdot\vert s)\Vert\pi_{\theta_1}(\cdot\vert s))]}{T^{1-\xi}\delta }\bigg)\\
            &+\dfrac{B}{L}\tilde{\mathcal{O}}\left(\dfrac{AG^2t_{\mathrm{mix}}^2}{\delta^2 T^{\xi}}+\dfrac{Lt_{\mathrm{mix}}t_{\mathrm{hit}}}{\delta^2 T^{1-\xi}}+\beta\right) 
        \end{aligned}
    \end{equation}
    By Lemma \ref{lem.constraint}, we arrive at,
    \begin{equation}
        \label{eq_83}
        \begin{aligned}
            \mathbf{E}\bigg[J_c^{\bar\pi}\bigg]&\leq  \delta\sqrt{\epsilon_{\mathrm{bias}}}+\frac{2}{\delta\beta K} +\frac{2}{T^2}+\frac{\delta\beta}{2}\\
            &+G\left(1+\dfrac{1}{\mu_F}\right)\tilde{\mathcal{O}}\left(\delta\sqrt{\beta}+\dfrac{\sqrt{A}Gt_{\mathrm{mix}}}{ T^{\xi/2}}+\dfrac{\sqrt{Lt_{\mathrm{mix}}t_{\mathrm{hit}}}}{ T^{(1-\xi)/2}}\right)\\
            &+ \mathcal{\tilde{O}}\bigg(\frac{Lt_{\mathrm{mix}}t_{\mathrm{hit}}\mathbf{E}_{s\sim d^{\pi^*}}[KL(\pi^*(\cdot\vert s)\Vert\pi_{\theta_1}(\cdot\vert s))]}{T^{1-\xi}}\bigg)\\
            &+\dfrac{B}{L}\tilde{\mathcal{O}}\left(\dfrac{AG^2t_{\mathrm{mix}}^2}{\delta T^{\xi}}+\dfrac{Lt_{\mathrm{mix}}t_{\mathrm{hit}}}{\delta T^{1-\xi}}+\delta\beta\right) \\
            &\leq \delta\sqrt{\epsilon_{\mathrm{bias}}} + \tilde{\mathcal{O}}\left(\dfrac{2t_{\mathrm{mix}}t_{\mathrm{hit}}}{\delta \beta T^{1-\xi}}\right) \\
            &+ G\left(1+\dfrac{1}{\mu_F}\right)\tilde{\mathcal{O}}\left(\delta\sqrt{\beta}+\dfrac{\sqrt{A}Gt_{\mathrm{mix}}}{ T^{\xi/2}}+\dfrac{\sqrt{Lt_{\mathrm{mix}}t_{\mathrm{hit}}}}{ T^{(1-\xi)/2}}\right)
        \end{aligned}
    \end{equation}

    The last inequality presents only the dominant terms of $\beta$ and $T$. If we choose $\beta=T^{-\eta}$ for some $\eta\in(0,1)$, then following \eqref{eq_bound_Jr_final} and \eqref{eq_83}, we can compactly write the following inequalities.
    \begin{align*}
        &\frac{1}{K}\sum_{k=1}^{K}\mathbf{E}\bigg(J_r^{\pi^*}-J_r(\theta_k)\bigg)\leq  \sqrt{\epsilon_{\mathrm{bias}}} + \tilde{\mathcal{O}}\left(T^{-\eta/2}+T^{-\xi/2}+T^{-(1-\xi)/2}\right),\\
        &\mathbf{E}\left[\frac{1}{K}\sum_{k=1}^{K} J_c(\theta_k)\right]\leq \delta\sqrt{\epsilon_{\mathrm{bias}}} \\
        &\hspace{2cm}+ \tilde{\mathcal{O}}\left(T^{-(1-\xi-\eta)}+T^{-\eta/2}+T^{-\xi/2}+T^{-(1-\xi)/2}\right)
    \end{align*}

    The optimal values of $\eta$ and $\xi$ can be obtained by solving the following optimization.
    \begin{align}
        {\max}_{(\eta, \xi)\in (0,1)^2} \min \left\lbrace 1-\xi-\eta, \dfrac{\eta}{2}, \dfrac{\xi}{2}, \dfrac{1-\xi}{2} \right\rbrace 
    \end{align}

    One can easily verify that $(\xi, \eta) = \left(2/5, 2/5\right)$ is the solution of the above optimization. Therefore, the convergence rate of the objective function can be written as follows.
    \begin{equation}
        \label{eq_bound_Jr_final_par_substituted}
        \begin{aligned}
            &\frac{1}{K}\sum_{k=1}^{K}\mathbf{E}\bigg(J_r^{\pi^*}-J_r(\theta_k)\bigg)\leq  \sqrt{\epsilon_{\mathrm{bias}}} + \mathcal{O}\left(\dfrac{1}{\delta T^2}+\dfrac{1}{T^{2/5}}\right)\\
            &+ G\left(1+\dfrac{1}{\mu_F}\right)\tilde{\mathcal{O}}\left(\dfrac{1}{T^{1/5}}+\dfrac{\sqrt{A}Gt_{\mathrm{mix}}}{\delta T^{1/5}}+\dfrac{\sqrt{Lt_{\mathrm{mix}}t_{\mathrm{hit}}}}{\delta T^{3/10}}\right)\\
            &+\dfrac{B}{L}\tilde{\mathcal{O}}\left(\dfrac{\delta^2+AG^2t_{\mathrm{mix}}^2}{\delta^2 T^{2/5}}+\dfrac{Lt_{\mathrm{mix}}t_{\mathrm{hit}}}{\delta^2 T^{3/5}}\right)\\
            &+ \mathcal{\tilde{O}}\bigg(\frac{Lt_{\mathrm{mix}}t_{\mathrm{hit}}\mathbf{E}_{s\sim d^{\pi^*}}[KL(\pi^*(\cdot\vert s)\Vert\pi_{\theta_1}(\cdot\vert s))]}{T^{3/5}\delta }\bigg)\\
            &\leq \sqrt{\epsilon_{\mathrm{bias}}} + \dfrac{\sqrt{A}G^2t_{\mathrm{mix}}}{\delta}\left(1+\dfrac{1}{\mu_F}\right)\tilde{\mathcal{O}}\left(T^{-1/5}\right)
        \end{aligned}
    \end{equation}

    The last expression only considers the dominant terms of $T$. Similarly, the constraint violation rate can be computed as,
    \begin{align}
        \label{eq_87_new}
        \begin{split}
            &\mathbf{E}\bigg[\frac{1}{K}\sum_{k=1}^{K}J_c(\theta_k)\bigg] \leq \delta\sqrt{\epsilon_{\mathrm{bias}}}+ \tilde{\mathcal{O}}\left(\dfrac{t_{\mathrm{mix}}t_{\mathrm{hit}}}{\delta T^{1/5}}+\dfrac{1}{ T^2}+\dfrac{\delta}{T^{2/5}}\right) \\
            &+G\left(1+\dfrac{1}{\mu_F}\right)\tilde{\mathcal{O}}\left(\dfrac{\delta+\sqrt{A}Gt_{\mathrm{mix}}}{ T^{1/5}}+\dfrac{\sqrt{Lt_{\mathrm{mix}}t_{\mathrm{hit}}}}{ T^{3/10}}\right)\\
            &+\dfrac{B}{L}\tilde{\mathcal{O}}\left(\dfrac{\delta^2+ AG^2t_{\mathrm{mix}}^2}{\delta T^{2/5}}+\dfrac{Lt_{\mathrm{mix}}t_{\mathrm{hit}}}{\delta T^{3/5}}\right)\\
            &+ \mathcal{\tilde{O}}\bigg(\frac{Lt_{\mathrm{mix}}t_{\mathrm{hit}}\mathbf{E}_{s\sim d^{\pi^*}}[KL(\pi^*(\cdot\vert s)\Vert\pi_{\theta_1}(\cdot\vert s))]}{T^{3/5}}\bigg)\\
            &\leq \delta\sqrt{\epsilon_{\mathrm{bias}}} + \tilde{\mathcal{O}}\left(\dfrac{t_{\mathrm{mix}}t_{\mathrm{hit}}}{\delta T^{1/5}}\right) + \sqrt{A}G^2t_{\mathrm{mix}}\left(1+\dfrac{1}{\mu_F}\right)\tilde{\mathcal{O}}\left(T^{-1/5}\right)
        \end{split}
    \end{align}
    where the last expression contains only the dominant terms of $T$. This concludes the theorem.
\end{proof}


\section{Regret Analysis and Constraint Violation Analysis}\label{mf:reg}

In this section, we use the convergence analysis in the previous section to bound the expected regret and constraint violation of Algorithm \ref{alg:PG_MAG}. Note that the regret and constraint violation can be written as follows.
\begin{equation}
    \label{reg_vio_decompose}
    \begin{aligned}
        &R(T) = H\sum_{k=1}^{K}\left(J_r^*-J_r({\theta_k})\right)+\sum_{k=1}^{K}\sum_{t\in\mathcal{I}_k} \left(J_r(\theta_k)-r(s_t, a_t)\right)\\
        &C(T) = H\sum_{k=1}^{K}\left(J_c({\theta_k})\right)-\sum_{k=1}^{K}\sum_{t\in\mathcal{I}_k} \left(J_c(\theta_k)-c(s_t, a_t)\right)
    \end{aligned}
\end{equation}
where $\mathcal{I}_k\triangleq \{(k-1)H, \cdots, kH-1\}$. Note that the expectation of the first terms in regret and violation can be bounded by Theorem \ref{thm_global_convergence}. The expectation of the second term can be expanded as follows,
\begin{equation}\label{eq_38}
    \begin{aligned}
        &\mathbf{E}\left[\sum_{k=1}^{K}\sum_{t\in\mathcal{I}_k} \left(J_g(\theta_k)-g(s_t, a_t)\right)\right]\\
        &\overset{(a)}{=}\mathbf{E}\left[\sum_{k=1}^{K}\sum_{t\in\mathcal{I}_k} \mathbf{E}_{s'\sim P(\cdot|s_t, a_t)}[V_g^{\pi_{\theta_k}}(s')]-Q_g^{\pi_{\theta_k}}(s_t, a_t)\right]\\
        &\overset{(b)}{=}\mathbf{E}\left[\sum_{k=}^{K}\sum_{t\in\mathcal{I}_k} V_g^{\pi_{\theta_k}}(s_{t+1})-V_g^{\pi_{\theta_k}}(s_t)\right]\\
        &=\mathbf{E}\left[\sum_{k=1}^K V_g^{\pi_{\theta_k}}(s_{kH})-V_g^{\pi_{\theta_k}}(s_{(k-1)H})\right]\\
        &=\mathbf{E}\left[\sum_{k=1}^{K-1} V_g^{\pi_{\theta_{k+1}}}(s_{kH})-V_g^{\pi_{\theta_k}}(s_{kH})\right] + \mathbf{E}\left[V_g^{\pi_{\theta_K}}(s_{T})-V_g^{\pi_{\theta_0}}(s_{0})\right]
    \end{aligned}
\end{equation}
where $g\in\{r, c\}$. Equality $(a)$ uses the Bellman equation and $(b)$ follows from the definition of $Q_g$. The first and second terms in the last line of \eqref{eq_38} can be bounded by Lemma \ref{lemma_last} and \ref{lemma_aux_2} (forthcoming) respectively.

\begin{lemma}
    \label{lemma_last}
    If assumptions \ref{ass_1} and \ref{ass_score} hold, then for $K=\frac{T}{H}$ where $H=16t_{\mathrm{mix}}t_{\mathrm{hit}}T^{\frac{2}{5}}(\log_2 T)^2$, the following  hold $\forall k$, $\forall (s, a)$, $\forall g\in\{r, c\}$, sufficiently large $T$, and an arbitrary sequence of states $\{s_k\}_{k=1}^K$.
    \begin{align*}
        &(a) ~|\pi_{\theta_{k+1}}(a|s)-\pi_{\theta_{k}}(a|s)|\leq G\norm{\theta_{k+1}-\theta_k}\\
        &(b) ~\sum_{k=1}^{K}\mathbf{E}|J_g(\theta_{k+1})-J_g(\theta_k)|
        \leq \dfrac{\alpha AG}{\delta t_{\mathrm{hit}}}f(T)\\
        &(c) ~\sum_{k=1}^K\mathbf{E}|V_g^{\pi_{\theta_{k+1}}}(s_k) - V_g^{\pi_{\theta_{k}}}(s_k)| 
        \leq  t_{\mathrm{mix}}\dfrac{\alpha AG}{\delta t_{\mathrm{hit}}}f(T)
    \end{align*}
    where $f(T)\triangleq \tilde{\mathcal{O}}\left(\left[\sqrt{A}G t_{\mathrm{mix}}+\delta\right]T^{\frac{2}{5}}+\sqrt{Lt_{\mathrm{mix}}t_{\mathrm{hit}}}T^{\frac{3}{10}}\right)$.
\end{lemma}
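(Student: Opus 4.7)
The plan is to handle the three parts sequentially, using each earlier part as a building block for the next.

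\textbf{Part (a).} I would start from the identity $\nabla_\theta \pi_\theta(a|s) = \pi_\theta(a|s) \nabla_\theta \log \pi_\theta(a|s)$. Applying the mean value theorem along the segment connecting $\theta_k$ and $\theta_{k+1}$ gives $|\pi_{\theta_{k+1}}(a|s) - \pi_{\theta_k}(a|s)| \leq \sup_{\tilde\theta} \|\nabla_\theta \pi_{\tilde\theta}(a|s)\| \cdot \|\theta_{k+1}-\theta_k\|$. Using $\pi_{\tilde\theta}(a|s)\leq 1$ and the Lipschitz score bound from Assumption~\ref{ass_score} then yields the desired inequality.

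\textbf{Part (b).} The natural tool here is the Lagrange/performance difference identity in Lemma~\ref{lem_performance_diff} applied with $\lambda=0$ (to the reward and cost separately). This gives
\begin{equation*}
    J_g(\theta_{k+1}) - J_g(\theta_k) = \mathbf{E}_{s\sim d^{\pi_{\theta_{k+1}}}} \bigg[\sum_{a}\bigl(\pi_{\theta_{k+1}}(a|s) - \pi_{\theta_k}(a|s)\bigr) Q_g^{\pi_{\theta_k}}(s,a)\bigg],
\end{equation*}
where I have subtracted $\sum_a \pi_{\theta_k}(a|s) A_g^{\pi_{\theta_k}}(s,a) = 0$ and used $A_g^{\pi_{\theta_k}} = Q_g^{\pi_{\theta_k}} - V_g^{\pi_{\theta_k}}$ together with the fact that $\sum_a(\pi_{\theta_{k+1}}(a|s)-\pi_{\theta_k}(a|s))V_g^{\pi_{\theta_k}}(s)=0$. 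Applying part (a) to bound the policy difference pointwise by $G\|\theta_{k+1}-\theta_k\|$, and invoking the upcoming Lemma~\ref{lemma_aux_2} to bound $|Q_g^{\pi_{\theta_k}}(s,a)| = \mathcal{O}(t_{\mathrm{mix}})$, gives the per-step bound $|J_g(\theta_{k+1}) - J_g(\theta_k)| \leq \mathcal{O}(AGt_{\mathrm{mix}})\alpha\|\omega_k\|$. Summing over $k$ and applying Cauchy--Schwarz yields
\begin{equation*}
    \sum_{k=1}^K \mathbf{E}\|\omega_k\| \leq K\sqrt{\tfrac{1}{K}\sum_k \mathbf{E}\|\omega_k\|^2}.
\end{equation*}
I then plug in the mean-square bound \eqref{eq_34} derived earlier (at $\xi=2/5$, $\beta=T^{-2/5}$) together with $K = T/H = \tilde{\mathcal{O}}(T^{3/5}/(t_{\mathrm{mix}}t_{\mathrm{hit}}))$, so that the factors of $t_{\mathrm{mix}}$ in the gradient bound and the factor $K/(t_{\mathrm{mix}}t_{\mathrm{hit}})$ combine to produce exactly the stated $f(T)/(\delta t_{\mathrm{hit}})$ scaling.

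\textbf{Part (c).} This is the trickiest piece because the value functions are evaluated at a common state under two different policies, so I cannot use Lemma~\ref{lem_performance_diff} directly. Let $\Delta V(s) \triangleq V_g^{\pi_{\theta_{k+1}}}(s) - V_g^{\pi_{\theta_k}}(s)$ and $\Delta J \triangleq J_g(\theta_{k+1})-J_g(\theta_k)$. Combining the Bellman equation \eqref{eq_bellman} with $V_g^{\pi_\theta}(s)=\sum_a \pi_\theta(a|s) Q_g^{\pi_\theta}(s,a)$ and algebraically manipulating yields a Poisson-type recursion
\begin{equation*}
    (I - P^{\pi_{\theta_k}})\Delta V = \delta_1 - \Delta J \cdot \mathbf{1},
    \qquad \delta_1(s) \triangleq \sum_a (\pi_{\theta_{k+1}}(a|s)-\pi_{\theta_k}(a|s))\,Q_g^{\pi_{\theta_{k+1}}}(s,a).
\end{equation*}
Using the stationarity condition $\sum_s d^{\pi_{\theta}}(s)V_g^{\pi_{\theta}}(s)=0$ under which $V_g^{\pi_\theta}$ is uniquely defined, the solution admits the series representation $\Delta V(s) = \sum_{t=0}^\infty [(P^{\pi_{\theta_k}})^t(s,\cdot) - d^{\pi_{\theta_k}}](\delta_1 - \Delta J\cdot\mathbf{1})$, and the geometric mixing in Assumption~\ref{ass_1} bounds $\|\Delta V\|_\infty \leq \mathcal{O}(t_{\mathrm{mix}})\bigl(\|\delta_1\|_\infty + |\Delta J|\bigr)$. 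The bounds from parts (a) and (b) then give $\|\delta_1\|_\infty, |\Delta J| \leq \mathcal{O}(AGt_{\mathrm{mix}})\alpha\|\omega_k\|$, yielding $|\Delta V(s_k)| \leq \mathcal{O}(AGt_{\mathrm{mix}}^2)\alpha\|\omega_k\|$. Summing over $k$ and reusing the Cauchy--Schwarz/$\sum_k\mathbf{E}\|\omega_k\|$ estimate from part (b) produces the extra $t_{\mathrm{mix}}$ factor that appears in the claim.

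The main obstacle will be part (c): turning the recursion into a clean mixing-time bound requires care with the uniqueness/centering convention for $V_g^{\pi_\theta}$, and the series representation must be justified by the geometric ergodicity in Assumption~\ref{ass_1}. Everything else reduces to invoking part (a), Lemma~\ref{lemma_aux_2}, and the second-moment estimate \eqref{eq_34}.
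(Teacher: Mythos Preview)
Parts (a) and (b) match the paper's proof essentially line-for-line: Taylor/mean-value for (a), the performance-difference identity (Lemma~\ref{lemma_aux_5}) plus the $Q$-bound from Lemma~\ref{lemma_aux_2} and Cauchy--Schwarz against \eqref{eq_34} for (b).

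For part (c), your Poisson-equation route is different from the paper and cleaner in spirit, but there is a genuine gap in the series representation step. Your recursion $(I-P^{\pi_{\theta_k}})\Delta V=\delta_1-\Delta J\cdot\mathbf{1}$ is correct, and the right-hand side does satisfy $d^{\pi_{\theta_k}}\cdot(\delta_1-\Delta J\cdot\mathbf{1})=0$. However, the series $\sum_{t\geq 0}[(P^{\pi_{\theta_k}})^t(s,\cdot)-d^{\pi_{\theta_k}}]\cdot f$ is the \emph{unique} solution with $d^{\pi_{\theta_k}}\cdot x=0$, whereas $\Delta V$ is \emph{not} centered with respect to $d^{\pi_{\theta_k}}$: the convention gives $d^{\pi_{\theta_k}}\cdot V_g^{\pi_{\theta_k}}=0$ and $d^{\pi_{\theta_{k+1}}}\cdot V_g^{\pi_{\theta_{k+1}}}=0$, so $d^{\pi_{\theta_k}}\cdot\Delta V=\sum_s(d^{\pi_{\theta_k}}(s)-d^{\pi_{\theta_{k+1}}}(s))V_g^{\pi_{\theta_{k+1}}}(s)\triangleq c$, which is generally nonzero. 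Hence $\Delta V=c\mathbf{1}+\text{(your series)}$, and you must separately bound $|c|$. This is fixable via a stationary-distribution perturbation bound, e.g., $\|d^{\pi_{\theta_{k+1}}}-d^{\pi_{\theta_k}}\|_1=\mathcal{O}(t_{\mathrm{mix}})\max_s\|P^{\pi_{\theta_{k+1}}}(s,\cdot)-P^{\pi_{\theta_k}}(s,\cdot)\|_1=\mathcal{O}(t_{\mathrm{mix}}AG\alpha\|\omega_k\|)$, giving $|c|=\mathcal{O}(t_{\mathrm{mix}}^2AG\alpha\|\omega_k\|)$---the same order as your series term---so the final bound survives. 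But this step is missing, not merely notational.

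The paper avoids the centering issue altogether by working directly with the explicit formula $V_g^{\pi_\theta}(s)=\sum_{t\geq 0}\langle(P^{\pi_\theta})^t(s,\cdot)-d^{\pi_\theta},g^{\pi_\theta}\rangle$ for \emph{each} policy separately, truncating both sums at $N=4t_{\mathrm{mix}}\log_2 T$ via Lemma~\ref{lemma_aux_3}, and then bounding the truncated difference term-by-term: a $\|(P^{\pi_{\theta_{k+1}}})^t-(P^{\pi_{\theta_k}})^t\|$ piece (handled by a one-step telescoping recursion), a $\|g^{\pi_{\theta_{k+1}}}-g^{\pi_{\theta_k}}\|_\infty$ piece, and the $N|\Delta J|$ piece from part (b). Your approach is conceptually tighter (no truncation, no $\log T$ inflation) once the constant $c$ is handled; the paper's approach is more mechanical but sidesteps any appeal to stationary-distribution sensitivity.
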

\begin{proof}
    Using Taylor's expansion, we write the following $\forall (s, a)$, $\forall k$.
    \begin{align}
	\label{eq_pi_lipschitz}
	\begin{split}
            &|\pi_{\theta_{k+1}}(a|s)-\pi_{\theta_{k}}(a|s)|=\left|(\theta_{k+1}-\theta_k)^T\nabla_{\theta}\pi_{\bar\theta}(a|s) \right|\\
            &=\pi_{\bar{\theta}_k}(a|s)\left|(\theta_{k+1}\theta_k)^T\nabla_{\theta}\log \pi_{\bar{\theta}_k}(a|s) \right|\overset{(a)}{\leq} G\norm{\theta_{k+1}-\theta_k}
	\end{split} 
    \end{align}
    where $\bar{\theta}_k$ is some convex combination of $\theta_{k}$ and $\theta_{k+1}$ and $(a)$ follows from Assumption \ref{ass_score}. Applying \eqref{eq_pi_lipschitz} and Lemma \ref{lemma_aux_5}, we obtain,
    \begin{align}
	\label{eq_long_49}
	\begin{split}
            &\sum_{k=1}^{K}\mathbf{E}\left|J_g(\theta_{k+1}) - J_g(\theta_{k})\right|\\
            &=\sum_{k=1}^{K}\mathbf{E}\left|\sum_{s,a}d^{\pi_{\theta_{k+1}}}(s)(\pi_{\theta_{k+1}}(a|s)-\pi_{\theta_{k}}(a|s))Q_g^{\pi_{\theta_{k}}}(s, a)\right|\\
            &\leq \sum_{k=1}^{K}\mathbf{E}\bigg[\sum_{s,a}d^{\pi_{\theta_{k+1}}}(s)\left|\pi_{\theta_{k+1}}(a|s)-\pi_{\theta_{k}}(a|s)\right|\left|Q_g^{\pi_{\theta_{k}}}(s, a)\right|\bigg]\\
            &\leq G\sum_{k=1}^{K}\mathbf{E}\left[\sum_{s,a}d^{\pi_{\theta_{k+1}}}(s)\Vert\theta_{k+1}-\theta_{k}\Vert|Q_g^{\pi_{\theta_{k-1}}}(s, a)|\right]\\
            &\overset{(a)}{\leq} G\alpha\sum_{k=1}^{K}\mathbf{E}\bigg[\sum_{a} \sum_{s}d^{\pi_{\theta_{k+1}}}(s)\Vert\omega_k\Vert\cdot 6t_{\mathrm{mix}}\bigg]\\
            & \overset{(b)}\leq 6AG\alpha t_{\mathrm{mix}}\sqrt{K}\left(\sum_{k=1}^{K}\mathbf{E}\norm{\omega_k}^2\right)^{\frac{1}{2}}\\
            &\overset{(c)}{\leq} \mathcal{\tilde{O}}\left(\dfrac{\alpha AG}{\delta t_{\mathrm{hit}}}\left[\left(\sqrt{A}G t_{\mathrm{mix}}+\delta\right)T^{\frac{2}{5}}+\sqrt{Lt_{\mathrm{mix}}t_{\mathrm{hit}}}T^{\frac{3}{10}}\right]\right)
	\end{split}
    \end{align}
	
    Inequality $(a)$ uses Lemma \ref{lemma_aux_2} and the rule $\theta_{k+1}=\theta_k+\alpha \omega _k$. Step $(b)$ holds by the Cauchy inequality and Jensen inequality whereas $(c)$ can be derived using $\eqref{eq_34}$ and substituting $K=T/H$. This establishes the second statement. Next, recall from $(\ref{eq_r_pi_theta})$ that  $g^{\pi_{\theta}}(s) \triangleq \sum_a\pi_{\theta}(a|s)g(s, a)$. Note that, for any $\theta$, and $s\in\mathcal{S}$, the following holds.
    \begin{align}
	\label{eq_49}
        \begin{split}
            &V_g^{\pi_{\theta}}(s)=\sum_{t=0}^{\infty}\left<(P^{\pi_{\theta}})^t(s,\cdot) - d^{\pi_{\theta}}, g^{\pi_{\theta}}\right> \\
            &=\sum_{t=0}^{N-1}\left<(P^{\pi_{\theta}})^t({s,\cdot}),g^{\pi_{\theta}}\right> - NJ(\theta) +\sum_{t=N}^{\infty}\left<(P^{\pi_{\theta}})^t({s,\cdot}) - d^{\pi_{\theta}},g^{\pi_{\theta}}\right>
        \end{split}
    \end{align}
    where $\left<\cdot, \cdot\right>$ denotes the dot product. Define the following quantity.
    \begin{align}
	\label{def_error}
        \delta^{\pi_{\theta}}(s, T) \triangleq \sum_{t=N}^{\infty}\norm{(P^{\pi_{\theta}})^t({s,\cdot}) - d^{\pi_{\theta}}}_1 ~\text{where} ~N=4t_{\mathrm{mix}}(\log_2 T)
    \end{align}
	
    Lemma \ref{lemma_aux_3} states that for sufficiently large $T$, $\delta^{\pi_{\theta}}(s, T)\leq \frac{1}{T^3}$ for any $\theta$ and $s$. Combining this result with the fact that the reward function is bounded in $[0, 1]$, we obtain,	
    \begin{align}
	\label{eq_exp_diff_v}
	\begin{split}    
            &\sum_{k=1}^K\mathbf{E}|V_g^{\pi_{\theta_{k+1}}}(s_k) - V_g^{\pi_{\theta_{k}}}(s_k)|\\
            &\leq \sum_{k=1}^K\mathbf{E}\left|\sum_{t=0}^{N-1}\left<(P^{\pi_{\theta_{k+1}}})^t({s_k,\cdot}) - (P^{\pi_{\theta_k}})^t({s_k,\cdot}), g^{\pi_{\theta_{k+1}}}\right>\right|\\
            &+ \sum_{k=1}^K\mathbf{E}\left|\sum_{t=0}^{N-1}\left<(P^{\pi_{\theta_k}})^t({s_k,\cdot}), g^{\pi_{\theta_{k+1}}}-g^{\pi_{\theta_k}}\right>\right| \\
            &+ N\sum_{k=1}^K\mathbf{E}|J_g(\theta_{k+1}) - J_g(\theta_{k})| + \frac{2K}{T^3}\\
            &\overset{(a)}{\leq} \sum_{k=1}^K\sum_{t=0}^{N-1}\mathbf{E}\norm{ (P^{\pi_{\theta_{k+1}}})^t - (P^{\pi_{\theta_k}})^t)g^{\pi_{\theta_{k+1}}} }_{\infty}\\
            &+\sum_{k=1}^K\sum_{t=0}^{N-1} \mathbf{E}\norm{g^{\pi_{\theta_{k+1}}}-g^{\pi_{\theta_k}}}_{\infty} \\
            &+ \mathcal{\tilde{O}}\left(\dfrac{\alpha AG t_{\mathrm{mix}}}{\delta t_{\mathrm{hit}}}\left[\left(\sqrt{A}G t_{\mathrm{mix}}+\delta\right)T^{\frac{2}{5}}+\sqrt{Lt_{\mathrm{mix}}t_{\mathrm{hit}}}T^{\frac{3}{10}}\right]\right)
	\end{split}
    \end{align}
    where $(a)$ follows from \eqref{eq_long_49} and using $N=4t_{\mathrm{mix}}(\log_2 T)$. Note that,
    \begin{align}
	\label{eq_long_recursion}
	\begin{split}
            &\norm{ ((P^{\pi_{\theta_{k+1}}})^t - (P^{\pi_{\theta_k}})^t)g^{\pi_{\theta_{k+1}}} }_{\infty}\\ &\leq \norm{ P^{\pi_{\theta_{k+1}}}((P^{\pi_{\theta_{k+1}}})^{t-1} - (P^{\pi_{\theta_k}})^{t-1})g^{\pi_{\theta_{k+1}}} }_{\infty}\\
            & \hspace{1cm}+ \norm{ (P^{\pi_{\theta_{k+1}}} - P^{\pi_{\theta_k}})(P^{\pi_{\theta_k}})^{t-1}g^{\pi_{\theta_{k+1}}} }_{\infty}\\
            &\overset{(a)}{\leq} \norm{ ((P^{\pi_{\theta_{k+1}}})^{t-1} - (P^{\pi_{\theta_k}})^{t-1})g^{\pi_{\theta_{k+1}}} }_{\infty} \\
            &\hspace{1cm} + \max_s\norm{P^{\pi_{\theta_{k+1}}}({s,\cdot})-P^{\pi_{\theta_k}}({s,\cdot})}_1
        \end{split}
    \end{align}
    where $(a)$ follows from $\norm{(P^{\pi_{\theta_k}})^{t-1}g^{\pi_{\theta_{k+1}}}}_{\infty}\leq 1$ and the fact that each row of $P^{\pi_{\theta_k}}$ sums to $1$. Invoking \eqref{eq_pi_lipschitz}, and $\theta_{k+1}=\theta_k + \alpha \omega_k$, we get,
    \begin{align*}
        &\max_s\Vert P^{\pi_{\theta_{k+1}}}({s,\cdot})-P^{\pi_{\theta_k}}({s,\cdot})\Vert_1 \\
        &=\max_s\left| \sum_{s'}\sum_a(\pi_{\theta_{k+1}}(a|s)-\pi_{\theta_k}(a|s))P(s'|s, a)\right| \\
        &\leq G \norm{\theta_{k+1}-\theta_k}\max_s\left| \sum_{s'}\sum_a P(s'|s, a)\right| \leq \alpha AG\norm{\omega_k}  
    \end{align*}
	
    Plugging the above result into $(\ref{eq_long_recursion})$ and using a recursion, we get,
    \begin{align*}
        \norm{ ((P^{\pi_{\theta_{k+1}}})^t - (P^{\pi_{\theta_k}})^t)g^{\pi_{\theta_{k+1}}} }_{\infty} &\leq \sum_{t'=1}^{t} \max_s\norm{P^{\pi_{\theta_{k+1}}}({s,\cdot})-P^{\pi_{\theta_k}}({s,\cdot})}_1\\
        &\leq \sum_{t'=1}^{t}\alpha AG\norm{\omega_k}  \leq \alpha At G\norm{\omega_k}
    \end{align*}
	
    Finally, we arrive at the following.
    \begin{align}
        \label{eq_app_54}
        \begin{split}
            &\sum_{k=1}^K\sum_{t=0}^{N-1} \mathbf{E}\norm{ ((P^{\pi_{\theta_{k+1}}})^t - (P^{\pi_{\theta_k}})^t)g^{\pi_{\theta_{k+1}}} }_{\infty}\\
            &\leq \sum_{k=1}^K\sum_{t=0}^{N-1}\alpha t AG\mathbf{E}\norm{\omega_k} \\
            &\overset{(a)}{\leq} \mathcal{O}(\alpha AG N^2\sqrt{K}) \left(\sum_{k=1}^K\mathbf{E}\norm{\omega_k}^2\right)^{\frac{1}{2}}\\
            &\overset{(b)}{\leq}\mathcal{\tilde{O}}\left(\dfrac{\alpha AG t_{\mathrm{mix}}}{\delta t_{\mathrm{hit}}}\left[\left(\sqrt{A}G t_{\mathrm{mix}}+\delta\right)T^{\frac{2}{5}}+\sqrt{Lt_{\mathrm{mix}}t_{\mathrm{hit}}}T^{\frac{3}{10}}\right]\right)
	\end{split}
    \end{align}
    where $(a)$ is a result of the Cauchy-Schwarz inequality and $(b)$ follows from \eqref{eq_34}. Moreover, notice that,
    \begin{equation}
        \label{eq_app_55}
        \begin{aligned}
            &\sum_{k=1}^{K}\sum_{t=0}^{N-1}\mathbf{E}\norm{g^{\pi_{\theta_{k+1}}}-g^{\pi_{\theta_{k}}}}_{\infty}\\
            &\overset{}{\leq} \sum_{k=1}^{K}\sum_{t=0}^{N-1}\mathbf{E}\left[\max_s\left|\sum_a(\pi_{\theta_{k+1}}(a|s)-\pi_{\theta_{k}}(a|s))g(s,a)\right|\right]\\
            &\overset{(a)}{\leq}\alpha AGN \sum_{k=1}^{K} \mathbf{E}\norm{\omega_k}\leq \alpha AGN\sqrt{K} \left(\sum_{k=1}^{K} \mathbf{E}\norm{\omega_k}^2\right)^{\frac{1}{2}}\\
            &\overset{(b)}{\leq}\mathcal{\tilde{O}}\left(\dfrac{\alpha AG}{\delta t_{\mathrm{hit}}}\left[\left(\sqrt{A}G t_{\mathrm{mix}}+\delta\right)T^{\frac{2}{5}}+\sqrt{Lt_{\mathrm{mix}}t_{\mathrm{hit}}}T^{\frac{3}{10}}\right]\right)
	\end{aligned}
    \end{equation}
    where $(a)$ follows from \eqref{eq_pi_lipschitz} and the parameter update rule $\theta_{k+1}=\theta_k + \alpha \omega_k$ while $(b)$ is a consequence of \eqref{eq_34}. Combining \eqref{eq_exp_diff_v}, \eqref{eq_app_54}, and  \eqref{eq_app_55}, we establish the third statement.
\end{proof}

Lemma \ref{lemma_last} states that the obtained parameters are such that the average consecutive difference in the sequence $\{J_g(\theta_k)\}_{k=1}^K$, $g\in\{r, c\}$ decreases with time horizon, $T$. We would like to emphasize that Lemma \ref{lemma_last} works for both reward and constraint functions. Thus, we can prove our regret guarantee and constraint violation as follows.

\begin{theorem}
    \label{thm_regret}
    If assumptions \ref{ass_1}$-$\ref{ass_4} hold, $J_g(\cdot)$'s are $L$-smooth, $\forall g\in\{r, c\}$ and $T$ are sufficiently large, then our proposed Algorithm \ref{alg:PG_MAG} achieves the following expected regret and constraint violation bounds with learning rates $\alpha=\frac{1}{4L(1+\frac{2}{\delta})}$ and $\beta=T^{-2/5}$.
    \begin{align}
        \label{eq_regret}
        \begin{split}
            &\mathbf{E}\left[R(T)\right] \leq T\sqrt{\epsilon_{\mathrm{bias}}} +\tilde{\mathcal{O}}(T^{4/5})+\mathcal{O}(t_{\mathrm{mix}})\\
            &\mathbf{E}\left[C(T)\right] \leq T\delta\sqrt{\epsilon_{\mathrm{bias}}} +\tilde{\mathcal{O}}(T^{4/5})+\mathcal{O}(t_{\mathrm{mix}})\\
        \end{split}
    \end{align}
\end{theorem}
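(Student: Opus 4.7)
The plan is to directly combine the decomposition \eqref{reg_vio_decompose} with the global convergence bounds already established in Theorem \ref{thm_global_convergence} and the consecutive-difference bounds from Lemma \ref{lemma_last}. Set $g = r$ for regret and $g = c$ for constraint violation. The first term on the right-hand side of each decomposition, namely $H\sum_{k=1}^K (J_r^{*} - J_r(\theta_k))$ for regret and $H\sum_{k=1}^K J_c(\theta_k)$ for constraint violation, is handled by multiplying the average bounds in Theorem \ref{thm_global_convergence} by $HK = T$. This immediately produces the leading $T\sqrt{\epsilon_{\mathrm{bias}}}$ term and an $\tilde{\mathcal{O}}(T \cdot T^{-1/5}) = \tilde{\mathcal{O}}(T^{4/5})$ contribution in both bounds.

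Next I would bound the second, ``system stochasticity,'' term $\sum_{k=1}^K\sum_{t\in\mathcal{I}_k}(J_g(\theta_k) - g(s_t, a_t))$ using the manipulation already performed in \eqref{eq_38}. That computation applies the Bellman equation \eqref{eq_bellman} so that each per-step term becomes $\mathbf{E}_{s'\sim P(\cdot|s_t, a_t)}[V_g^{\pi_{\theta_k}}(s')] - Q_g^{\pi_{\theta_k}}(s_t, a_t)$; taking conditional expectations telescopes the sum within each epoch, leaving only the boundary terms $V_g^{\pi_{\theta_k}}(s_{kH}) - V_g^{\pi_{\theta_k}}(s_{(k-1)H})$. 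Re-indexing then splits this into (i) a sum $\sum_{k=1}^{K-1}[V_g^{\pi_{\theta_{k+1}}}(s_{kH}) - V_g^{\pi_{\theta_k}}(s_{kH})]$ measuring how much the value function shifts across a policy update at a fixed state, and (ii) a single boundary term $V_g^{\pi_{\theta_K}}(s_T) - V_g^{\pi_{\theta_0}}(s_0)$.

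For (i), Lemma \ref{lemma_last}(c) gives exactly $\mathbf{E}\sum_{k=1}^{K}|V_g^{\pi_{\theta_{k+1}}}(s_k) - V_g^{\pi_{\theta_k}}(s_k)| \leq t_{\mathrm{mix}}\frac{\alpha AG}{\delta t_{\mathrm{hit}}}f(T)$ with $f(T) = \tilde{\mathcal{O}}([\sqrt{A}Gt_{\mathrm{mix}} + \delta]T^{2/5} + \sqrt{Lt_{\mathrm{mix}}t_{\mathrm{hit}}}T^{3/10})$; since $\alpha = 1/(4L(1+2/\delta))$ is $\mathcal{O}(1)$ in $T$, this contributes at most $\tilde{\mathcal{O}}(T^{4/5})$. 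For (ii), Lemma \ref{lemma_aux_2} bounds $|V_g^{\pi_\theta}(s)| \leq \mathcal{O}(t_{\mathrm{mix}})$ uniformly, so the boundary term contributes only $\mathcal{O}(t_{\mathrm{mix}})$. Since the constraint violation decomposition in \eqref{reg_vio_decompose} has the same stochasticity term with sign flipped, the identical bound applies to $\mathbf{E}[C(T)]$ after substituting $g=c$.

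The main obstacle is bookkeeping rather than anything genuinely hard: the argument hinges on ensuring the order bounds from Theorem \ref{thm_global_convergence} (which already absorbed the optimal choice $\xi = \eta = 2/5$) are compatible with the order bounds in Lemma \ref{lemma_last} (which were derived at the same choice of $\xi$), so that the $T^{4/5}$ rate is the dominant term in both contributions simultaneously. Summing the two contributions and collecting dominant terms yields $\mathbf{E}[R(T)] \le T\sqrt{\epsilon_{\mathrm{bias}}} + \tilde{\mathcal{O}}(T^{4/5}) + \mathcal{O}(t_{\mathrm{mix}})$ and analogously $\mathbf{E}[C(T)] \le T\delta\sqrt{\epsilon_{\mathrm{bias}}} + \tilde{\mathcal{O}}(T^{4/5}) + \mathcal{O}(t_{\mathrm{mix}})$, which is exactly \eqref{eq_regret}.
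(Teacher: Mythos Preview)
Your proposal is correct and follows essentially the same approach as the paper: decompose via \eqref{reg_vio_decompose} and \eqref{eq_38}, then bound the first term with Theorem~\ref{thm_global_convergence}, the policy-update telescoping sum with Lemma~\ref{lemma_last}(c), and the single boundary term with Lemma~\ref{lemma_aux_2}. One minor remark: the contribution from Lemma~\ref{lemma_last}(c) is actually $\tilde{\mathcal{O}}(T^{2/5})$, not $\tilde{\mathcal{O}}(T^{4/5})$, but since it is dominated by the $\tilde{\mathcal{O}}(T^{4/5})$ coming from Theorem~\ref{thm_global_convergence} your conclusion is unaffected.
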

\begin{proof}
    Combining \eqref{reg_vio_decompose} and \eqref{eq_38}, we get
    \begin{align*}
        \begin{split}
            &\mathbf{E}[R(T)] = H\sum_{k=1}^{K}\left(J_r^{\pi^*}-\mathbf{E}[J_r({\theta_k})]\right) +\mathbf{E}\left[ V_r^{\pi_{\theta_K}}(s_{T})-V_r^{\pi_{\theta_0}}(s_{0})\right] \\&\hspace{1.9cm} +\mathbf{E}\left[\sum_{k=1}^{K-1} V_r^{\pi_{\theta_{k+1}}}(s_{kH})-V_r^{\pi_{\theta_k}}(s_{kH})\right]
   	\end{split}
   \end{align*}
    Using the result in Theorem \ref{thm_global_convergence}, Lemma \ref{lemma_last} and Lemma \ref{lemma_aux_2}, we get,
    \begin{align}
        \begin{split}
            &\mathbf{E}[\mathrm{Reg}_T]\leq T\sqrt{\epsilon_{\mathrm{bias}}}+ \mathcal{O}\left(\dfrac{1}{T}+T^{\frac{3}{5}}\right) +\mathcal{O}(t_{\mathrm{mix}})\\
            &+G\left(1+\dfrac{1}{\mu_F}\right)\tilde{\mathcal{O}}\left(T^{\frac{4}{5}}+\dfrac{\sqrt{A}Gt_{\mathrm{mix}}}{\delta }T^{\frac{4}{5}}+\dfrac{\sqrt{Lt_{\mathrm{mix}}t_{\mathrm{hit}}}}{\delta}T^{\frac{7}{10}}\right)\\
            &+\dfrac{B}{L}\tilde{\mathcal{O}}\left(\dfrac{\delta^2+AG^2t_{\mathrm{mix}}^2}{\delta^2}T^{\frac{3}{5}}+\dfrac{Lt_{\mathrm{mix}}t_{\mathrm{hit}}}{\delta^2}T^{\frac{2}{5}}\right)\\
            &+\mathcal{\tilde{O}}\bigg(\frac{Lt_{\mathrm{mix}}t_{\mathrm{hit}}\mathbf{E}_{s\sim d^{\pi^*}}[KL(\pi^*(\cdot\vert s)\Vert\pi_{\theta_1}(\cdot\vert s))]}{\delta }T^{\frac{2}{5}}\bigg) \\
            &+\mathcal{\tilde{O}}\left(\dfrac{\alpha AG t_{\mathrm{mix}}}{\delta t_{\mathrm{hit}}}\left[\left(\sqrt{A}G t_{\mathrm{mix}}+\delta\right)T^{\frac{2}{5}}+\sqrt{Lt_{\mathrm{mix}}t_{\mathrm{hit}}}T^{\frac{3}{10}}\right]\right)
        \end{split}
    \end{align}
    Similarly, for the constraint violation, we have
    \begin{align}
        \begin{split}
            \mathbf{E}[C(T)]&=H\sum_{k=1}^{K}\mathbf{E}[J_c({\theta_k})]-\mathbf{E}\left[\sum_{k=1}^{K-1} V_c^{\pi_{\theta_{k+1}}}(s_{kH})-V_c^{\pi_{\theta_k}}(s_{kH})\right]\\
            &-\mathbf{E}\left[ V_c^{\pi_{\theta_K}}(s_{T})-V_c^{\pi_{\theta_0}}(s_{0})\right]
        \end{split}
    \end{align}
    Using the result in Theorem \ref{thm_global_convergence}, Lemma \ref{lemma_last} and Lemma \ref{lemma_aux_2}, we get,
    \begin{align}
        \begin{split}
            &\mathbf{E}[C(T)]\leq T\delta\sqrt{\epsilon_{\mathrm{bias}}} + +\mathcal{O}(t_{\mathrm{mix}})\\
            &+G\left(1+\dfrac{1}{\mu_F}\right)\tilde{\mathcal{O}}\left(\left[\delta+\sqrt{A}Gt_{\mathrm{mix}}\right]T^{\frac{4}{5}} + \sqrt{Lt_{\mathrm{mix}}t_{\mathrm{hit}}}T^{\frac{7}{10}}\right)\\
            &+\mathcal{O}\left(\dfrac{t_{\mathrm{mix}}t_{\mathrm{hit}}}{\delta}T^{\frac{4}{5}}+\frac{1}{\delta T}+\delta T^{\frac{3}{5}}\right)\\
            &+\dfrac{B}{L}\tilde{\mathcal{O}}\left(\dfrac{\delta^2+AG^2t_{\mathrm{mix}}^2}{\delta}T^{\frac{3}{5}}+\dfrac{Lt_{\mathrm{mix}}t_{\mathrm{hit}}}{\delta}T^{\frac{2}{5}}\right)\\
            &+\mathcal{\tilde{O}}\bigg(Lt_{\mathrm{mix}}t_{\mathrm{hit}}\mathbf{E}_{s\sim d^{\pi^*}}[KL(\pi^*(\cdot\vert s)\Vert\pi_{\theta_1}(\cdot\vert s))] T^{\frac{2}{5}}\bigg) \\
            &+\mathcal{\tilde{O}}\left(\dfrac{\alpha AG t_{\mathrm{mix}}}{\delta t_{\mathrm{hit}}}\left[\left(\sqrt{A}G t_{\mathrm{mix}}+\delta\right)T^{\frac{2}{5}}+\sqrt{Lt_{\mathrm{mix}}t_{\mathrm{hit}}}T^{\frac{3}{10}}\right]\right)
        \end{split}
    \end{align}
    This completes the proof of Theorem \ref{thm_regret}.
\end{proof}

We note that this result significantly improves the best-known regret results for model-free average reward setup with constraints even in the tabular setup \citep{wei2022provably}, where the best-known regret was $\tilde{\mathcal{O}}(T^{5/6})$ (see Table \ref{table2}). 


\section{Notes and Open Problems}
\label{sec:mf:notes}

The results in this chapter are taken from  \cite{bai2023learning} which presents the state-of-the-art results for average reward CMDPs with general parameterization. However, this is far from the $\Omega(\sqrt{T})$ lower bound. For unconstrained average reward MDPs, the general parameterization was first tackled in \cite{bai2023regret} where a regret of $\tilde{\mathcal{O}}(T^{3/4})$ was achieved. Later \cite{ganesh2024variance} improved it to $\tilde{\mathcal{O}}(\sqrt{T})$. The state-of-the-art results for discounted reward unconstrained and constrained MDPs are achieved by \cite{mondal2023improved} and \cite{mondal2024sample} respectively. While the unconstrained result achieves the theoretical lower bound, the constrained one does not. In summary, there are ample opportunities to improve the state-of-the-art results of both discounted and average-reward CMDPs.
 
\begin{table*}[t]
    \centering
    \resizebox{0.99\textwidth}{!}
    {
	\begin{tabular}{|c|c|c|c|c|}
		\hline
		Algorithm & Regret & Violation & Model-free & Setting\\
		\hline
            Algorithm 1 in \citep{chen2022learning} & $\tilde{\mathcal{O}}\bigg(\sqrt{T}\bigg)$  & $\tilde{\mathcal{O}}\bigg(\sqrt{T}\bigg)$  & No & Tabular\\
		\hline
            Algorithm 2 in \citep{chen2022learning} & $\tilde{\mathcal{O}}\bigg(T^{2/3}\bigg)$ & $\tilde{\mathcal{O}}\bigg(T^{2/3}\bigg)$ & No & Tabular\\
		\hline
            UC-CURL and PS-CURL \citep{agarwal2022concave} & $\tilde{\mathcal{O}}\bigg(\sqrt{T}\bigg)$ & $0$ & No & Tabular\\
		\hline
            Algorithm 2 in \citep{ghosh2022achieving} & $\tilde{\mathcal{O}}\bigg((dT)^{3/4}\bigg)$ & $\tilde{\mathcal{O}}\bigg((dT)^{3/4}\bigg)$ & No & Linear MDP\\
		\hline
            Algorithm 3 in \citep{ghosh2022achieving} & $\tilde{\mathcal{O}}\bigg(\sqrt{T}\bigg)$ & $\tilde{\mathcal{O}}\bigg(\sqrt{T}\bigg)$ & No & Linear MDP\\
		\hline
            Triple-QA \citep{wei2022provably} & $\tilde{\mathcal{O}}\bigg(T^{5/6}\bigg)$ & $0$ & Yes & Tabular\\
		\hline
            \cite{bai2023learning} & $\tilde{\mathcal{O}}\bigg(T^{\frac{4}{5}}\bigg)$ & $\tilde{\mathcal{O}}\bigg(T^{\frac{4}{5}}\bigg)$ & Yes & General Parameterization\\
		\hline
	\end{tabular}
    }
    \caption{This table summarizes the different model-based and mode-free state-of-the-art algorithms available in the literature for average reward CMDPs. We note that the presented algorithm in \cite{bai2023learning}  is the first to analyze the regret and constraint violation for average reward CMDP with general parametrization. Here the parameter $d$ refers to the dimension of the feature map for linear MDPs.}
    \label{table2}
    \vspace{-0.25in}
\end{table*}
 
We note that the proposed algorithm in this chapter uses the knowledge of mixing time. Recently, a study for removing such dependence has been conducted in the absence of constraints. In order to do that, Multi-level Monte-Carlo (MLMC) gradient estimator is incorporated in \cite{patel2024global}, where the global convergence rate of $\tilde{\mathcal{O}}(T^{-1/4})$ is derived. The result has been further extended to the optimal global convergence rate of $\tilde{\mathcal{O}}(T^{-1/2})$ in \cite{ganesh2024ranac}, where momentum-based acceleration and Natural Actor Critic based approaches are used. However, we note that these approaches are for unconstrained setup, while extension to constrained setup is open. 

We further note that the presentation in this chapter considers linear utility function with linear constraints. For the discounted model-free setup, these assumptions have been relaxed. The impact of concave utility in the parametrized setup was studied in \cite{bai2022joint}. The problem with concave objectives and convex constraints was studied for tabular setup in \cite{bai2022achievingc}. Other related works include \cite{bai2022achieving,bai2023achieving}, which studied constrained MDP for tabular and parametrized setup, respectively.   Extension of the framework in this chapter to concave utility and convex constraints is an open direction with average rewards. 

Finally, we note that the algorithm developed in this chapter uses several parameters that might not be easily estimated (e.g., the Lipschitz constant). Designing a parameter-free algorithm for CMDPs is an important avenue to explore in the future. 


\section{Some Auxiliary Lemmas for the Proofs}
\begin{lemma}
    \label{lemma_aux_2}
    \citep[Lemma 14]{wei2020model} For any ergodic MDP with mixing time $t_{\mathrm{mix}}$, the following holds $\forall (s, a)$, $\forall \pi$. $\forall g\in\{r, c\}$.
    \begin{align*}
        (a) |V_g^{\pi}(s)|\leq 5 t_{\mathrm{mix}},~~
        (b) |Q_g^{\pi}(s, a)|\leq 6 t_{\mathrm{mix}}
    \end{align*}
\end{lemma}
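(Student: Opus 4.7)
The plan is to deduce part (a) by working directly with the series representation of $V_g^{\pi}$ in \eqref{def_v_pi_theta_s}, and then obtain part (b) almost for free by plugging the bound on $V_g^\pi$ into the Bellman equation \eqref{eq_bellman}. The only nontrivial ingredient is controlling the sum of total-variation distances from the stationary distribution, which is exactly what the mixing-time definition gives.

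First, I would expand
\begin{align*}
    V_g^\pi(s) \;=\; \sum_{t=0}^\infty \sum_{s'} \bigl[(P^\pi)^t(s,s') - d^\pi(s')\bigr] g^\pi(s'),
\end{align*}
apply H\"older's inequality term by term, and use $|g(s,a)|\le 1$ (hence $\|g^\pi\|_\infty \le 1$) to get $|V_g^\pi(s)| \le \sum_{t=0}^\infty \|(P^\pi)^t(s,\cdot) - d^\pi\|_1$. I would then partition this series into blocks of length $t_{\mathrm{mix}}$. By the definition of the mixing time, $\|(P^\pi)^{t_{\mathrm{mix}}}(s,\cdot) - d^\pi\|_{\mathrm{TV}} \le 1/4$, and the standard submultiplicative property of the TV-contraction coefficient (combined with the inequality $d(t)\le \bar d(t)\le 2d(t)$) yields $\|(P^\pi)^{k t_{\mathrm{mix}}}(s,\cdot) - d^\pi\|_1 \le 2^{1-k}$ for every $k\ge 1$. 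Using the trivial bound $\|(P^\pi)^t(s,\cdot) - d^\pi\|_1 \le 2$ for $t < t_{\mathrm{mix}}$ on the initial block and the geometric bound on subsequent blocks gives
\begin{align*}
    |V_g^\pi(s)| \;\le\; 2t_{\mathrm{mix}} + t_{\mathrm{mix}}\sum_{k=1}^\infty 2^{1-k} \;=\; 4 t_{\mathrm{mix}} \;\le\; 5 t_{\mathrm{mix}},
\end{align*}
which establishes (a).

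For (b), I would simply use \eqref{eq_bellman}: $Q_g^\pi(s,a) = g(s,a) - J_g(\theta) + \mathbf{E}_{s'\sim P(\cdot|s,a)}[V_g^\pi(s')]$. Since $|g(s,a)| \le 1$, $|J_g(\theta)| \le 1$, and the expectation over $V_g^\pi$ is bounded in absolute value by $5 t_{\mathrm{mix}}$ from part (a), the triangle inequality gives $|Q_g^\pi(s,a)| \le 2 + 5 t_{\mathrm{mix}} \le 6 t_{\mathrm{mix}}$ once $t_{\mathrm{mix}} \ge 2$ (the degenerate case $t_{\mathrm{mix}}=1$ can be handled by inspection). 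The only place I anticipate needing care is the constant in the geometric-decay step — specifically the factor-of-two conversion between the total-variation distance of a single chain from its stationary distribution and the worst-pair contraction coefficient $\bar d$ used to get submultiplicativity — but this is standard and only affects the constants, not the $O(t_{\mathrm{mix}})$ scaling.
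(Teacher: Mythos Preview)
Your proposal is correct. Note, however, that the paper does not actually supply its own proof of this lemma: it is stated as an auxiliary result cited verbatim from \citep[Lemma~14]{wei2020model}. What you have written is essentially the standard argument from that source --- bounding the series $\sum_{t\ge 0}\|(P^\pi)^t(s,\cdot)-d^\pi\|_1$ via geometric decay in blocks of length $t_{\mathrm{mix}}$ for part~(a), and then reading off part~(b) from the Bellman equation. Your handling of the constants (including the factor-of-two between $d(\cdot)$ and $\bar d(\cdot)$, and the edge case $t_{\mathrm{mix}}=1$ via the sharper $4t_{\mathrm{mix}}$ bound) is fine.
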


\begin{lemma}
    \label{lemma_aux_3}
    \citep[Corollary 13.2]{wei2020model} Let $\delta^{\pi}(\cdot, T)$ be defined as written below for an arbitrary policy $\pi$. 
    \begin{align}
        \label{def_error_aux_1}
        \delta^{\pi}(s, T) \triangleq \sum_{t=N}^{\infty}\norm{(P^{\pi})^t({s,\cdot}) - d^{\pi}}_1, \forall s ~\text{where} ~N=4t_{\mathrm{mix}}(\log_2 T)
    \end{align}
    
    If $t_{\mathrm{mix}}<T/4$, we have the following inequality $\forall s$: $\delta^{\pi}(s, T)\leq \frac{1}{T^3}$.
\end{lemma}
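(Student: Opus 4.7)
The plan is to prove this geometric-tail bound on the mixing error by combining the definition of $t_{\mathrm{mix}}$ with a submultiplicativity argument and then summing a geometric series. The key observation is that $N = 4t_{\mathrm{mix}}\log_2 T$ is designed to be deep enough into the ``mixed'' regime that each successive block of length $t_{\mathrm{mix}}$ halves the total-variation distance, yielding a $T^{-4}$ bound at time $N$; the extra factor of $t_{\mathrm{mix}}$ from summing over all $t \geq N$, once combined with the assumption $t_{\mathrm{mix}} < T/4$, absorbs into the final $T^{-3}$ bound.

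First, I would restate the definition of mixing time provided in the paper as $\|(P^{\pi})^{t_{\mathrm{mix}}}(s, \cdot) - d^{\pi}\|_{\mathrm{TV}} \leq 1/4$ for all $s$, where TV denotes the total-variation norm (equal to $\tfrac{1}{2}\|\cdot\|_1$). Next, I would introduce the standard coupling/contraction coefficient $\bar{d}(t) \triangleq \sup_{s,s'}\|(P^{\pi})^t(s,\cdot) - (P^{\pi})^t(s',\cdot)\|_{\mathrm{TV}}$, which is submultiplicative, i.e., $\bar{d}(s+t) \leq \bar{d}(s)\bar{d}(t)$. Since $d(t) \triangleq \sup_s \|(P^{\pi})^t(s,\cdot) - d^{\pi}\|_{\mathrm{TV}} \leq \bar{d}(t) \leq 2 d(t)$, the definition gives $\bar{d}(t_{\mathrm{mix}}) \leq 1/2$, and hence by submultiplicativity $d(kt_{\mathrm{mix}}) \leq \bar{d}(kt_{\mathrm{mix}}) \leq 2^{-k}$ for every integer $k \geq 1$.

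Now I would apply this at $k = 4\log_2 T$, so that $kt_{\mathrm{mix}} = N$ and $d(N) \leq T^{-4}$. By monotonicity of $d(\cdot)$ (which follows from the same submultiplicativity argument), for any $t \in [k t_{\mathrm{mix}}, (k+1)t_{\mathrm{mix}})$ with $k \geq 4\log_2 T$, one has $d(t) \leq 2^{-k}$. Summing over $t$ in each such length-$t_{\mathrm{mix}}$ block,
\begin{align*}
\delta^{\pi}(s,T)
&= \sum_{t=N}^{\infty} \|(P^{\pi})^t(s,\cdot) - d^{\pi}\|_1
\leq 2 \sum_{t=N}^{\infty} d(t) \\
&\leq 2 t_{\mathrm{mix}} \sum_{k=4\log_2 T}^{\infty} 2^{-k}
= 4 t_{\mathrm{mix}} \cdot 2^{-4\log_2 T}
= \frac{4 t_{\mathrm{mix}}}{T^4}.
\end{align*}
Finally, invoking the hypothesis $t_{\mathrm{mix}} < T/4$ gives $\delta^{\pi}(s,T) \leq T^{-3}$, as claimed.

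The only subtlety, and the one point worth being careful about, is the exact constant arising from the $\ell_1$-vs-TV conversion and from handling the non-integer $4\log_2 T$ (which one would simply round up, noting $T$ is taken sufficiently large so that $N$ is effectively a multiple of $t_{\mathrm{mix}}$ up to an additive $O(1)$, which only strengthens the bound). Since the result is cited verbatim from Corollary 13.2 of \cite{wei2020model}, a rigorous write-up could also simply quote their statement; the sketch above records the standard reasoning behind it.
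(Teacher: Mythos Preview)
Your argument is correct and is the standard proof of this bound via submultiplicativity of the Dobrushin coefficient $\bar{d}(\cdot)$. The paper itself does not prove this lemma at all; it simply cites it verbatim from \cite[Corollary 13.2]{wei2020model}, so there is no in-paper proof to compare against. Your sketch supplies exactly the reasoning one would expect behind that citation, and your remark about rounding $4\log_2 T$ (and the $\ell_1$-vs-TV factor of $2$) correctly flags the only cosmetic looseness.
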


\begin{lemma}
    \label{lemma_aux_4}
    \citep[Lemma 16]{wei2020model} Let $\mathcal{I}=\{t_1+1,t_1+2,\cdots,t_2\}$ be a certain period of an epoch $k$ of Algorithm \ref{alg:estQ} with length $N$. Then for any $s$, the probability that the algorithm never visits $s$ in $\mathcal{I}$ is upper bounded by
    \begin{equation}
        \left(1-\frac{3d^{\pi_{\theta_k}}(s)}{4}\right)^{\left\lfloor\frac{\lfloor \mathcal{I}\rfloor}{N}\right\rfloor}
    \end{equation}
\end{lemma}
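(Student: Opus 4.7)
The plan is to partition $\mathcal{I}$ into $m := \lfloor |\mathcal{I}|/N \rfloor$ disjoint consecutive windows $\mathcal{B}_1,\ldots,\mathcal{B}_m$ of length $N = 4 t_{\mathrm{mix}} \log_2 T$, and to bound the event that $s$ is never visited in $\mathcal{I}$ by a product of per-window non-visit events via the Markov property. Since the event ``$s$ is never visited in $\mathcal{I}$'' is contained in ``$s$ is never visited in any $\mathcal{B}_j$,'' it suffices to prove $\Pr[\bigcap_{j=1}^m E_j] \le (1 - \tfrac{3}{4}d^{\pi_{\theta_k}}(s))^m$ where $E_j := \{s_t \neq s,\ \forall t \in \mathcal{B}_j\}$.

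The core per-window estimate I would prove is: for any deterministic starting state $s_0$,
\begin{align*}
    \Pr_{s_0}\bigl[s_t \neq s,\ \forall\, 1 \le t \le N\bigr] \le 1 - \tfrac{3}{4}\, d^{\pi_{\theta_k}}(s).
\end{align*}
To do so, I would iterate the defining mixing bound $\|(P^{\pi_{\theta_k}})^{t_{\mathrm{mix}}}(x,\cdot) - d^{\pi_{\theta_k}}\|_{\mathrm{TV}} \le 1/4$ via submultiplicativity of the TV contraction; applying it $4\log_2 T$ times yields $\|(P^{\pi_{\theta_k}})^{N}(s_0,\cdot) - d^{\pi_{\theta_k}}\|_{\mathrm{TV}} \le T^{-4}$. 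Testing against the singleton $\{s\}$ gives $\Pr[s_N = s \mid s_0] \ge d^{\pi_{\theta_k}}(s) - T^{-4}$, and since $\{s_N = s\}$ implies that $s$ is visited somewhere in $[1,N]$, the per-window visit probability is at least $\tfrac{3}{4}d^{\pi_{\theta_k}}(s)$ provided $d^{\pi_{\theta_k}}(s) \ge 4T^{-4}$.

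Chaining the per-window bound via the strong Markov property then completes the argument. Let $\tau_j$ denote the first time-step of $\mathcal{B}_j$; conditional on $\mathcal{F}_{\tau_j - 1}$, the state $s_{\tau_j}$ is measurable and the law of $(s_{\tau_j},\ldots,s_{\tau_{j}+N-1})$ is that of a length-$N$ trajectory started from $s_{\tau_j}$. The per-window estimate therefore gives $\Pr[E_j \mid \mathcal{F}_{\tau_j - 1}] \le 1 - \tfrac{3}{4} d^{\pi_{\theta_k}}(s)$ almost surely, and iteratively pulling out the outermost conditional expectation from $\Pr\bigl[\bigcap_{j=1}^m E_j\bigr]$ telescopes to the asserted product bound.

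The main obstacle is the degenerate regime $d^{\pi_{\theta_k}}(s) < 4T^{-4}$, where the mixing slack $T^{-4}$ dominates the stationary mass and the per-window visit lower bound above becomes vacuous. In this regime, however, $(1 - \tfrac{3}{4}d^{\pi_{\theta_k}}(s))^m \ge 1 - mT^{-4} \ge 1 - o(1)$ since $m \le T$, so the trivial estimate $\Pr[\cdot] \le 1$ already matches the target up to negligible terms, and a short case split handles the argument cleanly. Apart from this corner case, the proof is a routine combination of submultiplicativity of the TV mixing norm with iterated conditioning on the sub-interval boundaries.
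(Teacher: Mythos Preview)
The paper does not supply its own proof of this lemma; it is quoted directly from \citep[Lemma 16]{wei2020model}. Your argument---partition $\mathcal{I}$ into $\lfloor|\mathcal{I}|/N\rfloor$ consecutive length-$N$ blocks, use geometric decay of the total-variation distance to obtain $\Pr[s_N=s\mid s_0]\ge d^{\pi_{\theta_k}}(s)-T^{-4}\ge\tfrac34\,d^{\pi_{\theta_k}}(s)$ in each block, and then telescope the non-visit probabilities via the Markov property---is exactly the standard proof of this statement and is correct.

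One quibble: your degenerate-case discussion only shows that the right-hand side is $1-o(1)$ when $d^{\pi_{\theta_k}}(s)<4T^{-4}$, which does not literally establish the stated inequality in that regime. However, every invocation of the lemma in this paper (and in \citep{wei2020model}) has $d^{\pi_{\theta_k}}(s)\ge 1/t_{\mathrm{hit}}$, so the degenerate regime is empty once $T$ is sufficiently large and no separate case split is actually required.
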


\begin{lemma}
    \label{lemma_aux_5}
    \citep[Lemma 15]{wei2020model} The difference of the values of the function $J_g$, $g\in\{r, c\}$ at policies $\pi$ and $\pi'$, is
    \begin{equation}
        J_g^{\pi}-J_g^{\pi'}=\sum_{s}\sum_{a}d^{\pi}(s)(\pi(a|s)-\pi'(a|s))Q_g^{\pi'}(s,a)
    \end{equation}
\end{lemma}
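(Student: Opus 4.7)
The plan is to derive the identity by starting from the Bellman equation for $Q_g^{\pi'}$ and exploiting the stationarity of $d^{\pi}$ with respect to the transitions induced by $\pi$. Rearranging the Bellman equation \eqref{eq_bellman} applied to $\pi'$, one obtains $g(s,a) = Q_g^{\pi'}(s,a) + J_g^{\pi'} - \sum_{s'} P(s'|s,a) V_g^{\pi'}(s')$, which expresses the instantaneous reward/cost purely in terms of $\pi'$-quantities. This will let us turn an expectation of $g$ under $\pi$ into a telescoping expression involving $Q_g^{\pi'}$ and $V_g^{\pi'}$.

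Next, I plug this decomposition into $J_g^{\pi} = \sum_{s,a} d^{\pi}(s)\pi(a|s) g(s,a)$, giving three terms: (i) $\sum_{s,a} d^{\pi}(s)\pi(a|s) Q_g^{\pi'}(s,a)$, (ii) $J_g^{\pi'}$ (since $d^{\pi}$ and $\pi$ sum to one), and (iii) $-\sum_{s,a,s'} d^{\pi}(s)\pi(a|s) P(s'|s,a) V_g^{\pi'}(s')$. The critical simplification is that Assumption \ref{ass_1} makes $d^{\pi}$ the stationary distribution of $P^{\pi_\theta}$, so the double sum over $(s,a)$ of $d^{\pi}(s)\pi(a|s)P(s'|s,a)$ collapses to $d^{\pi}(s')$. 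Thus term (iii) becomes $-\sum_{s'} d^{\pi}(s') V_g^{\pi'}(s')$.

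Finally, I rewrite $V_g^{\pi'}(s')$ using its definition \eqref{eq_V_Q} as $\sum_a \pi'(a|s') Q_g^{\pi'}(s',a)$. Subtracting $J_g^{\pi'}$ from both sides and combining terms (i) and (iii) yields
\begin{equation*}
    J_g^{\pi} - J_g^{\pi'} = \sum_{s,a} d^{\pi}(s)\bigl[\pi(a|s) - \pi'(a|s)\bigr] Q_g^{\pi'}(s,a),
\end{equation*}
which is the claimed identity.

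There is no real obstacle here; the argument is a direct bookkeeping exercise once one notices that the average-reward Bellman equation lets us exchange $g(s,a)$ for $Q_g^{\pi'}$-quantities and that stationarity of $d^{\pi}$ absorbs the one-step transition. The only subtlety worth flagging is that the usual telescoping trick from the discounted performance difference lemma does not apply directly; instead, the constant $J_g^{\pi'}$ term produced by the Bellman rearrangement is precisely what cancels against $J_g^{\pi'}$ on the left-hand side, which is why the average-reward version lacks any $(1-\gamma)^{-1}$-style factor.
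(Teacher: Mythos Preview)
Your argument is correct. The paper itself does not prove this lemma---it is cited from \cite{wei2020model} without proof---so there is no in-paper derivation to compare against. Your route (rearrange the Bellman equation for $\pi'$ to express $g(s,a)$, substitute into $J_g^{\pi}=\sum_{s,a}d^{\pi}(s)\pi(a|s)g(s,a)$, use stationarity of $d^{\pi}$ under $P^{\pi}$ to collapse the transition sum, and then expand $V_g^{\pi'}$ via \eqref{eq_V_Q}) is the standard one and is essentially what the original reference does.

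One minor slip: you wrote ``$d^{\pi}$ the stationary distribution of $P^{\pi_\theta}$''; you mean $P^{\pi}$ (the transition induced by the policy $\pi$ under consideration, not a parameterized one).
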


\begin{lemma}
    \label{lemma_aux_6}
    \citep[Lemma 7]{wei2020model} The term $\hat{J}_c(\theta)$ for any $\theta\in\Theta$ is a good estimator of $J_c(\theta)$, which means
    \begin{equation}
        \big|\mathbf{E}[\hat{J}_c(\theta)]-J_c(\theta)\big|\leq \frac{1}{T^2}
    \end{equation}
\end{lemma}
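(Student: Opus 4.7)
The plan is to reduce the bound to a direct application of the mixing result in Lemma \ref{lemma_aux_3}. Starting from the definition
\[
\hat{J}_c(\theta_k) = \frac{1}{H-N}\sum_{t=(k-1)H+N}^{kH-1} c(s_t, a_t),
\]
I first take the expectation inside the sum and, using that $a_t\sim\pi_{\theta_k}(\cdot|s_t)$ is conditionally independent of past history given $s_t$, write
\[
\mathbf{E}[c(s_t,a_t)] = \mathbf{E}_{s_t}\big[c^{\pi_{\theta_k}}(s_t)\big],
\]
where $c^{\pi_{\theta_k}}(s) \triangleq \sum_a \pi_{\theta_k}(a|s) c(s,a)$ satisfies $\|c^{\pi_{\theta_k}}\|_\infty \leq 1$ since $c\in[-1,1]$. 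The target quantity can be rewritten as $J_c(\theta_k)=\langle d^{\pi_{\theta_k}}, c^{\pi_{\theta_k}}\rangle$.

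The key estimate is then a single-timestep bias bound obtained by H\"older's inequality:
\[
\big|\mathbf{E}[c(s_t,a_t)] - J_c(\theta_k)\big| = \big|\langle P(s_t\in\cdot) - d^{\pi_{\theta_k}},\, c^{\pi_{\theta_k}}\rangle\big| \leq \big\|(P^{\pi_{\theta_k}})^{\,t-(k-1)H}(s_{(k-1)H},\cdot) - d^{\pi_{\theta_k}}\big\|_1.
\]
Summing over $t\in\{(k-1)H+N,\ldots, kH-1\}$, the sum telescopes precisely into the tail quantity $\delta^{\pi_{\theta_k}}(s_{(k-1)H},T)$ defined in \eqref{def_error_aux_1} (with starting index $N$, as required). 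Invoking Lemma \ref{lemma_aux_3}, this tail is at most $1/T^3$.

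Dividing by $H-N$ gives $|\mathbf{E}[\hat{J}_c(\theta_k)] - J_c(\theta_k)| \leq \frac{1}{(H-N)T^3}$, which, since $H-N \geq 1$ for sufficiently large $T$, is bounded above by $1/T^3 \leq 1/T^2$, completing the argument. The main obstacle is essentially bookkeeping: making sure the sum over $t$ aligns with the tail sum that starts at index $N$ in the definition of $\delta^{\pi}$, and verifying that the joint-distribution bound on $(s_t,a_t)$ reduces cleanly to a one-step TV bound on the state distribution through the policy-conditioning step above. Once that alignment is set, the estimate is immediate from Lemma \ref{lemma_aux_3}.
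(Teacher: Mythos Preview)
Your proposal is correct. The paper does not supply a proof of this lemma---it simply cites \citep[Lemma 7]{wei2020model}---so there is no in-paper argument to compare against, but your route (reduce the bias of each summand to a total-variation distance via H\"older and then bound the sum by the mixing tail $\delta^{\pi_\theta}(s,T)$ from Lemma~\ref{lemma_aux_3}) is exactly the intended one.

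Two cosmetic remarks that do not affect correctness. First, the sum does not ``telescope'': the finite sum $\sum_{j=N}^{H-1}\|(P^{\pi_\theta})^{j}(s,\cdot)-d^{\pi_\theta}\|_1$ is simply bounded above by the infinite tail $\delta^{\pi_\theta}(s,T)$, which is what Lemma~\ref{lemma_aux_3} controls. Second, your display writes $s_{(k-1)H}$ as though it were a fixed state, whereas it is random (and for the lemma ``for any $\theta$'' one may also wish to allow an arbitrary initial state); since the bound in Lemma~\ref{lemma_aux_3} holds uniformly in the starting state, taking an outer expectation over the law of $s_{(k-1)H}$ preserves the $1/T^3$ bound, and the conclusion $1/((H-N)T^3)\leq 1/T^2$ follows as you wrote.
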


\begin{lemma}
    \label{lemma_aux_7}
    \citep[Lemma A.6]{dorfman2022adapting} Let  $\theta\in\Theta$ be a policy parameter. Fix a trajectory $z=\{(s_t, a_t, r_t, s_{t+1})\}_{t\in\mathbb{N}}$ generated by following the policy $\pi_{\theta}$ starting from some initial state $s_0\sim\rho$. Let, $\nabla L(\theta)$ be the gradient that we wish to estimate over $z$, and $l(\theta, \cdot)$ is a function such that $\mathbf{E}_{z\sim d^{\pi_{\theta}}, \pi_{\theta}}l(\theta, z)=\nabla L(\theta)$. Assume that $\norm{l(\theta, z)}, \norm{\nabla L(\theta)}\leq G_L$, $\forall \theta\in\Theta$, $\forall z\in \mathcal{S}\times \mathcal{A}\times \mathbb{R}\times \mathcal{S}$. Define $l^{Q}=\frac{1}{Q}\sum_{i=1}^Q l(\theta, z_i)$. If $P=2t_{\mathrm{mix}}\log T$, then the following holds as long as $Q\leq T$,
    \begin{align}
        \mathbf{E}\left[\norm{l^{Q}-\nabla L(\theta)}^2\right]\leq \mathcal{O}\left(G_L^2\log\left(PQ\right)\dfrac{P}{Q}\right)
    \end{align}
\end{lemma}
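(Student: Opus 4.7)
\textbf{Proof proposal for Lemma \ref{lemma_aux_7}:}

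The plan is to expand the mean squared error, isolate a ``diagonal'' contribution and a ``cross-term'' contribution, and then handle the cross terms via two regimes distinguished by the gap between indices: short-range pairs bounded trivially, and long-range pairs decorrelated via the mixing property. Concretely, write $X_i \triangleq l(\theta, z_i) - \nabla L(\theta)$, so that $\|X_i\| \leq 2 G_L$ almost surely. Then
\begin{equation*}
  \mathbf{E}\!\left[\|l^Q - \nabla L(\theta)\|^2\right]
  = \frac{1}{Q^2}\sum_{i=1}^Q \mathbf{E}\|X_i\|^2
  + \frac{2}{Q^2}\sum_{1\le i<j\le Q}\mathbf{E}\langle X_i, X_j\rangle.
\end{equation*}
The diagonal sum contributes $\mathcal{O}(G_L^2/Q)$, which is absorbed into the final bound.

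For the cross terms, I would partition the index pairs by the gap $\Delta = j-i$ and split at the mixing threshold $P = 2 t_{\mathrm{mix}}\log T$. For \emph{short-range pairs} with $\Delta \le P$, use the crude bound $|\mathbf{E}\langle X_i, X_j\rangle|\le 4G_L^2$; since there are at most $QP$ such ordered pairs, this contributes $\mathcal{O}(G_L^2 P/Q)$, which is the dominant term. For \emph{long-range pairs} with $\Delta > P$, condition on the filtration $\mathcal{F}_i$ generated by $z_1,\dots,z_i$. By the definition of mixing time, iterating $\Delta \ge P = 2t_{\mathrm{mix}}\log T$ steps of $P^{\pi_\theta}$ from any starting state yields a distribution that is within total-variation distance $2^{-\Delta/t_{\mathrm{mix}}}\le 1/T^2$ of the stationary distribution $d^{\pi_\theta}$. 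Since $\mathbf{E}_{z\sim d^{\pi_\theta},\pi_\theta}[l(\theta,z)]=\nabla L(\theta)$, this gives $\|\mathbf{E}[X_j \mid \mathcal{F}_i]\| \le 2G_L/T^2$, hence $|\mathbf{E}\langle X_i,X_j\rangle| \le 4 G_L^2/T^2$. With at most $Q^2$ such pairs and $Q\le T$, the long-range contribution is at most $\mathcal{O}(G_L^2)$ divided by $T^2$, which is lower order.

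Combining, one obtains $\mathbf{E}\|l^Q - \nabla L(\theta)\|^2 \leq \mathcal{O}(G_L^2 P/Q)$. The advertised $\log(PQ)$ factor I would recover by being more careful about the short-range regime: rather than using the crude bound $4G_L^2$ uniformly, one can geometrically decay the short-range contribution by coupling $z_j$ to a fresh draw from $d^{\pi_\theta}$ at every geometric scale $2^k$ for $k=0,1,\dots,\log_2 P$, paying an extra logarithmic factor in the summation but tightening the per-block constant. Equivalently, one can subsample the trajectory into $P$ interleaved sub-sequences of length $Q/P$ each, show that the samples within each sub-sequence are pairwise decorrelated up to error $1/T^2$ by the same mixing argument, and then apply Cauchy--Schwarz across sub-sequences; this introduces the $\log(PQ)$ factor when one sums a high-probability Azuma-type bound over the sub-sequences.

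The main obstacle is controlling the long-range cross terms in a way that is uniform over $i$: the conditional distribution of $z_j$ given $\mathcal{F}_i$ depends on the random state $s_{i+1}$, so the $1/T^2$ total-variation bound must be argued pointwise in $s_{i+1}$ and then marginalized. This is exactly where the mixing-time definition $\|(P^{\pi_\theta})^t(s,\cdot) - d^{\pi_\theta}\|_{\mathrm{TV}}\le (1/2)^{\lfloor t/t_{\mathrm{mix}}\rfloor}$ enters, and the choice $P = 2 t_{\mathrm{mix}}\log T$ is calibrated so that this tail is at most $1/T^2$, which is small enough that the $Q^2$ pairs contribute only $\mathcal{O}(G_L^2/T^2)$ overall.
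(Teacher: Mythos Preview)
The paper does not give its own proof of this lemma: it is quoted verbatim as \citep[Lemma A.6]{dorfman2022adapting} in the auxiliary-lemmas section and left without argument. So there is no ``paper's proof'' to compare against; your sketch is being judged on its own merits.

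Your decomposition is the standard one and is correct. Writing $X_i = l(\theta,z_i)-\nabla L(\theta)$, bounding the diagonal by $\mathcal{O}(G_L^2/Q)$, splitting the cross terms at gap $P$, crudely bounding the $\mathcal{O}(QP)$ short-range pairs by $4G_L^2$ each, and killing the long-range pairs via the $1/T^2$ TV bound (valid since $P=2t_{\mathrm{mix}}\log T$ and the mixing-time definition gives geometric decay $2^{-\lfloor\Delta/t_{\mathrm{mix}}\rfloor}$) already yields $\mathbf{E}\|l^Q-\nabla L(\theta)\|^2 = \mathcal{O}(G_L^2 P/Q)$.

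The one confused passage is your attempt to ``recover'' the $\log(PQ)$ factor. You do not need to recover it: your crude argument already delivers $\mathcal{O}(G_L^2 P/Q)$, which is \emph{stronger} than the stated $\mathcal{O}(G_L^2\log(PQ)\,P/Q)$. The extra logarithm in the cited bound is slack (likely an artifact of a blocking or high-probability argument in the original source), not something your proof is missing. Your dyadic/interleaving paragraph is therefore unnecessary and somewhat muddled---``tightening the per-block constant'' while simultaneously introducing an extra log factor is the opposite of what you claim to be doing. Simply drop that paragraph and observe that $\mathcal{O}(G_L^2 P/Q)\subseteq\mathcal{O}(G_L^2\log(PQ)\,P/Q)$.
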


\begin{lemma}[Strong duality]
    \label{lem.duality}
    For convenience, we rewrite the unparameterized problem below.
    \begin{equation}\label{eq:rewrite_unparameterized}
	\begin{aligned}
		\max_{\pi\in\Pi} ~& J_r^{\pi} ~~
		\text{s.t.} ~ J_c^{\pi}\leq 0
	\end{aligned}
    \end{equation} 
	
    Define $\pi^*$ as the optimal solution to the above problem. Define the associated dual function as
    \begin{equation}
        J_D^{\lambda}\triangleq\max_{\pi\in\Pi} J_r^{\pi}-\lambda J_c^{\pi}
    \end{equation}
    and denote $\lambda^*=\arg\min_{\lambda\geq 0} J_D^{\lambda}$. We have the following strong duality property for the unparameterized problem.
    \begin{equation}\label{eq:duality}
	J_r^{\pi^*} = J_D^{\lambda^*} 
    \end{equation}	
\end{lemma}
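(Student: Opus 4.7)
The plan is to reformulate the problem in terms of occupancy measures, convert it to a convex program, and invoke the classical Slater-type strong duality theorem. Under Assumption \ref{ass_1}, every policy $\pi\in\Pi$ induces a unique stationary state-action occupancy measure $\nu^{\pi}\in\Delta(\mathcal{S}\times\mathcal{A})$ satisfying the Bellman flow constraints $\sum_{a}\nu(s',a)=\sum_{s,a}P(s'|s,a)\nu(s,a)$ for all $s'$, together with $\sum_{s,a}\nu(s,a)=1$ and $\nu\geq 0$. Let $\mathcal{V}$ denote the set of such occupancy measures. The map $\pi\mapsto\nu^{\pi}$ is a bijection between $\Pi$ and $\mathcal{V}$, with the inverse given by $\pi(a|s)=\nu(s,a)/\sum_{a'}\nu(s,a')$; this is a standard consequence of ergodicity (see, e.g., Puterman, 2014).

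Since $J_g^{\pi}=\sum_{s,a}\nu^{\pi}(s,a)g(s,a)$ for $g\in\{r,c\}$ (cf. \eqref{eq_r_pi_theta}), both the objective and the constraint in \eqref{eq:rewrite_unparameterized} are \emph{linear} functionals of $\nu$. Hence the unparameterized problem is equivalent to the linear program
\begin{equation*}
    \max_{\nu\in\mathcal{V}}~\langle r,\nu\rangle\quad\text{s.t.}\quad \langle c,\nu\rangle\leq 0,
\end{equation*}
whose feasible set is a convex polytope. In particular, the primal problem is a convex optimization problem. Moreover, Assumption \ref{ass_slater} guarantees the existence of $\bar\pi\in\Pi$ with $J_c^{\bar\pi}\leq-\delta<0$, i.e., a \emph{strictly feasible} point $\bar\nu=\nu^{\bar\pi}$ satisfying $\langle c,\bar\nu\rangle<0$. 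This is precisely Slater's condition.

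With convexity and Slater's condition in hand, I would then invoke the standard strong duality theorem for convex programs (e.g., Boyd and Vandenberghe, Theorem in Sec.~5.2.3, or Rockafellar's Theorem 28.2) to conclude that the primal and dual optima coincide. Using the bijection $\pi\leftrightarrow\nu$ to rewrite the Lagrangian dual over $\Pi$ instead of $\mathcal{V}$, namely $J_D^{\lambda}=\max_{\pi\in\Pi}\,\{J_r^{\pi}-\lambda J_c^{\pi}\}$, strong duality yields
\begin{equation*}
    J_r^{\pi^*}\;=\;\min_{\lambda\geq 0}\,J_D^{\lambda}\;=\;J_D^{\lambda^*},
\end{equation*}
which is \eqref{eq:duality}. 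The main (though routine) obstacle is verifying that the supremum in the dual function is attained in $\Pi$ and that switching from a max over $\Pi$ to a max over $\mathcal{V}$ preserves the value; both follow from the ergodic-MDP bijection and compactness of $\mathcal{V}$, which makes the dual function well-defined and the minimizer $\lambda^{*}$ exist.
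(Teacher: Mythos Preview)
Your argument is correct and is exactly the standard route to strong duality for average-reward CMDPs: pass to the occupancy-measure LP, observe linearity of $J_r^\pi$ and $J_c^\pi$ in $\nu$, invoke Slater's condition (Assumption~\ref{ass_slater}) and conclude via convex strong duality. The paper itself does not supply a proof of Lemma~\ref{lem.duality}; it states it as a known fact (in the spirit of \cite{altman1999constrained}) and moves directly to Lemma~\ref{lem.boundness}. So there is nothing to compare against---your proposal simply fills in the omitted (standard) argument.

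One small remark: you claim the map $\pi\mapsto\nu^{\pi}$ is a bijection between $\Pi$ and $\mathcal{V}$. Strictly speaking it is surjective but not injective (two policies that differ only on states with $d^{\pi}(s)=0$ induce the same $\nu$); under ergodicity this subtlety disappears since $d^{\pi}(s)>0$ for all $s$, so your use of Assumption~\ref{ass_1} is exactly what makes the identification clean. This is a cosmetic point and does not affect the validity of the proof.
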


Although the strong duality holds for the unparameterized problem, the same is not true for parameterized class $\{\pi_\theta|\theta\in \Theta\}$. To formalize this statement, define the dual function associated with the parameterized problem as follows.
\begin{equation}
    J_{D,\Theta}^{\lambda}\triangleq\max_{\theta\in \Theta} J_r(\theta)-\lambda J_c(\theta)
\end{equation}
and denote $\lambda_\Theta^*=\arg\min_{\lambda\geq 0} J_{D,\Theta}^{\lambda}$. The lack of strong duality states that, in general, $J_{D, \Theta}^{\lambda_{\Theta}^*}\neq J_r(\theta^*)$ where $\theta^*$ is a solution of the parameterized constrained optimization \eqref{eq:def_constrained_optimization}. However, $\lambda_\Theta^*$, as we demonstrate below, must obey some restrictions.
\begin{lemma}[Bound on $\lambda_{\Theta}$]
    \label{lem.boundness}
    Under Assumption \ref{ass_slater}, the optimal dual variable for the parameterized problem is bounded as
    \begin{equation}
        0 \leq \lambda_\Theta^* \leq \frac{J_r^{\pi^*}-J_r(\bar{\theta})}{\delta}\leq \dfrac{1}{\delta}
    \end{equation}
\end{lemma}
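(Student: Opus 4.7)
The lower bound $\lambda_\Theta^* \geq 0$ is immediate from the constraint $\lambda \geq 0$ in the definition of $\lambda_\Theta^*$. The substantive content is the upper bound, which I would establish in three steps: a Slater-type argument at the parameterized dual optimum, a weak-duality/class-containment comparison to the unparameterized dual, and an application of strong duality (Lemma \ref{lem.duality}) to convert the unparameterized dual value into $J_r^{\pi^*}$.

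\textbf{Step 1 (Slater evaluation at $\bar{\theta}$).} By Assumption \ref{ass_slater}, $\bar{\pi} = \pi_{\bar{\theta}}$ for some $\bar{\theta}$ and $J_c(\bar{\theta}) \leq -\delta$. By the definition of $J_{D,\Theta}^{\lambda}$ as a maximum over $\theta \in \Theta$,
\begin{equation*}
    J_{D,\Theta}^{\lambda_\Theta^*} \;=\; \max_{\theta \in \Theta}\bigl[J_r(\theta) - \lambda_\Theta^* J_c(\theta)\bigr] \;\geq\; J_r(\bar{\theta}) - \lambda_\Theta^* J_c(\bar{\theta}) \;\geq\; J_r(\bar{\theta}) + \lambda_\Theta^* \delta.
\end{equation*}
Rearranging, I obtain the key inequality $\lambda_\Theta^* \delta \leq J_{D,\Theta}^{\lambda_\Theta^*} - J_r(\bar{\theta})$.

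\textbf{Step 2 (Comparing parameterized and unparameterized duals).} Since the parameterized policy class $\{\pi_\theta : \theta \in \Theta\}$ is a subset of the full policy class $\Pi$, for every $\lambda \geq 0$ the inner maximum satisfies $\max_{\theta} [J_r(\theta) - \lambda J_c(\theta)] \leq \max_{\pi \in \Pi}[J_r^\pi - \lambda J_c^\pi]$. Taking $\min_{\lambda \geq 0}$ on both sides yields $J_{D,\Theta}^{\lambda_\Theta^*} \leq J_D^{\lambda^*}$. By the strong duality stated in Lemma \ref{lem.duality} for the unparameterized problem, $J_D^{\lambda^*} = J_r^{\pi^*}$, so $J_{D,\Theta}^{\lambda_\Theta^*} \leq J_r^{\pi^*}$.

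\textbf{Step 3 (Combining and using $r \in [0,1]$).} Chaining Steps 1 and 2 gives $\lambda_\Theta^* \leq \frac{J_r^{\pi^*} - J_r(\bar{\theta})}{\delta}$. Since rewards lie in $[0,1]$, both $J_r^{\pi^*} \leq 1$ and $J_r(\bar{\theta}) \geq 0$, which yields $J_r^{\pi^*} - J_r(\bar{\theta}) \leq 1$ and hence the final bound $\lambda_\Theta^* \leq 1/\delta$.

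The only nontrivial ingredient is Step 2, and within it the crucial input is the strong duality of the unparameterized problem (Lemma \ref{lem.duality}); strong duality need not hold for the parameterized problem, but the containment $\Pi_\Theta \subseteq \Pi$ lets us bypass this by routing the argument through the richer unparameterized dual. Steps 1 and 3 are standard manipulations and should not present any obstacle.
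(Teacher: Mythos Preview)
Your proposal is correct and follows essentially the same approach as the paper: evaluate the parameterized dual at the Slater point $\bar{\theta}$ to get $J_{D,\Theta}^{\lambda_\Theta^*}\geq J_r(\bar{\theta})+\lambda_\Theta^*\delta$, then upper bound $J_{D,\Theta}^{\lambda_\Theta^*}$ by $J_r^{\pi^*}$ via class containment and strong duality (Lemma~\ref{lem.duality}), and finally use $J_r\in[0,1]$. The paper wraps Step~1 in sublevel-set language and routes Step~2 through the intermediate inequality $J_{D,\Theta}^{\lambda_\Theta^*}\leq J_{D,\Theta}^{\lambda^*}\leq J_D^{\lambda^*}$, but this is cosmetic; the logic is identical.
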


\begin{proof}
    The proof follows the approach in \citep[Lemma 3]{ding2023convergence}, but is revised to the general parameterization setup.	Let $\Lambda_a\triangleq\{ \lambda\geq 0\,\vert\, J_{D,\Theta}^\lambda \leq a \}$ be a sublevel set of the dual function for $a\in\mathbb{R}$. If $\Lambda_a$ is non-empty, then for any $\lambda \in\Lambda_a$, 
    \begin{equation}
        a\geq J_{D,\Theta}^\lambda\geq J_r(\bar{\theta})-\lambda J_c(\bar{\theta})\geq J_r(\bar{\theta})+\lambda \delta
    \end{equation}
    where $\bar{\theta}$ is a Slater point in Assumption \ref{ass_slater}. Thus, $\lambda \leq (a -J_r(\bar{\theta}))/\delta$.	If we take $a= J_{D,\Theta}^{\lambda_\Theta^*}\leq J_{D,\Theta}^{\lambda^*} \leq J_D^{\lambda^*}=J_r^{\pi^*}$, then $\lambda_\Theta^*\in \Lambda_a$, which proves the Lemma. The last inequality holds since $J_r^{\pi}\in [0,1]$ for any $\pi$.
\end{proof}

Since the above inequality holds for any $\Theta$, we also have, $0\leq \lambda^*\leq \frac{1}{\delta}$. Let $v(\tau)\triangleq\max_{\pi\in\Pi}\{J_r^\pi|J_c^\pi\leq -\tau\}$. Using the strong duality property of the unparameterized problem \eqref{eq:rewrite_unparameterized}, we establish the following property of the function, $v(\cdot)$.

\begin{lemma}
    \label{lem:bridge}
    If Assumption \ref{ass_slater} holds, the following is true $\forall\tau\in\mathbb{R}$,
    \begin{equation}
	v(0)-\tau\lambda^* \geq	v(\tau)
    \end{equation}
\end{lemma}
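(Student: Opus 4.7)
The plan is to apply the strong duality of the unparameterized problem (Lemma \ref{lem.duality}) as a pointwise sensitivity bound on $v(\cdot)$ via the Lagrangian $J_r^\pi - \lambda^* J_c^\pi$. The idea is the standard one from convex optimization: the optimal dual variable $\lambda^*$ controls how much the optimal objective can change when the constraint is perturbed by $\tau$.

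First, I would take any maximizer $\pi_\tau \in \Pi$ of the perturbed problem defining $v(\tau)$, so that $v(\tau) = J_r^{\pi_\tau}$ and $\pi_\tau$ satisfies the perturbed constraint parameterized by $\tau$. Next, from the very definition of the dual function, for \emph{every} $\pi \in \Pi$ we have $J_r^\pi - \lambda^* J_c^\pi \leq J_D^{\lambda^*}$; applied at $\pi = \pi_\tau$ this gives $J_r^{\pi_\tau} \leq J_D^{\lambda^*} + \lambda^* J_c^{\pi_\tau}$. Then I would invoke Lemma \ref{lem.duality} (strong duality of the unparameterized problem), which says $J_D^{\lambda^*} = J_r^{\pi^*} = v(0)$, to obtain $v(\tau) \leq v(0) + \lambda^* J_c^{\pi_\tau}$.

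The final step is to convert the right-hand side into $v(0) - \tau \lambda^*$. Since $\lambda^* \geq 0$ by Lemma \ref{lem.boundness}, the monotonicity of multiplication by $\lambda^*$ turns the constraint on $J_c^{\pi_\tau}$ that defines feasibility for $v(\tau)$ into $\lambda^* J_c^{\pi_\tau} \leq -\tau\lambda^*$, after which one reads off $v(\tau) \leq v(0) - \tau\lambda^*$ as claimed. Note that the bound is trivial when $\lambda^* = 0$, and that it holds for all $\tau \in \mathbb{R}$ (both tightening and relaxing) because the Lagrangian inequality does not restrict the sign of $\tau$.

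The main subtlety, rather than a technical obstacle, is bookkeeping the sign convention between the constraint appearing in the definition of $v(\tau)$ and the sign of the shift in the bound $v(0) - \tau\lambda^*$. Once that is settled consistently with the primal problem $\max_\pi J_r^\pi$ s.t.\ $J_c^\pi \leq 0$ and with the Lagrangian $J_r - \lambda J_c$ used in Lemma \ref{lem.duality}, the remaining argument is a two-line application of strong duality and the nonnegativity of $\lambda^*$, with no auxiliary estimates or probabilistic tools needed.
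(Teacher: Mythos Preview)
Your proposal is correct and follows essentially the same argument as the paper: both use strong duality $J_D^{\lambda^*}=v(0)$ together with the pointwise bound $J_r^\pi-\lambda^*J_c^\pi\leq J_D^{\lambda^*}$ and the feasibility constraint $J_c^\pi\leq -\tau$ (with $\lambda^*\geq 0$) to conclude $v(\tau)\leq v(0)-\tau\lambda^*$. The only cosmetic difference is that you fix a maximizer $\pi_\tau$ up front while the paper works with an arbitrary feasible $\pi$ and then maximizes; your remark about the sign convention is well placed, since the paper's stated definition of $v(\tau)$ and its proof are indeed only consistent under the reading $v(\tau)=\max\{J_r^\pi:J_c^\pi\leq -\tau\}$.
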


\begin{proof}
    By the definition of $v(\tau)$, we get $v(0) = J_r^{\pi^*}$. With a slight abuse of notation, denote $J_{\mathrm{L}}(\pi,\lambda)=J_r^{\pi}-\lambda J_c^{\pi}$. By the strong duality stated in Lemma \ref{lem.duality}, we have the following for any $\pi\in\Pi$.
    \begin{equation}
        J_{\mathrm{L}}(\pi,\lambda^*)\leq \max_{\pi\in\Pi} J_{\mathrm{L}}(\pi,\lambda^*)\overset{Def}=J_D^{\lambda^*}\overset{\eqref{eq:duality}}=J_r^{\pi^*}=v(0)
    \end{equation}
    Thus, for any $\pi\in\{ \pi\in\Pi \,\vert\,J_c^{\pi} \leq -\tau \}$,
    \begin{equation}
	\begin{aligned}
		v(0)-\tau\lambda^*&\geq J_L(\pi,\lambda^*)-\tau\lambda^*\\
		&=J_r^{\pi}-\lambda^*(J_c^\pi+\tau) \geq J_r^{\pi}
	\end{aligned}
    \end{equation}
    Maximizing the R.H.S of this inequality over $\{ \pi\in\Pi \vert J_{c}^{\pi}\leq -\tau \}$ yields
    \begin{equation}
	\label{eq.opt1}
	v(0)- \tau\lambda^* \geq v(\tau)
    \end{equation}
    This completes the proof of the lemma.
\end{proof}

We note that a similar result was shown in \citep[Lemma 15]{bai2023provably}. However, the setup of the stated paper is different from that of ours. Specifically, \cite{bai2023provably} considers a tabular setup with peak constraints. Note that Lemma \ref{lem:bridge} has no direct connection with the parameterized setup since its proof uses strong duality and the function, $v(\cdot)$, is defined via a constrained optimization over the entire policy set, $\Pi$, rather than the parameterized policy set. Interestingly, however, the relationship between $v(\tau)$ and $v(0)$ leads to the lemma stated below which turns out to be pivotal in establishing regret and constraint violation bounds in the parameterized setup.

\begin{lemma}\label{lem.constraint}
    Let Assumption \ref{ass_slater} hold. For any constant $C\geq 2\lambda^*$, if there exists a $\pi\in\Pi$ and $\zeta>0$ such that $J_r^{\pi^*}-J_r^{\pi}+C[J_c^{\pi}]\leq \zeta$, then 
    \begin{equation}
        J_c^{\pi}\leq 2\zeta/C
    \end{equation}
\end{lemma}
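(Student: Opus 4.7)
The plan is to combine the hypothesis with a reverse-direction inequality supplied by Lemma \ref{lem:bridge}, which encodes strong duality of the unparameterized problem. Specifically, I will apply Lemma \ref{lem:bridge} at the choice $\tau = -J_c^\pi$. Because $\pi$ itself satisfies $J_c^{\pi} \leq -\tau$ (trivially, since $-\tau = J_c^\pi$), it lies in the feasible set defining $v(\tau)$, so $v(\tau) \geq J_r^\pi$. Coupling this observation with $v(0) = J_r^{\pi^*}$ in the bridge inequality produces the ``reverse'' bound
\begin{equation*}
J_r^{\pi^*} - J_r^\pi \geq -\lambda^* J_c^\pi.
\end{equation*}

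Next, I will plug this lower bound into the hypothesis $J_r^{\pi^*}-J_r^\pi + C J_c^\pi \leq \zeta$. The reward gap is eliminated, leaving $(C - \lambda^*) J_c^\pi \leq \zeta$. Since the hypothesis supplies $C \geq 2\lambda^*$, the coefficient satisfies $C - \lambda^* \geq C/2 > 0$, so dividing through yields $J_c^\pi \leq \zeta/(C-\lambda^*) \leq 2\zeta/C$, which is the desired conclusion.

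I do not anticipate any genuine obstacle here: essentially all the work has been done by Lemma \ref{lem:bridge}, which is built from strong duality (Lemma \ref{lem.duality}) together with the Slater point from Assumption \ref{ass_slater}. The argument is uniform in the sign of $J_c^\pi$: when $J_c^\pi \leq 0$ the conclusion is vacuous since $2\zeta/C > 0$, and the algebra above delivers it as a consequence anyway. The only conceptual point worth emphasizing is that although the ambient optimization is over a parameterized class $\{\pi_\theta\}$ for which strong duality may fail, the auxiliary inequality used here is extracted from the \emph{unparameterized} problem and applied to the \emph{specific} policy $\pi$ appearing in the hypothesis, so no duality gap in the parameterized setting is incurred.
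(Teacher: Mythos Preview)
Your proof is correct and essentially identical to the paper's own argument: both set $\tau=-J_c^\pi$, invoke Lemma~\ref{lem:bridge} together with $v(\tau)\geq J_r^\pi$ and $v(0)=J_r^{\pi^*}$ to obtain $J_r^{\pi^*}-J_r^\pi\geq -\lambda^* J_c^\pi$, then substitute this into the hypothesis and divide by $C-\lambda^*\geq C/2$.
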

\begin{proof}
    Let $\tau = -J^{\pi}_c$. Using the definition of $v(\tau)$, one can write,
    \begin{equation}\label{eq.opt2}
        J_r^{\pi}\leq v(\tau)
    \end{equation}
    Combining Eq. \eqref{eq.opt1} and \eqref{eq.opt2}, we obtain the following.
    \begin{equation}
        J_r^{\pi}-J_r^{\pi^*}\leq v(\tau)-v(0)\leq -\tau\lambda^*
    \end{equation}
    The condition in the Lemma leads to,
    \begin{equation}
        (C - \lambda^*) (-\tau) = {\tau} \lambda^*+C (-\tau) \leq J_r^{\pi^*}-J_r^{\pi}+C [J_c^{\pi}]\leq \zeta
    \end{equation}
    Finally, we have,
    \begin{equation}
        -\tau\leq \frac{\zeta}{C-\lambda^*}\leq\frac{2\zeta}{C}
    \end{equation}
    which completes the proof.
\end{proof}

\chapter{Beyond Ergodic MDPs}\label{chpt:beyerg}

The previous chapters considered the case where the MDP was ergodic. However, the key condition needed for efficient guarantees is that the underlying MDP is at least weakly communicating \cite{bartlett2009regal,jaksch2010near}. This chapter provides some results for weakly communicating MDPs. The main results in this chapter are for model-based reinforcement learning (in Section \ref{bey:mb}), where a regret guarantee of $\Tilde{\mathcal{O}}(T^{2/3})$ is derived. These guarantees are far from the theoretical lower bound, leaving significant room for improvement. Such possibilities are discussed in Section \ref{bey:notes}.


\section{Algorithm for Model-Based RL}\label{bey:mb}

Similar to the previous two chapters, we consider a constrained Markov Decision Process (CMDP) characterized as $\mathcal{M} = (\mathcal{S}, \mathcal{A}, r, c, P)$ where $\mathcal{S}$, $\mathcal{A}$ are the state and action spaces with sizes $S$ and $A$ respectively, $r:\mathcal{S}\times\mathcal{A}\rightarrow [0, 1]$ is the reward function, $c:\mathcal{S}\times \mathcal{A}\rightarrow [-1, 1]$ is the cost function, and finally, $P:\mathcal{S}\times \mathcal{A}\rightarrow \Delta(\mathcal{S})$ is the transition kernel where $\Delta(\cdot)$ is the probability simplex over its argument set. A policy is defined to be a map of the following form $\pi:\mathcal{S}\rightarrow \Delta(\mathcal{A})$. For a given policy, $\pi$, the long-term reward and cost functions, $J_{g}^{\pi, P}$, $g\in\{r, c\}$ are defined as follows.
\begin{align}
    \label{eq_chap_4_def_j_pi_P_g}
    J^{\pi, P}_g(s) = \underset{\tau\rightarrow\infty}{{\lim\inf}} ~\mathbf{E}_{\pi, P}\left[\dfrac{1}{\tau}\sum_{t=1}^\tau g(s_t, a_t) \bigg| s_1=s\right]
\end{align}
where the expectation is computed over all trajectories generated by the policy $\pi$ and the kernel $P$ from the initial state $s$. As stated earlier, this chapter considers the underlying CMDP to be weakly communicating, which essentially means the state space $\mathcal{S}$ can be segregated into two categories: the states in the first group are transient under all (stationary) policies while any two states in the second group are communicating under some policy. Note that the weakly communicating class subsumes the ergodic class. One problem with this generalization is that the notion of a stationary distribution may no longer be valid and the long-term reward and costs are generally dependent on the initial state (unlike that in the ergodic case). \cite{puterman1994markov} demonstrated that for any $s\in\mathcal{S}$, there exists $\optpi$ that solves the following constrained optimization
\begin{align}
    \label{eq_last_chap_const_opt}
    \max_{\pi} J^{\pi, P}_r(s)~~\text{s. t.}~J^{\pi, P}_c(s)\leq 0
\end{align}
and ensures that the long-term reward and cost associated with $\optpi$ are independent of $s$ i.e., $J^{\optpi, P}_g(s) = \optJ_g$, for some constants $\optJ_g$, $g\in\{r, c\}$.
Additionally, \citep[Theorem 8.2.6]{puterman1994markov} also showed that for any $g\in \mathbb{R}^{\mathcal{S}\times\mathcal{A}}$ (i.e., $g$ need not be restricted to reward or cost function), there exists a \emph{bias function} $q^{\pi, P}_g\in\fR^{\SA}$ that obeys the Bellman equation $\forall (s, a)\in \calS \times \calA$,
\begin{equation}\label{eq:Bellman}
    q^{\pi, P}_g (s, a)+ J^{\pi, P}_g(s) = g(s,a) + \E_{s'\sim P(s, a)}[v^{\pi, P}_g(s')],
\end{equation}
where $v^{\pi, P}_g(s) = \sum_{a\in\calA}\pi(a|s)q^{\pi,P}_g(s, a)$,
and also $J^{\pi,P}_{g'}(s)=0$, $g'=q^{\pi,P}_g$, $\forall s$. The functions $q$ and $v$ are analogous to the well-known $Q$-function and state value function for the discounted or the finite-horizon setting. 

In the rest of this chapter, we utilize the following notations. For any $f\in\fR^{\calS}$, we define its span as $\sp(f)=\max_{s\in\calS}f(s) - \min_{s\in\calS}f(s)$. When there is no confusion, we simplify the notations $J^{\pi, P}_g$,  $q^{\pi, P}_g$, and $v^{\pi, P}_g$ by dropping the dependence on $P$. Given a policy $\pi$ and a transition kernel $P$, we define the $\pi$-induced transition kernel $P^{\pi}$ such that $P^{\pi}(s, s') = \sum_a\pi(a|s)P(s'|s, a)$, $\forall (s, s')$. For any $\epsilon\in(0, 1)$, $\optpieps$ is given in the same way as $\optpi$ but with the constraint threshold $-\epsilon$. Let $\optJeps_g$ denote $J^{\optpieps}_g$ (that is $s$-independent as mentioned before), $\forall g\in\{r, c\}$.

Weakly communicating CMDPs imposes challenges in learning as compared to ergodic CMDPs.  Specifically, there is no uniform bound for $\sp(v^{\pi}_g)$, $g\in\{r,c\}$ for all $\pi$ (while in the ergodic case, they are bounded by $\tilo{\tmix}$ where $\tmix$ is the mixing time defined in the earlier chapter). It is also unclear how to obtain an accurate estimate of a policy's bias function as in ergodic MDPs, which is an important step for a policy optimization algorithm. In this section, we discuss the approach given by \cite{chen2022learning} to solve the above problem. 

The algorithm runs by dividing $T$ into $K$ episodes, each of length $H$. Each episode considers the problem of finding an optimal non-stationary policy for an episodic finite-horizon MDP through the lens of occupancy measure, in which expected reward and cost are both linear functions and easy to optimize over. Concretely, consider a fixed starting state $s$, a non-stationary policy $\pi\in(\Delta_{\calA})^{\calS\times[H]}$ whose behavior can change in different steps within an episode, and an inhomogeneous state transition kernel $P=\{P_{h}\}_{h\in [H]}$ where $P_{h}\in (\Delta_\calS)^{\SA}$ specifies the probability kernel at step $h$. The corresponding occupancy measure $\nu^{\pi,P}_s\in[0, 1]^{\SA\times[H]\times\calS}$ is then such that $\nu^{\pi,P}_s(s', a, h, s'')$ is the probability of visiting state $s'$ at step $h$, taking action $a$, and then transiting to state $s''$, if the learner starts from state $s$, executes $\pi$ for the next $H$ steps, and the transition kernel is $P$. Conversely, a function $\nu \in[0, 1]^{\SA\times[H]\times\calS}$ is defined to be an occupancy measure with respect to a starting state $s$, some policy $\pi_\nu$, and transition $P_\nu$ if and only if it satisfies the following conditions. 
\begin{enumerate}
    \item Initial state is $s$: $\sum_a\sum_{s''}\nu(s', a, 1, s'')=\Ind\{s'=s\}$.
    \item Total mass is $1$: $\sum_{s'}\sum_a\sum_{s''}\nu(s', a, h, s'')=1, \;\forall h$.
    \item Flow conservation: $\sum_{s''}\sum_{a}\nu(s'', a, h, s')=\sum_{a}\sum_{s''}\nu(s', a, h+1, s'')$ for all $s'\in\calS, h\in[H-1]$.
\end{enumerate}

Let the set of all such $\nu$ be $\calV_s$.
For notational convenience, for a $\nu \in \calV_s$, we define $\nu(s', a, h)=\sum_{s''}\nu(s', a, h, s'')$, $\nu(s', a)=\sum_h\nu(s', a, h)$, and $\nu(s', h)=\sum_a\nu(s', a, h)$.
Also, note that the corresponding policy $\pi_\nu$ and transition $P_\nu$ can be extracted using
$\pi_{\nu}(a|s', h)=\frac{\nu(s', a, h)}{\nu(s', h)}$ and $P_{\nu, s', a, h}(s'')=\frac{\nu(s', a, h, s'')}{\nu(s', a, h)}$. Observe that $\calV_s$ is a convex polytope with polynomial constraints. If one enforces $P_\nu$ to be homogeneous across different steps of an episode, $\calV_s$ would become non-convex. This forces us to consider inhomogeneous transitions even though the true transition is indeed homogeneous.

\DontPrintSemicolon 
\setcounter{AlgoLine}{0}
\begin{algorithm}[t]
    \caption{Finite Horizon Approximation for CMDP}
    \label{alg:weak}
    \textbf{Define:} $H=\lceil(T/S^2A)^{1/3} \rceil$, $K=T/H$.
	
    \For{$k=1,\ldots,K$}{
	Observe current state $s^k_1 = s_{(k-1)H+1}$.
		
	Compute occupancy measure:
	\begin{equation}\label{eq:OPT1}
            \nu_k=\argmax_{\nu\in\calV_{k, s^k_1}: \inner{\nu}{c}\leq  \sp^\star_c}\inner{\nu}{r},
	\end{equation}
        where $\calV_{k, s}=\{\nu\in\calV_s: P_{\nu}\in\calP_k\}$ (see \pref{eq:conf}). 
		
	Extract policy $\pi_k=\pi_{\nu_k}$ from $\nu_k$.
		
	\For{$h=1,\ldots,H$}{
            Play action $a^k_h\sim\pi_k(\cdot|s^k_h, h)$ and transit to $s^k_{h+1}$.
		}
	}	
\end{algorithm}

The entire procedure is described in \pref{alg:weak}. At the beginning of the $k$th episode, the learner observes the current state $s^k_1$ and finds an occupancy measure $\nu_k\in \calV_{k, s^k_1}$ that maximizes $\inner{\nu}{r} = \sum_{s,a} \nu(s,a) r(s,a)$ while satisfying  $\inner{\nu}{c}\leq  \sp^\star_c$ where $\sp^\star_c = \sp(v^{\optpi}_c)$, which is assumed to be known to the learner. Here, $\calV_{k, s^k_1}\subset \calV_{s^k_1}$ is such that $P_{\nu}$, $\forall \nu \in \calV_{k, s^k_1}$ lies in a standard Bernstein-type confidence set $\calP_k$ defined as
\begin{align}
    \calP_k &= \Big\{ P'=\{P'_{h}\}_{h\in[H]}, P'_{h}\in(\Delta_{\calS})^{\SA}:\abr{P'_{h}(s'|s, a) - \P_{k}(s'|s, a)}\notag\\
    &\leq 4\sqrt{\P_{k}(s'|s, a)\alpha_k(s, a) } + 28\alpha_k(s, a), \forall (s, a) \Big\},\label{eq:conf}
\end{align}
where $\P_{k}(s'|s, a)=\frac{\N_k(s, a, s')}{\Np_k(s, a)}$ is the empirical transition, $\N_k(s, a, s')$ is the number of visits to triplet $(s, a, s')$ before episode $k$, $\alpha_k(s, a)=\frac{\iota'}{\Np_k(s, a)}$, $\Np_k(s,a)=\max\{1,\N_k(s, a)\}$, $\N_k(s, a)$ is the number of visits to $(s, a)$ before episode $k$, 
and $\iota'=\ln\frac{2SAT}{\delta}$. $\calP_k$ is constructed in a way such that it contains the true transition with high probability (\pref{lem:conf}).
With $\nu_k$, we follow $\pi_k=\pi_{\nu_k}$ extracted from $\nu_k$ for the next $H$ steps.
 
Note that the key optimization problem~\eqref{eq:OPT1} in this algorithm can be efficiently solved since the objective function is linear and the domain is a convex polytope with polynomial constraints, thanks to the use of occupancy measures. We now state the main guarantee of \pref{alg:weak}.
\begin{theorem}
    \label{thm:weak}
    \pref{alg:weak} ensures the following regret and constraint violation bounds (defined in a manner similar to that in the previous chapter) with probability at least $1-10\delta$.
    \begin{align*}
        R_T &= \tilO{ (1+\sp^\star_r)(S^2A)^{1/3}T^{2/3} },\\
        C_T &= \tilO{ (1+\sp^\star_c)(S^2A)^{1/3}T^{2/3} }.
    \end{align*}
\end{theorem}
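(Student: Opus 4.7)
The plan is to decompose the per-episode regret into (i) a bias-span gap that arises from replacing the infinite-horizon average-reward objective by a finite $H$-step surrogate, (ii) an optimism/feasibility term that vanishes by Bellman and by the choice of the slack $\sp^\star_c$ in \eqref{eq:OPT1}, (iii) a simulation error from using an optimistic transition inside $\calP_k$ instead of the true $P$, and (iv) a martingale noise term from realized versus expected instantaneous rewards. The first ingredient is the confidence-set guarantee (the Bernstein-type analogue of Lemma \ref{lem:weiss} on each row, combined with a union bound) which ensures $P\in\calP_k$ simultaneously for all $k\in[K]$ with probability at least $1-2\delta$; on this event the optimistic inhomogeneous kernel $(P,\ldots,P)$ is a feasible transition for $P_\nu$ in \eqref{eq:OPT1}.

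To bound (i)--(ii), I would construct a reference occupancy $\nu^\star_k\in\calV_{k,s^k_1}$ corresponding to rolling out the average-reward optimal policy $\optpi$ for $H$ steps under the true $P$ from the observed state $s^k_1$. Iterating the Bellman equation \eqref{eq:Bellman} for any $g\in\{r,c\}$ and telescoping gives $\inner{\nu^\star_k}{g}=H\optJ_g + v^{\optpi}_g(s^k_1)-\E[v^{\optpi}_g(s^k_{H+1})]$, so $|\inner{\nu^\star_k}{g}-H\optJ_g|\le\sp^\star_g$. Since $\optJ_c\le 0$, this yields $\inner{\nu^\star_k}{c}\le\sp^\star_c$, so $\nu^\star_k$ is feasible for \eqref{eq:OPT1} on the good event, and optimality of $\nu_k$ gives $\inner{\nu_k}{r}\ge\inner{\nu^\star_k}{r}\ge H\optJ_r-\sp^\star_r$. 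Writing $\mu_k$ for the true-$P$ occupancy induced by $\pi_k$ started at $s^k_1$, the total regret becomes
\begin{equation*}
R_T=\sum_{k=1}^{K}\bigl(H\optJ_r-\inner{\nu_k}{r}\bigr)+\sum_{k=1}^{K}\bigl(\inner{\nu_k}{r}-\inner{\mu_k}{r}\bigr)+\sum_{k=1}^{K}\Bigl(\inner{\mu_k}{r}-\sum_{h=1}^{H}r(s^k_h,a^k_h)\Bigr),
\end{equation*}
whose first sum is $\le K\sp^\star_r=T\sp^\star_r/H$ and whose third sum is a bounded martingale difference, controlled by Lemma \ref{lem:azuma} as $\tilO{\sqrt{T}}$. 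The constraint violation $C_T$ admits an identical decomposition with $r$ replaced by $c$, and the first sum is bounded by $T\sp^\star_c/H$ using the \emph{constraint} $\inner{\nu_k}{c}\le\sp^\star_c$ of \eqref{eq:OPT1} in place of optimality.

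The remaining middle sum is the simulation error, which is the crux of the proof. Using the finite-horizon value-difference identity, each summand equals $\sum_{h,s,a}\nu_k(s,a,h)(P_{\nu_k,h}(\cdot|s,a)-P(\cdot|s,a))^\top V^{\pi_k,P_{\nu_k}}_{k,h+1}$, where $V^{\pi_k,P_{\nu_k}}_{k,h+1}$ is the $(H-h)$-step reward value and has span at most $H$. Combining the Bernstein-type radius of $\calP_k$ with a law-of-total-variance/pigeonhole bookkeeping over $\sum_{k,h}\nu_k(s,a,h)$ (bounded by $N^+_{K+1}(s,a)$) and $\sum_{s,a}N^+_{K+1}(s,a)=T$, this middle sum is $\tilO{\sqrt{HS^2AT}+H^2S^2A}$. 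Balancing $T\sp^\star_r/H$ against $\sqrt{HS^2AT}$ gives $H=(T/S^2A)^{1/3}$ as in the algorithm, and both terms become $\tilO{(1+\sp^\star_r)(S^2A)^{1/3}T^{2/3}}$, with the $H^2S^2A$ piece strictly lower order; the same balance controls $C_T$ with $\sp^\star_c$ in place of $\sp^\star_r$. The principal obstacle will be precisely this Bernstein bookkeeping: because $\calP_k$ allows an \emph{inhomogeneous} $H$-tuple of kernels while the empirical estimate is homogeneous, one must apply the variance-aware bound at each step $h$ without paying an extra $\sqrt{H}$ or $\sqrt{S}$, and one must correctly absorb the span factor of the optimistic finite-horizon value function so that the dominant dependence is the target $(S^2A)^{1/3}T^{2/3}$ rather than a looser $\sqrt{SAT}\cdot H$.
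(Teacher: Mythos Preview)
Your plan is essentially the paper's proof: the same three-way decomposition into a span/bias term, a simulation-error term, and a martingale term; the same feasibility argument for the occupancy of $\optpi$ under the true transition via the Bellman telescoping (the paper's Lemma~\ref{lem:VJ} and Lemma~\ref{lem:opt}); and the same target $\tilO{\sqrt{S^2AH^2K}+H^2S^2A}$ for the simulation error (the paper's Lemma~\ref{lem:V diff}), balanced by $H=(T/S^2A)^{1/3}$.

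One small slip to fix in your middle term: the value-difference identity you wrote pairs the \emph{optimistic} occupancy $\nu_k$ with the \emph{optimistic} value $V^{\pi_k,P_{\nu_k}}_{h+1}$, but Lemma~\ref{lem:val diff H} actually gives either $\mu_k$-weighting with $V^{\pi_k,P_{\nu_k}}_{h+1}$ or $\nu_k$-weighting with $V^{\pi_k,\tilP}_{h+1}$, never both optimistic. The paper (and you should) take the former, because $\mu_k$ is the true-$P$ occupancy and can be converted to realized visitation counts via Lemma~\ref{lem:e2r}, which is exactly what the pigeonhole/Bernstein bookkeeping (Lemmas~\ref{lem:dPV} and~\ref{lem:sum var}) needs; with $\nu_k$-weighting you cannot relate the sum to $N_k(s,a)$ since $\nu_k$ lives on the optimistic kernel.
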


The rest of this section provides the key steps to the result, while for the detailed proof, the readers are referred to \cite{chen2022learning}. 

For any non-stationary policy $\pi\in(\Delta_{\calA})^{\calS\times[H]}$, and an inhomogeneous transition $P=\{P_{h}\}_{h\in [H]}$, define the following $\forall h\in [H]$.
\begin{align}
    &V^{\pi, P}_{g, h}(s)=\mathbf{E}_{\pi, P}\left[\sum_{h'=h}^H g(s_{h'}, a_{h'})\bigg|s_h=s\right],\forall s\\
    & Q^{\pi, P}_{g, h}(s, a)=\mathbf{E}_{\pi, P}\left[\sum_{h'=h}^H g(s_{h'}, a_{h'})\bigg|s_h=s, a_h=a\right], \forall (s, a)
\end{align}
Additionally, $V^{\pi,P}_{g, H+1}(s)=Q^{\pi,P}_{g, H+1}(s, a)=0$, $\forall (s, a)$. Let $\tilP=\{\tilP_{ h}\}_{h\in [H]}$ be such that $\tilP_{h}(\cdot|s, a) = P(\cdot|s,a)$ (the true transition function), $\forall h$. We ignore the dependency on $\tilP$ for simplicity when there is no confusion e.g., $V^{\pi}_{g, h}$ denotes $V^{\pi,\tilP}_{g, h}$. For a stationary policy $\pi\in(\Delta_{\calA})^{\calS}$, let $\tilpi$ be the policy that mimics $\pi$ in the finite-horizon setting, that is, $\tilpi_h(\cdot|s)=\pi(\cdot|s)$, $\forall h$.

We first show that the true transition lies in the transition confidence sets with high probability and provide some key related lemmas.
\begin{lemma}
    \label{lem:conf}
    With probability at least $1-\delta$, $\tilP\in\calP_k$,$\forall k$.
\end{lemma}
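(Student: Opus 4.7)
Since the true transition kernel $P$ is homogeneous across steps, the inhomogeneous surrogate $\tilP = \{\tilP_h\}_{h \in [H]}$ with $\tilP_h(\cdot|s,a) = P(\cdot|s,a)$ belongs to $\calP_k$ as soon as the coordinate-wise Bernstein-type inequality
\[
\abr{\P_k(s'|s,a) - P(s'|s,a)} \leq 4\sqrt{\P_k(s'|s,a)\alpha_k(s,a)} + 28\alpha_k(s,a)
\]
is satisfied for every $(s,a,s') \in \calS \times \calA \times \calS$ and every episode $k$. The plan is therefore to derive this coordinate-wise bound by applying the anytime empirical Bernstein inequality (\pref{lem:bernstein}) to the natural Bernoulli process associated with each transition triplet, and then to take a union bound over $(s,a,s')$.

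Fix $(s,a,s')$ and let $t_1 < t_2 < \cdots$ be the (random) time steps at which the pair $(s,a)$ is visited. Define $X_i = \Ind\{s_{t_i+1} = s'\}$; these are i.i.d.\ Bernoulli with mean $P(s'|s,a)$ and range $B = 1$. \pref{lem:bernstein} applied with failure probability $\delta/(S^2 A)$ gives that, simultaneously for all $n \geq 1$,
\[
\abr{\sum_{i=1}^{n} X_i - n P(s'|s,a)} \leq 2\sqrt{\textstyle\sum_{i=1}^n X_i \cdot \iota'} + 7 \iota',
\]
where $\iota' = \ln(2S^2An^3/\delta)$; after possibly inflating constants, $\iota'$ is dominated by $\ln(2SAT/\delta)$ as in the paper since $n \leq T$ and the cube in $n$ can be absorbed into the $\log$. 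Identifying $\sum_{i=1}^n X_i = n \cdot \P_k(s'|s,a)$ when $n = N_k^+(s,a)$ and dividing by $n$ yields
\[
\abr{\P_k(s'|s,a) - P(s'|s,a)} \leq 2\sqrt{\P_k(s'|s,a) \alpha_k(s,a)} + 7 \alpha_k(s,a),
\]
which is even tighter than what \pref{eq:conf} requires.

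A union bound over the $S^2 A$ triplets $(s,a,s')$ then implies that with probability at least $1 - \delta$ the displayed inequality holds for every $(s,a,s')$ and every $n \geq 1$ simultaneously. Since $\P_k$ and $N_k^+$ are completely determined by the counts accumulated up to episode $k$, and the anytime form of Bernstein covers all choices of $n$, this automatically implies that $\tilP \in \calP_k$ for every $k$ on the same high-probability event. The slack between the $(2,7)$ constants from Bernstein and the $(4,28)$ constants in the definition of $\calP_k$ gives ample room to absorb the $\ln S$ overhead of the union bound into the $\log$ factor $\iota' = \ln(2SAT/\delta)$.

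The only real subtlety is the union bound: because $k$ ranges over up to $K = T/H$ episodes but each $\P_k$ is just a deterministic functional of the first $N_k^+(s,a)$ Bernoulli samples, one should not pay an extra factor of $K$. This is handled automatically by invoking the ``for all $n \geq 1$'' version of Bernstein rather than a fresh application for every episode, which is the main technical point to get right in the write-up.
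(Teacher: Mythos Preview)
Your proposal is correct and follows essentially the same approach as the paper: apply the anytime empirical Bernstein inequality (\pref{lem:bernstein}) to the Bernoulli sequence of next-state indicators for each fixed $(s,a,s')$, then take a union bound over the $S^2A$ triplets. The paper's proof is terser and simply asserts the $(4,28)$ constants directly, whereas you carefully explain how the slack between the $(2,7)$ constants from \pref{lem:bernstein} and the $(4,28)$ in the definition of $\calP_k$ absorbs the extra $\ln S$ from the union bound; the only minor slip is that \pref{lem:bernstein} as stated has $\ln(2n/\delta)$ rather than $\ln(2S^2An^3/\delta)$, but this is inconsequential since your argument only uses $n\leq T$ and the log slack.
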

\begin{proof}
    For any $(s, a)\in\SA, s'\in\calS$, by \pref{lem:bernstein} and $N_{K+1}(s, a)\leq T$, we have with probability at least $1-\frac{\delta}{S^2A}$,
    \begin{align*}
        \abr{P(s'|s, a) - \P_{k}(s'|s, a)} \leq 4\sqrt{\P_{k}(s'|s, a)\alpha_k(s, a)} + 28\alpha_k(s, a).
    \end{align*}
    Applying a union bound over $(s, a)\in\SA$, $s'\in\calS$ and utilizing $\tilP_{h}=P$, the statement is proved.
\end{proof}

\begin{lemma}
    \label{lem:conf eps}
    If $\tilP\in\calP_k$, $\abr{P'_{h}(s'|s, a) - P(s'|s, a)} \leq 8\sqrt{P(s'|s, a)\alpha_k(s, a)} + 136\alpha_k(s, a) \triangleq \epsilon^{\star}_k(s, a, s')$, $\forall P'\in\calP_k$, $\forall h$.
\end{lemma}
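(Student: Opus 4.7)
The plan is to bound $\abr{P'_{h}(s'|s, a) - P(s'|s, a)}$ via a triangle inequality through the empirical kernel $\P_k(s'|s,a)$, and then convert the bound stated in terms of $\P_k(s'|s,a)$ into one stated in terms of the true transition $P(s'|s,a)$. Fix arbitrary $P' \in \calP_k$, $h \in [H]$, $(s, a) \in \SA$, and $s' \in \calS$. Since $\tilP \in \calP_k$ by hypothesis and $\tilP_h(\cdot|s,a) = P(\cdot|s,a)$, both $P'_h$ and $P$ satisfy the confidence inequality defining $\calP_k$ in \pref{eq:conf}, so the triangle inequality gives
\begin{align*}
\abr{P'_{h}(s'|s, a) - P(s'|s, a)}
&\leq \abr{P'_{h}(s'|s, a) - \P_{k}(s'|s, a)} + \abr{\P_{k}(s'|s, a) - P(s'|s, a)} \\
&\leq 8\sqrt{\P_{k}(s'|s, a)\alpha_k(s, a)} + 56 \alpha_k(s, a).
\end{align*}

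The remaining task is to replace $\P_k(s'|s,a)$ by $P(s'|s,a)$ inside the square root, at the cost of an additive $\alpha_k(s,a)$ term. I would do this by a self-bounding argument: using once more the fact that $\tilP \in \calP_k$, I have $\P_k(s'|s,a) \leq P(s'|s,a) + 4\sqrt{\P_k(s'|s,a)\alpha_k(s,a)} + 28\alpha_k(s,a)$. Writing $x = \sqrt{\P_k(s'|s,a)}$ this reads as a quadratic inequality $x^2 - 4x\sqrt{\alpha_k(s,a)} - (P(s'|s,a) + 28\alpha_k(s,a)) \leq 0$, which I solve to conclude
\begin{equation*}
\sqrt{\P_k(s'|s,a)} \leq 2\sqrt{\alpha_k(s,a)} + \sqrt{P(s'|s,a) + 32\alpha_k(s,a)} \leq \sqrt{P(s'|s,a)} + (2 + 4\sqrt{2})\sqrt{\alpha_k(s,a)},
\end{equation*}
where the last step uses $\sqrt{u+v} \leq \sqrt{u} + \sqrt{v}$. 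Multiplying through by $\sqrt{\alpha_k(s,a)}$ gives $\sqrt{\P_k(s'|s,a)\alpha_k(s,a)} \leq \sqrt{P(s'|s,a)\alpha_k(s,a)} + (2 + 4\sqrt{2})\alpha_k(s,a)$.

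Plugging this back into the triangle-inequality bound above yields
\begin{equation*}
\abr{P'_{h}(s'|s, a) - P(s'|s, a)} \leq 8\sqrt{P(s'|s, a)\alpha_k(s, a)} + (56 + 16 + 32\sqrt{2})\alpha_k(s,a),
\end{equation*}
and since $56 + 16 + 32\sqrt{2} < 136$, the claim follows. The only nontrivial step is the self-bounding inversion that swaps $\P_k$ for $P$ inside the square root; everything else is a direct triangle inequality and application of the definition of $\calP_k$. I do not foresee any real obstacle beyond carefully tracking constants so that the final numerical factor stays bounded by $136$.
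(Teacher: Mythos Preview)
Your proof is correct and follows essentially the same approach as the paper: triangle inequality through $\P_k$, then a self-bounding quadratic argument to swap $\P_k$ for $P$ inside the square root. The only cosmetic difference is that the paper uses the looser implication $x^2 \leq ax + b \Rightarrow x \leq a + \sqrt{b}$ (yielding $\sqrt{\P_k} \leq \sqrt{P} + 10\sqrt{\alpha_k}$ and exactly $136\alpha_k$ in the final bound), whereas you solve the quadratic exactly and land a bit below $136$.
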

\begin{proof}
    Using $\tilP\in\calP_k$, we get the following $\forall (s, a, s')$:
    \begin{align*}
        \P_{k}(s'|s, a) \leq P(s'|s, a) + 4\sqrt{\P_{k}(s'|s, a)\alpha_k(s, a)} + 28\alpha_k(s, a).
    \end{align*}
    Applying $x^2\leq a x+b\implies x\leq a+\sqrt{b}$ with $b=P(s'|s, a)+28\alpha_k(s, a)$, and $a=4\sqrt{\alpha_k(s, a)}$, we have
    \begin{align*}
        \sqrt{\P_{k}(s'|s, a)} &\leq 4\sqrt{\alpha_k(s, a)}+\sqrt{P(s'|s, a)+28\alpha_k(s, a)}\\
        &\leq \sqrt{P(s'|s, a)} + 10\sqrt{\alpha_k(s, a)}
    \end{align*}
    Substituting this, the right-hand side of \pref{eq:conf} can be bounded as, 
    $$4\sqrt{\P_{k}(s'|s, a)\alpha_k(s, a) } + 28\alpha_k(s, a)\leq 4\sqrt{P(s'|s, a)\alpha_k(s, a)} + 68\alpha_k(s, a).$$
    Using $\tilP, P'\in\calP_k$, \pref{eq:conf}, and  $|P'_{h}(s'|s, a) - P(s'|s, a)|\leq |P'_{h}(s'|s, a) - P_{k}(s'|s, a)|+|\P_{k}(s'|s, a)-P(s'|s, a)|$, the statement is proved.
\end{proof}

\begin{lemma}
    \label{lem:VJ}
    For a stationary policy $\pi\in(\Delta_{\calA})^{\calS}$, if $J^{\pi}_g(s)=J^{\pi}_g$, $\forall s\in\calS$, we have $|V^{\tilpi}_{g,h}(s) - (H-h+1)J^{\pi}_g|\leq \sp(v^{\pi}_g)$, $\forall s\in\calS$ and $\forall h\in[H]$.
\end{lemma}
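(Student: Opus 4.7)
The plan is to telescope the finite-horizon Bellman recursion against the average-reward Bellman equation for the bias, so that the only residual at each step is a value of $v_g^{\pi}$, whose range is controlled by $\sp(v_g^{\pi})$. Concretely, let $g^{\pi}(s) \triangleq \sum_a \pi(a|s) g(s,a)$ and $P^{\pi}$ be the $\pi$-induced kernel. Since $J_g^{\pi}(s) \equiv J_g^{\pi}$ is a constant, the Bellman equation \pref{eq:Bellman} and $v_g^{\pi}(s) = \sum_a \pi(a|s) q_g^{\pi}(s,a)$ combine to give
\begin{equation*}
    v_g^{\pi}(s) + J_g^{\pi} = g^{\pi}(s) + (P^{\pi} v_g^{\pi})(s), \qquad \forall s \in \calS.
\end{equation*}
On the finite-horizon side, since $\tilpi$ plays $\pi$ at every step, the standard recursion yields
\begin{equation*}
    V_{g,h}^{\tilpi}(s) = g^{\pi}(s) + (P^{\pi} V_{g,h+1}^{\tilpi})(s), \qquad V_{g,H+1}^{\tilpi} \equiv 0.
\end{equation*}

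Subtracting the first identity from the second and rearranging,
\begin{equation*}
    V_{g,h}^{\tilpi}(s) - v_g^{\pi}(s) - J_g^{\pi} = (P^{\pi}[V_{g,h+1}^{\tilpi} - v_g^{\pi}])(s).
\end{equation*}
I would then show by backward induction on $h$ that
\begin{equation*}
    V_{g,h}^{\tilpi}(s) - (H-h+1) J_g^{\pi} = v_g^{\pi}(s) - \E_{\tilpi, P}\!\left[v_g^{\pi}(s_{H+1}) \,\middle|\, s_h = s\right].
\end{equation*}
The base case $h = H+1$ is trivial since both sides are $0$ (using $v_g^{\pi}(s_{H+1})$ vs $v_g^{\pi}(s_{H+1})$). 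The inductive step follows by applying $P^{\pi}$ to the inductive hypothesis at $h+1$, adding $v_g^{\pi}(s) + J_g^{\pi}$ to both sides, and invoking the displayed recursion above, using the Markov property to collapse $P^{\pi}\E_{\tilpi, P}[\,\cdot\,|s_{h+1}]$ into $\E_{\tilpi, P}[\,\cdot\,|s_h = s]$.

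Finally, the right-hand side is bounded as
\begin{equation*}
    \left|v_g^{\pi}(s) - \E_{\tilpi, P}[v_g^{\pi}(s_{H+1}) | s_h = s]\right| \leq \max_{s' \in \calS} v_g^{\pi}(s') - \min_{s' \in \calS} v_g^{\pi}(s') = \sp(v_g^{\pi}),
\end{equation*}
which delivers the claimed inequality. I do not anticipate a real obstacle: the argument is a clean telescoping of two Bellman recursions, and all state-independence of $J_g^{\pi}$ (which is precisely the hypothesis of the lemma) is used exactly once, to justify pulling $J_g^{\pi}$ out of the expectation in each inductive step.
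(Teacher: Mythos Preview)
Your proposal is correct and essentially the same argument as the paper's: both establish the identity $V^{\tilpi}_{g,h}(s) - (H-h+1)J^{\pi}_g = v^{\pi}_g(s) - \E_{\tilpi,\tilP}[v^{\pi}_g(s_{H+1})\mid s_h=s]$ and then bound by $\sp(v^{\pi}_g)$. The only cosmetic difference is that the paper derives this identity by directly expanding $V^{\tilpi}_{g,h}(s) - (H-h+1)J^{\pi}_g = \E\bigl[\sum_{h'=h}^H (g(s_{h'},a_{h'}) - J^{\pi}_g)\bigr]$, substituting the Bellman equation termwise, and telescoping inside the expectation, whereas you reach the same identity via backward induction on the one-step recursion; the content is identical.
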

\begin{proof}
    For any state $s$ and $h\in[H]$, we have:
    \begin{align*}
	&V^{\tilpi}_{g,h}(s) - (H-h+1)J^{\pi}_g \\
        &=\mathbf{E}_{\tilpi, \tilP}\sbr{\left. \sum_{h'=h}^H(g(s_{h'}, a_{h'}) - J^{\pi}_g) \right|s_h=s}\\ 
        &= \mathbf{E}_{\tilpi, \tilP}\sbr{\left. \sum_{h'=h}^H (q^{\pi}_g(s_{h'}, a_{h'}) - \sum_{s'}P(s'|s_{h'}, a_{h'})v^{\pi}_g(s') \right| s_h=s} \tag{\ref{eq:Bellman}}\\
        &= \mathbf{E}_{\tilpi, \tilP}\sbr{\left. \sum_{h'=h}^H (v^{\pi}_g(s_{h'}) - v_g^{\pi}(s_{h'+1})) \right| s_h=s}\tag{definition of $\tilpi$ and $\tilP$}\\
        &= v^{\pi}_g(s) - \mathbf{E}_{\tilpi, \tilP}\sbr{\left.v^{\pi}_g(s_{H+1})\right| s_h=s}.
    \end{align*}
    Hence, $|V^{\tilpi}_{g,h}(s) - (H-h+1)J_g^{\pi}|\leq \sp(v^{\pi}_g)$ and the proof is complete.
\end{proof}

Next, we show that the occupancy measure corresponding to $\tiloptpi$ and $\tilP$ obeys the constraint of ~\eqref{eq:OPT1} with high probability.
\begin{lemma}
    \label{lem:opt}
    If $\tilP\in\calP_k$, $\nu^{\tiloptpi,\tilP}_{s^k_1}$ lies in the constraint set of \eqref{eq:OPT1}.
\end{lemma}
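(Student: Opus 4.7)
The plan is to verify the two defining requirements of the feasible region of \eqref{eq:OPT1} separately: membership in $\calV_{k, s^k_1}$ (the transition-confidence constraint) and the conservative cost bound $\inner{\nu}{c} \leq \sp^\star_c$.

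First I would check that $\nu^{\tiloptpi, \tilP}_{s^k_1} \in \calV_{k, s^k_1}$. Since $\nu^{\tiloptpi, \tilP}_{s^k_1}$ is by construction the occupancy measure induced by the non-stationary policy $\tiloptpi$ under the inhomogeneous kernel $\tilP = \{P, \ldots, P\}$, one may simply take $P_{\nu^{\tiloptpi, \tilP}_{s^k_1}} = \tilP$ on the support, which lies in $\calP_k$ by the hypothesis of the lemma. Membership in $\calV_{s^k_1}$ itself is automatic because $\nu^{\tiloptpi,\tilP}_{s^k_1}$ is by definition a valid occupancy measure starting from $s^k_1$.

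The main work lies in the cost inequality. The key identity is
\begin{equation*}
    \inner{\nu^{\tiloptpi, \tilP}_{s^k_1}}{c} = \sum_{s, a, h}\nu^{\tiloptpi, \tilP}_{s^k_1}(s, a, h)\, c(s, a) = V^{\tiloptpi}_{c, 1}(s^k_1),
\end{equation*}
which recognizes $\inner{\nu}{c}$ as the total expected cost accrued over an $H$-step rollout of $\tiloptpi$ starting from $s^k_1$. Applying \pref{lem:VJ} to $\pi = \optpi$ with $g = c$ and $h = 1$, using the fact that $J^{\optpi}_c(s) \equiv \optJ_c$ is constant in $s$ (a structural property of optimal policies in weakly communicating CMDPs emphasized at the start of the chapter), yields
\begin{equation*}
    V^{\tiloptpi}_{c, 1}(s^k_1) \leq H\optJ_c + \sp(v^{\optpi}_c) \leq \sp^\star_c,
\end{equation*}
where the last step uses $\optJ_c \leq 0$ (feasibility of $\optpi$ in \eqref{eq_last_chap_const_opt}) together with the definition $\sp^\star_c = \sp(v^{\optpi}_c)$.

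The argument is mostly bookkeeping once \pref{lem:VJ} is in hand, and no step is a real obstacle. The one point worth flagging is conceptual rather than technical: the conservative threshold $\sp^\star_c$ in \eqref{eq:OPT1} is chosen precisely to absorb the bias between the infinite-horizon total cost $H\optJ_c$ and its finite-horizon surrogate $V^{\tiloptpi}_{c, 1}(s^k_1)$, and it is this bias-function-span slack — which remains well defined even when mixing-time-based quantities blow up — that makes the reduction from the weakly communicating CMDP to the finite-horizon surrogate work.
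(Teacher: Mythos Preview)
Your proposal is correct and follows essentially the same route as the paper: identify $\inner{\nu^{\tiloptpi,\tilP}_{s^k_1}}{c}$ with $V^{\tiloptpi}_{c,1}(s^k_1)$, invoke \pref{lem:VJ} together with $\optJ_c\leq 0$ to bound it by $\sp^\star_c$, and use the hypothesis $\tilP\in\calP_k$ for membership in $\calV_{k,s^k_1}$. Your explicit remark about extending $P_\nu$ off the support to match $\tilP$ is a harmless clarification the paper leaves implicit.
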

\begin{proof}
    By \pref{lem:VJ}, we have:
    \begin{align}
        \inner{\nu^{\tiloptpi,\tilP}_{s^k_1}}{c} &= V^{\tiloptpi}_{c, 1}(s^k_1) \leq \sp^{\star}_c + HJ_c^{\optpi}\leq \sp^{\star}_c \label{eq:opt}
    \end{align}
    Then by $\tilP\in\calP_k$, the statement is proved.
\end{proof}

As the last preliminary step, we bound the bias in value function caused by imperfect transition estimation, that is, the difference between $\tilP$ and $P_k = P_{\nu_k}$. The result uses Lemma \ref{lem:sum var}- \ref{lem:val diff H}, which are described after the main proof outline. In the following, we utilize the notation that for any $P\in\Delta(\mathcal{S})$ and $V\in\mathbb{R}^\mathcal{S}$, $P\cdot V=\sum_{s\in\mathcal{S}}P(s)V(s)$. Moreover, $\fV(P, V)$ denotes the variance of $V(\cdot)$ with respect to $P(\cdot)$, i.e., $\fV(P, V) = P\cdot (V-P\cdot V)^2$ where subtraction and square operations are element-wise.

\begin{lemma}
    \label{lem:V diff}
    With probability at least $1-4\delta$, we have $|\sumk (V^{\pi_k,\tilP}_{g, 1}(s^k_1) - V^{\pi_k,P_k}_{g,1}(s^k_1))| = \tilo{\sqrt{S^2AH^2K} + H^2S^2A}$, $\forall g\in\{r, c\}$.
\end{lemma}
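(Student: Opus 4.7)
The plan is to apply a finite-horizon value difference (simulation) lemma, use the Bernstein-type confidence bound from \pref{lem:conf eps} to control the per-step transition error, and then assemble the pieces via Cauchy--Schwarz together with a law-of-total-variance bound and a pigeonhole argument on visitation counts.

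First, I would start from the standard finite-horizon simulation identity, which on the event $\tilP\in\calP_k$ of \pref{lem:conf} yields the decomposition
\begin{align*}
V^{\pi_k,\tilP}_{g,1}(s^k_1) - V^{\pi_k,P_k}_{g,1}(s^k_1)
= \sum_{h=1}^H \mathbf{E}_{\pi_k,\tilP}\!\left[\big((\tilP_h - P_{k,h})(\cdot|s_h,a_h)\big)\cdot V^{\pi_k,P_k}_{g,h+1}\,\Big|\,s_1=s^k_1\right].
\end{align*}
Invoking \pref{lem:conf eps}, the inner transition gap is bounded pointwise by $\epsilon^\star_k(s,a,s')$, which splits into a Bernstein piece $8\sqrt{P(s'|s,a)\alpha_k(s,a)}$ and a lower-order additive piece $136\alpha_k(s,a)$.

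Second, I would treat the two pieces separately. The additive piece contributes, per triple $(s,a,s')$, at most $136\alpha_k(s,a)\|V^{\pi_k,P_k}_{g,h+1}\|_\infty = O(H\alpha_k(s,a))$; summing over $s'\in\calS$, over $h\in[H]$, and over the realized trajectory, then applying the standard pigeonhole bound $\sum_k \sum_h \alpha_k(s^k_h,a^k_h) = \tilO{SA}$ yields a total contribution of $\tilO{H^2 S^2 A}$. For the Bernstein piece, Cauchy--Schwarz in $s'$ gives
\begin{align*}
\sum_{s'}\sqrt{P(s'|s,a)\alpha_k(s,a)}\,\big|V^{\pi_k,P_k}_{g,h+1}(s')\big|
\leq \sqrt{S\alpha_k(s,a)}\,\sqrt{\fV\big(P(\cdot|s,a),V^{\pi_k,P_k}_{g,h+1}\big)} + \text{(centering)},
\end{align*}
where the centering correction is absorbed into the additive bucket above. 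This reduces the task to controlling $\sum_k\sum_h \sqrt{S\alpha_k(s^k_h,a^k_h)\fV(P(\cdot|s^k_h,a^k_h),V^{\pi_k,P_k}_{g,h+1})}$.

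Third, I would convert the expectation along trajectories into an empirical sum (at the cost of an additive martingale fluctuation of order $\tilO{H\sqrt{K}}$, which can be controlled by \pref{lem:azuma}), and then apply Cauchy--Schwarz across $(k,h)$:
\begin{align*}
\sum_{k,h}\sqrt{S\alpha_k\fV} \leq \sqrt{S\sum_{k,h}\alpha_k}\cdot\sqrt{\sum_{k,h}\fV}.
\end{align*}
The first factor is $\tilO{\sqrt{S\cdot SA}}=\tilO{S\sqrt{A}}$ by pigeonhole on $\alpha_k$, while the second factor is $\tilO{\sqrt{H^2K}}$ by the referenced total-variance bound (\pref{lem:sum var}) applied episode-wise, using $\|V^{\pi_k,P_k}_{g,\cdot}\|_\infty = O(H)$. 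Their product is $\tilO{\sqrt{S^2A H^2 K}}$, exactly matching the leading term in the claim; combined with the $\tilO{H^2S^2A}$ lower-order piece this gives the stated bound, uniformly in $g\in\{r,c\}$, on the intersection of the good events from \pref{lem:conf} and the concentration inequalities invoked.

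The main obstacle will be the handling of centering in the Cauchy--Schwarz step: the Bernstein bound produces variances of $V^{\pi_k,P_k}$ under the \emph{true} transition $P$, while the total-variance identity in \pref{lem:sum var} (invoked along the sampled trajectory) is naturally phrased under $P$ as well, so the two align, but one must check that the mismatch between $V^{\pi_k,P_k}$ (under the optimistic model) and its counterpart under $\tilP$ introduces only an $\tilO{H^2S^2A}$ correction, absorbable into the lower-order term. Once this is handled carefully, the remaining computation is bookkeeping.
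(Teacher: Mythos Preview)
Your overall approach matches the paper's: simulation lemma (\pref{lem:val diff H}), then the Bernstein confidence bound (\pref{lem:conf eps}) to expose a variance term, then Cauchy--Schwarz combined with pigeonhole on $\sum_{k,h}\alpha_k$ and the total-variance bound \pref{lem:sum var}. The paper packages the Bernstein-plus-Cauchy--Schwarz step as \pref{lem:dPV}, but the content is identical.

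One step does not go through as stated: the conversion from the trajectory expectation to the empirical sum. You invoke \pref{lem:azuma} and claim a fluctuation of $\tilO{H\sqrt{K}}$, but each summand $|(P_{k,h}-P)(s_h,a_h)\cdot V^{\pi_k,P_k}_{g,h+1}|$ is only bounded by $O(H)$, and there are $HK=T$ of them (or, episode-wise, $K$ terms each bounded by $O(H^2)$). Azuma therefore gives $\tilO{H\sqrt{T}}=\tilO{H\sqrt{HK}}$ (respectively $\tilO{H^2\sqrt{K}}$), and neither is dominated by $\sqrt{S^2AH^2K}+H^2S^2A$ once $T>(S^2A)^4$ under the choice $H=(T/S^2A)^{1/3}$. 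The paper instead uses the multiplicative conversion of \pref{lem:e2r}, which turns $\sum_k\mathbf{E}_{\pi_k,P}[\,\cdot\,]$ into $2\sum_k[\,\cdot\,]+\tilO{H^2}$ with no $\sqrt{T}$-type penalty; you need that tool here, not Azuma.

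Your worried ``main obstacle'' is a non-issue: both the Bernstein step and \pref{lem:sum var} use $\fV\big(P(\cdot|s,a),V^{\pi_k,P_k}_{g,h+1}\big)$---the optimistic value function evaluated under the \emph{true} transition---so the two sides already align and there is no mismatch to correct.
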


\begin{proof}
    We condition on the event of \pref{lem:conf}, which happens with probability at least $1-\delta$. Note that with probability at least $1-\delta$:
    \begin{align*}
        &\abr{\sumk (V^{\pi_k, P_k}_{g, 1}(s^k_1) - V^{\pi_k}_{g, 1}(s^k_1)) } \\
        &= \abr{\sumk\mathbf{E}_{\pi_k, P}\sbr{\sumh (P_{k, h} - P)(s^k_h, a^k_h)\cdot V^{\pi_k, P_k}_{g, h+1}}} \tag{\pref{lem:val diff H}}\\ 
        &\leq \sumk\mathbf{E}_{\pi_k, P}\sbr{\sumh \abr{(P_{k, h} - P)(s^k_h, a^k_h)\cdot V^{\pi_k, P_k}_{g, h+1}}} \\
        &\leq 2\sumk\sumh \abr{(P_{k, h} - P)(s^k_h, a^k_h) \cdot V^{\pi_k, P_k}_{g, h+1}} + \tilO{H^2}\tag{\pref{lem:e2r}}\\
        &= \tilO{\sqrt{S^2A\sumk\sumh\fV(P(s^k_h, a^k_h), V^{\pi_k, P_k}_{g, h+1})} + H^2S^2A } \tag{\pref{lem:dPV}}
    \end{align*}
    \pref{lem:sum var} establishes that the following happens with probability at least $1-2\delta$ if $H=(T/S^2A)^{1/3}$.
    \begin{align*}
        &\abr{\sumk V^{\pi_k}_{g,1}(s^k_1) - V^{\pi_k,P_k}_{g, 1}(s^k_1)} = \tilO{\sqrt{S^2AH^2K} + H^2S^2A}
    \end{align*}
    This completes the proof.
\end{proof}

We are now ready to prove \pref{thm:weak}.
\begin{proof}[\pfref{thm:weak}]
    We decompose $R_T$ into three terms:
    \begin{align*}
        &R_T = \sumt \optJ - r(s_t, a_t) = \sumk\rbr{H\optJ - \sumh r(s^k_h, a^k_h)}\\ 
        &= \sumk\rbr{H\optJ - V^{\tiloptpi}_{r, 1}(s^k_1)} + \sumk\rbr{V^{\tiloptpi}_{r, 1}(s^k_1) - V^{\pi_k}_{r, 1}(s^k_1)}\\ 
        &\qquad + \sumk\rbr{V^{\pi_k}_{r, 1}(s^k_1) - \sumh r(s^k_h, a^k_h)}
    \end{align*}
    The first term above is upper bounded by $K\sp^\star_r$ by \pref{lem:VJ}. For the second term, by \pref{lem:opt} and \pref{lem:V diff}, we have
    \begin{align*}
        &\sumk\rbr{V^{\tiloptpi}_{r, 1}(s^k_1) - V^{\pi_k}_{r, 1}(s^k_1)} \leq \sumk\rbr{V^{\pi_k, P_k}_{r, 1}(s^k_1) - V^{\pi_k}_{r, 1}(s^k_1)}\\
        &= \tilO{\sqrt{S^2AH^2K} + H^2S^2A}.
    \end{align*}
    The last term is of order $\tilo{H\sqrt{K}}$ by Azuma's inequality (\pref{lem:azuma}). Using the definition of $H$ and $K$, we arrive at
    \begin{align*}
	R_T = \tilO{ (1+\sp^\star_r))(S^2A)^{1/3}T^{2/3}}.
    \end{align*}
    For constraint violations, we decompose $C_T$ as:
    \begin{align*}
        &\sumt c(s_t, a_t) = \sumk\rbr{\sumh c(s^k_h, a^k_h) - V^{\pi_k}_{c, 1}(s^k_1)}\\
        &\qquad+ \sumk\rbr{ V^{\pi_k}_{c, 1}(s^k_1) - V^{\pi_k, P_k}_{c, 1}(s^k_1)} + \sumk\rbr{ V^{\pi_k, P_k}_{c, 1}(s^k_1)}
    \end{align*}
    The first term is $\tilo{H\sqrt{K}}$ by Azuma's inequality. The second term is $\tilO{\sqrt{S^2AH^2K} + H^2S^2A}$ by \pref{lem:V diff}. The third term is bounded by $K\sp^\star_c$ due to the constraint $\inner{\nu}{c}\leq \sp^\star_c$ in the optimization problem~\eqref{eq:OPT1}. Using the definition of $H$ and $K$, we get:
    \begin{align*}
	C_T = \tilO{ (1+\sp^\star_c)(S^2A)^{1/3}T^{2/3} }.
    \end{align*}
    This completes the proof.
\end{proof}

We now provide the three auxiliary Lemmas that were used earlier. 
\begin{lemma}
    \label{lem:sum var}
    Under the event of \pref{lem:conf}, for any utility function $g\in[-1, 1]^{\SA}$, with probability at least $1-2\delta$, $\sumk\sumh\fV(P^k_h, V^{\pi_k,P_k}_{g, h+1}) = \tilo{ H^2(K+\sqrt{T}) + H^3S^2A }$ where $P_h^k\triangleq P(s_h^k, a_h^k)$.
\end{lemma}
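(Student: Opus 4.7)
The plan is to decompose the variance into two pieces: one measuring the fluctuation of the \emph{true}-transition value function (which the law of total variance controls in expectation) and one capturing the value-function error coming from using $P_k$ instead of $\tilP$ (which the confidence set controls). Using the elementary inequality $\fV(P, X+Y) \leq 2\fV(P, X) + 2\fV(P, Y)$, I would write
$$\fV(P^k_h, V^{\pi_k, P_k}_{g, h+1}) \;\leq\; 2\fV(P^k_h, V^{\pi_k, \tilP}_{g, h+1}) \;+\; 2\fV\rbr{P^k_h,\; V^{\pi_k, P_k}_{g, h+1} - V^{\pi_k, \tilP}_{g, h+1}},$$
and bound the two resulting double sums $\sum_k \sum_h$ separately.

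For the first sum, I would invoke the law of total variance on episode $k$'s true-transition trajectory. Since $V^{\pi_k, \tilP}_{g, 1}(s^k_1) = \E_{\pi_k, \tilP}[\sum_h g(s_h, a_h) \mid s^k_1]$ with per-step rewards in $[-1, 1]$, the standard LTV decomposition gives
$$\E\sbr{\sum_{h=1}^H \fV(P^k_h, V^{\pi_k, \tilP}_{g, h+1}) \,\Big|\, \calF_{k-1}} \;\leq\; \V\rbr{\sum_{h} g(s_h, a_h) \,\Big|\, s^k_1} \;=\; O(H^2),$$
while the summand $Y_k \triangleq \sum_h \fV(P^k_h, V^{\pi_k, \tilP}_{g, h+1})$ lies deterministically in $[0, H^3]$. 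Applying \pref{lem:e2r} with $B = H^3$ then converts this conditional-expectation bound into the high-probability bound $\sum_k Y_k = \tilO{K H^2 + H^3}$.

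For the second sum, I would bound $\fV(P^k_h, X) \leq \|X\|_\infty^2$ and then control the value-function gap pointwise via the value-difference identity used earlier in the paper, which writes $V^{\pi_k, P_k}_{g, h+1}(s) - V^{\pi_k, \tilP}_{g, h+1}(s)$ as a $(\pi_k, P_k)$-expected sum of $H$ confidence-width errors $(P_{k, h'} - \tilP)(s_{h'}, a_{h'}) \cdot V^{\pi_k, \tilP}_{g, h'+1}$. Under the event of \pref{lem:conf}, \pref{lem:conf eps} together with Cauchy--Schwarz ($\sum_{s'}\sqrt{P(s'|s, a)\alpha_k(s, a)} \leq \sqrt{S\alpha_k(s, a)}$) gives $\|(P_{k, h'} - \tilP)(s, a)\|_1 = O(\sqrt{S\alpha_k(s, a)} + S\alpha_k(s, a))$, so each step contributes $O(H\sqrt{S\alpha_k(s, a)} + H S\alpha_k(s, a))$. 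A change of measure back from $P_k$ to $\tilP$ (again controlled by \pref{lem:conf eps}) expresses the expectation as an empirical sum over visited pairs, and the standard pigeonhole bounds $\sum_{k, s, a} 1/\sqrt{\Np_k(s, a)} = \tilO{\sqrt{SAT}}$ and $\sum_{k, s, a} 1/\Np_k(s, a) = \tilO{SA}$ then yield the remaining contribution $\tilO{H^2\sqrt{T} + H^3 S^2 A}$.

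The main obstacle is keeping this second piece sharp. A naive $\ell_\infty$ bound on $V^{\pi_k, P_k} - V^{\pi_k, \tilP}$ is dominated by the worst-visited $(s, a)$ and can be ruinously large for rarely visited pairs, so the sup-norm reduction must be applied carefully: I would separate well-visited state-action pairs (which produce the $H^2\sqrt{T}$ term via the pigeonhole inequalities) from a small residual of rarely visited ones whose total contribution is absorbed into the additive $H^3 S^2 A$ term, and ensure that the $(\pi_k, P_k) \to (\pi_k, \tilP)$ change of measure only costs constant factors. This bookkeeping, together with passing repeatedly between the $P_k$- and $\tilP$-value functions without losing extra factors of $H$ or $S$, is where the bulk of the technical work lies.
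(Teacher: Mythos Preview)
Your first part (law of total variance for $V^{\pi_k,\tilP}$ plus \pref{lem:e2r}) is fine and gives the $\tilO{KH^2}$ contribution cleanly. The gap is in the second part. Once you pass to $\|V^{\pi_k,P_k}_{g,h+1}-V^{\pi_k,\tilP}_{g,h+1}\|_\infty$, you are committed to controlling the value-function error at \emph{every} state, including states never visited in the data. For such a state $s$, the value-difference identity gives an expectation over $(\pi_k,\cdot)$-trajectories starting at $s$; these hypothetical trajectories may hit state-action pairs with $\Np_k(s',a')=1$ (confidence width $\Omega(1)$) at every step, so $\|V^{\pi_k,P_k}_{g,h+1}-V^{\pi_k,\tilP}_{g,h+1}\|_\infty$ can be $\Omega(H)$ for all $k$, yielding $\sum_{k,h}\Omega(H^2)=\Omega(TH^2)$. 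Your proposed fix---separating well- from rarely-visited pairs and pigeonholing, or changing measure $P_k\to\tilP$ to get an empirical sum---only makes sense for sums along the \emph{actual} trajectory $(s^k_h,a^k_h)$; it does not control the supremum over states, because there is no ``empirical sum'' corresponding to a state you never visit. The bookkeeping you allude to cannot be carried out here without first abandoning the $\|\cdot\|_\infty$ step.

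The paper avoids this entirely by never splitting off $V^{\pi_k,\tilP}$. Instead it telescopes the identity $\fV(P^k_h,V)=P^k_h\cdot V^2-(P^k_h\cdot V)^2$ along the actual trajectory, inserting and subtracting $V^{\pi_k,P_k}_{g,h+1}(s^k_{h+1})^2$, $V^{\pi_k,P_k}_{g,h}(s^k_h)^2$, and $Q^{\pi_k,P_k}_{g,h}(s^k_h,a^k_h)^2$. Three of the four resulting terms are handled by Freedman, telescoping, and Azuma respectively; the fourth term reduces via the Bellman equation to $2H\sum_{k,h}|(P_{k,h}-P)(s^k_h,a^k_h)\cdot V^{\pi_k,P_k}_{g,h+1}|$, which \pref{lem:dPV} bounds by $\tilO{H\sqrt{S^2A\cdot(\text{the same variance sum})}+H^3S^2A}$. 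This yields a quadratic self-bounding inequality in the variance sum, and solving it gives the stated bound. The key point is that only one-step errors at \emph{visited} pairs $(s^k_h,a^k_h)$ ever appear, so the pigeonhole and Bernstein bounds apply directly without any sup-norm control.
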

\begin{proof}
    We decompose the variance into four terms:
    \begin{align*}
        &\sumk\sumh\fV(P^k_h, V^{\pi_k, P_k}_{g, h+1}) = \sumk\sumh \rbr{P^k_h\cdot \left(V^{\pi_k, P_k}_{g, h+1}\right)^2 - \left(P^k_h\cdot V^{\pi_k, P_k}_{g, h+1}\right)^2}\\
        &= \sumk\sumh \rbr{P^k_h\cdot \left(V^{\pi_k, P_k}_{g, h+1}\right)^2 - \left(V^{\pi_k, P_k}_{g, h+1}(s^k_{h+1})\right)^2} \\
        &+ \sumk\sumh \rbr{\left(V^{\pi_k, P_k}_{g, h+1}(s^k_{h+1})\right)^2 - \left(V^{\pi_k, P_k}_{g, h}(s^k_h)\right)^2}\\
        & + \sumk\sumh \rbr{\left(V^{\pi_k, P_k}_{g, h}(s^k_h)\right)^2 - \left(Q^{\pi_k, P_k}_{g, h}(s^k_h, a^k_h)\right)^2} \\
        &+ \sumk\sumh \rbr{\left(Q^{\pi_k, P_k}_{g, h}(s^k_h, a^k_h)\right)^2 - \left(P^k_h\cdot V^{\pi_k, P_k}_{g, h+1}\right)^2}
    \end{align*}
    For the first term, by \pref{lem:freedman}, with probability at least $1-\delta$,
    \begin{align*}
        &\sumk\sumh P^k_h\cdot\left(V^{\pi_k, P_k}_{g, h+1}\right)^2 - \left(V^{\pi_k, P_k}_{g, h+1}(s^k_{h+1})\right)^2 \\
        &=\tilO{\sqrt{\sumk\sumh \fV\left(P^k_h, (V^{\pi_k, P_k}_{g, h+1})^2\right)} + H^2}\\ 
        &=\tilO{H\sqrt{\sumk\sumh \fV\left(P^k_h, V^{\pi_k, P_k}_{g, h+1}\right)} + H^2} \tag{\pref{lem:var XY}}
    \end{align*}
    The second term is upper bounded by $0$ since $V^{\pi_k,P_k}_{g, H+1}(s)=0$ for $s\in\calS$. The third term, with probability at least $1-\delta$, can be upper bounded as follows using the Cauchy-Schwarz inequality and \pref{lem:azuma}.
    \begin{align*}
        &\sumk\sumh \left(V^{\pi_k, P_k}_{g, h}(s^k_h)\right)^2 - \left(Q^{\pi_k, P_k}_{g, h}(s^k_h, a^k_h)\right)^2 \\
        &\leq \sumk\sumh \rbr{\sum_a\pi_k(a|s^k_h, h)\left(Q^{\pi_k, P_k}_{g, h}(s^k_h, a)\right)^2 -  \left(Q^{\pi_k, P_k}_{g, h}(s^k_h, a^k_h)\right)^2}\\
        &= \tilO{H^2\sqrt{T}}
    \end{align*}
    For the fourth term can be upper bounded using $a^2-b^2=(a+b)(a-b)$ and the facts that $\norm{V^{\pi_k,P_k}_{g,h}}_{\infty},\norm{Q^{\pi_k,P_k}_{g,h}}_{\infty}\leq H$:
    \begin{align*}
        &\sumk\sumh \left(Q^{\pi_k, P_k}_{g, h}(s^k_h, a^k_h)\right)^2 - \left(P^k_h\cdot V^{\pi_k, P_k}_{g, h+1}\right)^2 \\
        &\leq 2H\sumk\sumh\abr{Q^{\pi_k, P_k}_{g, h}(s^k_h, a^k_h) - P^k_h\cdot V^{\pi_k, P_k}_{g, h+1}}\\
        &\leq 2H^2K + 2H\sumk\sumh\abr{(P_{k, h}(s^k_h, a^k_h) - P^k_h)\cdot V^{\pi_k,P_k}_{g, h+1}}\\
        & = \tilO{H^2K + H\sqrt{S^2A\sumk\sumh\fV(P^k_h, V^{\pi_k, P_k}_{g, h+1})} + H^3S^2A } \tag{\pref{lem:dPV}}
    \end{align*}
    Putting everything together, we have
    \begin{align*}
        \sumk\sumh &\fV(P^k_h, V^{\pi_k, P_k}_{g, h+1}) = \mathcal{O} \left(H\sqrt{\sumk\sumh \fV(P^k_h, V^{\pi_k, P_k}_{g, h+1})} \right.\\
        &\left.+ H^2(K+\sqrt{T}) + H\sqrt{S^2A\sumk\sumh\fV(P^k_h, V^{\pi_k, P_k}_{g, h+1}) } + H^3S^2A \right).
    \end{align*}
    Solving the quadratic inequality, we obtain $\sumk\sumh\fV(P^k_h, V^{\pi_k, P_k}_{g, h+1}) = \tilo{ H^2(K+\sqrt{T}) + H^3S^2A }$. This completes the proof.
\end{proof}

\begin{lemma}
    \label{lem:dPV}
    If $V_h\in[-B, B]^{\calS},\forall h\in[H]$, and $P_h^k \triangleq P(s_h^k, a_h^k)$, then under the event of \pref{lem:conf}, we have the following.
    \begin{align*}
        &\sumk\sumh\abr{(P_{k, h}(s^k_h,a^k_h)-P^k_h)\cdot V_{h+1}}\\
        &=\tilO{\sqrt{S^2A\sumk\sumh\fV(P^k_h, V_{h+1})} + BHS^2A }
    \end{align*}
\end{lemma}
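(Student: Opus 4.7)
The plan is to combine Lemma \ref{lem:conf eps}'s Bernstein-style transition-deviation bound with a mean-subtraction trick on $V_{h+1}$ and a pigeonhole argument on visit counts. The key observation in the first step is that, since $P_{k,h}(\cdot|s_h^k,a_h^k)$ and $P_h^k=P(\cdot|s_h^k,a_h^k)$ are both probability distributions on $\calS$, their difference annihilates any constant. Subtracting the mean $P_h^k\cdot V_{h+1}$ from $V_{h+1}$ inside the inner product therefore preserves the quantity of interest:
\begin{align*}
(P_{k,h}(s_h^k,a_h^k)-P_h^k)\cdot V_{h+1}=(P_{k,h}(s_h^k,a_h^k)-P_h^k)\cdot(V_{h+1}-P_h^k\cdot V_{h+1}).
\end{align*}
Now the coefficients multiplying the per-state transition error are $|V_{h+1}(s')-P_h^k\cdot V_{h+1}|$ instead of $|V_{h+1}(s')|\le B$; weighted by $P_h^k$, they will aggregate into the variance $\fV(P_h^k,V_{h+1})$ rather than the crude uniform bound $B^2$.

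Next I would plug in Lemma \ref{lem:conf eps}, namely $|P_{k,h}(s'|s_h^k,a_h^k)-P(s'|s_h^k,a_h^k)|\le 8\sqrt{P(s'|s_h^k,a_h^k)\alpha_k(s_h^k,a_h^k)}+136\alpha_k(s_h^k,a_h^k)$, and split the resulting bound on the inner product into two pieces. For the $\sqrt{P\alpha}$ piece, Cauchy--Schwarz over $s'$ gives
\begin{align*}
\sum_{s'}\sqrt{P(s'|s_h^k,a_h^k)}\,|V_{h+1}(s')-P_h^k\cdot V_{h+1}|\le\sqrt{S\,\fV(P_h^k,V_{h+1})},
\end{align*}
so the piece is at most $8\sqrt{S\alpha_k(s_h^k,a_h^k)\,\fV(P_h^k,V_{h+1})}$. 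For the $\alpha$ piece, using $|V_{h+1}(s')-P_h^k\cdot V_{h+1}|\le 2B$ and summing $S$ terms yields $\mathcal O(BS\alpha_k(s_h^k,a_h^k))$. Aggregating over $(k,h)$ with a second Cauchy--Schwarz in $(k,h)$ decouples $\alpha_k$ from $\fV$:
\begin{align*}
\sumk\sumh\sqrt{S\alpha_k\fV}\le\sqrt{S\sumk\sumh\alpha_k(s_h^k,a_h^k)}\cdot\sqrt{\sumk\sumh\fV(P_h^k,V_{h+1})},
\end{align*}
so both pieces reduce to controlling the visit-count sum $\sumk\sumh 1/\N^+_k(s_h^k,a_h^k)$.

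The main obstacle is this last pigeonhole step. A standard pigeonhole on empirical counts would give $\sumk\sumh 1/\N^+_k(s_h^k,a_h^k)=\tilO{SA}$ if $\N_k$ were updated at every time step, but Algorithm \ref{alg:weak} freezes $\N_k$ throughout each episode of length $H$, so a single $(s,a)$ pair can use a stale count up to $H$ times before it refreshes. To keep the variance-aware term in the stated form $\sqrt{S^2A\sumk\sumh\fV(P_h^k,V_{h+1})}$, visits should be separated into \emph{typical} ones --- where $n_k(s,a)\le \N^+_k(s,a)$, so a telescoping/doubling argument delivers the clean $\tilO{SA}$ pigeonhole bound --- and \emph{stale} ones, where $n_k(s,a)>\N^+_k(s,a)$. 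Each stale episode at least doubles the count and thus can occur at most $\mathcal O(\log T)$ times per $(s,a)$; their contribution is absorbed into the lower-order $BHS^2A$ term via the crude variance bound $\fV\le B^2$ together with the per-episode bound $n_k(s,a)\le H$. Combining this refined pigeonhole with Steps 1 and 2 delivers exactly the advertised bound.
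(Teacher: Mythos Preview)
Your proof is correct and follows essentially the same route as the paper: mean subtraction to expose the variance, Lemma~\ref{lem:conf eps} for the per-coordinate Bernstein bound, Cauchy--Schwarz in $s'$ and then in $(k,h)$, and a doubling argument to handle the frozen counts. The only bookkeeping difference is that the paper introduces the per-episode indicator $\Ind_k=\Ind\{N_{k+1}(s,a)\le 2N_k(s,a)\;\forall (s,a)\}$ and, on good episodes, replaces $\epsilon^\star_k$ by $2\epsilon^\star_{k+1}$ so the pigeonhole runs on $1/\N^+_{k+1}$ directly (each visit $j$ to $(s,a)$ contributes $\le 1/j$), while bad episodes---at most $\tilO{SA}$ of them---are bounded crudely by $B$ per step to give the $\tilO{BHSA}$ slack; your per-visit typical/stale split achieves the same thing from the $\N^+_k$ side. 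One small point of care in your write-up: the typical/stale split must be applied \emph{before} the second Cauchy--Schwarz over $(k,h)$, since applying Cauchy--Schwarz first and then splitting the pigeonhole sum would leave an unwanted $\sqrt{H}$ inside the variance-aware term; your phrase ``to keep the variance-aware term in the stated form'' indicates you intend this, but make the order explicit.
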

\begin{proof}
    Let $\Ind_k=\Ind\{(s, a)\in \SA: N_{k+1}(s, a)\leq 2N_k(s, a) \}$ and $z^k_h(s')=V_{h+1}(s') - P^k_h\cdot V_{h+1}$. By \pref{lem:conf eps},
    \begin{align*}
        &\sumk\sumh\abr{(P_{k,h}(s^k_h,a^k_h)-P^k_h)\cdot V_{h+1}} = \sumk\sumh\abr{(P_{k,h}(s^k_h,a^k_h)-P^k_h)\cdot z^k_h} \\
        &\leq \sumk\sumh\min\cbr{B, \sum_{s'}\epsilon^{\star}_k(s^k_h, a^k_h, s')|z^k_h(s')|}\\
        &\leq 2\sumk\sumh\min\cbr{B, \sum_{s'}\epsilon^{\star}_{k+1}(s^k_h, a^k_h, s')|z^k_h(s')|} + BH\sumk\Ind_k^c.
    \end{align*}
    Observe that $\sumk\Ind_k^c=\tilo{SA}$ by definition. Moreover,
    \begin{align*}
        &\sumk\sumh\sum_{s'}\epsilon^{\star}_{k+1}(s^k_h, a^k_h, s')|z^k_h(s')| \\
        &= \tilO{\sumk\sumh\sum_{s'}\sqrt{\frac{P^k_h(s')z^k_h(s')^2}{\Np_{k+1}(s^k_h, a^k_h)}} + \sumk\sumh\frac{SB}{\Np_{k+1}(s^k_h, a^k_h)}} \tag{definition of $\epsilon_k^{\star}$}\\
        &= \tilO{ \sumk\sumh\sqrt{\frac{S\fV(P^k_h, V_{h+1})}{\Np_{k+1}(s^k_h, a^k_h)}} + BS^2A }\\
        &= \tilO{ \sqrt{\sumk\sumh\frac{S}{\Np_{k+1}(s^k_h, a^k_h)}}\sqrt{\sumk\sumh\fV(P^k_h, V_{h+1})} + BS^2A } \tag{Cauchy-Schwarz inequality}\\
        &= \tilO{\sqrt{S^2A\sumk\sumh\fV(P^k_h, V_{h+1})} + BS^2A }.
    \end{align*}
    Plugging these back completes the proof.
\end{proof}

\begin{lemma}\citep[Lemma 1]{efroni2020optimistic}
    \label{lem:val diff H}
    For any policy $\pi\in(\Delta_{\calA})^{\calS\times[H]}$, two transitions $P, P'\in(\Delta(\mathcal{S}))^{\SA\times[H]}$, and $g\in\fR^{\SA}$, we have $V^{\pi,P}_{g, 1}(s) - V^{\pi,P'}_{g, 1}(s)=\mathbf{E}_{\pi, P'}\left[\sumh (P_{h}-P'_{h})(s_h, a_h)\cdot V^{\pi,P}_{g, h+1}|s_1=s\right]$.
\end{lemma}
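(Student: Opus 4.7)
The plan is to prove this value difference identity by a one-step decomposition followed by a backward unrolling from $h=1$ to $h=H$. The key algebraic trick is to introduce the cross term $P'_{h}\cdot V^{\pi,P}_{g,h+1}$ so that the per-step difference splits into a ``transition-gap at step $h$ evaluated against the reference value $V^{\pi,P}$'' plus a ``propagated future difference'' that is sampled under $P'$, not $P$.

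First I would write down the one-step Bellman recursions implied by the definitions of $V^{\pi,P}_{g,h}$ and $V^{\pi,P'}_{g,h}$, namely
\begin{align*}
V^{\pi,P}_{g,h}(s) &= \sum_{a}\pi(a|s,h)\bigg[g(s,a) + \sum_{s'}P_{h}(s'|s,a)V^{\pi,P}_{g,h+1}(s')\bigg],\\
V^{\pi,P'}_{g,h}(s) &= \sum_{a}\pi(a|s,h)\bigg[g(s,a) + \sum_{s'}P'_{h}(s'|s,a)V^{\pi,P'}_{g,h+1}(s')\bigg].
\end{align*}
The reward terms cancel upon subtracting, so adding and subtracting $\sum_{a}\pi(a|s,h)\sum_{s'}P'_{h}(s'|s,a)V^{\pi,P}_{g,h+1}(s')$ yields the clean decomposition
\begin{align*}
V^{\pi,P}_{g,h}(s) - V^{\pi,P'}_{g,h}(s) &= \mathbf{E}_{a\sim \pi(\cdot|s,h)}\bigl[(P_{h}-P'_{h})(s,a)\cdot V^{\pi,P}_{g,h+1}\bigr] \\
&\quad + \mathbf{E}_{\substack{a\sim \pi(\cdot|s,h)\\ s'\sim P'_{h}(\cdot|s,a)}}\bigl[V^{\pi,P}_{g,h+1}(s') - V^{\pi,P'}_{g,h+1}(s')\bigr].
\end{align*}

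Next I would iterate this identity. Applying it with $h=1$ and plugging the same formula for the residual $V^{\pi,P}_{g,2}-V^{\pi,P'}_{g,2}$ inside the expectation under $P'$, and continuing until $h=H$, produces a telescoping sum whose every residual is composed of trajectories $(s_1,a_1,\ldots,s_H,a_H)$ generated by $\pi$ and $P'$. Since $V^{\pi,P}_{g,H+1}\equiv V^{\pi,P'}_{g,H+1}\equiv 0$ by convention, the recursion terminates and delivers
\[
V^{\pi,P}_{g,1}(s) - V^{\pi,P'}_{g,1}(s) = \mathbf{E}_{\pi,P'}\bigg[\sum_{h=1}^{H}(P_{h}-P'_{h})(s_h,a_h)\cdot V^{\pi,P}_{g,h+1}\,\Big|\,s_1=s\bigg],
\]
which is exactly the claim. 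Formally this is cleanest as a finite induction on $H-h+1$ showing the analogous identity at every step $h$ and reading off $h=1$ at the end.

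There is no real obstacle here; this is a standard simulation/performance-difference lemma for finite-horizon MDPs, and the only care required is bookkeeping the conditioning so that the expectation in the telescoped sum is under $P'$ (because we kept $V^{\pi,P}$ as the ``anchor'' value function when splitting each step). No probabilistic inequality or concentration argument is invoked — the identity is purely deterministic given the measures $\pi$, $P$, and $P'$.
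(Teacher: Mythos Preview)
Your proposal is correct and is exactly the standard telescoping argument for this finite-horizon simulation lemma. The paper itself does not supply a proof: the lemma is merely quoted from \cite{efroni2020optimistic}, so there is no ``paper's own proof'' to compare against, and your derivation would serve as a complete substitute.
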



\section{Notes and Open Problems}\label{bey:notes}

This chapter discussed the results of weakly communicating CMDPs. The analysis of the model-based algorithm is taken from \cite{chen2022learning}. Readers may refer to the cited work for more detailed proof and insights. No baseline comparisons are provided since the work mentioned here is the first in the weakly communicating (WC) setting. It is to be noted that no model-free algorithm is currently known that provides provable guarantees for WC CMDPs.  

It is known that the regret lower bound in CMDP is $\Omega(\sqrt{T})$. Results achieving this order have been exhibited for model-based and model-free unconstrained tabular setups in \cite{fruit2018efficient} and \cite{zhang2023sharper}, respectively. Specifically, \citep{zhang2023sharper} establishes the said result for WC MDPs. Moreover, as shown in Table \ref{table2}, lower bound achieving regret is also achievable in the tabular model-based constrained setup. However, the results apply only to ergodic CMDPs. Given this state-of-the-art, we have the following two open questions in the context of WC CMDPs: how to design $(i)$ a model-based algorithm having a regret guarantee better than $\tilde{\mathcal{O}}(T^{2/3})$ and $(ii)$ a model-free algorithm having any sublinear guarantee. Finally, we also note that the algorithm discussed in this chapter requires the knowledge of some parameters of the underlying CMDP (e.g., upper bound on the span). Designing a parameter-free algorithm is another interesting open direction.


\newpage
{\bf Acknowledgements}\label{chpt:ack}

The authors would like to thank Mridul Agarwal,  Amrit Singh Bedi, Alec Koppel, Ather Gattami, and Swetha Ganesh for discussions and comments on the manuscript. 


\backmatter  

\printbibliography

\end{document}